\pdfoutput=1
\RequirePackage[T1]{fontenc}
\documentclass[12pt]{article}
\setlength{\parindent}{0pt}
\usepackage[skip=1em plus 0.2em minus 0.1em]{parskip}
\sloppy

\usepackage[height=8.85in,width=6.45in]{geometry}

\usepackage[utf8]{inputenc}
\usepackage{amsmath}
\usepackage{amssymb}
\usepackage{mathtools}
\numberwithin{equation}{section}
\usepackage{slashed}
\usepackage{braket}
\usepackage{enumitem}
\usepackage[svgnames]{xcolor}
\usepackage[colorlinks,citecolor=DarkGreen,linkcolor=FireBrick,urlcolor=FireBrick,linktocpage,unicode]{hyperref}

\usepackage{tocloft}

\setlength{\cftbeforesecskip}{2pt} %
\setlength{\cftbeforesubsecskip}{0pt} %
\setlength{\cftbeforesubsubsecskip}{0pt} %

\urlstyle{rm}
\usepackage[hang,flushmargin,bottom]{footmisc}
\allowdisplaybreaks
\usepackage{graphicx}
\usepackage{tikz}
\usepackage{tikz-cd}
\usepackage{times}

\usepackage{bm}
\usepackage{physics}
\usepackage{xcolor}
\usepackage{natbib}
\usepackage{mdframed}
\usepackage{nicefrac}
\usepackage{booktabs}
\usepackage{lipsum}
\usepackage{titlesec}
\usepackage{wrapfig,lipsum,booktabs}
\usepackage{blindtext}

\usepackage{todonotes}
\usepackage{titletoc}
\usepackage{setspace}
\usepackage{dsfont} %
\newcommand{\one}{\mathds{1}}

\usepackage[nameinlink,capitalize,noabbrev]{cleveref}

\usepackage{CJK}

\newenvironment{claimbox}
{%
  \begin{mdframed}[
    linecolor=black!0,       %
    linewidth=0pt,           %
    backgroundcolor=black!10, %
    innertopmargin=8pt,
    innerbottommargin=8pt,
    innerleftmargin=10pt,
    innerrightmargin=10pt
  ]
}
{\end{mdframed}}

\usepackage{array}
\usepackage{makecell}

\def\half{{\frac{1}{2}}}

\def\({\left(}
\def\){\right)}
\def\[{\left[}
\def\]{\right]}

\usepackage{dashbox}
\usepackage{xcolor}
\usepackage{colortbl}
\definecolor{lightyellow}{rgb}{1.0, 0.95, 0.7}
\definecolor{Blue}{rgb}{0, 0, 0.8}
\definecolor{blue}{rgb}{0,0,1}
\definecolor{darkgreen}{rgb}{0,0.40,0}
\definecolor{firebrick}{rgb}{0.698,0.133,0.133}
\newcommand*{\red}[1]{\textcolor{red}{#1}}

\newcommand*{\blue}[1]{\textcolor{blue}{#1}}

\definecolor{colorA}{rgb}{1,0,0}
\definecolor{colorB}{rgb}{0,0.3,1}
\definecolor{colorC}{rgb}{0.9,0.8,0.2}
\definecolor{colorD}{rgb}{0,0.65,0}
\definecolor{lesslightgray}{rgb}{0.5,0.5,0.5}
\definecolor{light-gray}{gray}{0.95}

\let\tilde\widetilde
\let\hat\widehat

\newcommand{\calX}{\mathcal{X}}

\newcommand{\diag}{\mathop{\rm{diag}}}

\newcommand{\argmin}{\mathop{\mathrm{argmin}}}  
\newcommand{\argmax}{\mathop{\mathrm{argmax}}}

\newcommand{\Softmax}{{\rm{Softmax}}}

\def\P{\mathbb{P}}
\def\R{\mathbb{R}}

\let\cite\citep 

\usepackage{amsthm}
\usepackage[many]{tcolorbox}
\setlength{\topsep}{0.8\baselineskip}  %
\setlength{\partopsep}{\topsep}        %
\usepackage{etoolbox}
\BeforeBeginEnvironment{proof}{\par\vspace{-1\baselineskip}}
\AfterEndEnvironment  {proof}{\par\vspace{-1.5\baselineskip}}

\newtheoremstyle{theoremstyle}
  {.5\baselineskip} %
  {.5\baselineskip} %
  {}                  %
  {}                  %
  {\bfseries}        %
  {.}                 %
  {1em}               %
  {}                  %

\theoremstyle{theoremstyle}
\newtheorem{theorem}{Theorem}[section]
\newtheorem{lemma}{Lemma}[section]
\newtheorem{corollary}{Corollary}[theorem]

\newtheorem{definition}{Definition}[section]

\newtheorem{remark}{Remark}[section]

\tcolorboxenvironment{theorem}{
  breakable,
  colback=black!10,
  colframe=white,%
  width=\dimexpr\linewidth+10pt\relax,%
  enlarge left by=-5pt,%
  enlarge right by=-5pt,%
  boxsep=5pt,%
  boxrule=0pt,
  left=0pt,right=0pt,top=0pt,bottom=0pt,
  sharp corners,
  before skip=0.5\baselineskip, %
  after skip=0.5\baselineskip,  %
  fonttitle=\bfseries, %
  coltitle=black %
}
\newlength\bodyparskip
\newlength\bodyparindent
\AtBeginDocument{%
  \setlength{\bodyparskip}{\parskip}%
  \setlength{\bodyparindent}{\parindent}%
}
\tcolorboxenvironment{remark}{
  blanker,
  breakable,
  before skip=.8\baselineskip,
  after  skip=.8\baselineskip,
  before upper={
    \setlength{\parindent}{\bodyparindent}%
    \setlength{\parskip}{\bodyparskip}%
  },
}

\tcolorboxenvironment{proposition}{
  breakable,
  colback=black!10,
  colframe=white,%
  width=\dimexpr\linewidth+10pt\relax,%
  enlarge left by=-5pt,%
  enlarge right by=-5pt,%
  boxsep=5pt,%
  boxrule=0pt,
  left=0pt,right=0pt,top=0pt,bottom=0pt,
  sharp corners,
  before skip=0.5\baselineskip, %
  after skip=0.5\baselineskip,  %
  fonttitle=\bfseries, %
  coltitle=black %
}

\tcolorboxenvironment{lemma}{
  breakable,
  colback=black!10,
  colframe=white,%
  width=\dimexpr\linewidth+10pt\relax,%
  enlarge left by=-5pt,%
  enlarge right by=-5pt,%
  boxsep=5pt,%
  boxrule=0pt,
  left=0pt,right=0pt,top=0pt,bottom=0pt,
  sharp corners,
  before skip=0.5\baselineskip, %
  after skip=0.5\baselineskip,  %
  fonttitle=\bfseries, %
  coltitle=black %
}

\tcolorboxenvironment{corollary}{
  breakable,
  colback=black!10,
  colframe=white,%
  width=\dimexpr\linewidth+10pt\relax,%
  enlarge left by=-5pt,%
  enlarge right by=-5pt,%
  boxsep=5pt,%
  boxrule=0pt,
  left=0pt,right=0pt,top=0pt,bottom=0pt,
  sharp corners,
  before skip=0.5\baselineskip, %
  after skip=0.5\baselineskip,  %
  fonttitle=\bfseries, %
  coltitle=black %
}

\tcolorboxenvironment{definition}{
  breakable,
  colback=black!10,
  colframe=white,%
  width=\dimexpr\linewidth+10pt\relax,%
  enlarge left by=-5pt,%
  enlarge right by=-5pt,%
  boxsep=5pt,%
  boxrule=0pt,
  left=0pt,right=0pt,top=0pt,bottom=0pt,
  sharp corners,
  before skip=0.5\baselineskip, %
  after skip=0.5\baselineskip,  %
  fonttitle=\bfseries, %
  coltitle=black %
}
\tcolorboxenvironment{assumption}{
  breakable,
  colback=black!10,
  colframe=white,%
  width=\dimexpr\linewidth+10pt\relax,%
  enlarge left by=-5pt,%
  enlarge right by=-5pt,%
  boxsep=5pt,%
  boxrule=0pt,
  left=0pt,right=0pt,top=0pt,bottom=0pt,
  sharp corners,
  before skip=0.5\baselineskip, %
  after skip=0.5\baselineskip,  %
  fonttitle=\bfseries, %
  coltitle=black %
}

\crefname{theorem}{Theorem}{Theorems}
\crefname{proposition}{Proposition}{Propositions}
\crefname{lemma}{Lemma}{Lemmas}
\crefname{corollary}{Corollary}{Corollaries}
\crefname{definition}{Definition}{Definitions}
\crefname{assumption}{Assumption}{Assumptions}
\crefname{remark}{Remark}{Remarks}
\crefname{problem}{Problem}{Problems}
\crefname{property}{Property}{property}

\tcolorboxenvironment{hypothesis}{
  breakable,
  colback=black!10,
  colframe=white,%
  width=\dimexpr\linewidth+10pt\relax,%
  enlarge left by=-5pt,%
  enlarge right by=-5pt,%
  boxsep=5pt,%
  boxrule=0pt,
  left=0pt,right=0pt,top=0pt,bottom=0pt,
  sharp corners,
  before skip=0.5\baselineskip, %
  after skip=0.5\baselineskip,  %
  fonttitle=\bfseries, %
  coltitle=black %
}
\crefname{hypothesis}{Hypothesis}{Hypothesises}

\tcolorboxenvironment{fact}{
  breakable,
  colback=black!10,
  colframe=white,%
  width=\dimexpr\linewidth+10pt\relax,%
  enlarge left by=-5pt,%
  enlarge right by=-5pt,%
  boxsep=5pt,%
  boxrule=0pt,
  left=0pt,right=0pt,top=0pt,bottom=0pt,
  sharp corners,
  before skip=0.5\baselineskip, %
  after skip=0.5\baselineskip,  %
  fonttitle=\bfseries, %
  coltitle=black %
}
\crefname{fact}{Fact}{Facts}

\newtheorem{example}{Example}
\tcolorboxenvironment{example}{
  breakable,
  colback=black!10,
  colframe=white,%
  width=\dimexpr\linewidth+10pt\relax,%
  enlarge left by=-5pt,%
  enlarge right by=-5pt,%
  boxsep=5pt,%
  boxrule=0pt,
  left=0pt,right=0pt,top=0pt,bottom=0pt,
  sharp corners,
  before skip=0.5\baselineskip, %
  after skip=0.5\baselineskip,  %
  fonttitle=\bfseries, %
  coltitle=black %
}
\crefname{example}{Example}{Examples}

\tcolorboxenvironment{question}{
  breakable,
  colback=black!10,
  colframe=white,%
  width=\dimexpr\linewidth+10pt\relax,%
  enlarge left by=-5pt,%
  enlarge right by=-5pt,%
  boxsep=5pt,%
  boxrule=0pt,
  left=0pt,right=0pt,top=0pt,bottom=0pt,
  sharp corners,
  before skip=0.5\baselineskip, %
  after skip=0.5\baselineskip,  %
  fonttitle=\bfseries, %
  coltitle=black %
}

\crefname{question}{Question}{Questions}
\numberwithin{equation}{section}
\numberwithin{theorem}{section}
\numberwithin{proposition}{section}
\numberwithin{definition}{section}
\numberwithin{lemma}{section}
\numberwithin{assumption}{section}
\numberwithin{remark}{section}

\usepackage{lipsum}

\newcommand*{\annot}[1]{\tag*{\footnotesize{\textcolor{black!50}{\big(#1\big)}}}}

\makeatletter
\let\save@mathaccent\mathaccent
\newcommand*\if@single[3]{%
    \setbox0\hbox{${\mathaccent"0362{#1}}^H$}%
    \setbox2\hbox{${\mathaccent"0362{\kern0pt#1}}^H$}%
    \ifdim\ht0=\ht2 #3\else #2\fi
}
\newcommand*\rel@kern[1]{\kern#1\dimexpr\macc@kerna}
\newcommand*\widebar[1]{\@ifnextchar^{{\wide@bar{#1}{0}}}{\wide@bar{#1}{1}}}
\newcommand*\wide@bar[2]{\if@single{#1}{\wide@bar@{#1}{#2}{1}}{\wide@bar@{#1}{#2}{2}}}
\newcommand*\wide@bar@[3]{%
    \begingroup
    \def\mathaccent##1##2{%
        \let\mathaccent\save@mathaccent
        \if#32 \let\macc@nucleus\first@char \fi
        \setbox\z@\hbox{$\macc@style{\macc@nucleus}_{}$}%
        \setbox\tw@\hbox{$\macc@style{\macc@nucleus}{}_{}$}%
        \dimen@\wd\tw@
        \advance\dimen@-\wd\z@
        \divide\dimen@ 3
        \@tempdima\wd\tw@
        \advance\@tempdima-\scriptspace
        \divide\@tempdima 10
        \advance\dimen@-\@tempdima
        \ifdim\dimen@>\z@ \dimen@0pt\fi
        \rel@kern{0.6}\kern-\dimen@
        \if#31
        \overline{\rel@kern{-0.6}\kern\dimen@\macc@nucleus\rel@kern{0.4}\kern\dimen@}%
        \advance\dimen@0.4\dimexpr\macc@kerna
        \let\final@kern#2%
        \ifdim\dimen@<\z@ \let\final@kern1\fi
        \if\final@kern1 \kern-\dimen@\fi
        \else
        \overline{\rel@kern{-0.6}\kern\dimen@#1}%
        \fi
    }%
    \macc@depth\@ne
    \let\math@bgroup\@empty \let\math@egroup\macc@set@skewchar
    \mathsurround\z@ \frozen@everymath{\mathgroup\macc@group\relax}%
    \macc@set@skewchar\relax
    \let\mathaccentV\macc@nested@a
    \if#31
    \macc@nested@a\relax111{#1}%
    \else
    \def\gobble@till@marker##1\endmarker{}%
    \futurelet\first@char\gobble@till@marker#1\endmarker
    \ifcat\noexpand\first@char A\else
    \def\first@char{}%
    \fi
    \macc@nested@a\relax111{\first@char}%
    \fi
    \endgroup
    }
\makeatother
\let\bar\widebar

\makeatletter
\newcommand*{\redefinesymbolwitharg}[1]{%
  \expandafter\let\csname ltx#1\expandafter\endcsname\csname #1\endcsname
  \@namedef{#1}{\@ifnextchar{^}{\@nameuse{#1@}}{\@nameuse{#1@}^{}}}%
  \expandafter\def\csname #1@\endcsname^##1##2{%
     \csname ltx#1\endcsname\ifx!##1!\else^{##1}\fi\mathopen{}\mathclose\bgroup\left(##2\aftergroup\egroup\right)
     }%
}
\makeatother
\redefinesymbolwitharg{sin}
\redefinesymbolwitharg{cos}

\newcommand{\onehot}[2]{e^{(#1)}_{#2}}

\newcommand{\attn}{{\rm Attn}}

\newcommand*{\email}[1]{\footnote{\href{mailto:#1}{\texttt{#1}}}}

\setlist[itemize,enumerate]{
  parsep=\parskip,                                   %
  itemsep=\dimexpr .3em - \parskip\relax plus 2pt,   %
  topsep=\dimexpr 6pt - \parskip\relax plus 1pt minus 1pt,
  partopsep=0pt,
  listparindent=\parindent
}

\begin{document}

\begin{titlepage}

\begin{flushright}
Last Update: Dec 09, 2025
\end{flushright}

\vskip 2.5em
\begin{center}

{
\LARGE \bfseries %
\begin{spacing}{1.15} %
Universal Approximation with Softmax Attention

\end{spacing}
}

\vskip 1em
Jerry Yao-Chieh Hu$^{\dagger*}$\email{jhu@u.northwestern.edu}
\quad
Hude Liu$^{*}$\email{hudeliu0208@gmail.com}
\quad
Hong-Yu Chen$^{\dagger*}$\email{charlie.chen@u.northwestern.edu}
\quad
Weimin Wu$^{\dagger}$\email{wwm@u.northwestern.edu}
\quad
Han Liu$^{\dagger\S}$\email{hanliu@northwestern.edu}

\def\thefootnote{*}
\footnotetext{Equal contribution.
Code is available
at \url{https://github.com/MAGICS-LAB/UAP_Attention}. 
}

\vskip 1em

{\small
\begin{tabular}{ll}
 $^\dagger\;$Center for Foundation Models and Generative AI, Northwestern University, Evanston, IL 60208, USA\\
 \hphantom{$\dagger\;$}Department of Computer Science, Northwestern University, Evanston, IL 60208, USA\\
 $^\S\;$Department of Statistics and Data Science, Northwestern University, Evanston, IL 60208, USA
\end{tabular}
}

\end{center}

\noindent
We prove that with linear transformations, both (i) two-layer self-attention and (ii) one-layer self-attention followed by a softmax function are universal approximators for continuous sequence-to-sequence functions on compact domains. 
Our main technique is a new interpolation-based method for analyzing attention’s internal mechanism.
This leads to our key insight: self-attention is able to approximate a generalized version of ReLU to arbitrary precision, and hence subsumes many known universal approximators.
Building on these, we show that two-layer multi-head attention or \emph{even} one-layer multi-head attention followed by a softmax function suffices as a sequence-to-sequence universal approximator.
In contrast,  prior works rely on feed-forward networks to establish universal approximation in Transformers.
Furthermore, we extend our techniques to show that, 
(softmax-)attention-only layers are capable of approximating
gradient descent in-context.
We believe these techniques hold independent interest.

\end{titlepage}

{
\setlength{\parskip}{1em}
\setcounter{tocdepth}{2}
\tableofcontents
}
\setcounter{footnote}{0}
\thispagestyle{empty}

\clearpage
\setcounter{page}{1}

\section{Introduction}
\label{sec:intro}
We study the universal approximation ability of the attention mechanism \cite{vaswani2017attention}.
We prove that either \emph{two-layer self-attention} or \emph{one-layer self-attention followed by a softmax} (each equipped only with linear transformations) is capable of approximating any sequence-to-sequence continuous function on a compact domain. 
Different from previous studies \cite{yun2019transformers,jiang2023approximation,takakura2023approximation,kajitsuka2023transformers,hu2024fundamental}, our results highlight the expressive power of Transformers derived \textit{only} from the attention module.
By focusing exclusively on attention, our analysis demonstrates that the softmax operation itself suffices as a piecewise linear approximator. 
Furthermore, we extend this framework to broader applications, such as in-context learning \cite{brown2020language, bai2024transformers}, using the same attention-only architecture.

Prior studies of Transformer-based universality lean on deep attention stacks \cite{yun2019transformers} or feed-forward (FFN) sub-layers \cite{kajitsuka2023transformers,hu2024fundamental} or strong assumptions on data or architecture \citep{takakura2023approximation,petrov2024prompting}.
These results make it unclear whether attention alone is essential or auxiliary.

To combat this, 
we develop a new \emph{interpolation-based} technique for analyzing attention\footnote{Please see \cref{sec:relate_work} for discussion and comparison with prior interpolation-based methods for universal approximation.}.
We discretize the target function’s output range into a uniform set of “anchors,” embed them into the key-query-value transformations of softmax attention, and leverage softmax for a near-$\argmax$ selection.
Effectively and surprisingly, this procedure turns attention into a one- or two-layer piecewise linear approximator (i.e., a generalized notation of ReLU).
Consequently, attention alone suffices for universal approximation --- no large FFN blocks or complex positional encodings are needed.
This leads to our main results --- even a single- or two-layer attention configuration suffices to approximate continuous functions for sequence-to-sequence tasks.

Beyond pure universal approximation, we also extend the same technique to \emph{in-context learning} scenarios \citep{brown2020language,bai2024transformers}, showing that attention alone is capable of mimicking gradient-descent-like updates and approximate statistical models, akin to \cite{bai2024transformers}.

Altogether, 
our results reveal a minimalistic yet powerful principle: \textit{attention itself}
captures the core expressive power needed for sequence-to-sequence universality.
By isolating attention from other Transformer components, we affirm that the softmax-based mechanism has a direct route to approximate continuous mappings across a compact domain.

\textbf{Contributions.}
Our contributions are four-fold:
\begin{itemize}
    \item 
    \textbf{Attention Approximation via Interpolation Selection.}
    We present a new \emph{interpolation-based} method to analyze attention’s internal mechanism.
    First, we partition the target function’s range into uniformly spaced ``anchors'' and embed these anchors in the key-query-value transformations.
    Then, by approximating an argmax-style choice over these anchors, the softmax operation replicates piecewise linear behavior.
    Consequently, attention simulates an interpolation scheme for approximating known universal approximators.
    This insight eliminates reliance on auxiliary feed-forward layers to facilitate universal approximation of transformer architectures and highlights attention’s inherent ability to approximate target functions with minimal overhead.
    See \cref{fig:universal_prove_cropped} for a visualization.

    \item 
    \textbf{One-Layer Single-Head (Softmax-)Attention Approximates Generalized ReLUs.}  
    With our interpolation technique, we show that, for length-$n$ input, single-head and $H$-head (softmax-)attention approximate $n$ generalized ReLUs with $O(1/n)$, and $O(1/(nH))$ precision a token-wise manner (\cref{thm:seq_approx_truncated_small_area_disabled,thm:multi-head-truncated}), respectively.

    \item \textbf{Two-Layer Multi-Head Attention Suffices to Be Sequence-to-Sequence Universal Approximator.}
    We show that (i) stacking two \emph{attention-only} layers or (ii) one \textit{attention layer followed by a softmax function} suffice for universal approximation of continuous sequence-to-sequence functions (\cref{thm:seq2seq_appro,cor:seq2seq_appro_single} or a more Transformer-native extension in \cref{sec:multi-head-truncated}). 
    Compared to existing Transformer-based universal approximation results \cite{yun2019transformers,kajitsuka2023transformers,hu2024fundamental}, our result demonstrates that attention alone provides the core expressiveness.  
    These findings highlight the core expressive power of attention and depart from prior works that rely on deep attention or feed-forward sub-layers for universality guarantees.

    \item \textbf{In-Context Approximation and Gradient Descent.}  
    We extend our techniques and results to in-context learning settings.  
    We prove that attention approximates generalized ReLUs in-context (\cref{thm:in_context_truncated}).  
    Furthermore, we show that multi-head softmax attention is capable of In-Context Gradient Descent (ICGD) (\cref{thm:in_context_GD_no_proof_sketch}), and hence simulates various statistical models, such as ridge regression and generalized linear models.  
    These results improve upon \cite{bai2024transformers}, which is limited to ReLU attention and sometimes requires FFNs to facilitate ICGD.
\end{itemize}

We highlight that our results are general and require minimal assumptions. 
Our theory assumes only the target function is continuous on the compact domain.
No assumptions are made about the data or model, making our results and techniques widely applicable.

This generality departs from prior studies \cite{yun2019transformers,jiang2023approximation,takakura2023approximation,kajitsuka2023transformers,hu2024fundamental}. 
In particular, \citet{yun2019transformers,kajitsuka2023transformers,hu2024fundamental} rely on the concept of \textit{contextual mapping} and assume a minimal separation condition on the data. 
\citet{jiang2023approximation} achieve Jackson-type universal approximation and require target space to have finite complexity measure, which acts like smoothness conditions in classical approximation theory.
\citet{takakura2023approximation} assume infinite-dimensional data.
Moreover, while most existing works require many attention or FFN layers to achieve universal approximation of transformer blocks, our theory requires only one or two attention-\textit{only} layers.
This is, to the best of our knowledge, the first work on universal approximation of the attention mechanism.

\textbf{Roadmap of  Theoretical Results.}
Our main theorems progress in three steps.
First, \cref{thm:seq_approx_truncated_small_area_disabled} provides a single-head warm-up result to demonstrate our interpolation selection technique, showing that attention with linear transform approximate truncated linear functions. 
Next, \cref{thm:multi-head-truncated} extends this construction to multi-head attention setting aligning with practical transformer.
Finally, \cref{thm:seq2seq_appro} upgrades the multi-head formulation to
sequence-to-sequence universal approximation, and \cref{thm:in_context_GD_no_proof_sketch} further applies the same
ideas to in-context approximation of gradient descent.

\textbf{Related Work.}
\cref{sec:relate_work} offers additional details on the necessary related work discussed above.

\section{Preliminaries}
\label{sec:preliminaries}

\textbf{Notation.}
We use lower-case letters (e.g., $v$) for vectors and upper-case letters (e.g., $M$) for matrices.
The vector $e_k$ denotes the one-hot vector with a $1$ in position $k$ and $0$ elsewhere.
Let $X \in \mathbb{R}^{d \times n}$ denote the input sequence, where $d$ is the token dimension and $n$ is the sequence length; intermediate inputs/outputs are denoted by $Z \in \mathbb{R}^{d \times n}$.
For a matrix $A$, $A_{:,j}$ is its $j$-th column, $A_{i,:}$ is its $i$-th row, and $A_{ij}$ is the entry in row $i$ and column $j$.
We write $\|\cdot\|_\infty$ (or $\|\cdot\|_2$) for the vector $\infty$-norm (resp., $2$-norm).
For a matrix $Z \in \mathbb{R}^{d \times n}$, we define the $(p,q)$-norm as
\begin{align*}
    \|Z\|_{p,q}
    :=
    (
    \sum_{j=1}^{n}
    (\sum_{i=1}^{d} |Z_{ij}|^p )^{\frac{q}{p}}
    )^{\frac{1}{q}},
    \quad\text{and}\quad
    \|Z\|_{\infty,\infty}:=\max_{i,j}|Z_{ij}|.
\end{align*}
For a function $f$, $\|f\|_{L_\infty}:=\sup_{x \in \Omega}\abs{f(x)}$ denotes its supremum norm on the given domain $\Omega$.
More generally, we define the ${L_p}$-norm of function $f$ as
\begin{align}\label{eqn:l_p_norm}
    \|f\|_{L_p}
    \coloneqq
    (\int_{\Omega}  \abs{f(x)}^p \,dx)^{\frac{1}{p}}.
\end{align}
The full summary of table of notion is in \cref{sec:notation}.

\textbf{Attention Layer.} 
Let $X=[x_1,\ldots,x_n]\in\R^{d\times n}$ be input sequence of length $n$. 
\begin{definition}[Attention Layer]
\label{def:attn}
Let $H$ denote the number of heads of self-attention block. 
For any input sequence $X\in \R^{d\times n}$, we define the multi-head self-attention layer as
\begin{align*}
    {\rm Attn}_m (X) 
    = 
    \sum_{h=1}^H  W_V^{(h)}X \Softmax((W_K^{(h)} X )^\top W_Q^{(h)} X )W_O^{(h)},
\end{align*}
where $W_K^{(h)}$, $W_Q^{(h)}\in \R^{d_h\times d}$, $W_V^{(h)} \in R^{d_o\times d}$, $W_O^{(h)}\in \R^{n\times n_o}$ for $h\in [H]$.
We use ${\rm Attn}_s$ to denote \textit{single-head} self-attention.
\end{definition}
Here we pick non-identical dimensions for $W_K,W_Q,W_V,W_O$ for generality of our analysis.

\section{Main Theory}
\label{sec:method}

In this section, we introduce an interpolation-based method to characterize the internal mechanism of a single-head attention block. 
Building on this technique, we establish the universal approximation capability of attention from single-head to multi-head, then to in-context learning, and to the general sequence-to-sequence setting.
Specifically, in \cref{sec:approx_attention} we use a single-head self-attention with a sequence-wise linear transformation\footnote{We remark that, this sequence-wise linear transformation is not essential to our analysis and can be removed without loss of generality.
We adopt it in \cref{sec:approx_attention} only for proof simplicity. Please also see  \cref{remark:A_padding}. }
to illustrate our inpertolation techniques. 
It approximates $n$ generalized ReLUs with $O(1/n)$ precision (\cref{thm:seq_approx_truncated_small_area_disabled}). 
Build on top of this, in \cref{sec:multi_head_attn} we construct the multi-head version with token-wise linear map aligns with standard linear map in transformer. 
We demonstrate that increasing the number of heads reduces the required computational complexity per head for the same approximation error $\epsilon$.
Explicitly, $H$-head attention yields $O(1/(nH))$ precision for approximating generalized ReLUs.
In \cref{sec:in-context}, we extend the method to in-context learning, showing that a single-head self-attention with a linear layer approximates $n$ generalized ReLUs in-context. 
Lastly, in \cref{sec:seq2seq_universal_approx}, we prove that such a minimalist attention layer suffices as a sequence-to-sequence universal approximator.

\subsection{Attention Approximation as Interpolation Selection: Approximating Generalized ReLUs with \texorpdfstring{$O(1/n)$}{} Precision}
\label{sec:approx_attention}

A key insight of our work is that single-head self-attention approximates a generalized ReLU function. Since ReLU neural network is a well-known universal approximator, this result implies that even a \emph{minimalist} attention configuration subsumes many established universal approximators.

\textbf{Truncated Linear Functions as Generalized ReLUs.}
We first formalize the generalized ReLU function using the concept of a truncated linear function ${\rm Range}_{[a,b]}(\cdot)$:

\begin{definition}[Truncated Linear Function]
\label{def:range}
    We define the truncated linear function as follows: 
    \begin{align*}
    {\rm Range}_{[a,b]}(x) 
    = \left\{
    \begin{aligned}
        a & \quad x \leq a,\\
        x & \quad  a \leq x \leq b, \\
        b & \quad  b \leq x.
    \end{aligned}
    \right.
    \end{align*}
\end{definition}
Intuitively, the truncated linear function is a segment of a linear function, with output value ranging from $a$ to $b$.
\begin{definition}[Truncated Linear Model]
    Define a \textit{truncated linear model} as ${\rm Range}_{[a,b]}(w^\top x + t)$, where $w \in \mathbb{R}^d$ is a learnable weight and $t \in \mathbb{R}$ is a bias.
\end{definition}

We remark that the truncated linear model is a generalized ReLU and subsumes many known universal approximators, including 
ReLU (\cref{example:ReLu}), Hard Tanh (\cref{example:hard_tanh}) and Clipped ReLU (\cref{example:clipped_ReLU}).
Please see \cref{sec:examples} for explicit expressions.
These bounded activations appear in many practical scenarios where output constraints or gradient stability are desired.

Our goal here is to show attention is able to approximate ${\rm Range}_{[a,b]}(w^\top x + t)$ with arbitrary precision.
ReLU networks are classic universal function approximators \citep{lu2017expressive,sonoda2017neural,hanin2019universal,park2020minimum}. 
By demonstrating that single-head attention approximates ${\rm Range}_{[a,b]}(x)$ to arbitrary precision, we show that \emph{attention alone}  replicates the essential behavior of ReLUs (and even more general piecewise linear transformations). 
This provides a foundation for proving broader universal approximation results using \textit{only} attention mechanisms.

\textbf{Interpolation Scheme.}
To approximate ${\rm Range}_{[a,b]}(\cdot)$ with attention, 
we partition $[a,b]$ into $p$ uniform segments:
\begin{definition}[Interpolation]
\label{def:interpolation}
    Let $[a,b] \subset \R$ be an interval with $a \leq b$ and let $p \in \mathbb{N}^*$ be a positive integer. 
    We define 
    \begin{align*}
        \tilde{L}_0^{[a,b]}:=a, 
        \quad
        \tilde{L}_p^{[a,b]}:=b, 
        \quad
        \tilde{L}_{k}^{[a,b]} 
        :=
        a + \frac{k}{p}\,(b-a), 
        \quad k=\{0,...,p-1\}.
    \end{align*}
    Hence, $\tilde{L}_0 < \tilde{L}_1 < \cdots < \tilde{L}_p$ forms a uniform partition of $[a,b]$. 
    We also write
    \begin{align*}
        \Delta L := \tilde{L}^{[a,b]}_{k} - \tilde{L}^{[a,b]}_{k-1}, \quad k \in [p].
    \end{align*}
    We often omit the superscript $[a,b]$ when the context is clear.
\end{definition}
Importantly, these segments $\{\tilde{L}_k\}_{k=0}^{p-1}$ serve as ``targets'' for the attention mechanism in later parts.

\textbf{Interpolation Method for Attention Approximation.} 
Now we present our fundamental result --- a single-head self-attention with a linear transformation is capable of approximating truncated linear models in a token-wise manner.
Let $X=[x_1,...,x_n]\in\R^{d\times n}$ be the input sequence.

\begin{theorem}[Single-Head Attention Approximates Truncated Linear Models]
\label{thm:seq_approx_truncated_small_area_disabled}
Fix real $a<b$, and let $\mathrm{Range}_{[a,b]}(\cdot)$ be the truncation operator from \cref{def:range}.  
Let $\epsilon_0 \ge 0$.
For a precision parameter $p>n$ and $\beta \ge (\ln(p-2) - \ln \epsilon_0 )/((\Delta L)^2/2)$, there exists a single-layer, single-head self-attention  ${\rm Attn}$ with a linear transformation $A:\R^{d\times n}\to \R^{(2d+d_o+2)\times p}$, such that $\mathrm{Attn}\circ A: \mathbb{R}^{d \times n} \to \mathbb{R}^{d_o \times n}$ satisfies, for any $i\in [n]$,
\begin{align*}
    \| {\rm Attn}\circ A(X)_{:,i} - {\rm Range}_{[a,b]}(w_i^\top x_i + t_i) e_{\tilde{k}_i}\|_\infty \leq \underbrace{\max\{\abs{a}, \abs{b}\} \cdot \epsilon_0}_{\text{finite-$\beta$ softmax error by \cref{lem:Soft_to_Hard}}} + \underbrace{\frac{b-a}{p}}_{\text{interpolation error}}.
\end{align*}
Here $e_{\tilde{k}_i}$ is a one-hot vector with a value of $1$ at the $\tilde{k}_i$-th index and $0$ elsewhere, 
and
\begin{align}
\label{eqn:k_i}
    k_i:= \argmin_{k\in \{0,1,2,\cdots,p-1\}} |x_i^\top w+t-\tilde{L}_k| 
    \quad\text{where}\quad
    \tilde{k}_i := G(k_i) \in [d_o]. 
\end{align}
Here $k_i\in \{0,...,p-1\}$ is the index of the interpolation point closest to the $i$-th token ($i$-th truncated linear model).
For all $i\in[n]$, $G: \{0,...,p-1\} \to[d_o]$ denotes any set-to-set function sending the interpolation index $k \in \{0,...,p-1\}$ into a position index $\tilde{k}\in [d_o]$ specifying in the desired row index of the output.
\end{theorem}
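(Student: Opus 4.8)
The plan is to prove the statement by an explicit construction: choose the linear lift $A$ and the attention parameters $W_Q,W_K,W_V,W_O$ so that, after the inner product, the softmax logits coincide — up to an additive term depending only on the query index $i$, which $\Softmax$ over the key index ignores — with the negative squared distances $-\tfrac{\beta}{2}\bigl(\tilde L_k-(w_i^\top x_i+t_i)\bigr)^2$ between the $i$-th token's ``score'' $y_i:=w_i^\top x_i+t_i$ and the anchors $\tilde L_k$ of \cref{def:interpolation}. Then the $i$-th softmax column acts as a soft-$\argmax$ concentrating on the nearest anchor $k_i$, and the attention output reads off the value vector attached to that anchor, namely $\tilde L_{k_i}e_{\tilde k_i}$. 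The total error then splits exactly into the two pieces displayed in the statement: the finite-$\beta$ softmax error (the soft-$\argmax$ is not a true $\argmax$) and the interpolation error (the selected anchor value $\tilde L_{k_i}$ is only $\Delta L$-close to $\mathrm{Range}_{[a,b]}(y_i)$).

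Concretely, I would design $A$ so that $A(X)\in\R^{(2d+d_o+2)\times p}$ is a length-$p$ sequence whose $k$-th column ($k\in\{0,\dots,p-1\}$) carries, in disjoint blocks: the anchor moments $\tilde L_k$ and $\tilde L_k^2$ (these are constants, emitted from a constant/padding coordinate of $X$; cf.\ \cref{remark:A_padding}); the value vector $\tilde L_k e_{G(k)}\in\R^{d_o}$; and, in its first $n$ columns only, the scalar $y_i=w_i^\top x_i+t_i$ (a genuine linear functional of $x_i$, with $t_i$ again supplied by the padding coordinate), with the remaining $2d$ coordinates used to pass through $X$/pad to the stated width. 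Picking $W_Q,W_K$ so that $\bigl(W_KA(X)\bigr)^\top W_QA(X)$ has $(k,i)$-entry $\beta\bigl(\tilde L_k\,y_i-\tfrac12\tilde L_k^2\bigr)=-\tfrac{\beta}{2}(\tilde L_k-y_i)^2+\tfrac{\beta}{2}y_i^2$, the $\tfrac{\beta}{2}y_i^2$ term drops under $\Softmax$ over $k$; choosing $W_V$ to project onto the value block and $W_O\in\R^{p\times n}$ to select the first $n$ columns, we get $\mathrm{Attn}\circ A(X)_{:,i}=\sum_{k}\tilde L_k e_{G(k)}\,q^{(i)}_k$ with $q^{(i)}=\Softmax(\cdot)_{:,i}\in\Delta^{p-1}$.

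For the error analysis: Step 1 (finite-$\beta$ term). On the uniform grid, every anchor other than the two nearest to $y_i$ satisfies $(\tilde L_k-y_i)^2-(\tilde L_{k_i}-y_i)^2\ge(\Delta L)^2$, so the softmax mass away from the two nearest anchors is at most $(p-2)e^{-\beta(\Delta L)^2/2}\le\epsilon_0$ precisely under the hypothesis $\beta\ge(\ln(p-2)-\ln\epsilon_0)/((\Delta L)^2/2)$; this is exactly where \cref{lem:Soft_to_Hard} enters, converting the soft selection into the hard selection $\tilde L_{k_i}e_{\tilde k_i}$, and since $\tilde L_k\in[a,b]$ every value vector has $\infty$-norm at most $\max\{|a|,|b|\}$, giving $\|\mathrm{Attn}\circ A(X)_{:,i}-\tilde L_{k_i}e_{\tilde k_i}\|_\infty\le\max\{|a|,|b|\}\,\epsilon_0$. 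Step 2 (interpolation term). Since $\{\tilde L_k\}_{k=0}^{p-1}$ is a $\Delta L$-net of $[a,b]$ whose extreme points are the clamp values $a$ and (almost) $b$, checking the three cases $y_i<a$, $y_i\in[a,b]$, $y_i>b$ gives $|\tilde L_{k_i}-\mathrm{Range}_{[a,b]}(y_i)|\le\Delta L=(b-a)/p$. Step 3: add the two bounds by the triangle inequality.

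The main obstacle is Step 1 — the quantitative softmax-concentration estimate, made uniform over all $X$. One must order the $p$ anchors by distance to $y_i$, verify the $(\Delta L)^2$ gap between squared distances for all but the two nearest, and treat the boundary configurations (when $y_i$ sits near a grid midpoint, so two anchors are nearly tied, and when $y_i$ lies outside $[a,b]$, where the nearest anchor is an endpoint); it is precisely this case analysis that produces the factor $p-2$ and the stated threshold on $\beta$. The remaining work — checking that the padded, block-structured ``linear'' map $A$ can indeed emit the needed constants, and that $W_KA$, $W_QA$, $W_VA$, $W_O$ assemble into exactly the bilinear logit and the value vectors $\tilde L_k e_{G(k)}$ — is routine dimension bookkeeping once the block layout is fixed.
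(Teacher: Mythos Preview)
Your proposal is correct and follows essentially the same strategy as the paper: encode the anchors into $A(X)$, make the logits equal (up to a query-only shift) to $-\tfrac{\beta}{2}(\tilde L_k-y_i)^2$, invoke \cref{lem:Soft_to_Hard} with the gap $\gamma=(\Delta L)^2/2$ between the largest and third-largest logit, read off $\tilde L_k e_{G(k)}$ from $V$, and split the error into the softmax term $\max\{|a|,|b|\}\epsilon_0$ and the interpolation term $(b-a)/p$.

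The one place you differ is the logit encoding. You store $y_i=w_i^\top x_i+t_i$ and $\tilde L_k^2$ directly in $A(X)$ and form the bilinear logit $\beta(\tilde L_k y_i-\tfrac12\tilde L_k^2)$. The paper instead stores $x_i$ and $k\,w$ in separate blocks and sets $(K^\top Q)_{k,i}=-\beta^2\,k\,(-2w^\top x_i-2t+\tilde L_0+\tilde L_k)$, then proves algebraically (their ``key technique'') that this expression has the same $\argmin$ over $k$ as $|y_i-\tilde L_k|$; the quadratic in $k$ enters through $k\tilde L_k$ rather than an explicit $\tilde L_k^2$ slot, and $w^\top x_i$ is computed by the attention inner product rather than by $A$. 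Your route is more transparent and shorter; the paper's route keeps $x_i$ and $w$ separate in $A(X)$, which they later reuse for the in-context variant (\cref{thm:in_context_truncated}) where $w$ arrives as part of the input rather than as a fixed parameter of $A$. Either encoding yields the same softmax column and the same error bound.
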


\begin{figure*}[t!]
\centering
\includegraphics[width=\linewidth]{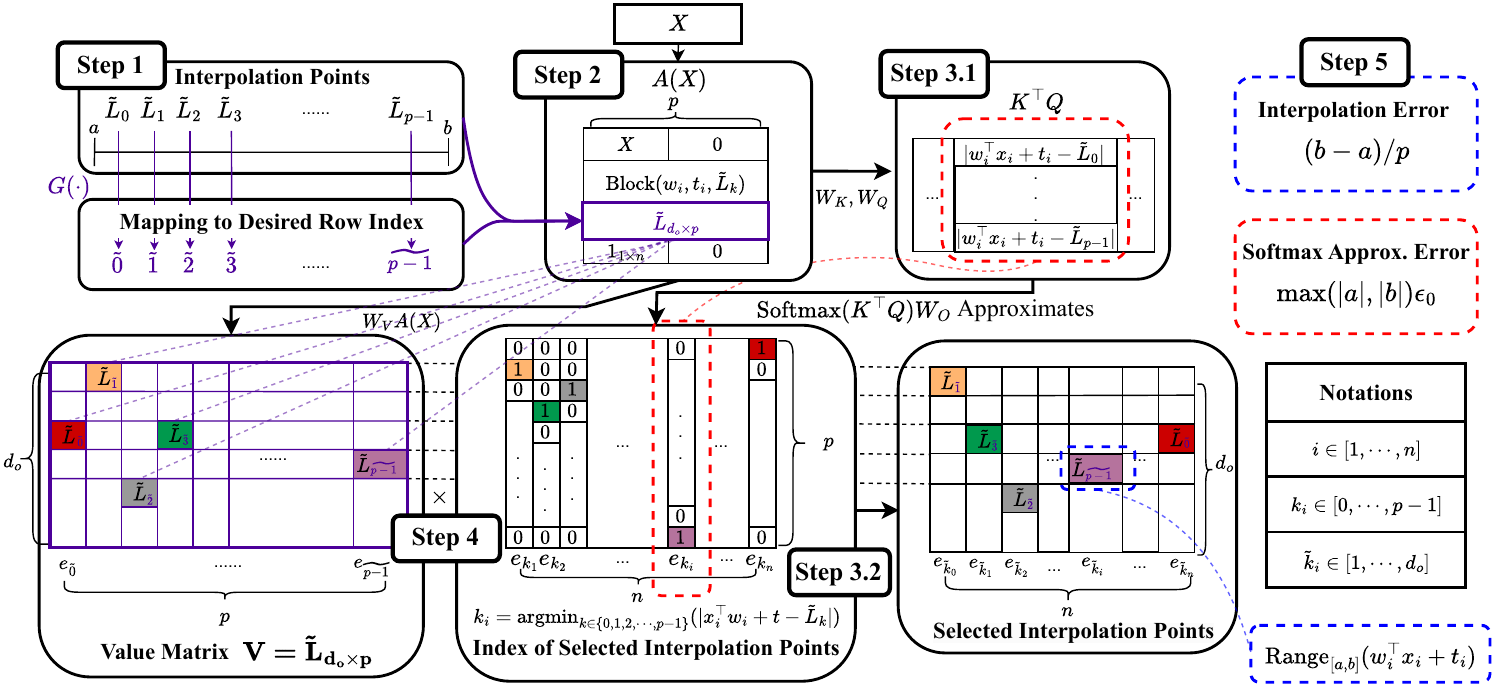}
    \caption{\small\textbf{Visualization of Proposed Interpolation Technique  (\cref{thm:seq_approx_truncated_small_area_disabled}).} 
    Every step in the figure corresponds to a step in the proof sketch in \cref{sec:approx_attention}.
    Our goal is to use softmax attention mechanism to approximate $n$ truncated linear models $\mathrm{Range}_{[a,b]}(w_i^\top x_i + t_i)$ for $i \in [n]$, and hence establish universality.
    To achieve this,
    we first divide the output range $[a,b]$ into $p$ interpolation points, and encode them into the value matrix $V$. 
    Then, we treat the attention score $\Softmax(K^\top Q)$ as a \textit{selector} to select an interpolation point closest to the desired output from $V$.
    Specifically, each column of $\Softmax(K^\top Q)_{:,i}$ (for $i \in [n]$) approximates an one-hot vector $e_{k_i}$, where $k_i$ is the index of closest  interpolation point to $\mathrm{Range}_{[a,b]}(w_i^\top x_i + t_i)$.
    Hence, when multiplying with $V$, $V \cdot \Softmax(K^\top Q)$ selects out the closest interpolation points for every truncated linear model  from $V$.
    The \textbf{same color} across matrices indicates the same interpolation point chosen by the softmax function.
    The color purple indicates how $G$ maps each interpolation point index $k$ into the desired row index $\tilde{k}$.
    The \textbf{grey dashed lines} highlight that the position of $\tilde{L}_k$ in the value matrix and the output matrix are the same, indicating each closest interpolation point of $i$-th token is placed correctly at the output.
    The \textbf{blue dashed line} illustrates the interpolation error,
    while the \textbf{red dashed line} shows the softmax approximation error. 
    For simplicity, we highlight the error only for a token $x_i$.}
    \label{fig:universal_prove_cropped}
\end{figure*}

Intuitively, \cref{thm:seq_approx_truncated_small_area_disabled} ensures that a single-head self-attention layer with a suitable linear layer is capable of approximating $n$ “truncated” linear models with token-level granularity. 
We accomplish this via an interpolation method. 
To elaborate, a few remarks are in order.
\begin{remark}[Interpolation Selection with Softmax Attention]
\label{remark:interpolation}
Here, we provide a high-level overview of our proof techniques:
we approximate the target function (truncated linear models of interest) using interpolation points and leverage softmax attention for interpolation point selection.
We also provide conceptual visualization in \cref{fig:universal_prove_cropped}.

    For the $i$-th column (token) of ${\rm Attn} \circ A(X) \in \mathbb{R}^{d_o \times n}$, our goal is to approximate the one-hot vector $\mathrm{Range}_{[a,b]}(w_i^\top x_i + t_i) e_{\tilde{k}_i}$, where $\mathrm{Range}_{[a,b]}(w_i^\top x_i + t_i)$ is a scalar (the truncated linear output), and $e_{\tilde{k}_i}$ is a one-hot vector of dimension $d_o$. 
    To achieve this, we require \textit{at least} $n$ column vectors in ${\rm Softmax}(K^\top Q)$ to represent potential outputs of the $n$ truncated linear models.

    Since the output of $\mathrm{Range}_{[a,b]}(w_i^\top x_i + t_i)$ lies within $[a,b]$ (\cref{def:range}), we apply the interpolation scheme (\cref{def:interpolation}) to partition $[a,b]$ into $p$ points.
    For each $i\in[n]$, 
    there exists an interpolation point $\tilde{L}_{k_i}$ closest to truncated linear model $\mathrm{Range}_{[a,b]}(w_i^\top x_i + t_i)$, where  $k_i := \argmin_{k \in \{0,1,\cdots,p-1\}}(-2x_i^\top w_i-2t_i+\tilde{L}_0+\tilde{L}_k)\cdot k$ is the selected interpolation index.

    Our key idea is to 
    \begin{enumerate}
        \item \textbf{Select Interpolation Index.}
        Express $\{k_i\}_{i\in[n]}$ as one-hot column vectors $\{e_{k_i}\}_{i\in[n]} \in \R^{p}$.

        \item \textbf{Approximate Anchors Design.}
        Design $K^\top Q$ such that 
        ${\rm Softmax}(K^\top Q)_{:,i}$ approximates $e_{k_i}$ for all $i\in[n]$.

        \item \textbf{Recover the Selected Interpolation Point in Value Space.}
        Encode interpolation point $\{\tilde{L}_{k}\}_{k=0}^{p-1}$ into $V$ such that, for each $i\in[n]$, 
        the largest entry of $V{\rm Softmax}(K^\top Q)_{:,i}$ to be the interpolation point $\tilde{L}_{k_i}$, $ (k_i \in [p])$.
        Recall that, $\tilde{L}_{k_i}$ is selected as the closet interpolation point to $\mathrm{Range}_{[a,b]}(w_i^\top x_i + t_i)$
    \end{enumerate}
    We visualize in \cref{fig:universal_prove_cropped} and summarize as follows:
    \begin{align}\label{eq:summary_interp}
       & ~ \max_{j\in[d_o]} [\overbrace{V}^{\parbox{12em}{\centering \footnotesize $p$ column vectors  containing\\ $p$ interp. points $\{\tilde{L}_{k}\}_{k=0}^{p-1}$}}
        \overbrace{{\rm Softmax}(K^\top Q)_{:,i}}^{\parbox{12em}{\centering \footnotesize approximate one-hot representation of selected interpolation index ${k}_i$}}]_{j,i} \nonumber \\
       = & ~ 
       \underbrace{\tilde{L}_{k_i}}_{\underset{\{\tilde{L}_0,\ldots,\tilde{L}_{p-1}\}}{\argmin} |\mathrm{Range}_{[a,b]}(w_i^\top x_i + t_i) - \tilde{L}_{k}|} 
       + \underbrace{{\rm  error}}_{(\text{By finite-$\beta$ softmax approximation \cref{lem:Soft_to_Hard}})}.
    \end{align}
    This way we use attention mechanism to perform interpolation approximation to each truncated linear model output.

\end{remark}

\begin{remark}[Why $A(\cdot)$ and Its Connection to Practice]
To accomplish \eqref{eq:summary_interp}, we embed the $p$ interpolation points into $A(X)$ such that $K^\top Q = A(X)^\top W_K^\top W_Q A(X)$ contains these points among its entries.    
The linear map $A(\cdot)$ here includes a sequence-wise operation to only simplify the proof.
It is not a standard component of Transformer architecture.
Importantly, \cref{thm:seq_approx_truncated_small_area_disabled} serve as the simplest illustrative example for our interpolation selection techniques.
In \cref{thm:multi-head-truncated}, we extend this technique to the multi-head setting and replace the \textit{sequence-wise} $A(\cdot)$ with a \textit{token-wise linear} transformation that aligns with practical Transformer architectures.
See also \cref{sec:conclusion}.
\end{remark}

\begin{remark}[Meaning of $k_i$, $\tilde{k}_i$, and $G(\cdot)$]
    Here, we clarify the distinction between $k_i$ and $\tilde{k}_i$. 
    The difference lies in their roles within the interpolation and output spaces. 
    Given the $i$-th token $x_i$, $k_i \in \{0, \dots, p-1\}$ identifies the closest interpolation point $\tilde{L}_{k_i}$ to the target value $\mathrm{Range}_{[a,b]}(w_i^\top x_i + t_i)$. 
    In contrast, $\tilde{k}_i \in [d_o]$ is an output coordinate index: it specifies in which coordinate of the $d_o$-dimensional output vector we place the selected point $\tilde{L}_{k_i}$ (grey dashed lines in \cref{fig:universal_prove_cropped}).
    The mapping $G : \{0,\dots,p-1\} \to [d_o]$ connects these two roles by assigning each interpolation index $k$ a coordinate $\tilde{k} := G(k)$, and for each token $i$ we then have $\tilde{k}_i = G(k_i)$ (purple font in \cref{fig:universal_prove_cropped}). In the simplest case one take $G(k) \equiv 1$ for all $k \in \{1,...,p-1\}$, so that  every $\tilde{L}_{k_i}$ is placed in the first row of the output matrix.
    This flexibility allows $G$ to be tailored to the scenarios considered. 
    See \cref{proof:thm:seq_approx_truncated_small_area_disabled} for detailed discussion.
\end{remark}

\begin{remark}[Universal Approximation Implications]
    Since ${\rm Range}_{[a,b]}(\cdot)$ acts as a bounded ReLU, demonstrating that a single-head attention layer approximates it arbitrarily well implies that attention alone is capable of replicating and generalizing known piecewise linear networks. 
    We leverage this result to establish the universal approximation properties of attention-based architectures in broader settings (e.g., multi-head, seq-to-seq) in subsequent sections.
\end{remark}

\begin{proof}[Proof Sketch]
We design the key-query matrices such that, for each token $x_i$, the column $\mathrm{Softmax}(K^\top Q)_{:,i}$ selects the closest interpolation point $\tilde{L}_{k_i}$ to $w_i^\top x_i + t_i$. 
This yields a single-head attention output approximating the truncated linear model at each token. 

Our proof consists of five conceptual steps:

\textbf{Step 1: Partitioning.}
Partition the range $[a,b]$ into $p$ segments, defining interpolation points $\{\tilde{L}_k\}_{k=0}^{p-1}$,
so that for any ${\rm Range}_{[a,b]}(x_i^\top w_i + t_i) \in [a,b]$, there exists a nearest interpolation point $\tilde{L}_{k_i}$ satisfying
\begin{align*}
  \bigl|x_i^\top w_i + t_i - \tilde{L}_{k_i}\bigr| \le \frac{b-a}{p},\quad\text{for all } i\in[n].
\end{align*}

\textbf{Step 2: Linear Encoding.}
Apply a linear transformation
\begin{align*}
  A : \mathbb{R}^{d\times n} \to \mathbb{R}^{(2d+d_o+2)\times p},
\end{align*}
augmenting the input $X$ with additional rows and columns to include:
(i) the input tokens $x_i$, 
    (ii) the weights $\{w_i\}_{i=1}^n$ and biases $\{t_i\}_{i=1}^n$ (to construct truncated linear models),
    (iii) the interpolation points $\{\tilde{L}_k\}_{k=0}^{p-1}$, and
    (iv) auxiliary entries for constructing the desired key-query scores.

\textbf{Step 3: Key-Query Construction.}
Design $W_K,W_Q$ such that each column of $K^\top Q \in \mathbb{R}^{p \times p}$ has entries of the form
\begin{align*}
    [K^\top Q]_{k,i} 
    = (-2x_i^\top w_i - 2t_i + \tilde{L}_0 + \tilde{L}_k) \cdot k.
\end{align*}
The rationale behind this design is the equivalence between the following two objectives (see \eqref{eq:equiv_objectives} for a proof):
\begin{align*}
    \argmin_{k \in \{0,1,\dots,p-1\}} (-2x_i^\top w - 2t + \tilde{L}_0 + \tilde{L}_k) \cdot k 
    =  \argmin_{k \in \{0,1,\dots,p-1\}} |x_i^\top w + t - \tilde{L}_k|,
\end{align*}
where the second objective selects the interpolation point $\tilde{L}_{k_i}$ (see \eqref{eqn:k_i}) closest to $x_i^\top w + t$ among $p$ interpolation points.
Thus, $[K^\top Q]_{k,i}$ indicates the interpolation point $\tilde{L}_k$ closest to $w_i^\top x_i + t_i$. Using \cref{lem:Soft_to_Hard}, the softmax function approximates the argmax, ensuring that the column vector $\mathrm{Softmax}(K^\top Q)_{:,i}$ approximates a one-hot selection of $\tilde{L}_{k_i}$, the closest interpolation point. Specifically, $\mathrm{Softmax}(K^\top Q)_{:,i}$ approximates $e_{k_i} \in \mathbb{R}^p$.

\textbf{Step 4: Value Mapping.}
Design $W_V$ such that $V = W_VA(X)$ encodes the interpolation points $\{\tilde{L}_k\}$ from $A(X)$ into the column vectors of $V \in \mathbb{R}^{d_0 \times p}$. 
Specifically, for $k \in \{0, \dots, p-1\}$, the $k$-th column of $V$ is $\tilde{L}_k e_{\tilde{k}}$.
Then, multiplying $V$ with $\mathrm{Softmax}(K^\top Q) \in \mathbb{R}^{p \times p}$, where the $i$-th column approximates $e_{k_i} \in \mathbb{R}^p$ (from \textbf{Step 3}), gives
\begin{align*}
    \underbrace{V}_{d_0 \times p} \underbrace{\mathrm{Softmax}(K^\top Q)_{:,i}}_{p \times 1} \in \mathbb{R}^{d_0}.
\end{align*}
The largest entry of this product approximates the closest interpolation point $\tilde{L}_{k_i}$. Post-multiplication by the projection matrix $W_O$ discards the extra $(p-n)$ columns beyond the original sequence length $n$.

\textbf{Step 5: Error Control.}
We must bound two types of errors.
{(i) Interpolation Error:} Partitioning $[a,b]$ into $p$ segments ensures each $w_i^\top x_i + t_i \in [a,b]$ lies within $(b-a)/p$ of some interpolation point $\tilde{L}_{k_i}$.  
{(ii) Softmax Approximation Error:} Using $\mathrm{Softmax}_\beta$ instead of a hard $\arg\max$ introduces $\epsilon_0$ (\cref{lem:Soft_to_Hard}). 
Moreover, because $\max_k |\tilde{L}_k| \le \max\{\abs{a},\abs{b}\}$, the softmax spread contributes at most $\max\{\abs{a},\abs{b}\}\cdot \epsilon_0$. Consequently, for each token $i$,
\begin{align*}
    \underbrace{\bigl|\mathrm{Range}_{[a,b]}(w_i^\top x_i + t_i) 
           - \tilde{L}_{k_i}\bigr|}_{\text{Interpolation error} \le \frac{b-a}{p}} +
  \underbrace{\|\mathrm{Softmax}_\beta(\cdot) - e_{k_i}\|
             \cdot \max\{\abs{a},\abs{b}\}}_{\text{Softmax approx. error} \le \max\{\abs{a},\abs{b}\}\epsilon_0} 
    \le 
  \frac{b-a}{p} + \max\{\abs{a},\abs{b}\}\epsilon_0.
\end{align*}

By tuning $p$ and the softmax $\beta$, we make these errors arbitrarily small, proving that single-head attention approximates $\mathrm{Range}_{[a,b]}(w_i^\top x_i + t_i)$ for each token with arbitrary precision. 
Please see \cref{proof:thm:seq_approx_truncated_small_area_disabled} for a detailed proof.
\end{proof}

In summary, increasing the partition size $p$ (reducing $\epsilon$) improves the approximation to arbitrary precision $O(1/n)$. 
As $p > n$, a longer input sequence (with larger $n$ and hence larger $p$) yields a larger attention score matrix (i.e., $\mathrm{Softmax}(K^\top Q)$), enabling higher-resolution interpolation. 
This highlights the expressive power of the minimalist attention layer.
In contrast, typical Transformers rely on multi-head structures and feed-forward layers.

\subsection{\texorpdfstring{$H$}{}-Head Attention Approximates Generalized ReLU with \texorpdfstring{$O(1/(nH)))$}{} Precision}
\label{sec:multi_head_attn}

In \cref{sec:approx_attention}, we show how a single-head self-attention layer approximates $n$ truncated linear models by embedding $p$ interpolation points into its key-query-value matrices. 
Here, we extend this construction to \emph{multi-head} attention.
We show that $H$-head attention improves the approximation precision from $O(1/n)$ (\cref{thm:seq_approx_truncated_small_area_disabled}) to $O(1/(nH))$ for approximating generalized ReLU. 
This establishes a tradeoff between the number of heads and the per-head complexity, determined by the size of the linear layer $A$. 
Intuitively, more heads allow each head to focus on a smaller subset of interpolation points, reducing the partition size $p$ needed per head to achieve the same overall error.

\begin{theorem}[Multi-Head Attention Approximate Truncated Linear Models]
\label{thm:multi-head-truncated}
Fix real numbers $a < b$, and let the truncation operator ${\rm Range}_{[a,b]}(\cdot)$ follow \cref{def:range}. 
For a precision parameter $p>n$ with $\epsilon = O(1/p)$, number of head $H = p/(n-2)$
there exists a single-layer, $H$-head self-attention ${\rm Attn}^{H}$ with a linear transformation $A:\R^{d\times n}\to \R^{(d+n)\times n}$, such that $\mathrm{Attn}^{H}\circ A: \mathbb{R}^{d \times n} \to \mathbb{R}^{d_o \times n}$ satisfies, for any $i\in [n]$,
\begin{align*}
    \| {\rm Attn}^H\circ A(X)_{:,i} - {\rm Range}_{[a,b]}(w_i^\top x_i + t_i) e_{\tilde{k}_i}\|_\infty \leq \underbrace{\max\{\abs{a}, \abs{b}\} \cdot \epsilon_0}_{\text{finite-$\beta$ softmax error}} + \underbrace{\frac{b-a}{(n-2)H} }_{\text{interpolation error}}.
\end{align*}
Here $e_{\tilde{k}_i}$ is a one-hot vector with a value of $1$ at the $\tilde{k}_i$-th index and $0$ elsewhere, 
and
\begin{align*}
    k_i:= \argmin_{k\in \{0,1,2,\cdots,p-1\}} |x_i^\top w+t-\tilde{L}_k| 
    \quad\text{where}\quad
    \tilde{k}_i = G(k_i) \in [d_o]. 
\end{align*}
Here $k_i\in \{0,...,p-1\}$ is the index of the interpolation point closest to the $i$-th token ($i$-th truncated linear model).
For all $i\in[n]$, $G: \{0,...,p-1\} \to[d_o]$ denotes any set-to-constant function sending the interpolation index $k \in \{0,...,p-1\}$ into a position index $\tilde{k}\in [d_o]$ specifying in the desired row index of the output.

\end{theorem}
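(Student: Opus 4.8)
The plan is to lift the single-head interpolation-selection scheme of \cref{thm:seq_approx_truncated_small_area_disabled} to $H$ heads by \emph{distributing} the $p$ interpolation points of \cref{def:interpolation} among the heads, and then summing the heads so that the per-head contributions telescope back to a single truncated linear model on $[a,b]$. Concretely, first I would partition $[a,b]$ into the $p=(n-2)H$ uniform segments $\tilde{L}_0<\tilde{L}_1<\cdots<\tilde{L}_p$, and assign to head $h\in[H]$ the contiguous block $\tilde{L}_{(h-1)(n-2)},\ldots,\tilde{L}_{h(n-2)}$, which spans the sub-interval $[a_h,b_h]$ with $a_h:=a+\tfrac{h-1}{H}(b-a)$ and $b_h:=a+\tfrac{h}{H}(b-a)$, so that $a_1=a$, $b_H=b$, $b_h=a_{h+1}$, and $b_h-a_h=(b-a)/H$. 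Each block carries $n-1$ points, which fits inside the $n\times n$ attention-score matrix of one head with a single ``sink'' slot to spare. I would take $A:\R^{d\times n}\to\R^{(d+n)\times n}$ to be the token-wise map that keeps each $x_i$, carries the truncated-linear-model parameters (either shared across tokens, or read off designated input coordinates), and appends the positional one-hot $e_i$; these extra $n$ coordinates are exactly what the per-head key-query forms use to index the $\le n$ points assigned to that head. The change from \cref{thm:seq_approx_truncated_small_area_disabled} is that the interpolation points now live inside the head matrices $W_K^{(h)},W_Q^{(h)},W_V^{(h)}$ rather than in the columns of $A(X)$; this is precisely what lets $A$ be token-wise and Transformer-native, and it is also the source of the efficiency gain (each head's score matrix is $n\times n$ instead of the $p\times p$ matrix forced on the single-head layer).

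Per head, I would reuse \textbf{Step 3} of the proof of \cref{thm:seq_approx_truncated_small_area_disabled} restricted to block $h$: choose $W_K^{(h)},W_Q^{(h)}$ so that
\begin{align*}
  [(K^{(h)})^\top Q^{(h)}]_{k,i}
  = \bigl(-2(w_i^\top x_i + t_i) + \tilde{L}_{(h-1)(n-2)} + \tilde{L}_{(h-1)(n-2)+k}\bigr)\, k ,
  \qquad k \in \{0,\ldots,n-2\} ,
\end{align*}
with the sink slot $k=n-1$ set to a large negative constant. By the same argmin equivalence used there (now applied inside block $h$) together with \cref{lem:Soft_to_Hard}, $\Softmax_\beta((K^{(h)})^\top Q^{(h)})_{:,i}$ approximates the one-hot vector selecting the point of block $h$ closest to $w_i^\top x_i+t_i$; since the per-head spacing $(b-a)/((n-2)H)=(b-a)/p$ equals the global spacing, $\beta$ is of the order prescribed by \cref{lem:Soft_to_Hard} at that spacing, matching $\epsilon=O(1/p)$. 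For the value matrices I would encode, for heads $h\ge 2$, the \emph{shifted} entries $\tilde{L}_{(h-1)(n-2)+k}-a_h$ in the single fixed output row $\tilde{k}$ (recall $G$ is set-to-constant here), and for head $h=1$ the \emph{unshifted} entries $\tilde{L}_k$; each $W_O^{(h)}$ merely carries the $n$ columns through (no column truncation is needed, unlike the single-head case). Then the $h$-th head's output at token $i$ equals, up to its softmax error, $\bigl({\rm Range}_{[a_h,b_h]}(w_i^\top x_i+t_i)-a_h\bigr)e_{\tilde{k}}$ for $h\ge 2$ and ${\rm Range}_{[a_1,b_1]}(w_i^\top x_i+t_i)\,e_{\tilde{k}}$ for $h=1$, each snapped to the relevant grid with interpolation error $\le (b-a)/((n-2)H)$, and with \emph{zero} interpolation error whenever $w_i^\top x_i+t_i\notin[a_h,b_h]$ (the selected point is then an endpoint, which is on the grid).

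Summing the heads is where the split pays off. The key elementary identity, checked by the three cases $v\le a$, $v\ge b$, and $v$ in the interior of exactly one $[a_j,b_j]$, is
\begin{align*}
  \sum_{h=1}^{H} \bigl( {\rm Range}_{[a_h,b_h]}(v) - a_h \bigr)
  = {\rm Range}_{[a,b]}(v) - a ,
  \qquad v \in \R .
\end{align*}
Leaving head $1$ unshifted adds back exactly $a_1=a$, so the sum of the per-head outputs at token $i$ telescopes to ${\rm Range}_{[a,b]}(w_i^\top x_i+t_i)\,e_{\tilde{k}}$. For the error, $w_i^\top x_i+t_i$ lies in the interior of at most one sub-interval, so interpolation errors do not accumulate and the total is $\le (b-a)/((n-2)H)$. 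The per-head softmax errors do sum, but head $h$ contributes at most $\max_k|\tilde{L}^{(h)}_k-a_h|\cdot\epsilon_0$, which is $\le\tfrac{b-a}{H}\epsilon_0$ for $h\ge 2$ and $\le\max\{|a|,|b|\}\epsilon_0$ for $h=1$; choosing the per-head inverse temperatures $\beta^{(h)}$ so that head $h$ contributes at most $\max\{|a|,|b|\}\epsilon_0/H$ keeps the aggregate softmax error $\le\max\{|a|,|b|\}\epsilon_0$. Combining the two bounds gives the claimed estimate for every $i\in[n]$.

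The hard part is the summation step. A naive per-head partition of the interpolation points would leave a spurious additive constant of order $\sum_h a_h=\Theta(H)$; the remedy — shifting each head's value entries by its own sub-interval base point $a_h$ and leaving head $1$ unshifted so the head-sum telescopes exactly to ${\rm Range}_{[a,b]}$ — is the one genuinely new ingredient beyond \cref{thm:seq_approx_truncated_small_area_disabled}. The accompanying bookkeeping (that only one head can incur interpolation error, that saturated heads contribute grid points exactly, and that the summed softmax errors stay $O(\epsilon_0)$ after tuning the $\beta^{(h)}$) is routine but must be done carefully; the reshuffling that moves the interpolation points out of $A(X)$ and into the per-head weight matrices, so that $A$ becomes token-wise, is conceptually immediate given the single-head construction.
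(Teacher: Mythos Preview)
Your proposal is correct but takes a genuinely different route from the paper. The paper silences inactive heads via \emph{zero-value sentinel columns}: its $V^{(h)}$ carries the unshifted grid values $\tilde{L}_{(h-1)(n-2)},\ldots,\tilde{L}_{h(n-2)-1}$ flanked by two columns set to $0_{d_o}$; the key row indexes extend one step past each edge so that when $w^\top x_i+t$ falls outside head $h$'s sub-interval the softmax lands on a sentinel and that head contributes (approximately) zero. The proof then needs a three-case analysis (formalized in a separate lemma) covering ``strictly inside one head's interval,'' ``outside all heads,'' and ``on the shared boundary of two adjacent heads,'' and in the boundary case it explicitly bounds the convex combination coming from two simultaneously active heads. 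By contrast, you keep all heads contributing but shift each head's value entries by its base point $a_h$, so head $h$ outputs (up to softmax/grid error) ${\rm Range}_{[a_h,b_h]}(v)-a_h$, and the telescoping identity $\sum_h({\rm Range}_{[a_h,b_h]}(v)-a_h)={\rm Range}_{[a,b]}(v)-a$ does the combining algebraically, with head $1$ left unshifted to restore the $+a$. Your approach is arguably cleaner: the identity handles boundaries automatically and obviates the case lemma. The paper's approach is more direct in the sense that at most one head is ``on'' and its output is the final answer, which keeps the softmax-error bookkeeping simpler (inactive heads have value entries bounded by $\max\{|a|,|b|\}$ times a per-head $\epsilon$ that is chosen $O(1/H)$); you instead exploit that heads $h\ge 2$ have value entries bounded by $(b-a)/H$, then tune the $\beta^{(h)}$ to push the total back to $\max\{|a|,|b|\}\epsilon_0$. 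Both arrive at the same bound, and both place the interpolation points in the head weights rather than in $A(X)$, which is the essential move enabling the token-wise $A$.
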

\begin{corollary}[Approximation Error]
    The approximation error  scales as $O(1/(nH))$.
\end{corollary}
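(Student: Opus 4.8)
The plan is to run $H$ parallel copies of the single-head construction of \cref{thm:seq_approx_truncated_small_area_disabled}, one per head, each assigned a distinct slice of the output range $[a,b]$, together with a lightweight ``gate'' so that for every token only the head owning that token's target is active; adding the heads then reproduces the single-head behaviour at $H$ times the resolution. First I would take $p=(n-2)H$ interpolation points $\{\tilde{L}_k\}_{k=0}^{p-1}$ of $[a,b]$ as in \cref{def:interpolation}, with spacing $\Delta L=(b-a)/p=(b-a)/((n-2)H)$, and split them into $H$ consecutive blocks $I_h=\{(h-1)(n-2),\dots,h(n-2)-1\}$, $h\in[H]$. Let $R_h\subset\R$ be the set of scalars whose nearest point among all $\{\tilde{L}_k\}$ lies in $I_h$; each $R_h$ is a single interval (with $R_1$ and $R_H$ unbounded on one side), the $R_h$ partition $\R$, and their shared endpoints are the midpoints $m_h=\tfrac12\big(\tilde{L}_{(h-1)(n-2)-1}+\tilde{L}_{(h-1)(n-2)}\big)$. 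Writing $v_i:=x_i^\top w+t$ and letting $h^\star(i)$ denote the head with $v_i\in R_{h^\star(i)}$, the $I_{h^\star(i)}$-nearest point to $v_i$ is exactly the global nearest point $\tilde{L}_{k_i}$ of \eqref{eqn:k_i}.

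Next I would let $A$ append a positional one-hot to each token, i.e.\ $A(X)_{:,i}=[x_i;e_i]\in\R^{d+n}$; this is token-wise (a positional encoding as in standard Transformers) and is the only linear map used, so $A:\R^{d\times n}\to\R^{(d+n)\times n}$. For head $h$, use the first $n-2$ of its $n$ columns as ``interpolation slots'' carrying the points of $I_h$ and the last two as ``sinks''. On the interpolation slots, choose $W_K^{(h)},W_Q^{(h)},W_V^{(h)}$ exactly as in Steps~2--4 of the proof of \cref{thm:seq_approx_truncated_small_area_disabled}, after the affine reindexing $k\mapsto k-(h-1)(n-2)$ so that the objective equivalence of \eqref{eq:equiv_objectives} holds inside block $h$; the slot values are $\tilde{L}_k e_{\tilde{k}}$ with the output row $\tilde{k}$ fixed (the constant value of $G$ that \cref{thm:multi-head-truncated} stipulates). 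On the two sinks, set the value to $0$ and the key--query scores to be affine in $v_i$, dominant when $v_i$ lies respectively to the left of / to the right of $R_h$, with transitions centred at the endpoints $m_h,m_{h+1}$ of $R_h$ and with a common slope across all heads. Take $W_O^{(h)}=I_n$. By \cref{lem:Soft_to_Hard}, the head-$h$ softmax column for query $i$ concentrates on the slot of the $I_h$-nearest point when $v_i$ is well inside $R_h$ and on a sink when $v_i$ is well outside, so $\mathrm{Attn}^{(h)}(A(X))_{:,i}$ is $\tilde{L}_{k_i}e_{\tilde{k}}$ up to softmax error when $h=h^\star(i)$, and within softmax error of $0$ otherwise.

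Summing over heads, $\mathrm{Attn}^H\circ A(X)_{:,i}=\sum_{h=1}^{H}\mathrm{Attn}^{(h)}(A(X))_{:,i}$ equals $\tilde{L}_{k_i}e_{\tilde{k}}$ up to two errors. The interpolation error is $|\mathrm{Range}_{[a,b]}(v_i)-\tilde{L}_{k_i}|\le\Delta L=(b-a)/((n-2)H)$, by a short case check using that truncation is nonexpansive and the $\{\tilde{L}_k\}$ tile $[a,b]$. The softmax error is at most $\max\{|a|,|b|\}\cdot\epsilon_0$: on the active head this is \cref{lem:Soft_to_Hard} together with the bound $\max_k|\tilde{L}_k|\le\max\{|a|,|b|\}$, and on each of the $H-1$ silent heads the residual mass off the sink decays exponentially in $\beta$ times the order-one sink-versus-slot score gap, so choosing $\beta$ as in \cref{thm:seq_approx_truncated_small_area_disabled} up to an additive $O(\log H)$ absorbs the union over silent heads. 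Adding the two pieces gives the claimed bound $\max\{|a|,|b|\}\cdot\epsilon_0+(b-a)/((n-2)H)$, and the corollary is immediate on sending $\epsilon_0\to0$ (large $\beta$), since $(n-2)H=\Theta(nH)$.

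The crux is the gate. Because softmax is normalised \emph{per head}, naively slicing $[a,b]$ across heads and adding would, near a block boundary $m_h$, leave two adjacent heads partially active and double-count, producing roughly twice the intended value. I would fix this by forcing the sinks to form a \emph{partition of unity}: give every sink the same affine slope and calibrate the intercepts so that head $h$'s left-sink transition and head $h-1$'s right-sink transition share the centre $m_h$; then, since a logistic transition satisfies $\sigma(x)+\sigma(-x)=1$, the two heads' active masses sum to approximately one and their outputs recombine into a convex combination of the two consecutive grid points straddling $v_i$, which is still within $\Delta L$ of $v_i$ --- so the error bound is unharmed. Establishing this partition-of-unity property and showing it degrades gracefully under finite-$\beta$ softmax, uniformly over where $v_i$ falls, is the one genuinely new ingredient; everything else is a re-indexing of \cref{thm:seq_approx_truncated_small_area_disabled}.
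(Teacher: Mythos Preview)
Your proposal follows the paper's proof of \cref{thm:multi-head-truncated} closely: the same token-wise $A(X)=[X;I_n]$, the same partition of $p=(n-2)H$ grid points into $H$ consecutive blocks of size $n-2$, the same two zero-valued sentinel columns per head, and the same error decomposition into an interpolation term $(b-a)/((n-2)H)$ and a softmax term $\max\{|a|,|b|\}\epsilon_0$. The corollary then follows immediately.

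The one substantive difference is your boundary handling. You introduce a separate affine ``gate'' score on the sinks and then need to argue a logistic partition-of-unity property (your acknowledged crux). The paper avoids this extra ingredient entirely: its sentinels are not a new mechanism but simply ghost interpolation points at $\tilde{L}_{(h-1)(n-2)-1}$ and $\tilde{L}_{h(n-2)}$, scored by the \emph{same} formula as the real slots (so head $h$'s score column is equivalent to $-(v_i-\tilde{L}_k)^2$ over $k\in\{(h-1)(n-2)-1,\dots,h(n-2)\}$). At a boundary, the top two entries in head $h$ are indices $(h-1)(n-2)-1$ and $(h-1)(n-2)$, and in head $h-1$ they are $(h-1)(n-2)-1$ and $(h-1)(n-2)$ as well---the \emph{same} two grid points, hence the \emph{same} gap $\delta$. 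The two-largest-entries case of \cref{lem:Soft_to_Hard} then gives identical weights $(\tfrac{1}{1+e^{-\beta\delta}},\tfrac{e^{-\beta\delta}}{1+e^{-\beta\delta}})$ in both heads; since one of each head's two values is the zero sentinel, the sum is exactly a convex combination of $\tilde{L}_{(h-1)(n-2)-1}$ and $\tilde{L}_{(h-1)(n-2)}$, with no partition-of-unity lemma needed. Your route would work, but the paper's trick---sentinel equals one-step-outside grid point with value zero, same score formula---makes the boundary case fall out of \cref{lem:Soft_to_Hard} directly and is what you want.
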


\textbf{Tradeoff: Multiple Heads $H$ vs.\ Partition Size $p$.}
Whereas the single-head construction in \cref{thm:seq_approx_truncated_small_area_disabled} places all $p$ interpolation points into one attention head (possibly requiring $\ell = p-n$ extra columns in $A(X)$), multi-head attention splits these $p$ points across different heads. 

Consequently, each head only needs to handle a fraction of the total interpolation range, allowing for fewer effective points per head. 
In practice, this reduces per-head computation (both in forming $K,Q,V$ and in performing the softmax) while preserving the same global partition resolution (i.e., the same overall approximation error $\epsilon$).

\begin{proof}[Proof Sketch]
Our proof strategy follows \cref{thm:seq_approx_truncated_small_area_disabled}, but distributes the interpolation workload:
\begin{enumerate}
\item \textbf{Partition the Points Across Heads.}  
  Suppose we have $H$ attention heads and want to approximate $\mathrm{Range}_{[a,b]}(\cdot)$ with total precision $O(1/p)$. 
  We split the $p$ interpolation points into $H$ groups, each group containing $p/H = n-2$ points.
\item \textbf{Local Encoding.}  
  In each head, we store (in $V$) only the portion of the $(n-2)$ interpolation points assigned to that head. 
  We also add two sentinel columns representing ``no contribution'' outside the local interpolation range.
  This ensures that if a token’s value is not covered by head $h$, the head $h$ remains inactive (outputs zero).
  
\item \textbf{Head Selection.}  
  We design the key-query matrices such that each token $x_i$ ``selects'' the head whose local interpolation range covers $w_i^\top x_i + t_i$. 
  Softmax in that head’s output then acts as an approximate $\arg\max$ among the assigned interpolation points.
  We also discuss the case where the value $w_i^\top x_i + t_i$ happen at the shared endpoint of two adjacent heads.
  
\item \textbf{Combine Heads.}  
\cref{lem:all_heads_cases} tells us every token 
  is either (1) strictly inside one head’s interval or (2) exactly on a shared endpoint of two consecutive intervals.
In the interior case only that head contributes; at a boundary the two neighbouring heads output a convex sum of the same two grid points.
Either way the total error is the interpolation error $(b-a)/p$ plus the softmax error $\epsilon_0$ added at most $(O(H) + |b|) \epsilon_0$, $\epsilon_0$ can be arbitrarily small by setting a large enough $\beta$.
\end{enumerate}
By splitting $p$ points across $H$ heads, each head handles only $p/H$ points. Thus, the \emph{per-head} complexity decreases while achieving the same global approximation $\epsilon = O(1/p)$. 
Moreover, $\epsilon=(1/(nH))$ by $H=p/(n-2)$.
Please see \cref{proof:thm:multi-head-truncated} for a detailed proof.
\end{proof}

\subsection{Sequence-to-Sequence Universal Approximation by Self-Attention}
\label{sec:seq2seq_universal_approx}

Building on the results so far, we now show that a two-layer multi-head attention — augmented with simple linear transformations —  achieves \emph{sequence-to-sequence} universal approximation.

\textbf{Overview of Our Proof Strategy.}
\cref{thm:seq_approx_truncated_small_area_disabled} establishes that a single-head or multi-head attention layer is capable of approximating generalized ReLUs (truncated linear models) on a token-by-token basis. 
To extend this capability to more general sequence-to-sequence settings, we:

\begin{itemize}
    \item \textbf{Step 1: 
    Construct a Two-Layer ReLU Network as a Vector-to-Scalar Universal Approximator.}  
    We construct a two-layer ReLU neural network in \cref{lem:relu_univ_approx} that serves as a universal approximator for any continuous function $f:\R^N \to \R$ on a compact domain, with a $p$-norm error.

    \item \textbf{Step 2: 
    Approximate the Constructed ReLU Neural Network with Attentions.}
    In \cref{thm:seq_to_scalar_appro}, we prove that one layer
    multi-head attention plus one layer single head attention approximate the constructed ReLU neural network from \cref{lem:relu_univ_approx}.
    This proves that two-layer attention approximates any continuous function $f:\R^{d \times n} \to \R$ on compact domain with a $p$-norm error.

    \item \textbf{Step 3: Extend to Sequence‐to‐Sequence Approximation.}
    We generalize \cref{thm:seq_to_scalar_appro} to sequence-to-sequence approximation in \cref{thm:seq2seq_appro}.
    This involves decomposing an arbitrary continuous map $f:\R^{d \times n} \to \R^{d \times n}$ into $d\cdot n$ scalar-valued functions $f_{ij}: \R^{d\times n} \to \R$. 
    We approximate each $f_{ij}$ with different attention layers construct in \cref{thm:seq_to_scalar_appro}, and then aggregate these scalar outputs into a matrix form with an additional multi-head attention layer. 
    This shows that a two-layer attention mechanism suffices as a sequence-to-sequence universal approximator.
    We also extend to $\infty$-norm error in \cref{thm:seq2seq_infty_appro}.
    
\end{itemize}

Below, we elaborate on the conceptual steps in detail and defer the proofs to appendices.

\textbf{{Step 1}: Universal Approximation via Two-Layer ReLU Networks.}
We start with the universal approximation theorem of a two-layer feed-forward network with ${\rm ReLU}$ activation.
Let $\calX \subset \R^N$ be a compact domain, and  $\|f\|_{L_p}$ be the $L_p$-norm following \eqref{eqn:l_p_norm}, for function $f$ on its given domain.
\begin{lemma}[Explicit Construction of ReLU Neural Network as Universal Approximator]
\label{lem:relu_univ_approx}
    
    Let $f : \mathcal{X} \to \R$ be a continuous function defined on $\mathcal{X}$.
    For any $\epsilon > 0$, there exists a two-layer feed-forward neural network ${\rm FFN}:\R^N\to \R$ with ReLU activation functions such that for all $x \in \mathcal{X}$
    \begin{align}
        \|{\rm FFN}(x)-f(x)\|_{L_p} \leq \epsilon.
    \end{align}
\end{lemma}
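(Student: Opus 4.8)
The plan is to reduce the multivariate approximation to a one-dimensional one via \emph{ridge functions}, and then realize the resulting one-dimensional pieces exactly as finite sums of ReLUs. Since $\mathcal{X}$ is compact, first fix a box $B=[-R,R]^N\supseteq\mathcal{X}$. It then suffices to control $\|\mathrm{FFN}-f\|_{L_\infty(B)}$, because $\|g\|_{L_p(\mathcal{X})}\le|\mathcal{X}|^{1/p}\|g\|_{L_\infty(B)}$ and the constant can be absorbed by rescaling $\epsilon$ at the outset. (One may extend $f$ continuously to $B$ by Tietze so that $f\in C(B)$.)

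\textbf{Step 1 (trigonometric/ridge reduction).} By the product-to-sum identities, the linear span of $\{x\mapsto\cos(\omega^\top x),\ x\mapsto\sin(\omega^\top x):\omega\in\R^N\}$ is a subalgebra of $C(B)$ that contains the constants and separates points; hence by the Stone--Weierstrass theorem it is uniformly dense in $C(B)$. (Equivalently, extend $f$ to a continuous periodic function on an enlarged box and take Fej\'er means of its Fourier series.) Therefore there is a trigonometric polynomial $T(x)=\sum_{m=1}^M\alpha_m\cos(\omega_m^\top x+\phi_m)$ with $\|T-f\|_{L_\infty(B)}\le\epsilon/2$, and each summand is a ridge function $x\mapsto g_m(\omega_m^\top x)$ with $g_m(s):=\alpha_m\cos(s+\phi_m)$ univariate and $|\alpha_m|$-Lipschitz.

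\textbf{Step 2 (realize each ridge term by ReLUs and assemble).} As $x$ ranges over $B$ the scalar $\omega_m^\top x+\phi_m$ ranges over a compact interval $I_m$; linearly interpolating $g_m$ on a sufficiently fine grid of $I_m$ yields a continuous piecewise-linear $\ell_m$ with breakpoints $\tau_{m,1}<\cdots<\tau_{m,r_m}$ and $\sup_{I_m}|\ell_m-g_m|\le\epsilon/(2M)$. Any continuous piecewise-linear $\ell_m$ has the exact telescoping form $\ell_m(s)=\ell_m(\tau_{m,1})+s_{m,0}(s-\tau_{m,1})+\sum_{j=1}^{r_m}(s_{m,j}-s_{m,j-1})\,\relu(s-\tau_{m,j})$, where $s_{m,0},\dots,s_{m,r_m}$ are the successive slopes. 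Substituting $s=\omega_m^\top x+\phi_m$ turns each $\relu(s-\tau_{m,j})$ into $\relu(a_{mj}^\top x+b_{mj})$ and leaves an affine remainder. Summing over $m$ gives $\mathrm{FFN}(x):=\sum_m\ell_m(\omega_m^\top x+\phi_m)=c_0+c^\top x+\sum_{m,j}\beta_{mj}\,\relu(a_{mj}^\top x+b_{mj})$; on the bounded set $B$ we replace the affine part by $\relu(c^\top x+\kappa)-\kappa$ with $\kappa$ large enough that $c^\top x+\kappa\ge0$ on $B$, so $\mathrm{FFN}$ is a genuine two-layer (single-hidden-layer) ReLU network. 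Finally $\|\mathrm{FFN}-f\|_{L_\infty(\mathcal{X})}\le\|\mathrm{FFN}-T\|_{L_\infty(B)}+\|T-f\|_{L_\infty(B)}\le M\cdot\tfrac{\epsilon}{2M}+\tfrac{\epsilon}{2}=\epsilon$, hence $\|\mathrm{FFN}-f\|_{L_p(\mathcal{X})}\le|\mathcal{X}|^{1/p}\epsilon$; rescaling $\epsilon$ at the start yields the stated bound.

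\textbf{Main obstacle.} The only non-routine ingredient is Step~1 --- density of the ridge/trigonometric subalgebra in $C(B)$ (Stone--Weierstrass) --- together with the bookkeeping needed to keep the final object within the two-layer template (the $\relu(c^\top x+\kappa)-\kappa$ rewrite and the $L_\infty\!\to\!L_p$ constant), which is routine. A tempting shortcut is to triangulate $B$ and take piecewise-linear interpolation of $f$ directly, but exactly representing a general \emph{multivariate} continuous piecewise-linear map can require more than one hidden layer, so routing through univariate ridge profiles is the clean way to stay with an explicit two-layer construction.
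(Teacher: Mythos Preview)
Your proof is correct and takes a genuinely different route from the paper. You reduce to $L_\infty$ via Stone--Weierstrass on the ridge/trigonometric subalgebra, then realize each univariate ridge profile $g_m$ exactly as a finite ReLU combination by piecewise-linear interpolation; the result is a \emph{single-hidden-layer} ReLU network with an $L_\infty$ bound on $\mathcal{X}$, which is strictly stronger than the stated $L_p$ conclusion. The paper instead discretizes $[-B,B]^N$ into a grid $G_D$, builds for each grid center $v$ a ReLU ``bump'' $R_v(x)$ that is itself a sum of ReLUs of affine functions, and sets $\mathrm{FFN}(x)=\sum_{v\in G_D}f(v)\,\relu(R_v(x)-N+1)$. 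This needs \emph{two} hidden layers (ReLU inside ReLU) and only achieves $L_p$ control: the error analysis splits the domain into the union of core cubes $\mathcal{P}$, where $\mathrm{FFN}(x)=f(v_x)$ exactly and uniform continuity handles $|f(x)-f(v_x)|$, and a thin shell $[-B,B]^N\setminus\mathcal{P}$ of measure $O(\delta)$ where the error is merely bounded by $2\|f\|_{L_\infty}$. Your argument is cleaner and more classical as a stand-alone proof of the lemma; the paper's grid/bump route is chosen because the explicit objects $R_v$ and $G_D$ are reused verbatim in the downstream attention constructions (the proof of \cref{thm:seq_to_scalar_appro} builds a multi-head attention layer whose output approximates the row vector $[R_{v^{(1)}}(X),\dots,R_{v^{(|G_D|)}}(X)]$ and then softmax-selects the nearest grid value $f(v^{(k)})$), so here the construction matters more than the statement.
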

\begin{proof}
Please see \cref{proof:lem:relu_univ_approx} for a detailed proof.
\end{proof}

With the constructed ReLU NN, we proceed to step 2, approximating it using a two-layer attention mechanism.
We achieve this by utilizing \cref{thm:seq_approx_truncated_small_area_disabled} that attention approximate ${\rm Range}_{[a,b]}(w_i^\top x_i + t_i)$ in a tokenwise manner.

\textbf{{Step 2}: Approximate the Constructed ReLU Neural Network with Attentions.}
Now we prove the universal sequence-to-scalar approximation of multi-head attention.

\begin{lemma}[Sequence-to-Scalar Universal Approximation of Two Layer Attention]
\label{thm:seq_to_scalar_appro}
    For any continuous function $f:\R^{d\times n} \to \R$ of compact support $\calX$, and any $\epsilon > 0$, we prove that when composed with linear transformations, there exists a one layer multi-head attention ${\rm Attn}_m$ stacked with one layer single-head attention ${\rm Attn}_s$ composed with linear connections $A_1$ and $A_2$, such that
    \begin{align*}
        \|f- {\rm Attn}_s\circ A_2 \circ{\rm Attn}_m\circ A_1\|_{L_p} \leq \epsilon.
    \end{align*}
\end{lemma}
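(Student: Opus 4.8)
The plan is to chain the two preceding results. By \cref{lem:relu_univ_approx}, there is a two-layer ReLU network ${\rm FFN}(x)=\sum_{j=1}^{m} c_j\,{\rm ReLU}(\langle a_j,\operatorname{vec}(X)\rangle + b_j)+c_0$ with $\|f-{\rm FFN}\|_{L_p}\le\epsilon/2$. The strategy is to realize this ${\rm FFN}$ as ${\rm Attn}_s\circ A_2\circ{\rm Attn}_m\circ A_1$ up to error $\epsilon/2$, then combine via the triangle inequality. The key observation is that ${\rm ReLU}(z) = {\rm Range}_{[a,b]}(z)$ whenever $z\in[0,b]$, and since $\operatorname{vec}(X)$ ranges over a compact set, each preactivation $\langle a_j,\operatorname{vec}(X)\rangle+b_j$ lives in some bounded interval $[a,b]$; so on the relevant domain every hidden ReLU unit coincides with a truncated linear model of the form handled by \cref{thm:seq_approx_truncated_small_area_disabled}.

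The steps, in order. First I would choose $A_1:\R^{d\times n}\to\R^{d'\times n}$ to reshape/replicate the input so that the $m$ desired linear functionals $w_i^\top x_i + t_i$ realized by the multi-head layer correspond exactly to the $m$ preactivations $\langle a_j,\operatorname{vec}(X)\rangle + b_j$ of the ${\rm FFN}$ (padding $m$ up to a multiple of $n$ if needed, with redundant zero units). Then apply \cref{thm:multi-head-truncated} (or \cref{thm:seq_approx_truncated_small_area_disabled}) to the multi-head layer ${\rm Attn}_m$: with a large enough partition parameter $p$ and inverse-temperature $\beta$, its output approximates, token-wise and coordinate-wise, the one-hot-scaled vectors ${\rm Range}_{[a,b]}(w_i^\top x_i+t_i)\,e_{\tilde k_i}$ to precision $\delta_1$. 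Choosing the coordinate map $G$ appropriately, I arrange the $m$ hidden-unit outputs to appear in $m$ distinct, pre-assigned coordinates of the intermediate matrix $Z\in\R^{d_o\times n}$. Next, $A_2$ is a linear map that collects these $m$ entries and forms the single scalar affine combination $\sum_j c_j (\cdot)_j + c_0$ as an input pattern for the second attention layer; crucially $A_2$ can also place a copy of this scalar into the query/key structure so the single-head layer ${\rm Attn}_s$ can read it out. Finally ${\rm Attn}_s$ needs only to implement the identity-like readout of that scalar into the output position — this is itself a trivial instance of \cref{thm:seq_approx_truncated_small_area_disabled} with $w=e$, $t=0$, and $[a,b]$ large enough that ${\rm Range}_{[a,b]}$ acts as the identity on the relevant range, so the scalar passes through up to softmax error $\delta_2$. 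Accumulating: $\|f - {\rm Attn}_s\circ A_2\circ{\rm Attn}_m\circ A_1\|_{L_p} \le \epsilon/2 + C(\delta_1 + \delta_2)$, and taking $p$ large and $\beta$ large makes $C(\delta_1+\delta_2)\le\epsilon/2$, since $\calX$ has finite measure so the $L_p$ error is controlled by the $L_\infty$ error up to the constant $|\calX|^{1/p}$.

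The main obstacle I anticipate is the bookkeeping in the interface between the two layers, i.e. designing $A_2$ so that the affine combination $\sum_j c_j z_j + c_0$ is both computed and simultaneously repositioned into the key–query–value layout that lets the single-head layer output it at the designated target coordinate. This is where one must be careful that the linear maps are token-wise (or at least of the form allowed by \cref{def:attn} together with the sequence-wise $A(\cdot)$ padding of \cref{thm:seq_approx_truncated_small_area_disabled}), and that feeding the $L_\infty$-approximate (rather than exact) hidden outputs through the Lipschitz map $A_2$ and then through the second attention layer does not blow up the error — which it does not, since $A_2$ is linear with a fixed operator norm and ${\rm Attn}_s$ in this degenerate readout mode is also Lipschitz in its relevant input. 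A secondary, purely technical point is verifying that the preactivation intervals $[a,b]$ can be chosen uniformly over $j$ and that $p>n$, $H$, $\beta$ are consistent with the hypotheses of \cref{thm:multi-head-truncated}; this is routine given compactness of $\calX$.
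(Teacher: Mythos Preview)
Your approach is correct but takes a genuinely different route from the paper's proof of this lemma. You realize the abstract two-layer ReLU network of \cref{lem:relu_univ_approx} directly: ${\rm Attn}_m\circ A_1$ uses \cref{thm:seq_approx_truncated_small_area_disabled} to output the $m$ hidden-unit values ${\rm Range}_{[0,b]}(\langle a_j,\operatorname{vec}(X)\rangle+b_j)$, then $A_2$ forms the output-layer affine combination, and ${\rm Attn}_s$ is a vestigial identity pass-through. The paper instead instantiates the explicit grid/bump construction from inside the proof of \cref{lem:relu_univ_approx}: ${\rm Attn}_m\circ A_1$ approximates the row $[R_{v^{(1)}}(X),\dots,R_{v^{(|G_D|)}}(X)]$ of bump values over a grid $G_D\subset[-B,B]^{d\times n}$, the map $A_2$ appends a second row $[f(v^{(1)}),\dots,f(v^{(|G_D|)})]$, and ${\rm Attn}_s$ uses its own softmax as a near-$\argmax$ over the bump row to select $f(v^{(k)})$ for the grid point nearest $X$. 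So in the paper the second attention layer carries real work --- the softmax-selection step --- and this is precisely what later collapses to the single-layer-plus-softmax variant \cref{lem:seq2scaler_one}; in your plan that layer is wasted. Your route is cleaner as a pure existence argument and is closer in spirit to what the paper does separately for the $\ell_\infty$ bound in \cref{thm:seq2seq_infty_appro}. One detail to tighten: for each truncated model $w_i^\top x_i+t_i$ in \cref{thm:seq_approx_truncated_small_area_disabled} to see the whole sequence, $A_1$ must replicate $\operatorname{vec}(X)$ into every token and change the sequence length from $n$ to $m$, so the correct signature is $A_1:\R^{d\times n}\to\R^{d'\times m}$ rather than $\R^{d'\times n}$; this is necessarily sequence-wise, which is consistent with the $A$ allowed in \cref{thm:seq_approx_truncated_small_area_disabled}.
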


\begin{proof}[Proof Sketch]
We begin by discretizing the domain $\mathcal{X} = [-B,B]^{d\times n}$ into a finite grid $G_D$. 
For each grid point $v^{(j)} \in G_D$, we define a ``bump'' function $R_{v^{(j)}}(X)$ that is approximately $1$ when $X$ is near $v^{(j)}$ and approximately $0$ otherwise. 
Next, using \cref{col:univ_lemma} and \cref{lem:relu_univ_approx}, we construct a multi-head attention layer (plus a linear mapping) that collectively approximates these bump functions via $|G_D|\cdot d$ heads, achieving an $\infty$-norm error of at most $|G_D|\cdot d \cdot \epsilon_0$. 
We then form a second linear map encoding the function values $\bigl[f\bigl(v^{(1)}\bigr),\dots,f\bigl(v^{(|G_D|)}\bigr)\bigr]$ alongside the approximated bump functions, organizing them into a $2$-row matrix. 
Finally, a single-head attention layer --- using softmax as a near-$\arg\max$ --- selects the grid value $f\bigl(v^{(j)}\bigr)$ associated with whichever $v^{(j)}$ is nearest to $X$. 
This yields a piecewise approximation to $f$ within any desired error tolerance.
Please see \cref{proof:thm:seq_to_scalar_appro} for a detailed proof.
\end{proof}

Note that in \cref{thm:seq_to_scalar_appro} the function of the second single-head attention is to utilize the softmax function to to pick out the closest grid point $v$ to the input $X$.
Hence we derive a one
layer multi-head attention version of \cref{thm:seq_to_scalar_appro} in  below.

\begin{lemma}[Single-Layer Multi-Head Attention Version of \cref{thm:seq_to_scalar_appro}]
\label{lem:seq2scaler_one}
    For any continuous function $f:\R^{d\times n} \to \R$ of compact support $\calX$, and any $\epsilon > 0$, we prove that when composed with linear transformations, there exists a one layer multi-head attention ${\rm Attn}_m$ followed by a $\Softmax$ function and attached with linear connections $A_1$ and $A_2$, such that
    \begin{align*}
        \|f- A_2 \circ \Softmax \circ{\rm Attn}_m\circ A_1\|_{L_p} \leq \epsilon.
    \end{align*}
\end{lemma}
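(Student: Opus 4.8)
The plan is to re-derive \cref{lem:seq2scaler_one} from \cref{thm:seq_to_scalar_appro} by observing that the \emph{only} role of the second, single-head attention layer in that theorem is to perform a near-$\argmax$ selection among grid points via its softmax; hence that layer can be replaced by a bare $\Softmax$ acting on a suitably arranged matrix, absorbing the leftover affine bookkeeping into the surrounding linear maps $A_1, A_2$. Concretely, recall from the proof sketch of \cref{thm:seq_to_scalar_appro} that after the multi-head layer ${\rm Attn}_m \circ A_1$ we hold (up to $\epsilon_0$ error) a $2$-row matrix whose columns encode, for each grid point $v^{(j)} \in G_D$, the pair $\bigl(f(v^{(j)}),\, R_{v^{(j)}}(X)\bigr)$, where $R_{v^{(j)}}(X)$ is the bump value that is $\approx 1$ for the nearest grid point and $\approx 0$ elsewhere. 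The single-head attention then computes $V\,\Softmax(K^\top Q)$ to pull out $f(v^{(j^\star)})$ for the active index $j^\star$. Since $\Softmax$ is applied columnwise, I would instead build, inside ${\rm Attn}_m \circ A_1$ (enlarging the head count or the output dimension by a constant factor), a matrix $M(X) \in \R^{2 \times |G_D|}$ whose first row is $\beta' \cdot R_{v^{(\cdot)}}(X)$ (a scaled copy of the bump scores) and whose second row is constant, so that $\Softmax$ applied to $M(X)$ — read along the appropriate axis — yields an approximate one-hot vector $e_{j^\star}$, and then let $A_2$ be the linear map that takes the inner product of this one-hot vector with the stored value vector $\bigl[f(v^{(1)}), \dots, f(v^{(|G_D|)})\bigr]$, which $A_2$ also carries through (it is data-independent, hence affine).

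The key steps, in order, are: (1) invoke \cref{thm:seq_to_scalar_appro}'s construction up to and including the multi-head layer to obtain the bump approximations $\{R_{v^{(j)}}(X)\}_{j}$ with $\infty$-norm error $\le |G_D|\, d\, \epsilon_0$, using the $O(1/(nH))$ guarantee of \cref{thm:multi-head-truncated} (equivalently \cref{thm:seq_approx_truncated_small_area_disabled}); (2) choose an inverse-temperature-like scaling parameter $\beta'$ inside $A_1$ (a linear rescaling of the output features) large enough that feeding the scaled bump row into a single columnwise $\Softmax$ produces a vector within $\epsilon_1$ of the one-hot $e_{j^\star}$ — this is exactly the soft-to-hard argmax estimate of \cref{lem:Soft_to_Hard}, applied to the separation between the largest bump value ($\approx 1$) and the rest ($\approx 0$); (3) let $A_2$ be the linear functional $u \mapsto \langle u, (f(v^{(1)}),\dots,f(v^{(|G_D|)})) \rangle$, so that $A_2 \circ \Softmax \circ {\rm Attn}_m \circ A_1 (X) \approx f(v^{(j^\star)})$; (4) collect the three error contributions — the ReLU/bump approximation error from \cref{lem:relu_univ_approx} and \cref{thm:multi-head-truncated}, the finite-$\beta'$ softmax selection error $\epsilon_1$, and the grid discretization error controlled by the modulus of continuity of $f$ over cells of $G_D$ — and bound their sum by $\epsilon$ in $L_p$ by taking $|G_D|$ large, $\epsilon_0$ small, and $\beta'$ large (in that dependency order).

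I would streamline the write-up by not redoing the combinatorics of the multi-head construction: instead I would state that everything through the penultimate layer is verbatim from the proof of \cref{thm:seq_to_scalar_appro} (see \cref{proof:thm:seq_to_scalar_appro}), and only the final selection step is modified. The two small technical points that need care are: first, making sure the $\Softmax$ in \cref{def:attn}'s sense (which takes a matrix argument) can be made to act as the columnwise selector I want — this is a matter of padding with a constant row and routing the bump scores to the correct axis, handled by the linear maps, analogous to \cref{remark:A_padding}; and second, that the bump functions $R_{v^{(j)}}$ have a genuine gap between "on" and "off" values uniformly in $X$ (away from a measure-zero set of $X$ equidistant between grid points, which contributes nothing to the $L_p$ error), so that \cref{lem:Soft_to_Hard} applies with a $\beta'$ independent of $X$.

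The main obstacle I expect is ensuring the softmax selection step is uniform: unlike the interpolation setting of \cref{thm:seq_approx_truncated_small_area_disabled} where the gap $\Delta L$ between candidates is fixed by construction, here the "gap" is between bump values, and for $X$ near a cell boundary of $G_D$ two bumps may both be moderately large, degrading the one-hot approximation. I would resolve this exactly as \cref{thm:multi-head-truncated} resolves its boundary case — either by noting the boundary locus has Lebesgue measure zero (so the $L_p$ norm, being an integral, is insensitive to it) and bounding the integrand there by the trivial bound $2\|f\|_{L_\infty}$ on a set of small measure, or by sharpening the bump construction so that exactly one bump exceeds $1/2$ for all $X$ outside a negligible set. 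Either route closes the gap; the measure-zero argument is cleaner and matches the $L_p$ statement we are proving.
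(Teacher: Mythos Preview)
Your proposal is correct and follows essentially the same route as the paper: reuse the multi-head construction from \cref{thm:seq_to_scalar_appro} to produce the vector of bump values $[R_{v^{(1)}}(X),\dots,R_{v^{(|G_D|)}}(X)]$, feed it through a bare $\Softmax_\beta$ to obtain an approximate one-hot $e_{j^\star}$ via \cref{lem:Soft_to_Hard}, and let $A_2$ be the linear functional $u\mapsto \langle u,(f(v^{(1)}),\dots,f(v^{(|G_D|)}))\rangle$; the boundary case is handled by the same small-measure argument you describe. The paper's write-up is slightly leaner in two places: it drops your $2\times|G_D|$ matrix and simply applies $\Softmax_\beta$ to the $1\times|G_D|$ bump row directly (the constant second row buys nothing, and under a columnwise convention would actually break the selection), and it does not carry the grid values $f(v^{(j)})$ through ${\rm Attn}_m$ at all---they live entirely in the data-independent $A_2$---so your ``enlarging the head count or the output dimension'' is unnecessary.
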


\begin{proof}
    Please see \cref{proof:lem:seq2scaler_one} for a detailed proof.
\end{proof}

We now state our final result of sequence-to-sequence universal approximation of two-layer attention.

\textbf{{Step 3}.}
By combining $dn$ two-layer attention blocks ${\rm Attn}_s\circ A_2 \circ{\rm Attn}_m\circ A_1$ from \cref{thm:seq_to_scalar_appro},  we approximate each output entry of $f(X)$ individually.

\begin{theorem}[Two-Layer-Sequence-to-Sequence Approximation]
\label{thm:seq2seq_appro}
    For any continuous function $f:\R^{d\times n} \to \R^{d\times n}$ of compact support $\cal{X}$, and any $\epsilon > 0$, we prove that when composed with linear transformations, there exists a two layer multi-head attention ${\rm Attn}_m$ stacked with one layer multi-head attention ${\rm Attn}_m$, attached with linear connection $A_1$ and $A_2$, such that
    \begin{align*}
        \|f-{\rm Attn}^{(2)}_m \circ A_2 \circ{\rm Attn}^{(1)}_m\circ A_1\|_{L_p} \leq \epsilon.
    \end{align*}
\end{theorem}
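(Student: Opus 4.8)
The strategy is to reduce the matrix-valued target $f:\R^{d\times n}\to\R^{d\times n}$ to the scalar case already handled by \cref{thm:seq_to_scalar_appro} (Step 2), and then to reassemble the $dn$ scalar approximants into a single matrix output using one additional multi-head attention layer. First I would write $f(X) = \sum_{(i,j)\in[d]\times[n]} f_{ij}(X)\, e_i e_j^\top$, where each $f_{ij}:\R^{d\times n}\to\R$ is the $(i,j)$-entry of $f$ and is continuous on the compact support $\calX$. Fix $\epsilon>0$. By \cref{thm:seq_to_scalar_appro}, for each $(i,j)$ there is a two-layer attention block $\calG_{ij} := {\rm Attn}_s^{(ij)}\circ A_2^{(ij)}\circ {\rm Attn}_m^{(ij)}\circ A_1^{(ij)}$ with $\|f_{ij} - \calG_{ij}\|_{L_p} \le \epsilon/(2dn)$ (absorbing constants so the sum of errors is controlled). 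The plan is to run all $dn$ of these blocks ``in parallel'' inside two stacked multi-head attention layers: the first-layer linear map $A_1$ duplicates $X$ into $dn$ independent coordinate blocks, the first multi-head attention layer ${\rm Attn}^{(1)}_m$ carries the union of all the heads used by the $\{{\rm Attn}_m^{(ij)}\}$ blocks, the interstitial linear map $A_2$ applies the block-diagonal concatenation of the $\{A_2^{(ij)}\}$, and the second multi-head attention layer ${\rm Attn}^{(2)}_m$ carries the union of the (single-)heads used by the $\{{\rm Attn}_s^{(ij)}\}$ together with a final aggregation step.

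The key technical content is the parallelization-and-aggregation construction. Concretely I would have $A_1$ stack $dn$ shifted copies of $X$ (plus the auxiliary rows/columns each $A_1^{(ij)}$ needs) into disjoint row-blocks, so that head $h$ belonging to sub-block $(i,j)$ reads and writes only within block $(i,j)$ — this is possible because \cref{def:attn} allows $W_K^{(h)},W_Q^{(h)},W_V^{(h)},W_O^{(h)}$ to be arbitrary, so each head can zero out all coordinates outside its assigned block. After ${\rm Attn}^{(1)}_m$, ${\rm Attn}^{(2)}_m$, and the sandwiched linear maps, block $(i,j)$ holds (an $O(\epsilon/dn)$-approximation of) the scalar $f_{ij}(X)$ placed at some fixed location, exactly as \cref{thm:seq_to_scalar_appro} guarantees. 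The final linear layer inside (or composed with) ${\rm Attn}^{(2)}_m$ — which I would realize through the $W_O^{(h)}$ projections and one extra set of value/output matrices — collects the $dn$ scalars $\{\widetilde f_{ij}(X)\}$ and writes $\widetilde f_{ij}(X)$ into entry $(i,j)$ of a $d\times n$ output matrix, i.e.\ it implements the linear map $\R^{dn}\to\R^{d\times n}$, $(s_{ij})\mapsto \sum_{ij}s_{ij}e_ie_j^\top$. Since attention's post-multiplication by $W_O$ is exactly a sequence-wise linear map and the value transform is a token-wise linear map, this reshaping is a genuine attention operation (with, say, the softmax score matrix tuned to near-identity so the layer acts linearly), so no extra feed-forward block is needed.

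For the error bound, I would use the triangle inequality in $L_p$ over the $dn$ entries: $\|f - {\rm Attn}^{(2)}_m\circ A_2\circ{\rm Attn}^{(1)}_m\circ A_1\|_{L_p} \le \sum_{(i,j)} \|f_{ij} - \widetilde f_{ij}\|_{L_p}$ — using that the $L_p$ norm of a matrix-valued function is bounded by (a constant times) the sum of the $L_p$ norms of its entries on the compact domain — and each summand is $\le \epsilon/(2dn)$ by the choice above, plus the negligible extra error from forcing the final aggregation layer's softmax to be near-identity, which is controlled by its own inverse-temperature parameter $\beta$ (as in \cref{lem:Soft_to_Hard}). Choosing $\beta$ large enough makes that residual term $\le \epsilon/2$, giving the claimed $\le\epsilon$.

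The main obstacle I anticipate is \emph{bookkeeping the dimensions and the non-interference of the parallel heads}: one must check that stacking $dn$ independent copies of the \cref{thm:seq_to_scalar_appro} construction into block-structured $W_K,W_Q,W_V,W_O$ really does keep the blocks decoupled through both attention layers (in particular, that the softmax in ${\rm Attn}^{(1)}_m$, which normalizes columns across the \emph{whole} augmented sequence, does not leak mass between blocks — this requires padding the score matrix with large negative entries off the block diagonal, exactly the trick already used in \cref{thm:multi-head-truncated}), and that the interstitial linear map and the final aggregation step compose to the correct reshaping. None of this is deep, but it is where the proof's length lives; everything analytic has already been done in \cref{lem:relu_univ_approx} and \cref{thm:seq_to_scalar_appro}.
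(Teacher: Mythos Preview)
Your approach is correct in spirit and would work, but it is more involved than the paper's. The key simplification you miss is that in the construction of \cref{thm:seq_to_scalar_appro}, the \emph{first} layer ${\rm Attn}_m\circ A_1$ does not depend on the target scalar $f_{ij}$ at all: it only computes the bump functions $R_{v^{(k)}}(X)$ over the grid $G_D$, which is a property of the domain $\calX$, not of $f$. Hence all $dn$ scalar sub-problems share \emph{exactly the same} first layer ${\rm Attn}_m^{(1)}$ and $A_1$; only the second-layer single-head attentions ${\rm Attn}_s^{ij}$ differ (each one picks out a different row of function values $\{f_{ij}(v^{(k)})\}_k$ that a suitably extended $A_2$ has already appended). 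The paper then simply sets ${\rm Attn}_m^{(2)} := \sum_{i,j} E^{ij}\,{\rm Attn}_s^{ij}$ where $E^{ij}=e_ie_j^\top$ routes each scalar to its output slot --- this is already a genuine multi-head attention, since each summand is a single head with an extra left-multiplication folded into $W_V$.

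Because the first layer is shared, there is no need to duplicate $X$ into $dn$ disjoint blocks, no worry about softmax leaking mass between blocks, and no off-block-diagonal padding trick. The bookkeeping obstacle you flag as ``where the proof's length lives'' simply does not arise. Your parallelization-by-duplication route would also yield a valid proof, but at the cost of a $dn$-fold blow-up in first-layer heads and the extra leakage-prevention argument; the paper's shared-first-layer observation sidesteps all of that and keeps the proof to a few lines on top of \cref{thm:seq_to_scalar_appro}.
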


\begin{corollary}[Single-Layer Attention Sequence-to-Sequence Approximation]\label{cor:seq2seq_appro_single}
      There exists a single-layer multi-head attention ${\rm Attn}_m$ followed by a $\Softmax$ function and attached with linear connections $A_1$ and $A_2^{ij}$ for $i\in[d], j\in[n]$, such that
    \begin{align*}
        \|f - \sum_{i\in [d], j\in[n]}A_2^{ij}\circ\Softmax \circ{\rm Attn}^{(1)}_m\circ A_1\|_{L_p} \leq \epsilon.
    \end{align*}
\end{corollary}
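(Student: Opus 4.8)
The plan is to combine the single-layer construction of \cref{lem:seq2scaler_one} with the coordinate-wise decomposition already used for \cref{thm:seq2seq_appro}. Write $f = (f_{ij})_{i\in[d],\,j\in[n]}$ with $f_{ij} : \R^{d\times n} \to \R$, $f_{ij}(X):=f(X)_{ij}$; each $f_{ij}$ is continuous on the compact support $\calX$, hence bounded, say $\abs{f_{ij}}\le M$ on $\calX$ with $M:=\|f\|_{L_\infty}$.

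The key point is that in the construction behind \cref{lem:seq2scaler_one} the three maps $A_1$, ${\rm Attn}_m$, and $\Softmax$ are \emph{independent of the scalar function being approximated}. Indeed, $A_1$ only augments $X$ with a fixed grid $G_D$ discretizing $\calX$ together with auxiliary coordinates; ${\rm Attn}_m$ (with $\abs{G_D}\cdot d$ heads, as in \cref{thm:seq_to_scalar_appro}) produces, via the bump-function estimate used there, logits that are large at the coordinate indexing the grid point $v^{(\ell^\ast)}$ nearest to $X$ and small elsewhere; and $\Softmax$ converts these logits into a vector $q(X)$ that is $\epsilon_0$-close in $\infty$-norm to the one-hot vector $e_{\ell^\ast}$. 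Only the final linear read-out depends on the target: for a scalar $g$ it is the linear functional with coefficients $\{g(v^{(\ell)})\}_\ell$, so that it maps $q(X)$ to $\sum_\ell g(v^{(\ell)})\,q(X)_\ell\approx g(v^{(\ell^\ast)})\approx g(X)$, the last step by uniform continuity of $g$ on $\calX$ once the grid is fine enough. Therefore a \emph{single} triple $A_1,\ {\rm Attn}^{(1)}_m,\ \Softmax$ serves all $dn$ output coordinates at once, and only the read-outs need to be duplicated.

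Concretely, I would take $A_2^{ij}$ to be the linear map sending $q\in\R^{\abs{G_D}}$ to the $d\times n$ matrix with entry $\sum_\ell f_{ij}(v^{(\ell)})\,q_\ell$ in position $(i,j)$ and $0$ elsewhere. Then $\sum_{i\in[d],\,j\in[n]}A_2^{ij}\circ\Softmax\circ{\rm Attn}^{(1)}_m\circ A_1$, evaluated at $X$, is the $d\times n$ matrix whose $(i,j)$ entry equals $\sum_\ell f_{ij}(v^{(\ell)})\,q(X)_\ell$. For each $(i,j)$, splitting the error exactly as in \cref{thm:seq_to_scalar_appro} into a grid/interpolation part and a finite-$\beta$ softmax part (the latter controlled by $M\,\abs{G_D}\,\epsilon_0$ through \cref{lem:Soft_to_Hard}) gives $\abs{f_{ij}(X)-\sum_\ell f_{ij}(v^{(\ell)})\,q(X)_\ell}\le\epsilon'$ pointwise on $\calX$, uniformly in $(i,j)$. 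A triangle bound over the $dn$ entries followed by integration then yields $\|f-\sum_{ij}A_2^{ij}\circ\Softmax\circ{\rm Attn}^{(1)}_m\circ A_1\|_{L_p}\le C(d,n,\calX)\,\epsilon'$, and choosing the grid fine enough and $\beta$ large enough makes the right-hand side at most $\epsilon$.

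The main obstacle is making the claimed function-independence precise: one must verify that the logits produced by ${\rm Attn}^{(1)}_m\circ A_1$ in \cref{lem:seq2scaler_one} depend on $X$ only through which grid point is nearest (and through the fixed grid), so that the shared post-softmax vector $q(X)$ is genuinely reusable across all $dn$ read-outs, and that the column-wise action of $\Softmax$ together with $A_2^{ij}$ localizes correctly to coordinate $(i,j)$ while the softmax normalizing constant stays within $\epsilon_0$ of $1$. Once this is in place, the remaining argument is the same entrywise bookkeeping already carried out for \cref{thm:seq2seq_appro}; see \cref{proof:lem:seq2scaler_one} and the proof of \cref{thm:seq2seq_appro}.
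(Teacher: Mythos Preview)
Your proposal is correct and follows essentially the same approach as the paper: decompose $f$ into scalar coordinates $f_{ij}$, observe that in the construction of \cref{lem:seq2scaler_one} the maps $A_1$, ${\rm Attn}_m$, and $\Softmax$ are function-independent (they build the bump functions $R_{v^{(j)}}$ and select the nearest grid point), and then attach a coordinate-specific linear read-out $A_2^{ij}$ that places $\sum_\ell f_{ij}(v^{(\ell)})q(X)_\ell$ at entry $(i,j)$ and zeros elsewhere. The paper's $A_2^{ij}$ is written as $E^{ij}Z\,[\text{table of values}]\,e_{(i-1)n+j}$, which is exactly your linear map, and the final error aggregation is the same $(dn)^{1/p}\epsilon_{\mathrm{scaler}}$ bookkeeping you describe; the only minor caveat is that the per-entry bound is not truly pointwise on all of $\calX$ but only outside a small failure region $\Delta$ (where two bump values nearly tie), which is handled by the $L_p$ integration exactly as in \cref{proof:lem:seq2scaler_one}.
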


\begin{proof}[Proof Sketch]
We first decompose the target function $f:\R^{d\times n}\to\R^{d\times n}$ into $dn$ scalar subfunctions $\{f_{ij}\}$, where $f_{ij}:\R^{d\times n}\to\R$ for $i\in[d], j\in[n]$. By \cref{thm:seq_to_scalar_appro}, each $f_{ij}$ is approximated by one-layer multi-head attention $(\mathrm{Attn}_m)$ combined with one-layer single-head attention $(\mathrm{Attn}_s)$ and linear transformations $(A_1, A_2)$, yielding a per-subfunction error 
\begin{align*}
        \|f_{ij}(X) - {\rm Attn}_s^{ij}\circ A_2 \circ{\rm Attn}_m\circ A_1(X)\|_p \leq \epsilon_{\rm scaler}. 
    \end{align*}
The first attention layer forms bump functions $R_{v^{(k)}}(x)$ to locate the relevant region of $X$, which does not depend on any particular $f_{ij}$. We then aggregate the $dn$ approximations into a single matrix output by defining a second multi-head attention layer as 
\begin{align*}
            {\rm Attn}^{(2)}_m
            =
            \sum_{i\in [d], j\in[n]}E^{ij}{\rm Attn}^{ij}_s,
        \end{align*}
where $E^{ij}\in\R^{d\times n}$ is all zeros except for a single $1$ in the $(i,j)$ position. Thus, each subfunction’s approximation is placed in the correct row-column entry, yielding the full sequence-to-sequence approximation of $f$.
The same logic applies to the proof of \cref{cor:seq2seq_appro_single}.

    Please see \cref{proof:thm:seq2seq_appro} for a detailed proof.
\end{proof}

\section{In-Context Learning}
\label{sec:in-context}

We extend the interpolation selection technique and 
\cref{thm:seq_approx_truncated_small_area_disabled}
to the in-context learning setting \cite{brown2020language,bai2024transformers}.
In \cref{thm:in_context_GD_no_proof_sketch}, we show that standard softmax attention perform in-context gradient descent,
broadening the results established for ${\rm ReLU}$ attention in \cite{bai2024transformers}.
Specifically, we demonstrate that softmax attention is capable of doing in-context gradient descent on convex loss functions.
We first define the problem setting similar to theirs.

\begin{definition}[In-Context Learning Problem Formulation]
\label{def:input_in_context}
    The sequential input $X$ in the in-context learning scenario is defined as
    \begin{align*}
        X:=
        \begin{bmatrix}
        x_1 & x_2 & \cdots & x_n\\
        y_1 & y_2 &\cdots & y_n\\
        w & w & \cdots & w\\
        1 & 1 & \cdots & 1
    \end{bmatrix},
    \end{align*}
    where  $w^\top x_i$ $(i\in[n])$ denote the input-output pairs. $w$ parametrize the model connecting $x_i$ and $y_i$, and is altered (trained) between layers. The task of in-context learning is to using the given input-output pairs $(x_i,y_i)$ to predict the output of a newcome input $x_u$.
\end{definition}

In this setting, we prove a multi-head {Softmax} attention is capable of doing in-context gradient descent on loss functions parametrized by $w^\top x_i$ $(i\in[n])$ and $t$ (as linear coefficient and bias), as well as giving an according prediction to the output on $x_u$.

\begin{theorem}[In-Context Gradient Descent]
\label{thm:in_context_GD_no_proof_sketch}
    Let $l:\R\times \R\to \R$ be any $C^1$ loss function defined on $(w^\top x_i, y_i)$. 
    With input $X$ in the form of \cref{def:input_in_context}, when $X$ is bounded, there exists a multi-head self-attention ${\rm Attn_m}$ with skip connections and each attached with a linear layer, such that for any $\epsilon >0$, irrelevant of $X$, we have
    \begin{align*}
        & \left\|
        {\rm Attn}_m\circ A(X) 
        -
        \begin{bmatrix}
            x_1 &  \cdots & x_n\\
            y_1 & \cdots & y_n\\
            w-\eta\nabla L(w)  & \cdots & w-\eta\nabla L(w)\\
            1  & \cdots & 1
        \end{bmatrix}
        \right\|_\infty
         \leq 
        \epsilon,
    \end{align*}
    where $\eta$ denotes the learning rate and $L(w):= \sum_{i=1}^n l(w^\top x_i,y_i)$ is an empirical loss upon the given input-output pairs.
\end{theorem}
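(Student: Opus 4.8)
The plan is to realize the gradient-descent update $w \mapsto w - \eta \nabla L(w)$ columnwise by a single multi-head softmax-attention layer with skip connection, reducing the construction to the generalized-ReLU approximation of \cref{thm:seq_approx_truncated_small_area_disabled}. First I would compute $\nabla L(w) = \sum_{i=1}^n \partial_1 l(w^\top x_i, y_i)\, x_i$, so that the target update on the $w$-block of $X$ is $w - \eta \sum_{i=1}^n g_i(w^\top x_i, y_i)\, x_i$, where $g_i(s, y_i) := \partial_1 l(s, y_i)$ is a continuous (since $l \in C^1$) scalar function of the scalar $s = w^\top x_i$. The key observation is that each summand $g_i(w^\top x_i, y_i)\, x_i$ is (a scalar function of a linear readout of the $i$-th token) times a vector read off the same token — exactly the token-wise ``truncated-linear-model times one-hot'' shape produced by \cref{thm:seq_approx_truncated_small_area_disabled}, after composing with a linear map.

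The main steps, in order, are: (1) Boundedness reduction. Since $X$ is bounded, $s_i = w^\top x_i$ ranges over a compact interval $[a,b]$; on this interval the continuous function $g_i$ is uniformly approximable, and more importantly the truncation operator $\mathrm{Range}_{[a',b']}$ loses nothing because the argument already lies in range. (2) Linear preprocessing $A$. Build $A$ so that it lays out, per token $i$, the data needed to form the truncated linear model $w^\top x_i + t_i$ whose ``value'' payload is $-\eta\, x_i$ rather than a scalar — concretely, reuse the \cref{thm:seq_approx_truncated_small_area_disabled} construction but replace the scalar value encoding $\tilde L_k e_{\tilde k}$ in Step~4 of its proof by $\tilde L_k \cdot (-\eta x_i)$ packed into the $w$-block rows; this is legitimate because $x_i$ is available in the token and the value matrix $W_V A(X)$ is linear in $A(X)$. (3) Approximate $g_i$ by a piecewise-linear/generalized-ReLU expansion. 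A $C^1$ function on a compact interval is a uniform limit of finitely many truncated linear pieces (combinations of $\mathrm{Range}_{[a,b]}$ terms — cf. the ReLU universal-approximation lemma \cref{lem:relu_univ_approx} in one dimension), so $g_i(s,y_i) \approx \sum_r c_{i,r}\,\mathrm{Range}_{[a,b]}(\alpha_{i,r} s + \tau_{i,r})$; each such $\mathrm{Range}$ term is realized by one group of heads via \cref{thm:seq_approx_truncated_small_area_disabled} (or \cref{thm:multi-head-truncated}), and the coefficients $c_{i,r}$ and the vector factor $-\eta x_i$ fold into $W_V$ and $W_O$. (4) Summation and skip connection. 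The multi-head sum $\sum_h$ accumulates the $r$-pieces and the $i$-tokens, producing $-\eta \sum_i g_i(w^\top x_i,y_i) x_i$ in the $w$-block and zeros elsewhere; the skip connection adds back the original $X$, yielding $w - \eta\nabla L(w)$ in the $w$-rows and leaving $x$-rows, $y$-rows, and the all-ones row untouched (one enforces this by having those heads output zero on those blocks, using $W_O$/$W_V$ support). (5) Error aggregation. Total error $\le$ (softmax finite-$\beta$ error) $+$ (interpolation error $O(1/p)$) $+$ (piecewise-linear approximation error of $g_i$), each driven below $\epsilon/3$ by choosing $\beta$ large, $p$ large, and the number of $\mathrm{Range}$ pieces large; crucially all bounds are uniform in $X$ because $[a,b]$ and the $C^1$-modulus of $l$ are fixed once the bound on $X$ is fixed, matching the ``irrelevant of $X$'' claim.

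The hard part will be step (3)–(4) bookkeeping: one must simultaneously (a) keep the value payload vector-valued ($-\eta x_i$) rather than scalar, which means the columns of $V$ in the \cref{thm:seq_approx_truncated_small_area_disabled} construction carry $x_i$ through — so the ``selection'' performed by $\mathrm{Softmax}(K^\top Q)$ must pick the right interpolation index while the $x_i$-dependence rides along untouched — and (b) ensure the heads that implement different tokens $i$ and different linear pieces $r$ write into the $w$-block additively without cross-contaminating the $x$-, $y$-, or $1$-rows, which is a matter of carefully choosing the output-projection supports. A secondary subtlety is that $g_i$ depends on $y_i$, so the truncated-linear pieces have token-dependent coefficients $c_{i,r}, \alpha_{i,r}, \tau_{i,r}$; this is fine because \cref{thm:seq_approx_truncated_small_area_disabled} already allows token-dependent $(w_i, t_i)$, but it must be threaded through the head-assignment. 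Once the layout is fixed, the error analysis is a routine triangle-inequality combination of the three already-quantified error sources.
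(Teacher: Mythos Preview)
Your plan has a real gap at the point where you invoke \cref{thm:seq_approx_truncated_small_area_disabled}. In that theorem the weight in $\mathrm{Range}_{[a,b]}(w^\top x_i + t_i)$ is a \emph{construction parameter}, hard-coded into $A$ and the attention weight matrices. In the gradient-descent input, however, $w$ sits inside each token $[x_i;\, y_i;\, w;\, 1]$, and the argument you need is $\alpha_{i,r}\, w^\top x_i + \tau_{i,r}$ with both $w$ and $x_i$ read from the same column --- a quadratic form in the token that no fixed linear readout $v^\top(\text{column})$ can produce. The paper resolves this by first proving an in-context variant, \cref{thm:in_context_truncated}, in which $w$ is explicitly part of the input and the key--query construction is engineered so that $K^\top Q$ manufactures $w^\top x_i$ internally (cf.\ \cref{rem:key_technique}). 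The gradient-descent theorem is then built on \cref{thm:in_context_truncated}, not on \cref{thm:seq_approx_truncated_small_area_disabled} directly.

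A second gap is your Step~(2) ``vector payload'' idea. In the interpolation construction, the columns of $V = W_V A(X)$ are indexed by interpolation point $k \in \{0,\dots,p-1\}$, and the softmax at output column $i$ selects $V_{:,k_i}$; that column knows $\tilde L_{k_i}$ but carries no information about $x_i$. You cannot encode $\tilde L_k \cdot (-\eta x_i)$ into $V_{:,k}$ because the right-hand side depends on both $k$ and $i$ while $V$ has only $p$ columns. Relatedly, making the piecewise-linear coefficients $c_{i,r}, \alpha_{i,r}, \tau_{i,r}$ depend on $y_i$ bakes input data into the construction, contradicting ``irrelevant of $X$''. The paper instead approximates each coordinate $r$ of $\partial_w l(w^\top x_i,y_i)$ by a ReLU network $\sum_h \mathrm{ReLU}(a_h^{(r)} w^\top x_i + b_h^{(r)} y_i + c_h^{(r)})$ with $y_i$ entering \emph{linearly in the argument} (so the coefficients $a_h^{(r)},b_h^{(r)},c_h^{(r)}$ are fixed), realizes each ReLU via \cref{thm:in_context_truncated} with scalar output placed at row $d{+}1{+}r$ through the $e_{\tilde k}$ mechanism, and finally sums across tokens by loading $-\eta/n$ into an extra $W_O^*$; the skip connection then adds $X$ back.
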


\begin{proof}
Please see \cref{sec:in-context_appendix} for a proof sketch and \cref{proof:thm:in_context_GD} for a detailed proof.
\end{proof}

We note that in the original proof of \cite{bai2024transformers}, they rely on the approximation ability of ReLU neural networks to approximate the derivative of the loss function. Therefore, they use ReLU-based attention to approximate a sum of ReLU functions.
In contrast, by leveraging \cref{thm:seq_approx_truncated_small_area_disabled}, we show softmax attention approximates generalized ReLU function by approximating truncated linear models, and hence approximates in-context gradient descent.
Since softmax attention is the dominant mechanism used in practice, our results provide a more realistic foundation for understanding in-context learning tasks.
In \cref{sec:in-context_appendix}, we provide two research directions inspired by \cref{thm:in_context_GD_no_proof_sketch}.

\section{Experimental Studies}
\label{sec:exp}In this section, we provide proof-of-concept numerical experiments to back up our theoretical results. 
We divide our experiments into the following two objectives.

\begin{itemize}
    \item \textbf{Objective 1: Validating the Proposed Interpolation Selection Scheme (\cref{thm:seq_approx_truncated_small_area_disabled} and \cref{thm:multi-head-truncated}).}
    We aim to verify the theoretical approximation rates (\cref{fig:num_head}): $O(1/p)$ with respect to the number of interpolation points $p$, linear scaling in the interval length $|b - a|$, and $O(1/H)$ in terms of the number of heads $H$ for multi-head attention.
    Furthermore, we print out the attention weights to determine that each column of $\Softmax(K^\top Q)$ becomes close to one-hot indicators selecting interpolation points (\cref{fig:attn_map}).
    
    \item \textbf{Objective 2: Sequence-to-Sequence Approximation (\cref{thm:seq2seq_appro}).}
    We create synthetic data for a sequence-to-sequence task to verify that the approximation rate is again $O(1/p)$ and $O(1/H)$.
    We use two-layer ReLU network with flatten input $X_{\rm flatten} \in \R^{dn}$ to mix information from all $dn$ input dimensions. 
    In this formulation, the output token at each position depends on the entire input sequence.
    The result in \cref{fig:seq2seq} shows that it aligns with the theoretical result.

\end{itemize}

\subsection{Validating the \texorpdfstring{$O(1/p)$}{} and \texorpdfstring{$O(1/H)$}{} Approximation Rates}
\label{subsec:obj_1}

\begin{figure}[ht!]
    \centering
    \minipage{0.33\textwidth}
        \includegraphics[width=\linewidth]{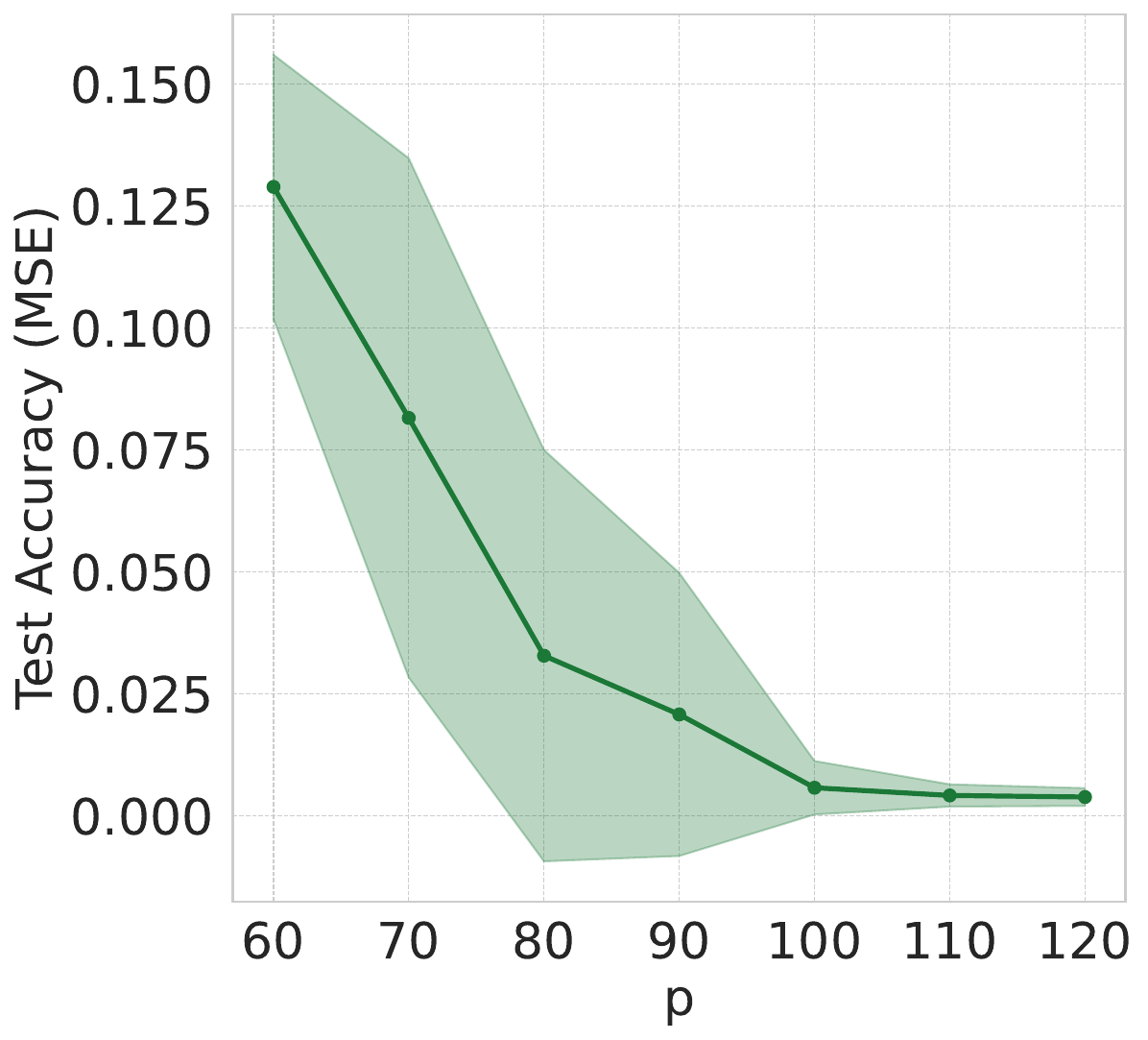}
    \endminipage\hfill
    \minipage{0.33\textwidth}
        \includegraphics[width=\linewidth]{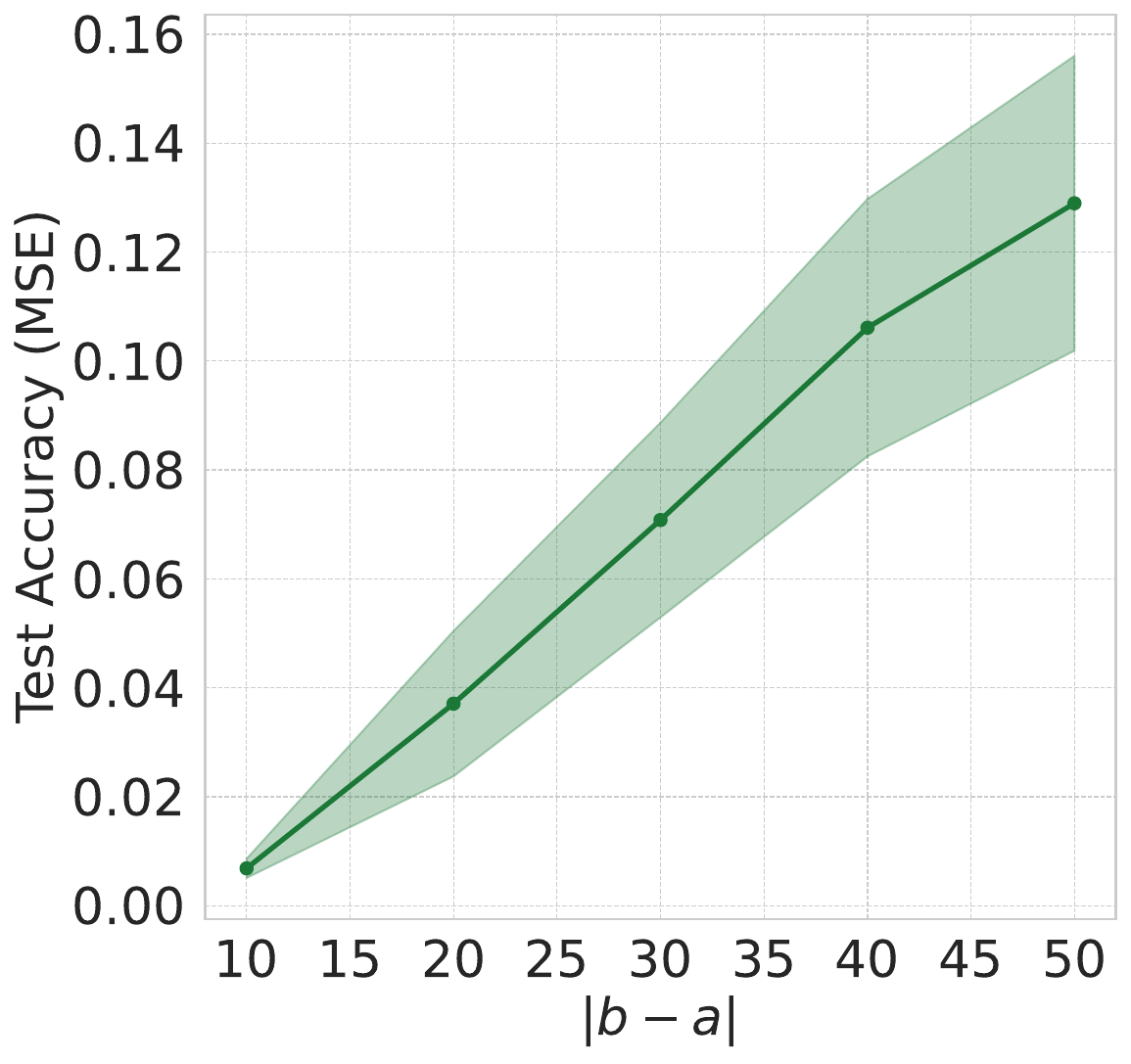}
    \endminipage\hfill
    \minipage{0.33\textwidth}
        \includegraphics[width=\linewidth]{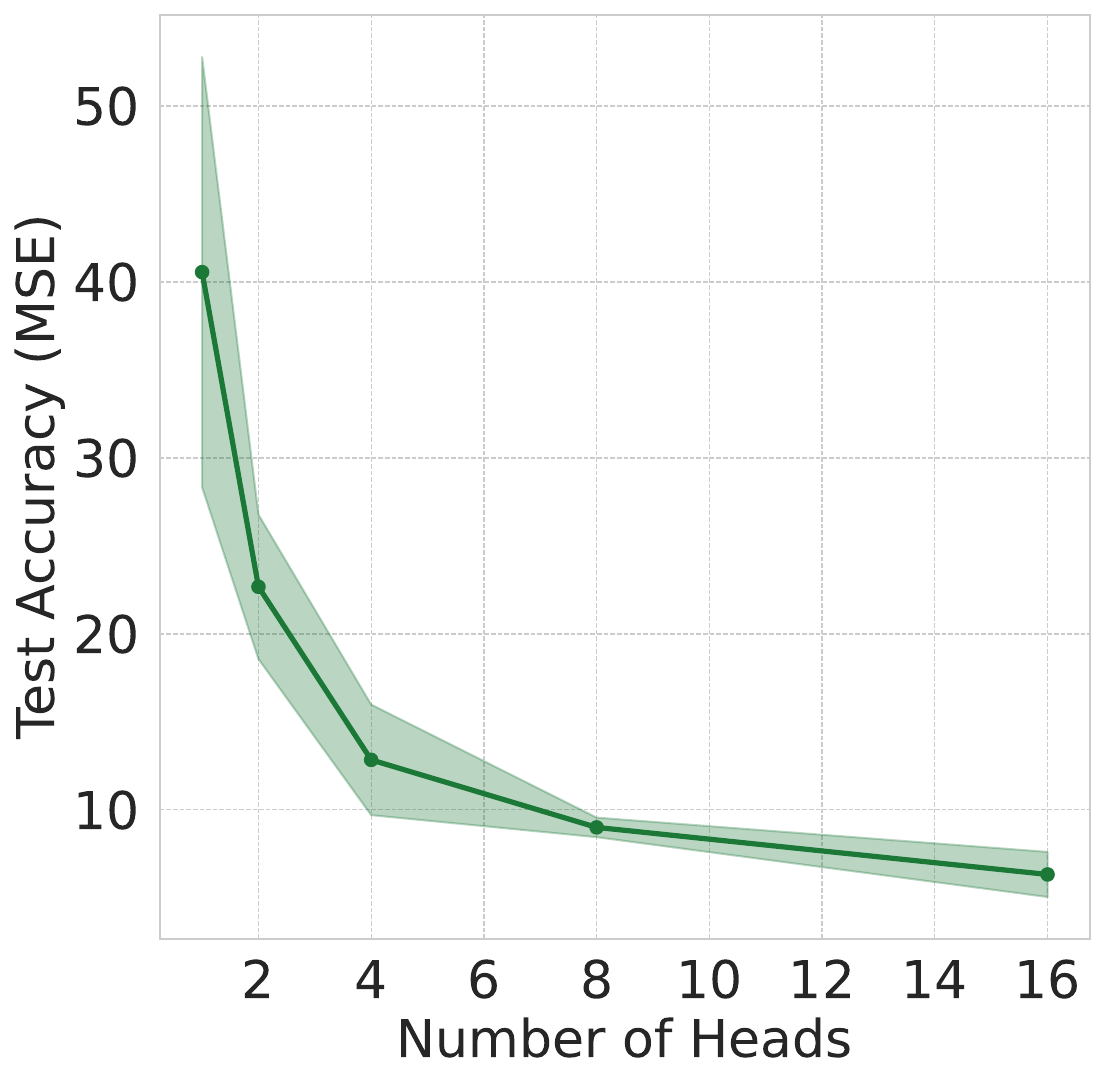}
    \endminipage
    \caption{\small {\textbf{Ablation Study for Three Key Parameters in Our One-Layer Attention (\cref{thm:seq_approx_truncated_small_area_disabled} and \cref{thm:multi-head-truncated}).}} 
    All the results align with the theoretical analysis that the approximation error scales as $O(1/p)$, $O(1/H)$, and grows linearly in $|b-a|$.
    We report test accuracy (MSE) as the mean and one standard deviation (shaded region) over $10$ random seed runs.
    The synthetic dataset consists of $1000$ samples with a $80/20$ train-test split.
    All other hyperparameters remain fixed for three experiments ($d=10$, $n=50$, hidden dimension $=32$, learning rate $=0.001$, epoch $=50$ and batch size $=32$).
    The experiments are run on an NVIDIA A100 GPU.}
    \label{fig:num_head}
\end{figure}

\begin{figure}
    \centering
    \includegraphics[width=\linewidth]{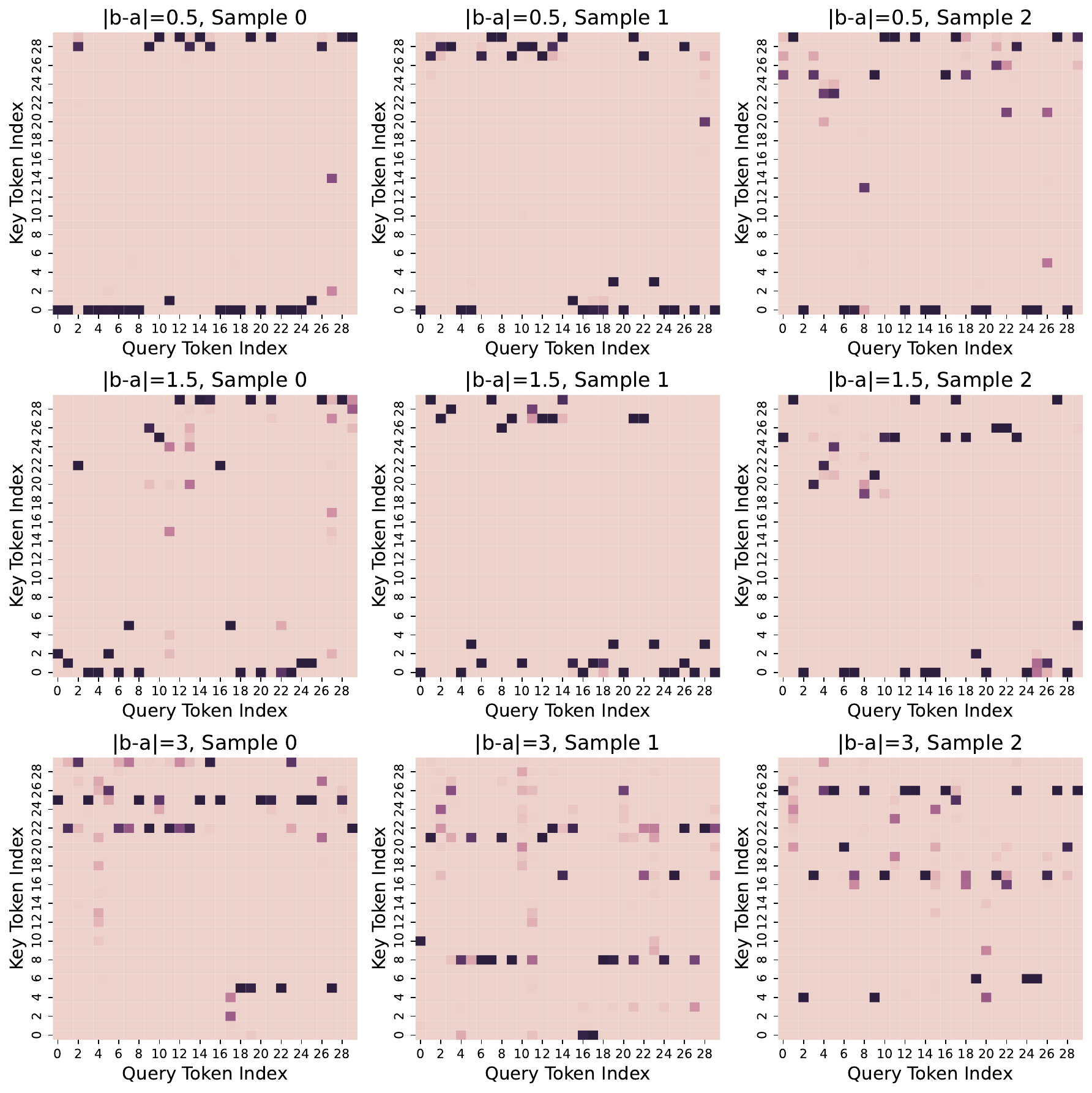}
    \vspace{-1em}
    \caption{\small {\textbf{Attention Heatmap for $\abs{b-a} = 0.5,1.5,3$.}}
    The figure shows the attention heatmap $\Softmax(K^\top Q)$ (\eqref{eqn:kq}) for $3$ random test samples with parameters $p, n = 30$.
    In particular, for smaller truncation intervals ($|b-a|=0.5$), the attention distribution concentrates on boundary interpolation points, as our theoretical analysis anticipates.
    When expanding the truncation interval width, the attention weights transition to selecting intermediate interpolation points.
    We set the hyperparameters to $100$ epochs, learning rate $=0.001$, batch size $32$, hidden dimension $=10$, $\beta = 30$, and random seed $=1234$.
    }
    \label{fig:attn_map}
\end{figure}

\paragraph{Model Architecture.}
We train single/multi-head single-layer softmax attention to model truncated linear model, with an extra linear layer $A$ applied on the input and create $p-n$ extra empty tokens.
For the attention weight experiments, we guide the model by encoding the interpolation points in the last row of the key matrix $K$ as in the proof of \cref{thm:seq_approx_truncated_small_area_disabled}.
We also encode the interpolation point onto random column indices in the value matrix $V$ as $\tilde{k}_i$ in the theorem.

\paragraph{Data Generation.}
For the experiments of the truncated linear model, we represent each sample as
$X = [x_1, \cdots, x_n] \in \R^{d \times n}$ and with ground truth have each label $y_i$ encoded on the first rows of the matrix $Y \in \R^{d \times n} \coloneqq [y_1e_1, y_2e_1, \cdots, y_ne_1]$.
For $i \in [n]$,  we first fix a truncated linear model by sampling a weight vector $w_i \sim N(0,I_d)$ 
and bias $t_i\sim N(0,1)$.
These token-specific parameters are fixed for all samples (in one run).
Then for each sample $X$, we draw every token $x_i \sim {\rm Uniform}(-5,5)$, and compute the label $y_i$ as 
\begin{align*}
    y_i = {\rm Range}_{[a,b]}(w_i^\top x_i + t_i).
\end{align*}
We generate $N$ samples using this process, we set $N=1000$ and train for $50$ epochs.
For attention weight experiments, the difference is that we encode the ground truth $y_i$ onto the same random column indices as described in the model architecture paragraph.

\paragraph{Metrics.}
We train the model with the following Mean Squared Error (MSE) loss
\begin{align*}
    \mathcal{L}_{\text{MSE}} =\sum_{i=1}^{n} (y_i - \hat{y}_i)^2,
\end{align*}
where $\hat{y_i} \in \R^d$ is the prediction of the attention layer.

\paragraph{Results.} We present our findings in \cref{fig:num_head} and \cref{fig:attn_map}.
\begin{itemize}
    \item \textbf{Approximation Performance.} 
    \cref{fig:num_head} shows that the MSE follows the theoretical approximation rates. 
    It decreases as $O(1/p)$ when we increase the number of interpolation points $p$. 
    It also scales linearly with the truncation interval $\abs{b-a}$ and behaves as $O(1/H)$ with $H$ heads in multi-head attention. 
    Increasing $p$ not only reduces the approximation error but also stabilizes training, as indicated by the smaller standard deviations across 10 runs.

    \item \textbf{Attention Heatmaps.} \cref{fig:attn_map} confirms the ``one-hot'' interpolation selection phenomenon. 
    For a small truncated range $a=-0.5, b=0.5$, most ground truths of $y$ lie at $\tilde{L}_0$ and $\tilde{L}{p-1}$. 
    As the figure shows, each token $x_i$ (query index) puts most of its attention on the 0-th and 29-th keys (the interpolation points). 
    When we increase $\abs{b-a}$, the attention weight spreads across more key indices.
\end{itemize}

\subsection{Sequence-to-Sequence Approximation Rates}

\begin{figure}[ht]
    \centering
    \minipage{0.46\textwidth}
    \includegraphics[width=\linewidth]{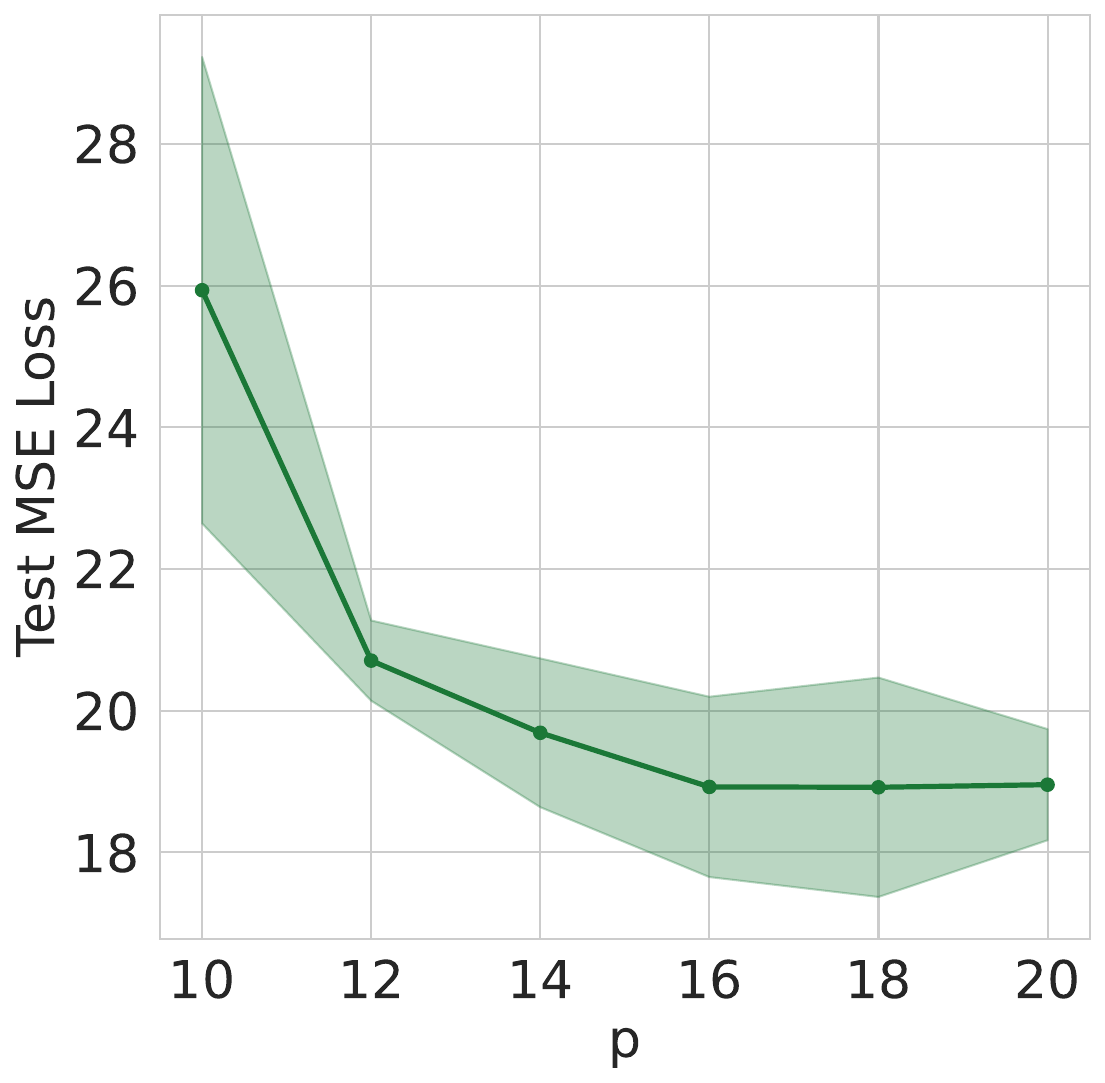}
    \endminipage\hfill
    \minipage{0.46\textwidth}
        \includegraphics[width=\linewidth]{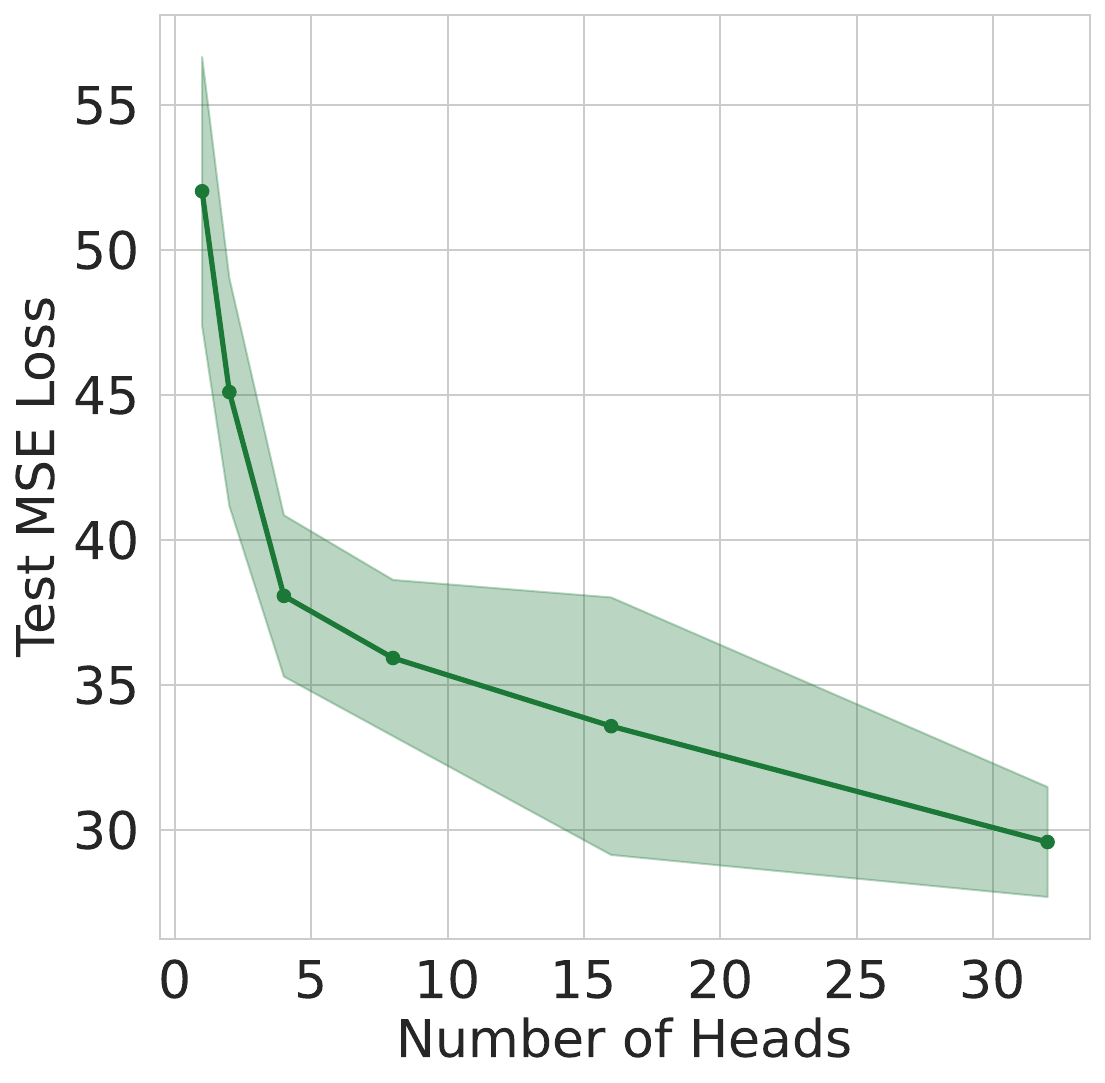}
    \endminipage
    \caption{{\textbf{Ablation Study for Two Parameters $p$ and $H$ in \cref{thm:seq2seq_appro}.}} 
    The results align with our theoretical approximation rate as $O(1/p)$ and $O(1/H)$.
    We report test accuracy (MSE) as the mean and one standard deviation (shaded region) over $10$ random seed runs.
    The synthetic dataset consists of $50000$ data points with $d=5$ and a $80/20$ train-test split.
    For both experiments, we set the learning rate $=0.001$, epoch $=3$, and batch size $=32$.
    For the number of interpolation points $p$ experiment (left figure), to speed up the training process, we set $n=10$ and the hidden dimension of the model to be $16$.
    For number of heads experiment (right figure), we increase the sequence length to $n=20$ to make the task harder so we can see the trend when increasing the number of heads, and also increase the hidden dimension to $32$ so it can be divided by $H=32$.
    The experiments are run on an NVIDIA A100 GPU.}
    \label{fig:seq2seq}
\end{figure}

\paragraph{Model Architecture.}
We train a small model with $2$-layer multi-head attention with linear mapping $A_1$ and $A_2$ as in \cref{thm:seq2seq_appro}.
For the experiment of number of interpolation points, we set $H=2$ to speed up the experiment.
All the parameters are randomly initialized instead of hard-set to the form of weight in the proof.

\paragraph{Data Generation.}
Same as \cref{subsec:obj_1}, each sample is in the form of $X = [x_1, \cdots, x_n] \in \R^{d \times n}$, and we generate each $x_i$ from a uniform distribution ${\rm Uniform} (0,1)$.
We generate targets via a global sequence-to-sequence mapping.
Concretely, for each sample $X$, we first flatten it into a vector $X_{\rm flatten} \in \R^{dn}$ and then pass it through a two-layer ReLU network:
\begin{align*}
    Y_{\rm flatten}&=W_2 {\rm ReLU}(W_1 X_{\rm flatten} + b_1) + b_2,
\end{align*}
where $W_1 \in \R^{m \times dn}$, $b_1 \in \R^m$ are the weights and bias of the hidden layer, and $W_2 \in \R^{dn \times m}$, $b_2 \in \R^{dn}$ are those of the output layer. 
Finally, $Y_{\rm flatten}$ is reshaped into a sequence in $Y = [y_1, \cdots, y_n]\in \R^{d \times n}$.
The above data generation ensures that each output token $y_i \in \R^d$ is a function of the entire input sequence.
In our experiments, we generate $N = 50000$ samples and again use a $80/20$ train-test split.
The input dimension and hidden dimension of the ReLU network to generate synthetic data are $d=5$ and $m=10$.

\paragraph{Metrics.}
We use Mean Squared Error (MSE) loss
\begin{align*}
    \mathcal{L}_{\text{MSE}} =\sum_{i=1}^{n} (y_i - \hat{y}_i)^2,
\end{align*}
where $\hat{y_i} \in \R^d$ is the prediction of the attention layer.

\paragraph{Results.}
As shown in \cref{fig:seq2seq}, the approximation rate is again in the trend of $O(1/p)$ and $O(1/H)$.
This result validates the theoretical analysis for sequence-to-sequence approximation of the attention-only layer considered in \cref{thm:seq2seq_appro}.
The decrease in MSE error indicates this small model (2-layer and hidden dimension $=16$) captures the global dependencies in the sequence-to-sequence task.

In summary, our empirical observations confirm that (i) attention emulates truncated linear model via “selection” of key-value pairs, and (ii) a straightforward attention-only network learns nontrivial sequence-to-sequence dependence.
Both align with the approximation rate provided by our theoretical results.

\section{Discussion and Conclusion}
\label{sec:conclusion}
We establish the universal approximation theory of simple softmax attention layer
for any continuous sequence-to-sequence function on a compact domain.
Our key technique is to cast attention as softmax-based selection mechanism of the interpolation points in the output domain (\cref{remark:interpolation}).
This enables softmax attentions with simple linear transform to approximate the generalized ReLUs (and hence many known universal approximators) in a token-wise manner (\cref{thm:seq_approx_truncated_small_area_disabled,thm:multi-head-truncated}).
Based on this, we derive the universal approximation theory for sequence-to-sequence functions using (i)  two softmax-attention layers (\cref{thm:seq2seq_appro}) or (ii) one softmax-attention layer followed by a softmax function (\cref{cor:seq2seq_appro_single}).
We also extend our results to in-context learning (\cref{sec:in-context}).

\textbf{Connecting to Practical Attention/Transformer.}
We remark that our sequence-to-sequence universal approximation in \cref{sec:seq2seq_universal_approx} uses the single-head result of \cref{thm:seq_approx_truncated_small_area_disabled} for simplicity of presentation.
The same proofs hold if we replace it with the multi-head result of \cref{thm:multi-head-truncated}.
That is, the two theorems are interchangeable for establishing universal sequence-to-sequence approximation.
The main differences lie in that the $A$ mapping is a sequence-wise linear operation in \cref{thm:seq_approx_truncated_small_area_disabled}, while $A$ is an ordinary token-wise linear transform in \cref{thm:multi-head-truncated}.
The construction of $A$ in \cref{thm:multi-head-truncated} aligns with a practical transformer/attention.
It is a token-wise linear layer augmented with positional encoding like an ordinary embedding layer.
We further  derive the sequence-to-sequence universal approximation result based on \cref{thm:multi-head-truncated} explicitly in \cref{sec:multi-head-truncated}.

\textbf{Limitation and Future Work.}
Our in-context learning result simulates gradient descent but does not establish universal approximation. 
Our theoretical results suggest that attention has the potential to be a universal in-context approximator, which we leave for future work.

\section*{Acknowledgments}
The authors would like to thank Mimi Gallagher, Sara Sanchez, Dino Feng and Andrew Chen for enlightening discussions;
Sripad Ganti and Jennifer Zhang for collaborations on related topics and pointing out typos; Jiayi Wang for facilitating experimental deployments.
JH also thanks the Red Maple Family for support.
The authors would like to thank the anonymous reviewers and program chairs for constructive comments.

Lastly, JH dedicates this work to the memory of his aunt, Lily Cheung, who passed away during its preparation (March 2025). 
Her loving and caring spirit will always inspire him.

JH is partially supported by the Walter P. Murphy Fellowship and the Terminal Year Fellowship (Paul K. Richter Memorial Award) of Northwestern University.
Han Liu is partially supported by NIH R01LM1372201, NSF
AST-2421845, Simons Foundation
MPS-AI-00010513, AbbVie, Dolby and Chan Zuckerberg Biohub Chicago Spoke Award.
This research was supported in part through the computational resources and staff contributions provided for the Quest high performance computing facility at Northwestern University which is jointly supported by the Office of the Provost, the Office for Research, and Northwestern University Information Technology.
The content is solely the responsibility of the authors and does not necessarily represent the official
views of the funding agencies.

\newpage
\appendix
\label{sec:append}
{
\setlength{\parskip}{-0em}
\startcontents[sections]
\printcontents[sections]{ }{1}{}
}

\part*{Supplementary Material}
{
\setlength{\parskip}{.2em}
\startcontents[sections]
\printcontents[sections]{ }{1}{}
}

\clearpage
\section{Table of Notation}
\label{sec:notation}\blue{
\begin{table}[h]
\caption{Notations and Symbols}
\centering
\begin{tabular}{cl}
\toprule
Symbol & Description \\
\midrule
$d$ & Input dimension of each sequence element \\
$n$ & Input sequence length \\
$H$ & Number of attention heads \\
$p$ & Number of interpolation anchors ($p>n$) \\
$d_o$ & Output (value) dimension per head \\
\midrule
$X=[x_1,\dots,x_n]$ & Input sequence matrix in $\mathbb{R}^{d\times n}$ \\
$x_i$ & $i$-th token (column) of $X$ \\
$w_i, t_i$ & Weight and bias for the $i$-th truncated linear model \\
$f$ & Target continuous function being approximated \\
$\mathcal{X}$ & Compact domain of inputs considered \\
\midrule
${\rm Range}{[a,b]}(\cdot)$ & Truncated linear (generalized ReLU) with range $[a,b]$ \\
$a,b$ & Lower and upper bounds of truncation (with $a<b$) \\
$\tilde{L}_k$ & $k$-th uniformly spaced interpolation anchor in $[a,b]$ \\
$\Delta L$ & Anchor spacing: $\tilde{L}_{k}-\tilde{L}_{k-1}$ \\
$k$ & Interpolation index, $k \in \{0,...,p-1\}$ \\
$\tilde{k}$ & Row index of interpolation point $\tilde{L}_k$ in the output space, $\tilde{k} \in [d_o]$ \\
$G(\cdot)$ & Mapping $G:[0,...,p-1]\to[d_o]$ with $\tilde{k}=G(k)$ \\
$k_i$ & Index of anchor closest to ${\rm Range}_{[a,b]}(w_i^\top x_i+t_i)$ \\
$\tilde{k}_i$ & $i$-th token's row index of the chosen anchor in the output space \\
$e_k$ & One-hot vector with $1$ in position $k$ \\
\midrule
$\beta$ & Inverse temperature in softmax \\
$\delta$ & Gap between the two largest input entries of softmax \\
$\gamma$ & Gap between the first largest and third largest input entries of softmax \\
$\epsilon$ & Desired overall approximation error \\
$\epsilon_0$ & Softmax approximation error \\
\bottomrule
\end{tabular}
\label{tab:nomenclature}
\end{table}
}

\clearpage
\section{Detailed Results and Discussion of In-Context Learning}
\label{sec:in-context_appendix}
Here we provide an application to showcase the generality of our theory and techniques.

\subsection{Attention Approximates Truncated Linear Models In-Context}
\label{sec:ICL_approx_tranc}

We extend the interpolation selection technique and 
\cref{thm:seq_approx_truncated_small_area_disabled}
to the in-context learning setting \cite{brown2020language,bai2024transformers}. 
The next theorem shows that when the length-$n$ input includes weights $\{w_i\}_{i\in[n]}$ and biases $\{t_i\}_{i\in[n]}$, attention is able to approximate $n$ truncated linear models $\{{\rm Range}_{[a,b]}(w_i^\top x_i +t_i)\}_{i\in[n]}$ in-context.

\begin{theorem}[Attention Approximates Truncated Linear Models In-Context]
\label{thm:in_context_truncated}
Fix real numbers $a < b$, and let the truncation operator ${\rm Range}_{[a,b]}(\cdot)$ follow \cref{def:range}.
Let the input be
\begin{align*}
    X = 
    \begin{bmatrix}
    x_1 & x_2 & \cdots & x_n\\
    w & w & \cdots & w \\
    t_1 & t_2 & \cdots & t_n
    \end{bmatrix}
    \in \R^{(2d+1)\times n},
\end{align*}
where $\{w, x_i\}_{i\in [n]}$ are bounded.
Let $\epsilon_0 \ge 0$.
For a precision parameter $p>n$, there exists a single-layer, single-head self-attention  ${\rm Attn}$ with a linear transformation $A:\R^{(2d+1)\times n}\to \R^{(2d+d_o+2)\times p}$, such that $\mathrm{Attn}\circ A: \mathbb{R}^{d \times n} \to \mathbb{R}^{d_o \times n}$ satisfies, for any $i\in [n]$,
\begin{align*}
    \| {\rm Attn}\circ A(X)_{:,i} - {\rm Range}_{[a,b]}(w_i^\top x_i + t_i) e_{\tilde{k}_i}\|_\infty \leq \underbrace{\max\{\abs{a}, \abs{b}\} \cdot \epsilon_0}_{\text{finite-$\beta$ softmax error}} + \underbrace{\frac{b-a}{p}}_{\text{interpolation error}},
\end{align*}
each $w_i$ is a elementwise multiplication of $w$ by a vector $v_i$.
Here $e_{\tilde{k}_i}$ is a one-hot vector with a value of $1$ at the $\tilde{k}_i$-th index and $0$ elsewhere, 
and $\tilde{k}_i \in [d_o]$ is
\begin{align*}
    \tilde{k}_i = G(k_i) \in [d_o],
    \quad \text{with} \quad
     k_i = \argmin_{k \in \{0,1,\cdots,p-1\}}(-2x_i^\top w_i-2t_i+\tilde{L}_0+\tilde{L}_k)\cdot k. 
\end{align*}
Here $G: [p]\to[d_o]$ denotes any set-to-set function sending each selected interpolation index $k_i$ into an integer  $\tilde{k}_i\in [d_o]$ for $i\in [n]$.
By setting $\beta \geq (\ln(n-1)-\ln\epsilon_0)/\delta$, we can make $\epsilon_0$ arbitrarily small, though the theorem fails on a arbitrarily small volumn in $\R^{d\times n}$.
When $\tilde{k}_i$ a constant for all $i\in [n]$, by setting $\beta \geq (\ln(n-2)-\ln\epsilon_0) / ((\Delta L)^2/2)$ we achieve arbitrary small $\epsilon_0$ without any failure region.
\end{theorem}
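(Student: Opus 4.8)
The plan is to run the interpolation-selection argument of \cref{thm:seq_approx_truncated_small_area_disabled} almost verbatim, changing only that the weight $w$ and the biases $\{t_i\}_{i\in[n]}$ parametrizing the $n$ truncated linear models are now \emph{read from the context} $X$ rather than baked into the linear map $A$. First I would specify $A:\R^{(2d+1)\times n}\to\R^{(2d+d_o+2)\times p}$. Viewing a general linear map as an affine combination of left/right multiplications of $X$ plus a constant matrix, $A$ copies the token rows $x_i$ (rows $1,\dots,d$ of $X$), the shared weight $w$ (rows $d{+}1,\dots,2d$), and the bias $t_i$ (row $2d{+}1$) into the first $n$ columns, pads to $p$ columns, and uses its constant part to append the $p$ interpolation anchors $\{\tilde L_k\}_{k=0}^{p-1}$ of \cref{def:interpolation} together with $d_o{+}1$ auxiliary coordinates. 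All of this is linear in $X$, since $w$ is handled by a row copy and the anchors by the additive constant part.

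Next I would pick $W_K,W_Q$ so that the bilinear score realizes
\begin{align*}
  \bigl[K^\top Q\bigr]_{k,i} = \bigl(-2\,x_i^\top w_i - 2t_i + \tilde L_0 + \tilde L_k\bigr)\cdot k, \qquad w_i := v_i \odot w,
\end{align*}
exactly as in Step 3 of \cref{thm:seq_approx_truncated_small_area_disabled}. The only genuinely new point is that $x_i^\top w_i$ must be formed from the in-context copy of $w$: pairing the $x$-rows of $A(X)_{:,i}$ with the $w$-rows through a fixed diagonal block inside $W_K^\top W_Q$ produces $\sum_j v_{i,j}\,x_{i,j}\,w_j = x_i^\top(v_i\odot w)$, which is precisely why the realizable weight vectors are the elementwise rescalings $w_i = v_i\odot w$ and not arbitrary vectors; $t_i$ and the anchors enter linearly. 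By the equivalence $\argmin_k(-2x_i^\top w_i-2t_i+\tilde L_0+\tilde L_k)\,k = \argmin_k|x_i^\top w_i + t_i - \tilde L_k|$ shown there, the minimizing index is $k_i$, the anchor closest to $w_i^\top x_i + t_i$, so by \cref{lem:Soft_to_Hard} the $i$-th column of the softmax of the key-query score is close to $e_{k_i}\in\R^p$. Then $W_V$ is chosen so that the $k$-th column of $V = W_V A(X)$ is $\tilde L_k\,e_{\tilde k}$ with $\tilde k = G(k)$, whence $V\,\Softmax(K^\top Q)_{:,i}$ is close to $\tilde L_{k_i}e_{\tilde k_i}$, and $W_O$ drops the $p-n$ padding columns. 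The error decomposes, as before, into an interpolation part bounded by $(b-a)/p$ and a finite-$\beta$ softmax part bounded by $\max\{|a|,|b|\}\,\epsilon_0$, yielding the claimed inequality.

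It remains to handle the threshold on $\beta$ and the failure region. If $G$ is non-constant, \cref{lem:Soft_to_Hard} needs the gap $\delta$ between the two largest entries of the $i$-th key-query column, and this gap degenerates to $0$ exactly when $w_i^\top x_i + t_i$ is equidistant from two adjacent anchors; that is a set of arbitrarily small volume in $\R^{d\times n}$, and off it the choice $\beta\ge(\ln(n-1)-\ln\epsilon_0)/\delta$ suffices. If instead $G$ is constant, the two adjacent anchors map to the same output coordinate, so any convex combination of them still lies within $\Delta L$ of the target; then one only needs to separate the nearest anchor from the \emph{third}-nearest, whose logit gap is uniformly at least $(\Delta L)^2/2$, and $\beta\ge(\ln(n-2)-\ln\epsilon_0)/((\Delta L)^2/2)$ works on the whole domain with no failure set. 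I expect the main obstacle to be this in-context encoding step: arranging the bilinear attention score to reproduce $x_i^\top w_i$ from a single shared, context-supplied $w$ (which is exactly what forces the restricted form $w_i = v_i\odot w$), while simultaneously fitting all $p$ anchors into the width-$p$ augmented matrix via $A$'s constant part without perturbing the token-dependent data $(x_i,t_i)$. Once that bookkeeping is set up, the remainder is a transcription of the proof of \cref{thm:seq_approx_truncated_small_area_disabled}.
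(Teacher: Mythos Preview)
Your proposal is correct and follows essentially the same approach as the paper: construct a linear $A$ (as a sum of left/right multiplications of $X$ plus a constant) that reads $w$ and the biases from the context to reproduce exactly the augmented matrix used in \cref{thm:seq_approx_truncated_small_area_disabled}, after which the key--query, value, $W_O$ choices and the two-term error bound (interpolation plus finite-$\beta$ softmax via \cref{lem:Soft_to_Hard}) carry over verbatim, including the $\beta$ thresholds and the constant-$G$ vs.\ non-constant-$G$ dichotomy for the failure region. The paper's own proof is even terser---it simply exhibits $A$ explicitly (one $LXR$ term to copy $x_i$, a second $LXR$ term to produce the columns $kw$ and $-2kt$, and a constant term for the anchors) and then declares the rest identical to \cref{proof:thm:seq_approx_truncated_small_area_disabled}; your write-up is, if anything, more detailed on the softmax-gap bookkeeping.
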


\begin{proof}[Proof Sketch]    
Applying our interpolation method in the in-context learning setting, we compute the interpolation point closest to the linear transformation result within attention. 
This involves comparing $|w^\top x - \tilde{L}_k|_2^2$ for $k \in [p]$. 
Consequently, attention must compute a \textit{quartic} polynomial of the input, while standard transformers only produce \textit{quadratic} expressions. 
To combat this, we propose a technique enabling attention to perform equivalent computations for higher-order polynomials in our setting.
Please see \cref{rem:key_technique} and \cref{proof:thm:in_context_truncated} for a detailed proof.
\end{proof}

\subsection{In-Context Gradient Descent}
\label{sec:icgd}

We extend \cref{thm:in_context_truncated} to 
show that standard softmax attention perform in-context gradient descent,
broadening the results established for ${\rm ReLU}$ attention in \cite{bai2024transformers}.
Specifically, we demonstrate that softmax attention is capable of doing in-context gradient descent on convex loss functions.

We first define the problem setting similar to theirs.

\begin{definition}[In-Context Learning Problem Formulation]
    The sequential input $X$ in the in-context learning scenario is defined as
    \begin{align*}
        X:=
        \begin{bmatrix}
        x_1 & x_2 & \cdots & x_n\\
        y_1 & y_2 &\cdots & y_n\\
        w & w & \cdots & w\\
        1 & 1 & \cdots & 1
    \end{bmatrix},
    \end{align*}
    where $(x_i,y_i),\quad i\in[n]$ denote the input-output pairs. $w$ parametrize the model connecting $x_i$ and $y_i$, and is altered(trained) between layers.
\end{definition}
\begin{remark}
    The task of in-context learning is simplified to using the given input-output pairs $(x_i,y_i)$ to predict the output of a newcome input $x_u$.
\end{remark}

In this setting, we prove a multi-head {Softmax} attention is capable of doing in-context gradient descent on loss functions parametrized by $w^\top x_i\quad(i\in[n])$ and $t$(as linear coefficient and bias), as well as giving an according prediction to the output on $x_u$.

\begin{theorem}[In-Context Gradient Descent]
\label{thm:in_context_GD}
    Let $l:\R\times \R\to \R$ be any $C^1$ loss function defined on $(w^\top x_i, y_i)$. 
    With input $X$ in the form of \cref{def:input_in_context}, when $X$ is bounded, there exists a multi-head self-attention ${\rm Attn_m}$ with skip connections and each attached with a linear layer, such that for any $\epsilon >0$, irrelevant of $X$, we have
    \begin{align*}
        & \left\|
        {\rm Attn}_m\circ A(X) 
        -
        \begin{bmatrix}
            x_1 &  \cdots & x_n\\
            y_1 & \cdots & y_n\\
            w-\eta\nabla L(w)  & \cdots & w-\eta\nabla L(w)\\
            1  & \cdots & 1
        \end{bmatrix}
        \right\|_\infty
         \leq 
        \epsilon,
    \end{align*}
    where $\eta$ denotes the learning rate and $L(w):= \sum_{i=1}^n l(w^\top x_i,y_i)$ is an empirical loss upon the given input-output pairs.
\end{theorem}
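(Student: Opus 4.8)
The plan is to reduce the in-context gradient descent claim to the in-context truncated-linear-model approximation of \cref{thm:in_context_truncated}, via the following observation: one step of gradient descent on $L(w)=\sum_{i=1}^n l(w^\top x_i, y_i)$ reads
\[
  w - \eta\nabla L(w) = w - \eta\sum_{i=1}^n \partial_1 l(w^\top x_i, y_i)\, x_i,
\]
so it suffices to have attention compute, in-context and to within $\epsilon$, the scalars $\partial_1 l(w^\top x_i, y_i)$ for each $i\in[n]$ and then form the weighted sum $\sum_i \partial_1 l(w^\top x_i,y_i)\, x_i$ with a linear read-out plus skip connection. The key point is that since $X$ is bounded, each inner product $w^\top x_i$ lives in a compact interval, hence $s \mapsto \partial_1 l(s, y_i)$ is a continuous (indeed, since $l\in C^1$, continuous) scalar function on a compact domain. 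By the universal approximation of generalized ReLUs established earlier — a continuous scalar function on a compact interval is approximable in $L_\infty$ by a finite sum of truncated linear models ${\rm Range}_{[a,b]}(u^\top z + s)$ — we can approximate $\partial_1 l(\cdot, y_i)$ arbitrarily well by finitely many truncated linear pieces. Then \cref{thm:in_context_truncated}, which says a single-head attention layer approximates $n$ truncated linear models $\{{\rm Range}_{[a,b]}(w_i^\top x_i + t_i)\}_{i\in[n]}$ in-context with $\infty$-norm error $\max\{|a|,|b|\}\epsilon_0 + (b-a)/p$, lets each of these pieces be realized by an attention head; using multiple heads (one group of heads per linear piece of the piecewise-linear surrogate) we assemble $\partial_1 l(w^\top x_i, y_i)$ at each token $i$.

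First I would set up the input embedding: the linear map $A$ takes $X$ in the form of \cref{def:input_in_context} and rearranges/pads it into the input format required by \cref{thm:in_context_truncated}, namely stacking $x_i$, the (broadcast) parameter $w$, and per-token biases $t_i$ determined by the $y_i$ and by the breakpoints of the piecewise-linear approximation of $\partial_1 l$. Next I would invoke the ReLU/truncated-linear universal approximation (the construction behind \cref{lem:relu_univ_approx} specialized to one dimension) to fix, for a target accuracy $\epsilon/(3n\eta \|x\|_\infty)$ or so, a representation $\partial_1 l(s, y) \approx \sum_{r=1}^R c_r\, {\rm Range}_{[a_r,b_r]}(u_r s + \tau_r(y))$ valid uniformly over the compact range of $s$ and $y$; crucially the breakpoints depend on $y_i$ only through an affine shift, which is exactly the degree of freedom that $t_i$ provides in \cref{thm:in_context_truncated} (note that theorem's remark that $w_i$ is an elementwise product $w\odot v_i$, which handles the coefficient $u_r$). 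Then I would apply \cref{thm:in_context_truncated} once per piece $r$, contributing a block of heads whose token-$i$ output is $\approx {\rm Range}_{[a_r,b_r]}(u_r(w^\top x_i) + \tau_r(y_i))$; summing with coefficients $c_r$ (absorbed into $W_V$ and $W_O$) gives $\approx \partial_1 l(w^\top x_i, y_i) e_{\tilde k}$ in a designated row. Finally, an additional multi-head (or a single extra) attention/linear operation multiplies each token's scalar by $x_i$ and sums over $i$ — this is the one place a genuine token-mixing step is needed, and it can be done either by a linear layer acting on the concatenation of the scalars with the $x_i$ (if we route $x_i$ along), or by a uniform-attention head that averages $n \cdot \partial_1 l(w^\top x_i,y_i) x_i$ across positions — and the skip connection adds back $w$, yielding $w - \eta\nabla L(w)$ in the third block of rows while the $x_i$, $y_i$, and $1$ rows pass through unchanged via identity skip connections. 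Error control is a triangle inequality over the three sources: the piecewise-linear surrogate error for $\partial_1 l$, the interpolation error $(b-a)/p$ from \cref{thm:in_context_truncated}, and the finite-$\beta$ softmax error $\max\{|a|,|b|\}\epsilon_0$; choosing $p$ large and $\beta \ge (\ln(n-2) - \ln\epsilon_0)/((\Delta L)^2/2)$ (the failure-region-free regime of \cref{thm:in_context_truncated}, available since the target row index is constant here) drives the total below $\epsilon$ irrespective of the bounded $X$.

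The main obstacle I expect is the final token-mixing step $\sum_{i=1}^n \partial_1 l(w^\top x_i, y_i)\, x_i$: \cref{thm:in_context_truncated} produces the scalar $\partial_1 l(w^\top x_i, y_i)$ at position $i$ (in some output row), but gradient descent needs these scalars recombined with the $x_i$ and \emph{aggregated across all positions} into a single vector that is then broadcast to every column. Realizing this cleanly with attention — rather than smuggling it into $A$ or into an unconstrained linear layer — requires a careful second layer: one natural route is to have the preceding layer place $(\partial_1 l(w^\top x_i, y_i), x_i)$ together in each column, then use a head with (near-)uniform attention scores so that $W_V X\,{\rm Softmax}(K^\top Q)$ computes the per-column average of $\partial_1 l(w^\top x_i,y_i)\,x_i$ (up to the factor $n$, absorbed into $\eta$), with $W_O$ broadcasting the result to all $n$ output columns and the skip connection supplying $w$. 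Making the "near-uniform attention" precise — bounding how far ${\rm Softmax}$ of a constant-ish score matrix is from exactly uniform, and propagating that error — is the delicate quantitative piece; everything else is bookkeeping on the embedding dimensions and a routine triangle-inequality accounting of the three error terms. A secondary subtlety is ensuring the truncation window $[a_r, b_r]$ for each linear piece of $\partial_1 l$ is chosen so that, over the full bounded range of $w^\top x_i$, the sum of the truncated pieces reproduces $\partial_1 l$ and not merely agrees on one subinterval; this is standard in the ReLU-network construction of \cref{lem:relu_univ_approx} but must be stated explicitly in the in-context coordinates.
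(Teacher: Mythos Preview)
Your overall strategy matches the paper's: approximate the loss derivative by a ReLU/truncated-linear surrogate, realize each piece with a head via \cref{thm:in_context_truncated}, and restore the update through a skip connection. The organization differs in two substantive ways.

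First, the paper does not isolate the scalar $\partial_1 l$ and then multiply by $x_i$. It directly approximates each coordinate $r\in[d]$ of the vector $\partial_w l(w^\top x_i,y_i)$ by its own ReLU network $\sum_{h}\relu(a_h^{(r)} w^\top x_i + b_h^{(r)} y_i + c_h^{(r)})$, using $dH$ heads in total (one per pair $(h,r)$), with a preparatory token-wise linear map $L_{h,r}$ rescaling the $x_i,y_i,1$ rows and each head depositing its scalar output in row $d{+}1{+}r$ via the one-hot $e_{\tilde k_i}$ mechanism of \cref{thm:in_context_truncated}. This sidesteps the bilinear ``scalar times $x_i$'' step you flag as the obstacle.

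Second, the cross-token sum $\sum_{i}$ and broadcast that you treat as requiring a second layer or a uniform-attention head is done inside the same layer: the paper post-multiplies each head's $W_O$ by $W_O^*=\bigl[-\tfrac{\eta}{n}1_n,\ldots,-\tfrac{\eta}{n}1_n\bigr]\in\R^{n\times n}$. Since the paper's attention is $W_V Z\,\Softmax(\cdot)\,W_O$ with $W_O\in\R^{n\times n_o}$ acting on columns, summing over all tokens and writing the result into every output column is a single linear map absorbed into the existing heads; the skip connection then supplies the unchanged $x_i,y_i,w,1$ rows. Your uniform-attention route is unnecessary machinery here, and the specific mechanism you sketch --- placing $(\partial_1 l, x_i)$ together in a column and expecting $W_V Z\,\Softmax(K^\top Q)$ to emit the product $\partial_1 l\cdot x_i$ --- does not actually produce a product, since $W_V$ acts linearly on each column. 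The paper's coordinate-wise approximation of the full gradient is precisely what makes the remaining aggregation purely linear.
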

\begin{proof}[Proof Sketch]
We know the universal approximation theorem for ReLU neural networks \citep{pinkus1999approximation} ensures that there exists ReLU network $\sum_{h=1}^{H} {\rm ReLU}(a^{(r)}_h w^\top x_i + b^{(r)}_h y_i+c^{(r)}_h)$ approximate $r$-th coordinate of the derivative of loss function on token $x_i$ as $\frac{\partial}{\partial_w}l(w^\top x_i, y_i) \in \R^d$ for $r \in [d]$. By \cref{thm:in_context_truncated}, we construct multi-head attention with linear mapping $\sum_{h=1}^H\sum_{r=1}^d 
{\rm Attn}_{h,r}\circ A_{h,r}(\cdot)$ to approximate the above ReLU neural network on every coordinate $r$, hence also approximate $\frac{\partial}{\partial_w}l(w^\top x_i, y_i) \in \R^d$ for $r \in [d]$. 
By designing $W_O^*$ we sum up the derivative of loss function $\frac{\partial}{\partial_w}l(w^\top x_i, y_i)$ on different in-context example $(x_i, y_i)$, hence approximate $\nabla L(w)$.
Please see \cref{proof:thm:in_context_GD} for a detailed proof.
\end{proof}

We note that in the original proof of \cite{bai2024transformers}, they also rely on the approximation ability of ReLU neural networks to approximate the derivative of the loss function. Therefore, they use ReLU-based attention in their proof to approximate a sum of ReLU functions.
In contrast, by leveraging \cref{thm:seq_approx_truncated_small_area_disabled}, we show that softmax attention approximates generalized ReLU function by approximating truncated linear models, and hence approximates in-context gradient descent.
Since softmax attention is the dominant mechanism used in practice, our results provide a more realistic foundation for understanding in-context learning tasks.

Beyond this, \cref{thm:in_context_GD} also suggest two advanced future works.

\begin{itemize}
    \item \textbf{Task Composition.}
    Our construction naturally extends to task composition from subtasks.  
    Suppose we have $N$ subtasks (e.g., gradient descent, lasso, linear regression, etc.). 
    \cref{thm:in_context_GD} imply there exists an attention layer to approximate this task in-context (such as ${\rm Attn}_{\mathrm{GD}}$ for the gradient-descent subtask).  
    Our universal approximation results allow a \emph{frozen} attention module to approximate these task-specific attention maps in-context, so that a single attention module realize all $N$ subtasks from input-output examples. We also remark that this step is not trivial.
    On top of this frozen layer, we introduce additional ``routing'' attention layer to select and compose these subtasks into the task of our interest in-context. This way extends our techniques to meta-learning or task composition naturally.

    \item \textbf{Simulation of Learning Algorithms In-Context.}
    By stacking \cref{thm:in_context_GD}, softmax attention simulate multi-step in-context gradient descent and thereby recover a wide range of learning algorithms. Specifically, our \cref{thm:in_context_GD} shows that a single softmax-attention layer implement one step of gradient descent for any $C^1$ loss of the form $\ell(w^\top x_i, y_i)$. This contains a wide range of loss functions, including ridge, GLM and lasso loss functions.  
    By stacking copies of this layer, standard convergence results for gradient descent (as in Lemma 14, Lemma C.1, Proposition A.2, and Proposition A.3 of \cite{bai2024transformers}) imply that a depth-$T$ transformer approximate the $T$-step in-context learning dynamics for these algorithms.

\end{itemize}
We view these two directions as promising next steps for developing a more systematic theory of task composition and algorithm learning in transformers.

\clearpage
\section{Related Work}
\label{sec:relate_work}

\textbf{Universal Approximation of Transformer.}
We first introduce the most relevant previous works about the universal approximation ability of transformer, and move on to other works investigating the expressive power of transformer with different target function classes.

\citet{yun2019transformers,kajitsuka2023transformers} treat attention layer as the contextual mappings and derive that attention layer attached with FFNs is a universal approximator on continuous sequence-to-sequence permutation equivariant function.
Specifically, \citet{yun2019transformers} prove that multi-head attention with two-layer FFN approximate continuous permutation equivariance sequence-to-sequence functions on a compact domain.
Their construction of transformer block maintains constant width but requires $O(n(1/\delta)^{dn}/n!)$ layers, where $\delta$ is the fixed grid width of the input domain and $n$ is the sequence length.
For any continuous sequence-to-sequence function, removing the factorial term $n!$ in the denominator leaves the remaining term growing exponentially with $n$.
\citet{kajitsuka2023transformers} further show that one-layer and single-head attention, with low-rank weight matrices, is able to carry out contextual mapping, simplifying the construction in terms of the number of layers. 
\citet{takakura2023approximation} prove that one-layer transformer (attention + token-wise FFN) with one embedding layer (with positional encoding) approximates shift-equivariant $\alpha$-smoothness function.
They show that the approximation error of the above function class is independent of input and output dimension hence achieving an infinite-dimension approximation result.
Their result require $O(\log(1/\epsilon)^{1/\alpha})$ number of heads. \citet{jiang2023approximation} use Kolmogorov representation theorem to get Jackson-type approximation rate, which hinges on explicit smoothness assumptions that yield quantitative convergence rates for single-layer single-head transformer.
Despite these advances, prior works proving the universality of Transformers often depend on the FFNs attached after attention to perform token-wise transformations. 
Our work is different from these papers by removing the need for FFN from transformer to demonstrate the first universal approximation result of attention mechanism. Our construction requires $2$-layer multi-head attention with linear transformation, with head complexity $H = O(d(1/\delta)^{dn})$. See \cref{remark:head_complexity} for detailed discussion.

Several other works investigate the universal approximation theorem of transformer with different variants.
\citet{yun2020n} prove that universal approximation of sparse transformer.
\citet{kratsios2022universal} prove the constrained universal approximation theorem of probabilistic transformer.
\citet{likhosherstov2023expressive} and \citet{edelman2022inductive} demonstrate that a single-layer self-attention mechanism is able to learn sparse functions of the input sequence, with sample complexity and hidden size are logarithmic relative to the sequence length.
For the representation power on matrix, \citet{bhojanapalli2020low} show that when the hidden dimension of attention is smaller than sequence, multi-head attention cannot output certain positive column-stochastic matrices, and
\citet{likhosherstov2023expressive}
show that self-attention approximates any sparse matrices.
Other works investigate the universal approximation of in-context learning setting.
\citet{furuya2024transformers} shows a deep transformer block approximating any continuous mapping from an arbitrary-length prompt to its next token.
\citet{li2025transformers} characterize a broad family of functions $f:\R^d \to \R$ that admit a sparse expansion in a fixed finite feature basis, and show that a transformer with sigmoid activation attention simulates a Lasso objective to recover those coefficients from the in-context examples.
In contrast, our universal approximation result targets standard sequence-to-sequence functions outside the ICL framework and removes FFN, establishing universality for the softmax attention mechanism alone.

\paragraph{Interpolation Methods for Universal Approximations.}
We also summarize and discuss prior works that utilize interpolation-based approaches to establish universal approximation theory:

\begin{itemize}
    \item \citet{kratsios2023universal} build a “Probabilistic Transformer” that approximates regular conditional distributions by combining a feed-forward network with an attention-based final layer. 
    They use softmax weights to form convex combinations of “anchor” distributions. 
    Our work also relies on anchor selection via softmax, but we focus on deterministic sequence-to-sequence tasks rather than mapping to probability measures.

    \item \citet{shen2022optimal} prove that ReLU MLPs can achieve optimal approximation rates by partitioning the domain into dyadic grids. 
    This is a classic piecewise-linear interpolation strategy. 
    Our approach shares the same interpolation philosophy but implements it through attention (without requiring deep ReLU layers).

    \item 
    \citet{galimberti2024neural} address inputs from non-metric or infinite-dimensional spaces by projecting them onto finite “anchors” and then applying MLP-like operators. 
    This resembles our anchor-based selection, though we rely on standard self-attention rather than specialized infinite-dimensional layers.

    \item \citet{fang2022attention} replace softmax with a hard argmax (infinite-temperature) attention to enable exact polynomial interpolation, achieving zero approximation error. 
    In contrast, we keep continuous softmax but still realize the same anchor-selection principle for universal approximation.

    \item \citet{kratsios2025mlp} show that MLPs with trainable activations are universal in-context learners. 
    They construct Voronoi partitions of the context space. Our work also employs anchor-driven interpolation for in-context tasks, but via attention-based selection rather than partitioning MLPs.

    \item \citet{furuya2024transformers} demonstrate that standard Transformers, with multi-head softmax attention and feed-forward layers, are universal in-context learners for arbitrary-size contexts. 
    We focus on attention-alone universality. 
    Our proofs show that even a single or two-layer softmax-attention mechanism can learn continuous sequence-to-sequence or in-context functions.

\end{itemize}
 
All these works use interpolation as a unifying theme, either via MLPs, attention, or trainable activations. 
Some rely on argmax (hard selection), others on softmax (continuous weighting). 
We extend this line by showing that a minimal attention-only setup suffices for universal approximation. 
Our method needs no deep stacks or feed-forward blocks, and it extends naturally to in-context learning.

\begin{remark}[Head Complexity and Paramerters Complexity]\label{remark:head_complexity}
In \cref{proof:lem:seq2scaler_one},  we use $2dg^{dn} +1 $ heads to achieve sequence-to-scalar universal approximation. The first term is because we need $2d$ heads per grid point and there are $\abs{G_D} = g^{dn}$ grid points, see \eqref{eq:seq_to_scaler_head_complexity}. Also note that $p = \abs{G_D}$ in those attentions. The extra one head in second term select $f(v)$, see \eqref{eqn:attn_s} . 
For sequence-to-sequence universal approximation,  the first layer is the same as multi-head attention that quantizes the input as in \cref{proof:lem:seq2scaler_one},  but increase the selection heads to $dn$. The total number of heads is therefore $2dg^{dn} + dn$. If we represent the head complexity with grid width $\delta_{\rm grid} \coloneqq 2B/g$ as in \citet{yun2019transformers}, the head complexity satisfies $H = O(d(1/\delta_{\rm grid})^{dn})$.
Next we derive the parameters complexity.
In the multi-head construction \eqref{eq:output_attnm_circ_A} for grid-point approximation and in the single-head selection attention \eqref{eqn:attn_s}, each head has constant dimension, so the parameters of each head are $O(1)$.
The linear maps $A_0$ and $A^*$ in the proof of \cref{col:univ_lemma} (and $A_1$ in \cref{thm:seq_to_scalar_appro}) are shared by all $2dg^{dn}$ heads and do not scale with the number of heads. 
The only component scale with the $2dg^{dn}$ head is $W_0$. That one contribute to $O(ng^{dn})$ parameters per head, hence $O(dng^{2dn})$ parameters in total.
For $dn$ single-head attention ${\rm Attn}_s^{ij}$ in sequence-to-sequence approximation \eqref{eq:single_head_seq_to_seq}, each head has dimension $O(dn)$ and contributes $O((dn)^2)$ parameters.
The map $A_2$is shared by all $dn$ single-head ${\rm Attn}_s^{ij}$, so its parameter count does not scale with $dn$, but its dimension is $O(dng^{dn})$.
Hence the total parameters complexity is $O(dng^{2dn}+(dn)^2) = O(dn(1/\delta_{\rm grid}^2)^{dn})$.
For \cref{thm:seq_to_seq_multihead}, the sequence-to-squence approximation result based on \cref{thm:multi-head-truncated}, the head complexity is $H = O(N)$, where $N$ is the number of neuron of ReLU feed-forward network it aim to approximate. By classical FFN universal approximation results \citep{pinkus1999approximation} we have $N = O((1/\delta_{\rm grid})^{dn})$, hence $H = O((1/\delta_{\rm grid})^{dn})$. The parameters complexity is dominated by the first $O(N(d^2 + dn))$ and the third layer $O(n^2N^2)$ of attention, and it's clear $O(n^2N^2) = O(n^2(1/\delta_{\rm grid}^2)^{dn})$ dominate since $N \gg d$.
Finally, as noted also in \citet{yun2019transformers}, this exponential dependence cannot be avoided when approximating arbitrary continuous functions, since in the worst case the model must memorize an independent output for each of the $g^{dn}$ grid.
\end{remark}

\begin{remark}
One may reduce the parameter complexity above by:
\begin{itemize}
    \item Replacing or improving the sequence-wise linear transformation, for example by using an additional attention layer to replace the sequence-wise layer.
    \item Only calculating the learnable parameters (e.g., $w,t$) attribute to universal approximation ability. There is large amount of entries in the constructed matrices are fixed as $0$ and $1$. One can also optimize the constructive proof to optimize the use of those entries.
\end{itemize}
We leave this direction to future work.
\end{remark}

\clearpage

\section{Additional Theoretical Results}
\label{sec:add_thm_result}

\subsection{Approximating Hardmax with Finite Temperature Softmax}

A central step in our proofs is to replace a hard $\arg\max$ operation with a continuous softmax using a sufficiently large inverse temperature $\beta$. 
Intuitively, as $\beta \to \infty$, the softmax output approaches a one-hot vector that selects the largest entry of $x$.
The following lemma provides a precise bound on how large $\beta$ must be to achieve a desired approximation error.

\begin{lemma}[Approximating Hardmax with Finite-Temperature Softmax]
\label{lem:Soft_to_Hard}
Let $x = [x_1, x_2, \dots, x_n] \in \mathbb{R}^n$, $\epsilon>0$.
Define $\Softmax_\beta(\cdot)$ as
\begin{align*}
    \Softmax_\beta(x) \coloneqq [\frac{\exp(\beta x_1)}{\sum_{j=1}^{n}\exp(\beta x_j)},\cdots,\frac{\exp(\beta x_n)}{\sum_{j=1}^{n}\exp(\beta x_j)} ].
\end{align*}
The following statements hold:
\begin{itemize}
    \item \textbf{Case of a Unique Largest Entry.}
    Assume $x_1 = \max_{i \in [n]} x_i$ is unique, and  $x_2 = \max_{i \in [n] \setminus \{1\}} x_i$.
    Then, if $\beta \ge (\ln(n-1) - \ln(\epsilon))/(x_1 - x_2)$,
    we have 
    \begin{align*}
      \Bigl\| \Softmax_\beta(x) - e_1 \Bigr\|_\infty 
      &\le \epsilon,
    \end{align*}
    where $e_1 \in \mathbb{R}^n$ is the one-hot vector corresponding to to the maximal entry of $x$ (i.e., $x_1$.)

    \item \textbf{Case of Two Largest Entries (Tied or Separated by $\delta$).}
    Assume $x_1$ and $x_2$ are the first and second largest entries, respectively, with $\delta = x_1 - x_2 \ge 0$. 
    Let $x_3$ be the third largest entry and is smaller than $x_1$ by a constant $\gamma>0$ irrelevant to the input.
    Then, if $\beta \ge (\ln(n-2) - \ln \epsilon )/\gamma$,
    we have
    \begin{align*}
      \Bigl\| 
        \Softmax_\beta(x) 
        - \frac{1}{1 + e^{-\beta \delta}} e_1
           - \frac{e^{-\beta \delta}}{1 + e^{-\beta \delta}} e_2
      \Bigr\|_\infty 
      &\le \epsilon.
    \end{align*}
\end{itemize}
\end{lemma}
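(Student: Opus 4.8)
The plan is to prove both cases by direct estimation of each coordinate of $\Softmax_\beta(x)$, exploiting that after dividing through by $\exp(\beta x_1)$ the denominator is essentially $1$ (first case) or $1+e^{-\beta\delta}$ (second case), up to an exponentially small correction coming from the remaining entries. Throughout I will WLOG relabel so the entries are sorted $x_1 \ge x_2 \ge \cdots \ge x_n$, and write the $j$-th softmax coordinate as $\Softmax_\beta(x)_j = e^{\beta(x_j - x_1)} / \bigl(1 + \sum_{k\ge 2} e^{\beta(x_k - x_1)}\bigr)$, so that the $j=1$ coordinate has numerator $1$.

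\textbf{Case of a unique largest entry.} Here $x_1 - x_2 > 0$, and for every $k \ge 2$ we have $x_k - x_1 \le x_2 - x_1 = -(x_1-x_2)$, hence $e^{\beta(x_k-x_1)} \le e^{-\beta(x_1-x_2)}$. Summing over the $n-1$ indices $k \ge 2$ gives $S := \sum_{k\ge 2} e^{\beta(x_k-x_1)} \le (n-1)e^{-\beta(x_1-x_2)}$. The choice $\beta \ge (\ln(n-1) - \ln\epsilon)/(x_1-x_2)$ forces $(n-1)e^{-\beta(x_1-x_2)} \le \epsilon$, so $S \le \epsilon$. Then the first coordinate satisfies $|\Softmax_\beta(x)_1 - 1| = |1/(1+S) - 1| = S/(1+S) \le S \le \epsilon$, and for $j \ge 2$, $|\Softmax_\beta(x)_j - 0| = e^{\beta(x_j-x_1)}/(1+S) \le e^{\beta(x_j-x_1)} \le S \le \epsilon$. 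Taking the max over coordinates gives $\|\Softmax_\beta(x) - e_1\|_\infty \le \epsilon$.

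\textbf{Case of two largest entries.} Now $x_1 - x_2 = \delta \ge 0$ and $x_1 - x_3 \ge \gamma > 0$ (so $x_1 - x_k \ge \gamma$ for all $k \ge 3$). Write $S' := \sum_{k\ge 3} e^{\beta(x_k-x_1)} \le (n-2)e^{-\beta\gamma}$; the hypothesis $\beta \ge (\ln(n-2)-\ln\epsilon)/\gamma$ gives $S' \le \epsilon$. The denominator is $D := 1 + e^{-\beta\delta} + S'$, and I will compare $\Softmax_\beta(x)$ with the target vector $v := \tfrac{1}{1+e^{-\beta\delta}}e_1 + \tfrac{e^{-\beta\delta}}{1+e^{-\beta\delta}}e_2$. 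For the first coordinate, $\bigl|\tfrac{1}{D} - \tfrac{1}{1+e^{-\beta\delta}}\bigr| = \tfrac{S'}{D(1+e^{-\beta\delta})} \le S' \le \epsilon$ since $D \ge 1+e^{-\beta\delta} \ge 1$; the second coordinate is identical up to the factor $e^{-\beta\delta} \le 1$, giving the same bound; and for $j \ge 3$, $\Softmax_\beta(x)_j = e^{\beta(x_j-x_1)}/D \le e^{\beta(x_j-x_1)} \le S' \le \epsilon$. Hence $\|\Softmax_\beta(x) - v\|_\infty \le \epsilon$.

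\textbf{Main obstacle.} There is no deep obstacle; the only care needed is bookkeeping in the second case — keeping track of which coordinate the bound applies to, and checking that the factor $e^{-\beta\delta}/(1+e^{-\beta\delta}) \le 1$ (which holds since $\delta \ge 0$) so the second-coordinate error is controlled by the same $S'$ as the first, without needing any lower bound on $\delta$. The statement is deliberately robust to $\delta = 0$ (ties), and one should verify the argument nowhere divides by $\delta$ or by $x_1 - x_2$ in this case — it does not, because the separation from $x_3$ (the gap $\gamma$) is what drives the exponential decay.
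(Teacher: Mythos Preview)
Your proof is correct and follows essentially the same approach as the paper's: normalize by $e^{\beta x_1}$, bound the tail sum $S$ (resp.\ $S'$) by $(n-1)e^{-\beta(x_1-x_2)}$ (resp.\ $(n-2)e^{-\beta\gamma}$), and check each coordinate error is dominated by this tail. Your presentation is slightly more streamlined than the paper's (which writes out the algebra for the first two coordinates of Case~2 more explicitly), but the underlying argument is identical.
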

\begin{proof}
    
    Please see \cref{proof:lem:Soft_to_Hard} for a detailed proof.
\end{proof}

\subsection{Sequence-to-Sequence Universal Approximation with \texorpdfstring{$\infty$}{}-Norm Error}
\label{sec:seq2seq_infty_norm}

Here, we present the result that a two-layer multi-head attention mechanism achieves sequence-to-sequence universal approximation with respect to the $\infty$-norm error.

We refine our sequence-to-sequence approximation result from \cref{thm:seq2seq_appro}
    to an $\infty$-norm guarantee in \cref{thm:seq2seq_infty_appro}.
    We achieve this by combining the existing ReLU neural networks approximation result in $\infty$-norm  with our  attention-approximate-generalized-ReLU result from \cref{thm:seq_approx_truncated_small_area_disabled}.

\begin{theorem}[Sequence-to-Sequence Approximation in Infinity Norm]
\label{thm:seq2seq_infty_appro}
For any continuous function $f:\R^{d\times n} \to \R^{d\times n}$ of compact support $\cal{X}$, and any $\epsilon > 0$, we prove that when attached with linear transformations, there exists a one layer multi-head attention ${\rm Attn}_m$ stacked with one layer multi-head attention ${\rm Attn}_m$, such that when the precision parameter in \cref{thm:seq_approx_truncated} is $p = \Omega(n^{5/2})$, for any $X \in \cal{X}$
\begin{align*}
\|f(X)-{\rm Attn}_m^{(2)}\circ A\circ {\rm Attn}_m^{(1)}\circ A(X)\|_{\infty} \leq \epsilon.
\end{align*}
\end{theorem}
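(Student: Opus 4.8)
\textbf{Proof proposal for \cref{thm:seq2seq_infty_appro}.}
The plan is to retrace the three-step architecture of \cref{thm:seq2seq_appro} but replace every $L_p$-accuracy ingredient with its $L_\infty$ counterpart and track how the errors accumulate across the $dn$ scalar subfunctions. First I would invoke the classical $\infty$-norm universal approximation theorem for two-layer ReLU networks (the $\infty$-norm analogue of \cref{lem:relu_univ_approx}): for each scalar target $f_{ij}:\R^{d\times n}\to\R$, $i\in[d],j\in[n]$, obtained by decomposing $f$ coordinatewise, there is a two-layer ReLU network $\mathrm{FFN}_{ij}$ with $\|\mathrm{FFN}_{ij}-f_{ij}\|_{L_\infty}\le \epsilon/2$. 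Each such network is a finite sum of truncated linear models (generalized ReLUs), so \cref{thm:seq_approx_truncated_small_area_disabled} applies: one multi-head attention layer plus a single-head attention layer, composed with linear maps, approximates $\mathrm{FFN}_{ij}$ token-wise. The key quantitative point is that \cref{thm:seq_approx_truncated_small_area_disabled} gives \emph{per-token} error $\max\{|a|,|b|\}\epsilon_0 + (b-a)/p$, and the softmax error $\epsilon_0$ can be driven below any threshold by enlarging $\beta$; the only genuinely $p$-dependent term is the interpolation error $(b-a)/p$.

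The main obstacle — and the reason for the $p=\Omega(n^{5/2})$ hypothesis — is controlling the $\infty$-norm of the \emph{aggregated} output. When we stack the $dn$ scalar approximators into a single matrix via the second multi-head attention layer ${\rm Attn}_m^{(2)} = \sum_{i\in[d],j\in[n]} E^{ij}{\rm Attn}_s^{ij}$ as in the proof of \cref{thm:seq2seq_appro}, the entrywise errors do not simply stay bounded by the single-entry error: the bump-function construction inside \cref{thm:seq_to_scalar_appro} introduces a factor proportional to the number of heads (the $|G_D|\cdot d$ term appearing there), and combining $dn$ such blocks compounds this. Carefully, the interpolation contribution to any one output entry is $O((b-a)/p)$, but there is also a cross-talk term from the softmax spreading its mass over $O(n)$ or more anchors when selecting the nearest grid point, and the error analysis in \cref{proof:thm:seq2seq_appro} shows this scales like $\mathrm{poly}(n)/p$. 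Tracking the exact polynomial — I expect a factor of $n$ from the sequence length, a factor of $\sqrt n$ from the $\ell_2$-to-$\ell_\infty$ conversion when measuring distance to grid points in $\R^{d\times n}$, and an additional $n$ from summing token-wise errors — yields the requirement $p = \Omega(n^{5/2})$ to force the total below $\epsilon/2$.

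Concretely the steps are: (1) decompose $f$ into $\{f_{ij}\}$ and apply the $\infty$-norm ReLU approximation to each, with budget $\epsilon/2$; (2) apply \cref{thm:seq_approx_truncated_small_area_disabled} to realize each $\mathrm{FFN}_{ij}$ by an ${\rm Attn}_s^{ij}\circ A_2\circ {\rm Attn}_m\circ A_1$ block, choosing $\beta$ large enough that the softmax error $\epsilon_0$ contributes at most $\epsilon/4$ after all aggregation; (3) assemble the blocks with a second multi-head layer ${\rm Attn}_m^{(2)}=\sum_{ij}E^{ij}{\rm Attn}_s^{ij}$, noting that $A_1$ (the bump-function/quantization layer) is shared across all subfunctions so it need only be built once; (4) bound the $\infty$-norm of the total error by summing the per-block interpolation errors, using that at most $O(n)$ anchors receive non-negligible softmax mass per token and that distances in $\R^{d\times n}$ incur a $\sqrt{dn}$ factor, to get total error $O(\mathrm{poly}(n)/p)+\epsilon/4$; (5) set $p=\Omega(n^{5/2})$ so the first term is at most $\epsilon/4$, giving $\|f - {\rm Attn}_m^{(2)}\circ A\circ {\rm Attn}_m^{(1)}\circ A\|_\infty\le \epsilon$. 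The heavy lifting is entirely in step (4), the bookkeeping of how interpolation and softmax errors propagate through the two-layer aggregation; the rest is a direct re-run of the $L_p$ argument.
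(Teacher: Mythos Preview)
Your proposal has a genuine gap: the bump-function / grid-quantization route of \cref{thm:seq_to_scalar_appro} cannot be upgraded from $L_p$ to $L_\infty$ by enlarging $p$. The reason is structural, not quantitative. On the shell region $[-B,B]^{d\times n}\setminus\mathcal{P}$ (the thin boundary between grid cells), the bump functions $R_{v^{(j)}}$ take intermediate values and the softmax ``selector'' does not concentrate on a single grid point; the output is then a nontrivial convex combination of several $f(v^{(j)})$, which is $O(1)$ away from $f(X)$ pointwise whenever those values differ. In the $L_p$ proof this is handled by making the shell have small \emph{measure} (the $\mu(\Delta_0)$ term), but the pointwise error there never shrinks, so the $\infty$-norm stays bounded away from zero regardless of how many interpolation anchors you use. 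Your step~(4) treats this as a $\mathrm{poly}(n)/p$ term, but it is not --- it is an $O(1)$ term supported on a set of small measure, and no choice of $p$ removes it.

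The paper's actual proof takes a completely different route that avoids grid quantization altogether. It invokes Pinkus's $\infty$-norm density theorem directly, so each $f_{ij}$ is approximated by a single-hidden-layer ReLU network $\sum_k \eta_k\,\mathrm{ReLU}(\tilde w_k^\top \tilde x + \tilde t_k)$ acting on the \emph{flattened} input $\tilde x\in\R^{dn}$. The first attention layer is then modified (via a custom $W_O$ that sums across tokens) so that each output entry equals a whole-sequence linear pre-activation $\tilde w^\top\tilde x+\tilde t$; the second attention layer applies the truncated-linear/ReLU approximation entrywise. The $n^{5/2}$ arises not from anchor counting but from a Lipschitz bound on attention (Edelman et al.): the first-layer error accumulates as $\|M-Y\|_\infty\le d_o n^2\cdot(\text{per-token error})$, a norm conversion gives $\|M-Y\|_{2,\infty}\le\sqrt{n}\,\|M-Y\|_\infty$, and the attention Lipschitz constant propagates this through the second layer, yielding the $n^2\cdot\sqrt{n}=n^{5/2}$ factor.
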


\begin{proof}[Proof]
    Please see \cref{proof:thm:seq2seq_infty_appro} for a detailed proof.
\end{proof}

\clearpage

\section{Proofs of Main Text}
\label{app:sec_proof_main}

\subsection{Proof of \texorpdfstring{\cref{lem:Soft_to_Hard}}{}}
\label{proof:lem:Soft_to_Hard}

\begin{lemma}[\cref{lem:Soft_to_Hard} Restated: Approximating Hardmax with Finite-Temperature Softmax]
Let $x = [x_1, x_2, \dots, x_n] \in \mathbb{R}^n$, $\epsilon>0$.
Define $\Softmax_\beta(\cdot)$ as
\begin{align*}
    \Softmax_\beta(x) \coloneqq [\frac{\exp(\beta x_1)}{\sum_{j=1}^{n}\exp(\beta x_j)},\cdots,\frac{\exp(\beta x_n)}{\sum_{j=1}^{n}\exp(\beta x_j)} ].
\end{align*}
The following statements hold:
\begin{itemize}
    \item \textbf{Case of a unique largest entry.}
    Assume $x_1 = \max_{i \in [n]} x_i$ is unique, and  $x_2 = \max_{i \in [n] \setminus \{1\}} x_i$.
    Then, if $\beta \ge (\ln(n-1) - \ln(\epsilon))/(x_1 - x_2)$,
    we have 
    \begin{align*}
      \Bigl\| \Softmax_\beta(x) - e_1 \Bigr\|_\infty 
      &\le \epsilon,
    \end{align*}
    where $e_1 \in \mathbb{R}^n$ is the one-hot vector corresponding to to the maximal entry of $x$ (i.e., $x_1$.)

    \item \textbf{Case of two largest entries (tied or separated by $\delta$).}
    Assume $x_1$ and $x_2$ are the first and second largest entries, respectively, with $\delta = x_1 - x_2 \ge 0$. 
    Let $x_3$ be the third largest entry and is smaller than $x_1$ by a constant $\gamma>0$ irrelevant to the input. 
    Then, if $\beta \ge (\ln(n-2) - \ln(\epsilon))/\gamma$,
    we have
    \begin{align*}
      \Bigl\| 
        \Softmax_\beta(x) 
        - \frac{1}{1 + e^{-\beta \delta}} e_1
           - \frac{e^{-\beta \delta}}{1 + e^{-\beta \delta}} e_2
      \Bigr\|_\infty 
      &\le \epsilon.
    \end{align*}
\end{itemize}
\end{lemma}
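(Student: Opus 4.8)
The plan is to estimate the softmax entries directly from the definition by pulling out the dominant exponential factor. First I would handle the case of a unique largest entry. Without loss of generality take $x_1 = \max_i x_i$ with $x_1 - x_2 \ge \delta' := x_1 - x_2 > 0$. Write $S := \sum_{j=1}^n \exp(\beta x_j)$. For the top coordinate, $1 - \Softmax_\beta(x)_1 = \frac{\sum_{j\ge 2}\exp(\beta x_j)}{S} \le \frac{\sum_{j\ge 2}\exp(\beta x_j)}{\exp(\beta x_1)} = \sum_{j\ge 2}\exp(-\beta(x_1 - x_j)) \le (n-1)\exp(-\beta(x_1-x_2))$, where the last inequality uses that each of the $n-1$ remaining gaps is at least $x_1 - x_2$. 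For any non-top coordinate $i\ge 2$, $\Softmax_\beta(x)_i \le \exp(-\beta(x_1-x_i)) \le \exp(-\beta(x_1-x_2)) \le (n-1)\exp(-\beta(x_1-x_2))$ as well. Hence $\|\Softmax_\beta(x) - e_1\|_\infty \le (n-1)\exp(-\beta(x_1-x_2))$, and imposing $(n-1)\exp(-\beta(x_1-x_2)) \le \epsilon$ gives exactly $\beta \ge (\ln(n-1) - \ln\epsilon)/(x_1-x_2)$, as claimed.

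For the case of two largest entries, the target vector is the exact two-point softmax $\frac{1}{1+e^{-\beta\delta}}e_1 + \frac{e^{-\beta\delta}}{1+e^{-\beta\delta}}e_2$, which is precisely what $\Softmax_\beta$ would output if only $x_1,x_2$ were present (since $\delta = x_1-x_2$). So the error comes only from the ``tail'' coordinates $x_3,\dots,x_n$. The clean way to see this: let $S_2 := \exp(\beta x_1)+\exp(\beta x_2)$ and $T := \sum_{j\ge 3}\exp(\beta x_j)$, so $S = S_2 + T$. Then for $i \in \{1,2\}$, $\Softmax_\beta(x)_i = \frac{\exp(\beta x_i)}{S_2+T}$ while the target's $i$-th entry is $\frac{\exp(\beta x_i)}{S_2}$; their difference is $\exp(\beta x_i)\left(\frac{1}{S_2} - \frac{1}{S_2+T}\right) = \frac{\exp(\beta x_i)\,T}{S_2(S_2+T)} \le \frac{T}{S_2} \le \frac{T}{\exp(\beta x_1)} = \sum_{j\ge 3}\exp(-\beta(x_1-x_j)) \le (n-2)\exp(-\beta\gamma)$, using that $x_1 - x_j \ge x_1 - x_3 = \gamma$ for all $j\ge 3$. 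For $i\ge 3$, $\Softmax_\beta(x)_i \le \exp(-\beta(x_1-x_i)) \le \exp(-\beta\gamma) \le (n-2)\exp(-\beta\gamma)$, matching the target entry $0$. Taking the max over coordinates gives $\|\cdot\|_\infty \le (n-2)\exp(-\beta\gamma)$, and requiring this to be $\le \epsilon$ yields $\beta \ge (\ln(n-2)-\ln\epsilon)/\gamma$.

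The only mild subtlety — and the step I would be most careful about — is the bookkeeping of which gap controls which coordinate: in the unique-max case every tail gap is bounded below by $x_1-x_2$, so the crude bound $(n-1)e^{-\beta(x_1-x_2)}$ is legitimate for all coordinates simultaneously; in the two-largest case one must separate the ``top two vs.\ rest'' structure, noting that the gap from $x_1$ down to $x_3$ (not $x_2$) is what appears, since $x_1$ and $x_2$ may be tied and thus contribute no decay between themselves. Everything else is a one-line geometric-series-type estimate. I would present the unique-max case first as a warm-up and then reuse the same $\exp(-\beta\cdot\text{gap})$ estimates verbatim in the two-point case, so the second bullet's proof is essentially ``same computation, one more term split off.'' No compactness, no limiting argument, and no appeal to earlier results in the paper is needed — this is a self-contained elementary estimate.
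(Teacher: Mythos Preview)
Your proposal is correct and follows essentially the same approach as the paper's proof: both cases are handled by pulling out the dominant exponential(s), bounding the top-coordinate error by the tail sum $\sum_j e^{-\beta(x_1-x_j)}$, and then uniformly bounding that tail by $(n-1)e^{-\beta(x_1-x_2)}$ or $(n-2)e^{-\beta\gamma}$ respectively. Your $S_2,T$ notation for the two-largest case is a slightly cleaner packaging of exactly the same computation the paper does explicitly with $\frac{e^{\beta x_1}(\sum_{i\ge 3}e^{\beta x_i})}{(e^{\beta x_1}+e^{\beta x_2})(\sum_i e^{\beta x_i})}$.
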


\begin{proof}
    In the following proof, we denote $\Softmax(\cdot)$ function as $\sigma(\cdot)$ for simplicity.
    
    For the first condition that $x$ with unique maximal entry $x_1$, denote $\exp(\beta x_i)/\sum_{j=1}^{n}\exp(\beta x_j)$ as $\sigma_\beta(x)_i$.
    we have:
    \begin{align*}
    & ~\|\sigma_\beta(x_1,x_2,x_3,\cdots,x_n)_1  - e_1 \|_\infty \\
    = & ~ \max\{1-\sigma(x)_1,\sigma(x)_2,\cdots,\sigma(x)_n\} \\
    = & ~ \max\{1-\sigma(x)_1,1-\sigma(x)_1 - \sum_{i\neq 1,2}\sigma(x)_i,\cdots,1- \sigma(x)_1 - \sum_{i\neq1, n}\sigma(x)_i\} 
    \annot{By $\sum_{i=1}^n \sigma(x)_i = 1$} \\
    \leq & ~ 1-\sigma(x)_1 \\
    = & ~ 1-\frac{1}{1+\sum_{j=2}^n e^{\beta(x_j-x_1)}}
    \annot{By dividing $\sigma(x)_1$ by $e^{\beta x_1}$} \\
    = & ~ \frac{\sum_{j=2}^n e^{\beta (x_j-x_1)}}{1+\sum _{j=2}^n e^{\beta (x_j-x_1)}}\\
    \leq & ~ \sum_{j=2}^n e^{\beta (x_j-x_1)}\\
    \leq & ~  (n-1)e^{\beta (x_2-x_1)} \annot{Since $x_2$ is the second largest entry} \\
    \leq & ~ \epsilon.
    \end{align*}

    For the second occasion, we have:
\begin{align}\label{eqn:sf_ocassion_2}
        & ~ \left\| \sigma_\beta(x) - \frac{1}{1+e^{-\beta\delta}}e_1-\frac{e^{-\beta\delta}}{1+e^{-\beta \delta}}e_2 
        \right\|_\infty \notag \\ 
        \leq & ~
        \max\{\frac{1}{1+e^{-\beta\delta}}-\sigma_\beta(x)_1,\frac{e^{-\beta\delta}}{1+e^{-\beta\delta}}-\sigma_\beta(x)_2,\sigma_\beta(x)_3,\cdots,\sigma_\beta(x)_n\},
    \end{align}
    where the last inequality comes from the definition of infinity norm.
    Plug in $\delta = x_1 - x_2$ we calculate the first two term to be:
    \begin{align*}
        \frac{1}{1+e^{-\beta\delta}}-\sigma_\beta(x)_1 
        = & ~ 
        \frac{1}{1 + e^{\beta(x_2 - x_1)}} - \frac{e^{\beta x_1}}{\sum_{i=1}^n e^{\beta x_i}} \\
        = & ~ e^{\beta x_1} \left( \frac{1}{e^{\beta x_1} + e^{\beta x_2}} - \frac{1}{\sum_{i=1}^n e^{\beta x_i}} \right) \annot{$\frac{1}{1 + e^{\beta(x_2 - x_1)}} = \frac{e^{\beta x_1}}{e^{\beta x_1} + e^{\beta x_2}}$} \\
        = & ~ e^{\beta x_1} \left( \frac{\sum_{i=1}^n e^{\beta x_i} - (e^{\beta x_1} + e^{\beta x_2}) }{(e^{\beta x_1} + e^{\beta x_2})(\sum_{i=1}^n e^{\beta x_i})} \right)
        = \frac{e^{\beta x_1}(\sum_{i=3}^n e^{\beta x_i})}{(e^{\beta x_1}+e^{\beta x_2})(\sum_{i=1}^ne^{\beta x_i})}.
    \end{align*}
    Follows the same calculation we get
    \begin{align*}
    \frac{e^{-\beta\delta}}{1+e^{-\beta\delta}}-\sigma_\beta(x)_2 = \frac{e^{\beta x_2}(\sum_{i=3}^n e^{\beta x_i})}{(e^{\beta x_1}+e^{\beta x_2})(\sum_{i=1}^ne^{\beta x_i})}.
    \end{align*}

    Hence we have
    \begin{align}
        & ~ \max\{\frac{e^{\beta x_1}(\sum_{i=3}^n e^{\beta x_i})}{(e^{\beta x_1}+e^{\beta x_2})(\sum_{i=1}^ne^{\beta x_i})}, 
        \frac{e^{\beta x_2}(\sum_{i=3}^n e^{\beta x_i})}{(e^{\beta x_1}+e^{\beta x_2})(\sum_{i=1}^ne^{\beta x_i})}
        \} \notag \\
        \leq & ~ 
        \frac{e^{\beta x_1}(\sum_{i=3}^n e^{\beta x_i})}{(e^{\beta x_1}+e^{\beta x_2})(\sum_{i=1}^ne^{\beta x_i})} \annot{$x_1 \geq x_2$ by assumption}\\
        \leq & ~ 
        \sum_{i=3}^ne^{\beta (x_i-x_1)} \notag \\
        \leq & ~ 
        (n-2)\cdot e^{\beta (x_3-x_1)} \notag \\
        \leq & ~ 
        \epsilon.\label{eq:12_epsilon}
    \end{align}
    
    Furthermore we have
    \begin{align}
        |\sigma_\beta(x)_i| 
        \leq & ~ 
        |\sigma_\beta(x)_3| \annot{By assumption $x_3$ is the third largest elements} \\
        = & ~ 
        \frac{e^{\beta x_3}}{\sum_{j=1}^ne^{\beta x_j}} \notag \\
        \leq & ~
        \frac{e^{\beta x_3}}{e^{\beta x_1}} \notag \\
        = & ~
        e^{\beta(x_3-x_1)} \notag \\
        \leq & ~ 
        e^{\frac{\ln(n-2)-\ln(\epsilon)}{x_1-x_3}(x_3-x_1)} \annot{By the assumption of $\beta$ in the main text} \\
        \leq & ~ 
        \frac{\epsilon}{n-2}. \label{eq:3_epsilon}
    \end{align}

    Combining \eqref{eq:12_epsilon} and \eqref{eq:3_epsilon} yields that \eqref{eqn:sf_ocassion_2} is
    \begin{align*}
    & ~ \left\| \sigma_\beta(x) - \frac{1}{1+\exp^{-\beta\delta}}e_1-\frac{\exp^{-\beta\delta}}{1+\exp^{-\beta \delta}}e_2 
    \right\|_\infty \\
    = & ~
    \max\{\sigma_\beta(x)_1 - \frac{1}{1+\exp^{-\beta\delta}},\frac{\exp^{-\beta\delta}}{1+\exp^{-\beta \delta}}-\sigma_\beta(x)_2,\sigma_\beta(x)_3\}\\
    \leq & ~ 
    \max\{\frac{e^{\beta x_1}(\sum_{i=3}^n e^{\beta x_i})}{(e^{\beta x_1}+e^{\beta x_2})(\sum_{i=1}^ne^{\beta x_i})}, 
    \frac{e^{\beta x_2}(\sum_{i=3}^n e^{\beta x_i})}{(e^{\beta x_1}+e^{\beta x_2})(\sum_{i=1}^ne^{\beta x_i})},\frac{e^{\beta x_3}}{\sum_{j=1}^ne^{\beta x_j}} 
    \} \\
    \leq & ~
    \max\{\epsilon,\frac{\epsilon}{n-2}\} \annot{By \eqref{eq:12_epsilon} and \eqref{eq:3_epsilon}}\\
    \leq & ~ 
    \epsilon.
    \end{align*}
    
    This completes the proof.
\end{proof}

\subsection{Proof of \texorpdfstring{\cref{thm:seq_approx_truncated_small_area_disabled}}{}}
\label{proof:thm:seq_approx_truncated_small_area_disabled}

We first define $\delta$ used in this theorem. 
For $i$-th column of attention score matrix $K^{\top}Q$, let $x_{1,i}$ and $x_{2,i}$ be its largest and second-largest entries and define $\delta_i := x_{1,i} - x_{2,i}$, and denote $\delta = \min_{i \in [n]} \delta_i$ to be the smallest such gap over all columns.

\begin{theorem}[\cref{thm:seq_approx_truncated_small_area_disabled} Restated: Single-Head Attention Approximates Many Truncated Linear Models]
Fix real $a<b$, and let $\mathrm{Range}_{[a,b]}(\cdot)$ be the truncation operator from \cref{def:range}.  
Let $\epsilon_0 \ge 0$.
For a precision parameter $p>n$, there exists a single-layer, single-head self-attention  ${\rm Attn}$ with a linear transformation $A:\R^{d\times n}\to \R^{(2d+d_o+2)\times p}$, such that $\mathrm{Attn}\circ A: \mathbb{R}^{d \times n} \to \mathbb{R}^{d_o \times n}$ satisfies, for any $i\in [n]$,
\begin{align*}
    \| {\rm Attn}\circ A(X)_{:,i} - {\rm Range}_{[a,b]}(w_i^\top x_i + t_i) e_{\tilde{k}_i}\|_\infty 
    \leq \underbrace{\max\{\abs{a}, \abs{b}\} \cdot \epsilon_0}_{\text{finite-$\beta$ softmax error}} + \underbrace{\frac{b-a}{p}}_{\text{interpolation error}}.
\end{align*}
Here $e_{\tilde{k}_i}$ is a one-hot vector with a value of $1$ at the $\tilde{k}_i$-th index and $0$ elsewhere, 
and
\begin{align*}
    k_i:= \argmin_{k\in \{0,1,2,\cdots,p-1\}} |x_i^\top w+t-\tilde{L}_k| 
    \quad\text{where}\quad
    \tilde{k}_i := G(k_i) \in [d_o]. 
\end{align*}
Here $k_i \in \{0,...,p-1\}$ is the index of the interpolation point closest to the $i$-th token ($i$-th truncated linear model).
For all $i\in[n]$, $G: \{0,...,p-1\} \to[d_o]$ denotes any set-to-set function sending the interpolation index $k \in \{0,...,p-1\}$ into a position index $\tilde{k}\in [d_o]$ specifying in the desired row index of the output.
By setting $\beta \geq (\ln(n-1)-\ln\epsilon_0)/\delta$, we  make $\epsilon_0$ arbitrarily small, though the theorem fails on a arbitrarily small volume in $\R^{d\times n}$.
When $\tilde{k}_i$ a constant for all $i\in [n]$, by setting $\beta \geq (\ln(n-2)-\ln\epsilon_0) / ((\Delta L)^2/2)$, we achieve arbitrary small $\epsilon_0$ without any failure region.
\end{theorem}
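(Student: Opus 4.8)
The plan is to make the five conceptual steps of the proof sketch fully explicit, with the linear map $A$ and the weight matrices $W_K,W_Q,W_V,W_O$ written down concretely. First I would construct $A:\R^{d\times n}\to\R^{(2d+d_o+2)\times p}$ so that its $i$-th column (for $i\le n$) stacks the token $x_i$, the weight $w_i$, the bias $t_i$, a constant $1$, and a block of zeros reserving $d_o$ rows for the eventual one-hot output coordinate, while its $j$-th column for $n<j\le p$ stores the interpolation anchor $\tilde L_{j}$ together with the index $j$ (so that row products can realize the coefficient $k$ appearing in $[K^\top Q]_{k,i}$). The sequence-wise nature of $A$ is exactly what lets us freely place the $p-n$ extra anchor-carrying columns; this is why the theorem tolerates $p>n$.

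Next I would design $W_K,W_Q$ so that, after the inner product $A(X)^\top W_K^\top W_Q A(X)$, the $(k,i)$ entry equals $(-2x_i^\top w_i-2t_i+\tilde L_0+\tilde L_k)\cdot k$. The key algebraic fact — which I would verify as a short lemma (this is \eqref{eq:equiv_objectives} in the sketch) — is the identity
\begin{align*}
\argmin_{k\in\{0,\dots,p-1\}}(-2x_i^\top w_i-2t_i+\tilde L_0+\tilde L_k)\cdot k
=\argmin_{k\in\{0,\dots,p-1\}}|x_i^\top w_i+t_i-\tilde L_k|,
\end{align*}
which follows because $(\tilde L_k-c)^2-(\tilde L_0-c)^2=(\tilde L_k-\tilde L_0)(\tilde L_k+\tilde L_0-2c)$ with $\tilde L_k-\tilde L_0 = k\,\Delta L$, so up to the positive factor $\Delta L$ and an $i$-dependent additive constant the two objectives agree. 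Then I would invoke \cref{lem:Soft_to_Hard}: the gap between the largest and second-largest entry in column $i$ of $K^\top Q$ is at least $\delta$ in the generic case, so taking $\beta\ge(\ln(n-1)-\ln\epsilon_0)/\delta$ forces $\mathrm{Softmax}(K^\top Q)_{:,i}$ within $\epsilon_0$ (in $\infty$-norm) of the one-hot vector $e_{k_i}$.

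I would then set $W_V$ so that $V=W_V A(X)$ has $k$-th column $\tilde L_k\,e_{\tilde k}$ (placing the anchor value in row $G(k)$), so that $V\,\mathrm{Softmax}(K^\top Q)_{:,i}$ is within $\max\{|a|,|b|\}\cdot\epsilon_0$ of $\tilde L_{k_i} e_{\tilde k_i}$, the bound $|\tilde L_k|\le\max\{|a|,|b|\}$ controlling how the softmax "leakage" mass scales. Post-multiplication by $W_O\in\R^{p\times n}$ (the truncation of the identity to the first $n$ columns composed with the anchor-column routing) discards the auxiliary columns. Finally, the triangle inequality combines the interpolation error $|\mathrm{Range}_{[a,b]}(w_i^\top x_i+t_i)-\tilde L_{k_i}|\le (b-a)/p$ with the softmax error to give the claimed bound. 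For the last sentence of the statement — the failure-free regime when $\tilde k_i\equiv\tilde k$ is constant — I would observe that when all output rows coincide there is no need to distinguish $e_{k_i}$ from a two-hot mixture $e_{k_i},e_{k_i'}$ at ties, since both anchors get summed into the same output coordinate and contribute a convex combination of two values both within $(b-a)/p$ of the target; so one only needs the third-largest entry to be separated, and the relevant gap is $(\Delta L)^2/2$ (the minimal second difference of the quadratic $k\mapsto(\tilde L_k-c)^2$), giving $\beta\ge(\ln(n-2)-\ln\epsilon_0)/((\Delta L)^2/2)$ via the second case of \cref{lem:Soft_to_Hard}, with no excluded region.

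The main obstacle I anticipate is the bookkeeping in the generic (non-constant $G$) case: there the near-tie set where $|x_i^\top w_i+t_i-\tilde L_{k_i}|\approx|x_i^\top w_i+t_i-\tilde L_{k_i\pm1}|$ must be excluded because the softmax can split mass between two anchors routed to \emph{different} output rows, producing an $\Theta(1)$ error in one coordinate; quantifying that this bad set has arbitrarily small volume (shrinking with the margin one demands on $\delta$) and checking it does not interact badly across the $n$ columns is the delicate part. Everything else is linear-algebraic construction plus two applications of \cref{lem:Soft_to_Hard} and a triangle inequality.
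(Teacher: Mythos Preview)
Your proposal follows the paper's approach closely and correctly identifies all the main ingredients: the argmin identity, the two-case application of \cref{lem:Soft_to_Hard}, the value-matrix encoding of anchors, and the reason the constant-$G$ case avoids the failure region. The error bookkeeping and the diagnosis of the ``main obstacle'' are both right.

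There is one concrete issue with your proposed layout of $A$. You place $w_i$ and $t_i$ in column $i$ alongside $x_i$, and the anchors $\tilde L_j$ only in columns $n{+}1,\dots,p$. But the score $(K^\top Q)_{k,i}$ must contain the term $-2k\,w_i^\top x_i$, and since $W_Q$ is a \emph{fixed} matrix, any coordinate of $Q_{:,i}=W_Q A(X)_{:,i}$ is a fixed linear combination of the entries of $[x_i;w_i;t_i;1;0]$; no such fixed combination can equal $w_i^\top x_i$ when $w_i$ varies across $i$. The paper resolves this by \emph{not} reading $w$ from the input columns at all: it hard-codes a single $w$ into $A$ so that column $k$ of $A(X)$ carries $kw$ (for all $k=0,\dots,p-1$, overlaid on the same columns that carry $x_1,\dots,x_n$). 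Then $K_{:,k}$ contains $-2kw$, $Q_{:,i}$ contains $x_i$, and their inner product gives $-2k\,w^\top x_i$ directly. Per-token $(w_i,t_i)$ is obtained only at the very end by a separate element-wise preprocessing $x_i\mapsto x_i\odot v_i$ (absorbed into the linear map), which turns $w^\top x_i$ into $w_i^\top x_i$. A side effect of the overlay is that all $p$ columns serve as key/anchor positions, so you get $p$ interpolation points rather than $p-n$; your segregated layout would otherwise give the weaker bound $(b-a)/(p-n)$.
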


\begin{proof}
We provides two version of proofs:
\begin{itemize}
    \item \textbf{Proof of Case (i).} The largest entry in $K^\top Q$ is unique.
    In case (i), $\beta$ scale with $O(1/\delta)$. These have two drawbacks: (i) $\beta$ depends on the input instead of the model architecture and (ii) to make the error $\epsilon_0$ arbitrarily small and when $\delta$ is close to zero, one needs very large $\beta$, and even then the guarantee excludes an arbitrarily small volume in $\R^{d\times n}$. 
    
    \item \textbf{Proof of Case (ii).} The top two entries are either tied or separated by a small gap $\delta \geq 0$.
    By contrast, later in the proof we show that when applying case (ii) of \cref{lem:Soft_to_Hard}, $\beta$ scale with $O(1/\gamma)$ and $\gamma = O((\Delta L)^2)$, a constant for fixed model and irrelevant to the input. 
    This better align with practices, so the theory statement emphasizes case (ii) in the main text.
\end{itemize}

We begin with the common setup used by both cases.

First we denote $\ell_k \coloneqq k\tilde{L}_k + k\tilde{L}_0 - 2kt$ and $\tilde{L}_k$ for $k=0,\ldots,p-1$ following \cref{def:interpolation}.

Then, we specify the linear transformation $A$ prepended to attention layer ${\rm Attn}$
\begin{align}\label{eqn:A_mapping}
    A(X) = & ~
    \underbrace{\begin{bmatrix}
         I_d  \\
         0_{(d+d_o+2})\times d 
    \end{bmatrix}}_{{(2d+d_0+2) \times d}}
    X
    \underbrace{\begin{bmatrix}
    I_n,0_{n\times (p-n)}
    \end{bmatrix}}_{{n \times p}} 
    +
    \underbrace{
    \begin{bmatrix}
    0_d & 0_d & \cdots & 0_d & 0_d & \cdots & 0_d\\
    0_d & w & \cdots & (n-1)w & nw & \cdots & (p-1)w\\
    0 & \ell_1 & \cdots & \ell_{n-1} & \ell_n & \cdots & \ell_{p-1}\\
    & & & \tilde{L}_{d_o \times p} & & & \\
    1 & 1 & \cdots & 1 & 0 & \cdots & 0
    \end{bmatrix}
    }_{(2d+d_o+2)\times p} \notag\\ 
    = & ~ 
    \begin{bmatrix}
        x_1 & x_2 & \cdots & x_n & 0 & \cdots & 0\\
        0_d & w & \cdots & (n-1)w & nw & \cdots & (p-1)w\\
        0 & \ell_1 & \cdots & \ell_{n-1} & \ell_{n} & \cdots & \ell_{p-1}\\
        & & & \tilde{L}_{d_o \times p} & & & \\\
        1 & 1 & \cdots & 1 & 0 & \cdots & 0
    \end{bmatrix}
    \in \R^{(2d+d_o+2)\times p},
\end{align}

where $\tilde{L} = [\tilde{L}_0 e_{\tilde{0}}, \cdots, \tilde{L}_j e_{\tilde{j}}, \tilde{L}_{p-1} e_{\tilde{p-1}}] \in \R^{d_0 \times p}$.
Here, $e_{\tilde{j}} \in \mathbb{R}^{d_0}$ denotes a one-hot vector where only the $j$-th index has a value of $1$.

Namely, before feeding the input token into the self-attention mechanism ${\rm Attn}$, we preprocess it with linear transformations $A: \mathbb{R}^{d \times n} \to \mathbb{R}^{(2d + d_0 + 2) \times p}$.
Note that the precision parameter $p \in \mathbb{N}$, defined in \cref{def:interpolation}, is required to be larger than the input sequence length $n$.

Essentially, $A$ extends the input sequence with extra rows/columns for the latter use of interpolation approximation.

\vspace{5mm}

\begin{claimbox}
\begin{remark}\label{remark:A_padding}
The $A$ here is a sequence-wise linear transformation for the simplicity of demonstrating our method. 
For a practical, token-wise implementation, see \cref{thm:multi-head-truncated}.
As noted at \cref{sec:conclusion}, one can interchange \cref{thm:seq_approx_truncated_small_area_disabled} and \cref{thm:multi-head-truncated} in all subsequent proofs since both yield the same approximation result.
We also note that eliminating the sequence-wise operator
$\begin{bmatrix}
    I_n,0_{n\times (p-n)}
    \end{bmatrix}_{{n \times p}}$
in linear transformation $A$ \eqref{eqn:A_mapping} is doable. We achieve this by simply padding input sequence $X \in \R^{d \times n}$ to have sequence length $p$.
\end{remark}
\end{claimbox}

For the attention matrices of the self-attention layer, we construct their parameters to be
\begin{align*}
    W_Q
    = & ~ 
    -\beta
    \begin{bmatrix}
        I_d & 0_{d \times d} & 0_{d \times 1} & 0_{d \times d_0} & 0_{d\times 1}\\
        0_{1 \times d} & 0_{1 \times d} & 0 & 0_{1 \times d_0}& 1\\
    \end{bmatrix}\in \R^{(d + 1) \times (2d+d_0 + 2)}, \\
    W_K
    =  & ~ 
    \beta
    \begin{bmatrix}
        0_{d \times d} & -2I_d & 0_d & 0_{d\times d_0} & 0_{d\times 1} \\
        0_{1 \times d} & 0_{1 \times d} & 1 & 0_{1 \times d_0} & 0\\
    \end{bmatrix}\in \R^{(d + 1) \times (2d+d_0+2)}.
\end{align*}

In this setting, we construct the query and key matrix $Q$, $K$ as
\begin{align*}
    Q = W_Q A(X)
    =
    -\beta
    \begin{bmatrix}
        x_1 & x_2 & \cdots & x_n & 0 & \cdots & 0 \\
        1 & 1 & \cdots & 1 & 0 & \cdots & 0
    \end{bmatrix} \in \R^{(d+1) \times p},
\end{align*}
and
\begin{align*}
    K = & ~  W_K A(X) \\
    = & ~ 
    \beta
    \begin{bmatrix}
        0 & -2w & \cdots & -2(p-1)w\\
        0 & \tilde{L}_0+\tilde{L}_1-2t & \cdots & (p-1)\tilde{L}_{p-1}+(p-1)\tilde{L}_0-2(p-1)t \\
    \end{bmatrix} \in \R^{(d+1) \times p}.
\end{align*}

Thus for $K^\top Q$, we have
\begin{align} \label{eqn:kq}
    & ~ K^\top Q \notag \\ 
    = & ~ 
    -\beta^2
    \underbrace{
    \begin{bmatrix}
    (-2x_1^\top w-2t+\tilde{L}_0+\tilde{L}_0)\cdot 0 & \cdots & (-2x_n^\top w-2t+\tilde{L}_0+\tilde{L}_0)\cdot 0 & 0 & \cdots \\
    (-2x_1^\top w-2t+\tilde{L}_0+\tilde{L}_1)\cdot 1 & \cdots & (-2x_n^\top w-2t+\tilde{L}_0+\tilde{L}_1)\cdot 1 & 0 & \cdots \\
    \vdots & \ddots & \vdots & \vdots & \ddots \\
    (-2x_1^\top w-2t+\tilde{L}_0+\tilde{L}_p)\cdot (p-1) & \cdots & (-2x_n^\top w-2t+\tilde{L}_0+\tilde{L}_p)\cdot (p-1) & 0 & \cdots \\
    \end{bmatrix} 
    }_{p \times p}.
\end{align}

Next, we use \cref{lem:Soft_to_Hard} that softmax approximate hardmax to find the smallest entry in each column of $K^\top Q$.
Then we find the closest interpolation point $\tilde{L}_i$ to $w^\top x_i+t$.

\paragraph{Proof of Case (i).}
We consider the case when the largest entry in every column of $K^\top Q$ is unique and larger than the second largest entry by at least $\delta$. Using case (i) have no constraint on $\tilde{k}_i$ but with the tradeoff that $\beta$ scale with $O(1/\delta)$ depending on the input, see \cref{proof:lem:Soft_to_Hard} for the detailed discussion.

By \cref{lem:Soft_to_Hard}, for arbitrary $\epsilon_0>0$, when every column has a unique minimum entry $u_1$ that's larger then the second largest $u_2$ for a constant at least $\delta$, 
and $\beta$ to be sufficiently large such that  
\begin{align*}
    \beta \geq \frac{\ln(n-1)-\ln\epsilon_0}{u_1-u_2},
\end{align*}
the following holds
\begin{align}\label{eqn:sm_appro_truncate}
    \| \Softmax_\beta((K^\top Q)_{:,i}) -
    e_{k_i}\|_\infty \leq \epsilon_0,
\end{align}
where $k_i$ is defined as
\begin{align*}
    k_i := \argmin_{k \in \{0,1,\cdots,p-1\}}(-2x_i^\top w-2t+\tilde{L}_0+\tilde{L}_k)\cdot k .
\end{align*}
The meaning of $k_i$ correspond to the interpolation point index $k$ that minimizes $|x_i^\top w+t-\tilde{L}_k|$ for $k \in \{0,1,\cdots,p-1\}$. 

We further deduce this result as follows
\begin{align}
    k_i =  & ~
    \argmin_{k\in \{0,1,2,\cdots,p-1\}} (-2x_i^\top w-2t+\tilde{L}_0+\tilde{L}_k)\cdot k \nonumber\\
    = & ~
    \argmin_{k\in \{0,1,2,\cdots,p-1\}} (-2x_i^\top w-2t+\tilde{L}_0+\tilde{L}_k)\cdot k\Delta L \annot{Multiply a positive constant change nothing}\\
    = & ~
    \argmin_{k\in \{0,1,2,\cdots,p-1\}} (-2x_i^\top w-2t+\tilde{L}_0+\tilde{L}_k)\cdot (\tilde{L}_k-\tilde{L}_0) \annot{By $k\Delta L = \tilde{L}_k-\tilde{L}_0$} \\
    = & ~
    \argmin_{k\in \{0,1,2,\cdots,p-1\}} (-2x_i^\top w-2t)\cdot (\tilde{L}_k-\tilde{L}_0)-(\tilde{L}_0)^2+(\tilde{L}_k)^2 \annot{By distributive law} \\
    = & ~
    \argmin_{k\in \{0,1,2,\cdots,p-1\}} (-2x_i^\top w-2t)\cdot \tilde{L}_k+(\tilde{L}_k)^2+(x_i^\top w+t)^2 \annot{here relative to the $\argmax$ $(x_i^\top w+t)$ and $\tilde{L}_0$ are constant}\\
    = & ~
    \argmin_{k\in \{0,1,2,\cdots,p-1\}} (x_i^\top w+t-\tilde{L}_k)^2\\
    = & ~
    \argmin_{k\in \{0,1,2,\cdots,p-1\}} |x_i^\top w+t-\tilde{L}_k|.
    \label{eq:equiv_objectives}
\end{align}

Until now we find the right interpolation point index $k$ that minimizes $|x_i^\top w+t-\tilde{L}_k|$ for $k \in \{0,1,\cdots,p-1\}$.

Next, we construct value matrix $V$ to map out the desired interpolation point $\tilde{L}_{k_i}$ according to $\tilde{k}_i$.

Define $W_V$ to pick up the matrix $\tilde{L} = [\tilde{L}_0 e_{\tilde{0}}, \cdots, \tilde{L}_j e_{\tilde{j}}, \tilde{L}_{p-1} e_{\tilde{p-1}}] \in \R^{d_0 \times p}$
\begin{align*}
    W_V = 
    \begin{bmatrix}
        0_{d_0 \times (2d+1)} & I_{d_0} & 0_{d_0 \times 1} \\
    \end{bmatrix} \in \R^{d_0 \times (2d + d_0 + 2)}.
\end{align*}
This yields
\begin{align}\label{eq:value_matrix}
    V = W_V A(X) = 
    \tilde{L} = [\tilde{L}_0 e_{\tilde{0}}, \cdots, \tilde{L}_j e_{\tilde{j}}, \tilde{L}_{p-1} e_{\tilde{p-1}}] \in \R^{d_0 \times p}
\end{align}

Lastly, we use the linear transform $W_O$ to remove the unwanted columns in \eqref{eqn:kq}
\begin{align*}
    W_O = 
    \begin{bmatrix}
         I_{n}\\
         0_{(p-n)\times n} 
    \end{bmatrix} \in \R^{p \times n}.
\end{align*}

Until now, we finish the construction of our attention layer 
\begin{align*}
    {\rm Attn}\circ A(X) =  \underbrace{V}_{d_o\times p} 
    \underbrace{\Softmax((W_K A(X))^\top W_Q A(X))}_{p\times p}
    \underbrace{W_O}_{p\times n} \in \R^{d_o\times n}.
\end{align*}

Next, we derive the approximation error 
\begin{align*}
    \|{\rm Attn}\circ A(X)
    - 
   \underbrace{[{\rm Range}_{[a,b]}(w_1^\top x_1+t_1) e_{\tilde{k}_1},\cdots,{\rm Range}_{[a,b]}(w_n^\top x_n+t_n) e_{\tilde{k}_n}]}_{d_0 \times n}\|_\infty
    <\epsilon.
\end{align*}

Combining the column-wise results from \eqref{eqn:sm_appro_truncate} together with $V$ and $W_O$ matrices, we derive that for any $\epsilon_0 > 0$, if 
\begin{align} \label{eq:beta_case_i}
    \beta \ge \frac{\ln(n-1)-\ln(\half \max\{|a|,|b|\}\epsilon_0)}{u_1-u_2},
\end{align}
where $u_1$ and $u_2$ are the largest and second-largest entries in each column of $K^\top Q$, the following holds
\begin{align}
\label{eq:VKQ-Vek}
    & \|V \Softmax(K^\top Q)W_O - V [e_{k_1},e_{k_2},\cdots,e_{k_n}]\|_\infty \\
     = & \|V \Softmax(K^\top Q)W_O -[V e_{k_1}, V e_{k_2}, \cdots, V e_{k_n}]\|_\infty \\
     = & \|V \Softmax(K^\top Q)W_O - [\tilde{L}_{k_1} e_{\tilde{k}_1}, \tilde{L}_{k_2} e_{\tilde{k}_2}, \cdots, \tilde{L}_{k_n}e_{\tilde{k}_n}].\|_\infty \annot{By \eqref{eq:value_matrix} and that $V$ multiplied by one-hot vector $e_{k_i}$ returns its $k_i$-th column $V_{:,k_i}$.} \\
    < &
    \max\{\abs{a}, \abs{b}\} \cdot \epsilon_0.
\end{align}

The softmax error in \eqref{eqn:sm_appro_truncate} is at most $\epsilon_0$ in infinity norm, but here scale by $V$ since $|\tilde{L}_k|$ in $V$ is at most $max\{\abs{a}, \abs{b}\}$.

Note that \eqref{eq:beta_case_i} implies $\beta$ scale with $\delta \leq u_1 -u_2$ with $O(1/\delta)$. 
To avoid this input dependence, we now turn to case (ii) of \cref{lem:Soft_to_Hard}.

\paragraph{Proof of Case (ii).}
By \cref{lem:Soft_to_Hard} case (ii), there are two top entries in $K^\top Q$, either tied or separated by a small gap $\delta \geq 0$. In \cref{lem:Soft_to_Hard} we see that this case give better scaling for $\beta$ since it doesn't depend on $\delta$.
However, $\tilde{k}_i$ for all $i \in [n]$ should only be a constant as we state in the theory statement.
This make sure when value matrix times the softmax matrix it compute the correct averaged between two interpolation points.

Let $\tilde{k}_i$ be identical for all $i\in [n]$. 
According to \cref{lem:Soft_to_Hard},
the third largest $-(1/2\cdot(x_i^\top w+t-\tilde{L}_k)^2)$ for all $k\in [n]$ is at least smaller than the largest $-(1/2\cdot(x_i^\top w+t-\tilde{L}_{k_i})^2)$ by (without a loss of generality, assume $x_i^\top w+t-\tilde{L}_{k_i}>0$)
\begin{align*}
    & -(\frac{1}{2}(x_i^\top w+t-\tilde{L}_{k_i})^2) -[ -(\frac{1}{2}(x_i^\top w+t-\tilde{L}_{k_i}+\Delta L)^2)] \\
    = & ~ %
    \frac{1}{2}\Delta L[\Delta L + 2(x_i^\top w+t-\tilde{L}_{k_i})] \\
    \geq & ~ %
    \frac{(\Delta L)^2}{2}. \annot{$(\Delta L)^2/2$ corresponds to the $\gamma$ in \cref{lem:Soft_to_Hard}}
\end{align*}
Therefore, we have 
\begin{align*}
      \Bigl\| 
        \Softmax_\beta(x) 
        - \frac{1}{1 + e^{-\beta \delta}} e_{k_i}
           - \frac{e^{-\beta \delta}}{1 + e^{-\beta \delta}} e_{k_i'}
      \Bigr\|_\infty 
      &\le \frac{\epsilon_0}{2},
\end{align*}
where $k_i'$ is the second largest entry.

Because
\begin{align*}
    \|Ve_{k_i}-Ve_{k_i'}\|_\infty
    = & ~ %
    \|L_{k_i}e_{\tilde{k}_i}-L_{k_i'}e_{\tilde{k}_i'}\|_\infty\\
    = & ~ 
    \|L_{k_i}-L_{k_i'}\|_\infty \annot{By $e_{\tilde{k}_i} = e_{\tilde{k}_i'}$}\\
    \leq & ~ 
    \Delta L.
\end{align*}
Thus for any $\epsilon_m>0$ when $\Delta L \leq \epsilon_m$, we have
\begin{align*}
    & ~ \Bigl\| 
        V\Softmax_\beta(x) 
        - Ve_{k_i}
      \Bigr\|_\infty\\ 
    \le & ~ 
    \Bigl\| 
        V\Softmax_\beta(x) 
        - V\frac{1}{1 + e^{-\beta \delta}} e_{k_i}
           - V\frac{e^{-\beta \delta}}{1 + e^{-\beta \delta}} e_{k_i'}
      \Bigr\|_\infty + \epsilon_m\\ 
      \le & ~ \frac{\epsilon_0}{2}+\epsilon_m.
\end{align*} 

Setting $\epsilon_m \leq \max(|a|,|b|)\epsilon_0/2$ yields \eqref{eq:VKQ-Vek}.

We also remark that \eqref{eq:VKQ-Vek} is equivalent to
\begin{align}
\label{eq:VKQ-tilde_k}
    \|V \Softmax(K^\top Q)W_O - 
    \begin{bmatrix}
        \tilde{L}_{k_1}e_{\tilde{k}_1} & \tilde{L}_{k_2}e_{\tilde{k}_2} & \cdots & \tilde{L}_{k_n}e_{\tilde{k}_n}
    \end{bmatrix}_{d_o \times n}\|
    \leq
    \max\{\abs{a}, \abs{b}\} \cdot \epsilon_0.
\end{align}

Until now, we finish the two-cases discussion of \cref{lem:Soft_to_Hard}. We now move to derive the interpolation error.

Lastly by the definition of $k_i$ and $\tilde{L}_{k_i}$ we have:
\begin{align*}
    |\tilde{L}_{k_i}
    -
    {\rm Range}_{[a,b]}(w^\top x_i+t)|
    \leq\frac{b-a}{p}.
\end{align*}
Thus
\begin{align*}
     & ~ \|{\rm Attn}(X)_{:,i} - {\rm Range}_{[a,b]}(w^\top x_i+t) \cdot e_{\tilde{k}_i} \|_\infty \\
    \leq & ~
    \|{\rm Attn}(X)_{:,i} - \tilde{L}_{\tilde{k}_i} \cdot e_{\tilde{k}_i} \|_\infty+
    \|\tilde{L}_{\tilde{k}_i} \cdot e_{\tilde{k}_i} - {\rm Range}_{[a,b]}(w^\top x_i+t) \cdot e_{\tilde{k}_i} \|_\infty \annot{By triangle inequality} \\
    \leq & ~
    \underbrace{\max\{\abs{a}, \abs{b}\} \cdot \epsilon_0}_{\text{finite-$\beta$ softmax error}} + \underbrace{\frac{b-a}{p}}_{\text{interpolation error}}, \quad \text{for}\quad i\in [n].
\end{align*}
When $p$ goes to infinity and $\epsilon_0$ goes to $0$, the total error is arbitrary small. 
Thus, we set 
\begin{align*}
   \max\{\abs{a}, \abs{b}\} \cdot \epsilon_0 + \frac{b-a}{p} \leq\epsilon .
\end{align*}
This yields
\begin{align}
    \|{\rm Attn}(X) - [{\rm Range}_{[a,b]}(w^\top x_1 + t)e_{\tilde{k}_1}, \cdots, {\rm Range}_{[a,b]}(w^\top x_n + t)e_{\tilde{k}_n}]\|_\infty \leq \epsilon.
\end{align}

Next, we generalize the above result to the case where each token associates with different $w_i$ and $t_i$ for all $i\in[n]$.

Until now we have 
\begin{align}\label{eqn:before_generalize}
     \|{\rm Attn}(X)_{:,i} - {\rm Range}_{[a,b]}(w^\top x_i+t) \cdot e_{\tilde{k}_i} \|_\infty
    \leq \max\{\abs{a}, \abs{b}\} \cdot \epsilon_0+\frac{b-a}{p}, \quad i\in [n].
\end{align}

First, we combine the bias term $t$ into $w$ by augmenting the input $x_i \in \R^d$ with $1$ such that $x_i' \coloneqq [x_i^\top;1] \in \R^{d+1}$ and $w' \coloneqq [w^\top;t] \in \R^{d+1}$. 
This ensures that ${w'}^\top x_i'$ absorbs the bias term $t$ for all $i \in [n]$.

Thus we have 
\begin{align*}
     \|{\rm Attn}(X)_{:,i} - {\rm Range}_{[a,b]}(w^\top x_i+t) \cdot e_{\tilde{k}_i} \|_\infty 
     = \|{\rm Attn}(X)_{:,i} - {\rm Range}_{[a,b]}({w'}^\top x'_i) \cdot e_{\tilde{k}_i} \|_\infty,
\end{align*}
where the equality is by absorbing $t$ into $w' = [w,t]$.

Then, we multiply each token $x'_i$ element-wise by a trainable vector $v'_i$, i.e., $x'_i \odot v'_i \in \mathbb{R}^{d+1}$.
Effectively, since $w'^\top (x'_i \odot v'_i) = {w'_i}^\top x'_i$ with $w'_i := w' \odot v'_i$, we have
\begin{align*}
    & ~\|{\rm Attn}(X)_{:,i} - {\rm Range}_{[a,b]}({w'}^\top (x'_i\odot v'_i)) \cdot e_{\tilde{k}_i} \|_\infty \\
    = & ~ \|{\rm Attn}(X)_{:,i} - {\rm Range}_{[a,b]}(({w'\odot v'_i})^\top x'_i) \cdot e_{\tilde{k}_i} \|_\infty \annot{Reorder the element-wise multiplication} \\
    = & ~ \|{\rm Attn}(X)_{:,i} - {\rm Range}_{[a,b]}({w'}^\top_i x'_i) \cdot e_{\tilde{k}_i} \|_\infty \annot{By $w'_i := w' \odot v'_i$}\\
    = & ~ \|{\rm Attn}(X)_{:,i} - {\rm Range}_{[a,b]}({w}^\top_i x_i + t_i) \cdot e_{\tilde{k}_i} \|_\infty ,
\end{align*}
where the last line is by $w'_i = [w_i, t_i]$.

\vspace{5mm}
\begin{claimbox}
\begin{remark}
We remark that, this elementwise multiplication of trainable vector is only a technicality for keeping our result general. Specifically, this make each token have different truncated linear models.
\end{remark}
\end{claimbox}

Thus \eqref{eqn:before_generalize} generalizes to the following equation when multiplying each $x'_i$ element-wise by a trainable $v'_i$
\begin{align*}
    \|{\rm Attn}(X)_{:,i} - {\rm Range}_{[a,b]}({w}^\top_i x_i + t_i) \cdot e_{\tilde{k}_i} \|_\infty
    \leq \max\{\abs{a}, \abs{b}\} \cdot \epsilon_0+\frac{b-a}{p}, \quad i\in [n].
\end{align*}
This completes the proof.
\end{proof}

\begin{remark}[Explicit $O(1/p)$ Rate]
\label{rem:explicit_op_rate}
Let $M := \max\{|a|,|b|\}$.
If we choose $\epsilon_0 = 1/p$ and plug $\Delta L = (b-a)/p$ into $\beta$, it suffices to take
\begin{align*}
\beta 
\geq
\frac{2p^2}{(b-a)^2}\ln(p(p-2)),
\end{align*}
which ensures $M\epsilon_0 = M/p$ and hence
\begin{align*}
\| {\rm Attn}\circ A(X)_{:,i} 
- {\rm Range}_{[a,b]}(w_i^\top x_i + t_i)e_{\tilde{k}_i}\|_\infty
&\leq
\frac{M + (b-a)}{p}.
\end{align*}
Thus the total approximation error decays as $O(1/p)$ with an explicit 
constant $M + (b-a)$, and the required $\beta$ grows as
$\beta(p) = O(p^2 \log p)$.
\end{remark}

For later convenience, here we recast \cref{thm:seq_approx_truncated_small_area_disabled} into an ``arbitrary precision'' version.

\begin{corollary}[Arbitrary Precision with Explicit Parameters]
\label{cor:arbitrary_precision}
Let $a<b$ and set $M:=\max\{|a|,|b|\}$. For any $\epsilon>0$, choose
\begin{align*}
p \ge \max\left\{\,n+1,\left\lceil \frac{2(b-a)}{\epsilon}\right\rceil \right\}
\quad\text{and}\quad
\beta \ge \frac{2p^2}{(b-a)^2}(\ln(p-2)+\ln\frac{2M}{\epsilon}).
\end{align*}
Then the single-layer, single-head self-attention construction in Theorem~\ref{thm:seq_approx_truncated_small_area_disabled} satisfies
\begin{align*}
\bigl\|{\rm Attn}(X) - [\,{\rm Range}_{[a,b]}(w^\top x_1 + t)e_{\tilde{k}_1}, \ldots, {\rm Range}_{[a,b]}(w^\top x_n + t)e_{\tilde{k}_n}\,]\bigr\|_\infty 
\;\le\; \epsilon.
\end{align*}
\end{corollary}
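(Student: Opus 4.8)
The plan is to obtain this corollary as an immediate specialization of \cref{thm:seq_approx_truncated_small_area_disabled}: we split the target tolerance $\epsilon$ evenly between the two error sources appearing in that theorem — the finite-$\beta$ softmax error $\max\{|a|,|b|\}\cdot\epsilon_0$ and the interpolation error $(b-a)/p$ — and then check that the explicit choices of $p$ and $\beta$ in the statement are exactly what makes each piece at most $\epsilon/2$. Since $a<b$, at least one of $|a|,|b|$ is positive, so $M:=\max\{|a|,|b|\}>0$ and we may legitimately set $\epsilon_0:=\epsilon/(2M)$. We invoke \cref{thm:seq_approx_truncated_small_area_disabled} in its case (ii) (constant-$G$) branch, so that $\tilde k_i\equiv\tilde k$ is a fixed output coordinate and there is no failure region.

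The three things to verify are then routine. First, the structural hypothesis $p>n$: the prescribed $p\ge\max\{n+1,\lceil 2(b-a)/\epsilon\rceil\}$ gives $p\ge n+1>n$, so the theorem applies. Second, the interpolation error: $p\ge\lceil 2(b-a)/\epsilon\rceil\ge 2(b-a)/\epsilon$ immediately yields $(b-a)/p\le\epsilon/2$. Third, the softmax side: case (ii) of \cref{thm:seq_approx_truncated_small_area_disabled} requires $\beta\ge(\ln(p-2)-\ln\epsilon_0)/\big((\Delta L)^2/2\big)$ with $\Delta L=(b-a)/p$; substituting $(\Delta L)^2/2=(b-a)^2/(2p^2)$ and $\epsilon_0=\epsilon/(2M)$ rewrites this bound as $\beta\ge\frac{2p^2}{(b-a)^2}\big(\ln(p-2)+\ln\frac{2M}{\epsilon}\big)$, which is precisely the hypothesis of the corollary. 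Under these choices, \cref{thm:seq_approx_truncated_small_area_disabled} gives, for every $i\in[n]$,
\begin{align*}
\bigl\|{\rm Attn}(X)_{:,i}-{\rm Range}_{[a,b]}(w^\top x_i+t)e_{\tilde k_i}\bigr\|_\infty
\le M\epsilon_0+\frac{b-a}{p}
\le \frac{\epsilon}{2}+\frac{\epsilon}{2}=\epsilon,
\end{align*}
and taking the maximum over the $n$ columns (the entrywise $\infty$-norm of the matrix is the max of the columnwise $\infty$-norms) yields the stated bound.

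There is no genuine obstacle here; the content is entirely bookkeeping around \cref{thm:seq_approx_truncated_small_area_disabled} and \cref{lem:Soft_to_Hard}. The only point I would flag is the degenerate regime in which $\epsilon$ is large enough that $p$ is pinned at $n+1=2$ (possible only when $n=1$), where $\ln(p-2)$ is ill-defined: in that case the relevant column of $K^\top Q$ has just two entries, so $\mathrm{Softmax}_\beta$ of it is already an exact convex combination of two one-hot vectors with no third-entry spread, the softmax error term vanishes, and any $\beta\ge 0$ suffices; equivalently, one may simply take $p\ge 3$, which only decreases $(b-a)/p$. I would dispose of this in a single sentence and otherwise present the three verifications above.
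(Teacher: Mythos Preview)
Your proposal is correct and follows essentially the same approach as the paper: split the error budget evenly, set $\epsilon_0=\epsilon/(2M)$, and verify that the stated $p$ and $\beta$ make each of the two error terms from \cref{thm:seq_approx_truncated_small_area_disabled} at most $\epsilon/2$. Your write-up is in fact slightly more careful than the paper's (you note $M>0$, explicitly invoke case~(ii), and flag the degenerate $p=2$ edge case).
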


\begin{proof}
By Theorem~\ref{thm:seq_approx_truncated_small_area_disabled} we know
\begin{align*}
\epsilon 
=
\underbrace{M\,\epsilon_0}_{\text{softmax error}} 
+
\underbrace{\frac{b-a}{p}}_{\text{interpolation error}}.
\end{align*}
Choose $\epsilon_0 = \epsilon/(2M)$ so the softmax error is $\epsilon/2$.  
Plug it into $\beta$ together with $\Delta L = (b-a)/p$ we have
\begin{align*}
\beta 
\ge 
\frac{\ln(p-2)-\ln\epsilon_0}{(\Delta L)^2/2}
&= 
\frac{2p^2}{(b-a)^2}(\ln(p-2)+\ln\tfrac{2M}{\epsilon}).
\end{align*}
This guarantees the softmax term is $\le \epsilon/2$.
For the interpolation error, require
\begin{align*}
\frac{b-a}{p} \le \frac{\epsilon}{2},
\end{align*}
which is equal to 
\begin{align*}
p \ge \frac{2(b-a)}{\epsilon}.
\end{align*}
Finally take $p\ge n+1$ to ensure $p>n$.  
Summing the two halves gives total error $\le \epsilon$.
\end{proof}

\subsection{Proof of \texorpdfstring{\cref{thm:multi-head-truncated}}{}}
\label{proof:thm:multi-head-truncated}
To approximate a truncated linear function using multi-head attention, we partition the interval $[\tilde{L}_0, \tilde{L}_{H(n-2)}]$ into $H$ sub-intervals, each head handles $n-2$ interpolation points. 
For any scalar value $a$, we need to know which heads are responsible for it, that is whose interpolation range contains $a$. 
The next lemma shows that at most two adjacent heads cover the same $a$.
This lemma enables a simplified case analysis later in the main theorem's proof.

\begin{lemma}[Cases of All Heads in ${\rm Attn}^H$]
\label{lem:all_heads_cases}
For $a\in [\tilde{L}_0,\tilde{L}_{H(n-2)}]$.
For any $h\in [H]$, define three cases of the relationship between $a$ and $h$
\begin{itemize}
    \item \textbf{Case 1:}
        $a \in [\tilde{L}_{(h-1)(n-2)}, \tilde{L}_{h(n-2)-1}]$,
    \item \textbf{Case 2:}
    $a \notin [\tilde{L}_{(h-1)(n-2)-1}, \tilde{L}_{h(n-2)}]$.
    \item \textbf{Case 3:}
    $a \in [\tilde{L}_{(h-1)(n-2)-1}, \tilde{L}_{(h-1)(n-2)}]\cup [\tilde{L}_{h(n-2)-1}, \tilde{L}_{h(n-2)}]$.
\end{itemize}
These cases includes all possible situation.
Then for all $h$, only two cases exists
\begin{itemize}
    \item $a$ falls in Case 1 for an $h$ and Case 2 for all others.
    \item $a$ falls in Case 3 for two adjacent $h$ and Case 2 for all others.
\end{itemize}
\end{lemma}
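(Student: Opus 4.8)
The plan is to reduce the whole statement to elementary index bookkeeping on the grid $\tilde{L}_0<\tilde{L}_1<\cdots<\tilde{L}_{Hm}$, where $m:=n-2$. For each head $h\in[H]$ define its ``padded range'' $P_h:=[\tilde{L}_{(h-1)m-1},\tilde{L}_{hm}]$ (with the convention $\tilde{L}_{-1}:=\tilde{L}_0$ for $h=1$). The first thing to record is that, directly from the definitions in the statement, Case $1$ is the ``core'' $[\tilde{L}_{(h-1)m},\tilde{L}_{hm-1}]$ of $P_h$, Case $3$ is the pair of ``flaps'' $[\tilde{L}_{(h-1)m-1},\tilde{L}_{(h-1)m}]\cup[\tilde{L}_{hm-1},\tilde{L}_{hm}]$, and Case $2$ is the complement $[\tilde{L}_0,\tilde{L}_{Hm}]\setminus P_h$. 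Since core plus flaps glue back to exactly $P_h$, we get $C_1(h)\cup C_2(h)\cup C_3(h)=[\tilde{L}_0,\tilde{L}_{Hm}]$, which is the exhaustiveness claim (``these cases include all possible situations'').

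Next I would locate $a$. Since $a\in[\tilde{L}_0,\tilde{L}_{Hm}]$, there is a unique $\ell\in\{1,\dots,Hm\}$ with $a\in[\tilde{L}_{\ell-1},\tilde{L}_\ell)$ (folding $a=\tilde{L}_{Hm}$ into $\ell=Hm$); write $\ell=(h-1)m+r$ with $h=\lceil \ell/m\rceil\in[H]$ and $r\in\{1,\dots,m\}$. The key geometric facts are that adjacent padded ranges meet in a single cell, $P_h\cap P_{h+1}=[\tilde{L}_{hm-1},\tilde{L}_{hm}]$ (because $(h{+}1{-}1)m-1=hm-1$), while $P_h\cap P_{h'}=\varnothing$ for $|h-h'|\ge 2$ (this uses $m\ge 2$). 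Then: (i) if $r\le m-1$, the cell $[\tilde{L}_{\ell-1},\tilde{L}_\ell)$ lies in the interior of head $h$'s core, so $a\in C_1(h)$, and $a\notin P_{h'}$ for every $h'\ne h$, hence $a\in C_2(h')$ for all $h'\ne h$ — this is the first bullet; (ii) if $r=m$, the cell is the shared cell $P_h\cap P_{h+1}=[\tilde{L}_{hm-1},\tilde{L}_{hm}]$, so $a$ sits in the second flap of $C_3(h)$ and the first flap of $C_3(h+1)$, while $a\notin P_{h'}$ for $h'\notin\{h,h+1\}$, hence $a\in C_2(h')$ there — this is the second bullet, with the adjacent heads $h,h+1$.

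The only real obstacle is the boundary bookkeeping at the two ends of the grid, which is where I expect the case analysis to need genuine care rather than routine arithmetic. For $h=1$ the left flap $[\tilde{L}_{-1},\tilde{L}_0]$ must be read with the convention $\tilde{L}_{-1}:=\tilde{L}_0$ so that it degenerates to a point; and in subcase (ii) with $h=H$ there is no head $H+1$, so the rightmost cell $[\tilde{L}_{Hm-1},\tilde{L}_{Hm}]$ is attributed to head $H$ alone — this is harmless downstream because that cell is spanned by head $H$'s last two anchors only, so one may equivalently restrict to $a\in[\tilde{L}_0,\tilde{L}_{Hm-1}]$ (the span of the actual target anchors), where (i)/(ii) apply verbatim. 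The remaining subtlety is that when $a$ equals an anchor $\tilde{L}_{(h-1)m}$ or $\tilde{L}_{hm-1}$ it lies simultaneously in a core and in an adjacent flap, so the two bullets overlap; I would dispose of this by fixing the half-open convention $[\tilde{L}_{\ell-1},\tilde{L}_\ell)$ used above, noting that both attributions pick out the same neighbouring pair of anchors, so the estimate of \cref{thm:multi-head-truncated} is unaffected either way.
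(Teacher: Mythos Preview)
Your proposal is correct and follows essentially the same approach as the paper: locate $a$ in a head's non-padded interval, split into ``interior'' versus ``boundary cell'' subcases, and verify that all distant heads see $a$ in Case~2. Your treatment is in fact slightly more careful than the paper's own proof --- you make explicit the convention $\tilde{L}_{-1}:=\tilde{L}_0$, the need for $m\ge 2$ in the disjointness of non-adjacent $P_h$'s, and the overlap at anchor points $\tilde{L}_{(h-1)m}$ and $\tilde{L}_{hm-1}$, all of which the paper glosses over (indeed the paper's strict inequality $\tilde{L}_{h(n-2)}<\tilde{L}_{(h_a-1)(n-2)}$ for $h=h_a-1$ fails at the left anchor, exactly the issue you flag).
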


\begin{proof}
Because $a\in [\tilde{L}_0,\tilde{L}_{H(n-2)}]$ and
\begin{align*}
    [\tilde{L}_0,\tilde{L}_{H(n-2)}] = 
    \cup_{h=1}^H
    [\tilde{L}_{(h-1)(n-2)}, \tilde{L}_{h(n-2)}].
\end{align*}

Thus 
\begin{align}
    a \in [\tilde{L}_{(h_a-1)(n-2)}, \tilde{L}_{h_a(n-2)}]
\end{align}
for an $h_a$. 
This leads to only two possible cases
\begin{itemize}
    \item Case 1*: $a \in [\tilde{L}_{(h_a-1)(n-2)}, \tilde{L}_{h_a(n-2)-1}]$.
    \item Case 2*: $a \in [\tilde{L}_{h_a(n-2)-1}, \tilde{L}_{h_a(n-2)}]$.
\end{itemize}

\paragraph{Case 1*: $a \in [\tilde{L}_{(h_a-1)(n-2)}, \tilde{L}_{h_a(n-2)-1}]$.}
Because $a \in [\tilde{L}_{(h_a-1)(n-2)}, \tilde{L}_{h_a(n-2)-1}]$, thus for $h\neq h_a$, we have
\begin{align*}
& ~ \tilde{L}_{h(n-2)-2},\tilde{L}_{h(n-2)} < \tilde{L}_{(h_a-1)(n-2)}, \quad h < h_a \\
& ~ \tilde{L}_{h(n-2)+1},\tilde{L}_{(h-1)(n-2)-1} \geq \tilde{L}_{h_a(n-2)-1}, \quad h > h_a.
\end{align*}
Thus
\begin{align*}
& ~ [\tilde{L}_{(h_a-1)(n-2)}, \tilde{L}_{h_a(n-2)-1}] \cap [\tilde{L}_{(h-1)(n-2)-1}, \tilde{L}_{h(n-2)}] = \emptyset \\
& ~ [\tilde{L}_{(h_a-1)(n-2)}, \tilde{L}_{h_a(n-2)-1}] \cap [\tilde{L}_{(h-1)(n-2)-1}, \tilde{L}_{h(n-2)}] = \emptyset
\end{align*}
for all $h\neq h_a$.

This means that $a$ does not fall into Case 1 nor Case 3 for other $h\in [H]$. 
Thus $a$ has to fall into Case 2 for other $h$.

\paragraph{Case 2*: $a \in [\tilde{L}_{(h_a-1)(n-2)}, \tilde{L}_{(h_a-1)(n-2)+1}]\cup [\tilde{L}_{h_a(n-2)-1}, \tilde{L}_{h_a(n-2)}]$.}
Without loss of generality, assume $a$ to be in the left half $[\tilde{L}_{(h_a-1)(n-2)}, \tilde{L}_{(h_a-1)(n-2)+1}]$.
Because
\begin{align*}
& ~ [\tilde{L}_{(h_a-1)(n-2)}, \tilde{L}_{(h_a-1)(n-2)+1}] = [\tilde{L}_{(h_a-1)(n-2)-1}, \tilde{L}_{(h_a-1)(n-2)}]\annot{Case 3 of $h_a-1$}\\
& ~ [\tilde{L}_{(h_a-1)(n-2)}, \tilde{L}_{(h_a-1)(n-2)+1}] = [\tilde{L}_{(h_a-1)(n-2)-1}, \tilde{L}_{(h_a-1)(n-2)}].\annot{Case 3 of $h_a$}
\end{align*}
This means $a$ falls into Case 3 for $h_a$ and $h_a-1$.

This completes the proof.
\end{proof}

\begin{theorem}[\cref{thm:multi-head-truncated} Restated: Multi-Head Attention Approximate Truncated Linear Models]
Fix real numbers $a < b$, and let the truncation operator ${\rm Range}_{[a,b]}(\cdot)$ follow \cref{def:range}.
For a precision parameter $p>n$ with $\epsilon = O(1/p)$, number of head $H = p/(n-2)$
there exists a single-layer, $H$-head self-attention ${\rm Attn}^{H}$ with a linear transformation $A:\R^{d\times n}\to \R^{(d+n)\times n}$, such that $\mathrm{Attn}^{H}\circ A: \mathbb{R}^{d \times n} \to \mathbb{R}^{d_o \times n}$ satisfies, for any $i\in [n]$,
\begin{align*}
    \| {\rm Attn}^H\circ A(X)_{:,i} - {\rm Range}_{[a,b]}(w_i^\top x_i + t_i) e_{\tilde{k}_i}\|_\infty 
    \leq 
    \underbrace{\max\{\abs{a}, \abs{b}\} \cdot \epsilon_0}_{\text{finite-$\beta$ softmax error}} + \underbrace{\frac{b-a}{(n-2)H}}_{\text{interpolation error}}.
\end{align*}
Here $e_{\tilde{k}_i}$ is a one-hot vector with a value of $1$ at the $\tilde{k}_i$-th index and $0$ elsewhere, 
and
\begin{align}
    k_i:= \argmin_{k\in \{0,1,2,\cdots,p-1\}} |x_i^\top w+t-\tilde{L}_k| 
    \quad\text{where}\quad
    \tilde{k}_i := G(k_i) \in [d_o]. 
\end{align}
Here $k_i \in \{0,...,p-1\}$ is the index of the interpolation point closest to the $i$-th token ($i$-th truncated linear model).
For all $i\in[n]$, $G: \{0,...,p-1\} \to[d_o]$ denotes any set-to-set function sending the interpolation index $k \in \{0,...,p-1\}$ into a position index $\tilde{k}\in [d_o]$ specifying in the desired row index of the output.

\end{theorem}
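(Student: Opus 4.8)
The plan is to mirror the five-step construction of \cref{thm:seq_approx_truncated_small_area_disabled}, but to \emph{shard} the $p$ interpolation anchors across the $H$ heads, so that the per-head linear layer needs only $n$ columns and can therefore be the token-wise map $A:\R^{d\times n}\to\R^{(d+n)\times n}$. Concretely, I would set $p=(n-2)H$, assign to head $h$ the block of anchors $\{\tilde L_{(h-1)(n-2)},\dots,\tilde L_{h(n-2)-1}\}$ ($n-2$ of them), and reserve two ``sentinel'' slots per head carrying the zero value, one meaning ``target below my block'' and one meaning ``target above my block.'' The token-wise map $A$ appends to each $x_i$ its positional one-hot $e_i\in\R^n$ (an ordinary embedding-plus-positional-encoding layer); the anchors, being global constants, are baked into the per-head $W_K^{(h)}$ and $W_V^{(h)}$. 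For a shared pair $(w,t)$ this is literally the construction of \cref{thm:seq_approx_truncated_small_area_disabled} restricted to head $h$'s block, and the elementwise-trainable-vector trick at the end of that proof upgrades it to token-specific $(w_i,t_i)$.

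For each head $h$ I would pick $W_K^{(h)},W_Q^{(h)}$ so that, exactly as in Step~3 of \cref{thm:seq_approx_truncated_small_area_disabled}, the $i$-th column of $(K^{(h)})^\top Q^{(h)}$ carries, on the real-anchor slots, an entry proportional to $(-2x_i^\top w-2t+\tilde L_{(h-1)(n-2)}+\tilde L_k)\cdot(k-(h-1)(n-2))$, and on the two sentinel slots a constant fallback score that dominates iff $w^\top x_i+t\notin[\tilde L_{(h-1)(n-2)-1},\tilde L_{h(n-2)}]$. By the equivalence of objectives in \eqref{eq:equiv_objectives} and \cref{lem:Soft_to_Hard}, $\mathrm{Softmax}_\beta$ of this column is then a near one-hot (or, in the tie case, a near-convex-combination of two adjacent one-hots). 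The value map $W_V^{(h)}$ places $\tilde L_k e_{\tilde k}$ in slot $k$ and the zero vector in the two sentinel slots, so head $h$ outputs (up to softmax error) either the closest in-block anchor, a convex combination of the two closest in-block anchors, or $0$; the output projection may be taken to be $I_n$ since $A$ already has sequence length $n$.

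Next I would invoke \cref{lem:all_heads_cases}: for $a:=w_i^\top x_i+t_i$, either (i) $a$ is strictly inside one block and in Case~2 for all others, so exactly one head returns $\tilde L_{k_i}$ and the rest return $0$; or (ii) $a$ sits in a width-$\Delta L$ overlap shared by two adjacent blocks, so the two neighbouring heads return, respectively, weighted pieces of $\tilde L_{h(n-2)-1}$ and $\tilde L_{h(n-2)}$ whose softmax weights sum to $1$ (the remaining mass falling on the matched sentinels, which are $0$), yielding a genuine convex combination of those two anchors, hence a point in an interval of width $\Delta L=(b-a)/p$ that contains $a$; all other heads return $0$. Summing the heads and applying the truncation bound $|\tilde L_{k_i}-\mathrm{Range}_{[a,b]}(w_i^\top x_i+t_i)|\le (b-a)/p$ gives the interpolation term $\tfrac{b-a}{(n-2)H}$. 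The softmax term aggregates the per-head errors: each of the $O(H)$ heads leaks at most $\max_k|\tilde L_k|\cdot\epsilon_0\le\max\{|a|,|b|\}\epsilon_0$, so the constant in front of $\epsilon_0$ is $O(H)$ but is reabsorbed into $\max\{|a|,|b|\}\epsilon_0$ after rescaling $\epsilon_0$; choosing $\beta$ large enough (as in \cref{rem:explicit_op_rate}, using the constant gap $\gamma=(\Delta L)^2/2$ of \cref{lem:Soft_to_Hard} when $\tilde k_i$ is constant) makes it arbitrarily small and, in that constant-$\tilde k_i$ regime, removes the failure region.

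The main obstacle is the sentinel bookkeeping in the boundary case. One must design the two fallback scores in each head so that: (a) in Case~2 the sentinel strictly dominates every real-anchor score by a gap independent of the input, forcing that head's contribution to $0$ up to $O(\epsilon_0)$; (b) in Case~3 the surviving softmax weights of the two active heads on $\tilde L_{h(n-2)-1}$ and $\tilde L_{h(n-2)}$ add to $1$ rather than double-counting or under-counting, so their sum is a true convex combination; and (c) the tie-case bound of \cref{lem:Soft_to_Hard} applies with the same constant $\gamma$ in every head. Everything else is a head-indexed copy of the already-proven single-head argument and the error accounting of \cref{thm:seq_approx_truncated_small_area_disabled}.
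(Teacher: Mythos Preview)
Your overall plan matches the paper's proof: same token-wise $A(X)=\begin{bmatrix}X\\ I_n\end{bmatrix}$, same partition of the $p$ anchors into $H$ blocks of $n-2$, same zero-valued sentinel slots in $V^{(h)}$, same case analysis via \cref{lem:all_heads_cases}, and the same per-head error accounting. The one piece you have not resolved---and correctly flag as the main obstacle---is the sentinel \emph{score} mechanism, and your proposed ``constant fallback score'' will not close it.

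A constant $c$ can be tuned so the sentinel loses in Case~1 and wins in Case~2 (the nearest real anchor is then at distance $>\Delta L$), but in Case~3 it breaks the convex-combination property you need. If $a:=w^\top x_i+t$ lies in the overlap strip between heads $h-1$ and $h$, write $d_1,d_2$ for its distances to the two bordering anchors ($d_1+d_2=\Delta L$). Head $h$ puts weight $e^{-\beta d_1^2}/(e^{-\beta d_1^2}+e^{\beta c})$ on its real anchor, head $h-1$ puts $e^{-\beta d_2^2}/(e^{-\beta d_2^2}+e^{\beta c})$ on its real anchor, and these two weights do \emph{not} sum to $1$ in general, so the summed output is not a convex combination of the two anchors and the interpolation bound $(b-a)/p$ is lost.

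The paper's fix is to make the two sentinel \emph{keys} the neighbouring anchors $\tilde L_{(h-1)(n-2)-1}$ and $\tilde L_{h(n-2)}$ themselves, scored by the very same quadratic formula as the interior anchors (so every slot in $(K^{(h)})^\top Q^{(h)}_{:,i}$ is, up to an affine shift, $-(a-\tilde L_k)^2$), while their columns in $V^{(h)}$ are zero. In Case~2 the boundary anchor is the unique closest key, hence the head outputs $0$ up to $m_v\epsilon_0$. In Case~3 the crucial symmetry is that the \emph{same} pair of anchors appears in both adjacent heads with the \emph{same} pair of scores, only with swapped real/sentinel roles; the two softmax weights are therefore identical across the two heads, and the nonzero pieces combine to $\tfrac{e^{-\beta d_1^2}}{e^{-\beta d_1^2}+e^{-\beta d_2^2}}\tilde L_{(h-1)(n-2)}+\tfrac{e^{-\beta d_2^2}}{e^{-\beta d_1^2}+e^{-\beta d_2^2}}\tilde L_{(h-1)(n-2)-1}$, a genuine convex combination lying within $\Delta L$ of $a$. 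With this one change, the rest of your argument goes through verbatim.
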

\begin{proof}

Define $A:\R^{d\times n}\to \R^{(d+n)\times n}$ for the input sequence $X$ as
\begin{align*}
    A(X) :=
    \underbrace{
    \begin{bmatrix}
        I_d\\
        0_{n\times d}
    \end{bmatrix}
    }_{(d+n)\times d}
    X
    +
    \underbrace{
    \begin{bmatrix}
        0_{d\times n}\\
        I_n
    \end{bmatrix}
    }_{(d+n)\times n}
    = \begin{bmatrix}
        X\\
        I_n
    \end{bmatrix} \in \R^{d+n}.
\end{align*}
Thus, A is a token-wise linear layer augmented with positional encoding, as it applies a linear projection to each token and then adds a unique per-token bias.

Let $p$ be a precision parameter, without loss of generality, let it be divisible by $n-2$ and denote $p/(n-2)$ as $H$.

Now we define the multi-head attention ${\rm Attn}$ of $H$ heads. 
Denote $\ell_k \coloneqq k(\tilde{L}_k + \tilde{L}_0) - 2kt$ as in \cref{thm:seq_approx_truncated_small_area_disabled}.
We denote the $h$-th head as ${\rm Attn}_h$, and define the weight matrices as
\begin{align*}
    &W_K^{(h)} = 
    -\beta
    \begin{bmatrix}
        0_{d \times d} & -2[(h-1)(n-2)-1]w & -2(h-1)(n-2)w & \cdots & -2h(n-2)w \\
        0_{d}^\top & \ell_{(h-1)(n-2)-1} & \ell_{(h-1)(n-2)} & \cdots & \ell_{h(n-2)}
    \end{bmatrix},
    \\
    &W_Q^{(h)} = 
    \begin{bmatrix}
        I_d & 0_{d\times n}\\
        0_{d}^\top & 1_{n}^\top
    \end{bmatrix},
    \\
    &W_V^{(h)} = 
    \begin{bmatrix}
        0_{d_o\times (d+1)} & \tilde{L}_{(h-1)(n-2)}e_{\tilde{k}_{(h-1)(n-2)}} & \tilde{L}_{(h-1)(n-2)+1}e_{\tilde{k}_{(h-1)(n-2)}+1} & \cdots & \tilde{L}_{h(n-2)-1}e_{\tilde{k}_{h(n-2)-1}} & 0_{d_o} 
    \end{bmatrix},
\end{align*}
for every $h \in [H]$.
Here $\beta>0$ is a coefficient we use to control the precision of our approximation. 
The attention reaches higher precision as $\beta$ gets larger.

With the construction of weights, we are also able to calculate the $K,~Q,~V$ matrices in ${\rm Attn}$
\begin{align}\label{eqn:mutihead_k}
    K^{(h)} 
    := & ~ 
    W^{(h)}_K A(X) \\
    = & ~ 
    -\beta
    \begin{bmatrix}
        0_{d \times d} & -2[(h-1)(n-2)-1]w & -2(h-1)(n-2)w & \cdots & -2h(n-2)w \\
        0_{d}^\top & \ell_{(h-1)(n-2)-1} & \ell_{(h-1)(n-2)} & \cdots & \ell_{h(n-2)}
    \end{bmatrix}
    \cdot
    \begin{bmatrix}
        X\\
        I_n
    \end{bmatrix}%
    \notag \\
    = & ~ 
    -\beta
    \begin{bmatrix}
        -2[(h-1)(n-2)-1]w & -2(h-1)(n-2)w & \cdots & -2h(n-2)w \\
        \ell_{(h-1)(n-2)-1} & \ell_{(h-1)(n-2)} & \cdots & \ell_{h(n-2)}
    \end{bmatrix} \in \R^{(d+1)\times n}
\end{align}
    where the last equality comes from multiplying $X$ with $0$, thus this is a extraction of non-zero entries in $W_K$.
    
For $Q$, we have
\begin{align}\label{eqn:mutihead_q}
    Q^{(h)} 
    := & ~ 
    W^{h}_Q A(X) \notag\\
    = & ~
    \begin{bmatrix}
        I_d & 0_{d\times n}\\
        0_{d}^\top & 1_{n}^\top
    \end{bmatrix}
    \cdot
    \begin{bmatrix}
        X\\
        I_n
    \end{bmatrix}%
    \notag \\
    = & ~ 
    \underbrace{
    \begin{bmatrix}
        I_d \cdot X + 0_{d\times n} \cdot I_n \\
        0_{1\times d}\cdot X + 1_{1\times n} \cdot I_n
    \end{bmatrix}
    }_{(d+1)\times n} \notag \\
    = & ~ 
    \begin{bmatrix}
        X \\
        1_{1\times n}
    \end{bmatrix}.
\end{align}

For $V$, we have
\begin{align}\label{eqn:mutihead_v}
    V^{(h)} 
    := & ~ 
    W^{(h)}_V A(X) \notag \\
    = & ~ 
    \begin{bmatrix}
        0_{d_o\times (d+1)} & \tilde{L}_{(h-1)(n-2)}e_{\tilde{k}_{(h-1)(n-2)}} & \cdots & \tilde{L}_{h(n-2)-1}e_{\tilde{k}_{h(n-2)-1}} & 0_{d_o} 
    \end{bmatrix}
    \cdot
    \begin{bmatrix}
        X \\
        I_n
    \end{bmatrix} \notag \\
    = & ~ 
    \underbrace{0}_{d_o\times d}\cdot X + 
    \underbrace{\begin{bmatrix}
        0_{d_o} & \tilde{L}_{(h-1)(n-2)}e_{\tilde{k}_{(h-1)(n-2)}} &  \cdots & \tilde{L}_{h(n-2)-1}e_{\tilde{k}_{h(n-2)-1}} & 0_{d_o} 
    \end{bmatrix}}_{d_o \times n} \cdot I_n \notag \\
    = & ~ 
    \begin{bmatrix}
        0_{d_o} & \tilde{L}_{(h-1)(n-2)}e_{\tilde{k}_{(h-1)(n-2)}} & \tilde{L}_{(h-1)(n-2)+1}e_{\tilde{k}_{(h-1)(n-2)}+1} & \cdots & \tilde{L}_{h(n-2)-1}e_{\tilde{k}_{h(n-2)-1}} & 0_{d_o} 
    \end{bmatrix},
\end{align}

Given that all $\tilde{k}_j$, for $j\in[p]$, share the same identical number in $[d_o]$, we denote this number by $k_G$.

\begin{claimbox}
\begin{remark}
This theorem have all the $\tilde{k}_j$ as the same for simplicity.
This version of identical $\tilde{k}_j$ is also what subsequent theorems on universal approximations use.
\end{remark}
\end{claimbox}

Hence we rewrite $V^{(h)}$ as
\begin{align*}
V^{(h)}
=
\begin{bmatrix}
        0_{d_o} & \tilde{L}_{(h-1)(n-2)}e_{k_G} & \tilde{L}_{(h-1)(n-2)+1}e_{k_G} & \cdots & \tilde{L}_{h(n-2)-1}e_{k_G} & 0_{d_o} 
    \end{bmatrix}.
\end{align*}

We define $m_v$ as
\begin{align*}
    m_v := \max\{|a|,|b|\}.
\end{align*}
By the definition of $V^{(h)}$, we have
\begin{align}\label{eqn:max_v}
\|V\|_\infty \leq \max_{i\in [P]}\{\tilde{L}_i\} \leq m_v.
\end{align}

\begin{claimbox}
\begin{remark}[Intuition of the Construction of $V^{(h)}$]
As previously mentioned, $\tilde{L}_i$, for $ i\in [p]$, are all the interpolations. 
In this context, $V^{(h)}$ encompasses the $(n-2)$ elements of these interpolations (i.e., $(h-1)(n-2)$ to $h(n-2)-1$). 
Meanwhile, the value on the two ends of $V^{h}$ are both set to $0_{d_o}$, because we suppress the head and let it output $0$ when the input $X$ is not close enough to the interpolations of the head. 
\end{remark}
\end{claimbox}

Now we are ready to calculate the output of each ${\rm Attn}_h$
\begin{align*}
    & ~ {\rm Attn}_h(A(X))
    \\
    = & ~ V^{(h)}\Softmax((K^{(h)})^\top Q^{(h)}) \nonumber\\
    = & ~
    V
    \Softmax
    \left(
    -\beta
    \begin{bmatrix}
        -2[(h-1)(n-2)-1]w & -2(h-1)(n-2)w & \cdots & -2h(n-2)w \\
        \ell_{(h-1)(n-2)-1} & \ell_{(h-1)(n-2)} & \cdots & \ell_{h(n-2)}
    \end{bmatrix}^\top
    \begin{bmatrix}
        X \\
        1_{1\times n}
    \end{bmatrix}
    \right),
\end{align*}
where last line is by plug in \eqref{eqn:mutihead_k} and \eqref{eqn:mutihead_q}. 
Note the $i$-th column of the attention score matrix (the $\Softmax$ nested expression) is equivalent to the following expressions
\begin{align}
    & ~ \Softmax((K^{(h)})^\top Q^{(h)})_{:,i} \notag\\
    = & ~ 
    \Softmax\left(-\beta
        \begin{bmatrix}
        -2[(h-1)(n-2)-1]w & -2(h-1)(n-2)w & \cdots & -2h(n-2)w \\
        \ell_{(h-1)(n-2)-1} & \ell_{(h-1)(n-2)} & \cdots & \ell_{h(n-2)}
    \end{bmatrix}^\top
    \begin{bmatrix}
        X \\
        1_{1\times n}
    \end{bmatrix}
    \right)_{:,i} \notag \\
    = & ~
    \Softmax
    \left(-\beta
    \begin{bmatrix}
        -2[(h-1)(n-2)-1]w^\top x_i +\ell_{(h-1)(n-2)-1}\\
        -2(h-1)(n-2)w^\top x_i+ \ell_{(h-1)(n-2)}\\
        \vdots\\
        -2h(n-2)w^\top x_i +\ell_{h(n-2)}
    \end{bmatrix}
    \right) \annot{pick column $i$} \\
    = & ~
    \Softmax
   \left(-\beta
    \begin{bmatrix}
        [(h-1)(n-2)-1](-2w^\top x_i +\tilde{L}_{(h-1)(n-2)-1}+\tilde{L}_0)-2[(h-1)(n-2)-1]t\\
        (h-1)(n-2)(-2w^\top x_i +\tilde{L}_{(h-1)(n-2)}+\tilde{L}_0)-2(h-1)(n-2)t\\
        \vdots\\
        h(n-2)(-2w^\top x_i +\tilde{L}_{h(n-2)}+\tilde{L}_0)-2h(n-2)t
    \end{bmatrix}
    \right) \annot{By $\ell_k = k(\tilde{L}_k + \tilde{L}_0) - 2kt$}\\
    = & ~
    \Softmax\left(-\frac{\beta}{\Delta L}
    \begin{bmatrix}
        (-2x_i^\top w-2t+\tilde{L}_0+\tilde{L}_{(h-1)(n-2)-1})\cdot [(h-1)(n-2)-1]\Delta L\\
        (-2x_i^\top w-2t+\tilde{L}_0+\tilde{L}_{(h-1)(n-2)})\cdot (h-1)(n-2)\Delta L \\
        \vdots\\
        (-2x_i^\top w-2t+\tilde{L}_0+\tilde{L}_{h(n-2)})\cdot h(n-2)\Delta L
    \end{bmatrix}
    \right) \annot{By mutiplying and dividing by $\Delta L$}\\
    = & ~
    \Softmax\left(
    -\frac{\beta}{\Delta L}
    \begin{bmatrix}
        (-2x_i^\top w-2t+\tilde{L}_0+\tilde{L}_{(h-1)(n-2)-1})\cdot (\tilde{L}_{(h-1)(n-2)-1}-\tilde{L}_0) \\
        (-2x_i^\top w-2t+\tilde{L}_0+\tilde{L}_{(h-1)(n-2)})\cdot (\tilde{L}_{(h-1)(n-2)}-\tilde{L}_0) \\
        \vdots \\
        (-2x_i^\top w-2t+\tilde{L}_0+\tilde{L}_{h(n-2)})\cdot (\tilde{L}_{h(n-2)}-\tilde{L}_0)
    \end{bmatrix}
    \right) \annot{By $k\Delta L=\tilde L_k-\tilde L_0$}\\
    = & ~
    \Softmax\left(
    -\frac{\beta}{\Delta L}
    \begin{bmatrix}
        (-2x_i^\top w-2t)\cdot \tilde{L}_{(h-1)(n-2)-1}+(\tilde{L}_{(h-1)(n-2)-1})^2+(x_i^\top w+t)^2 \\
        (-2x_i^\top w-2t)\cdot \tilde{L}_{(h-1)(n-2)}+(\tilde{L}_{(h-1)(n-2)})^2+(x_i^\top w+t)^2 \\
        \vdots \\
        (-2x_i^\top w-2t)\cdot \tilde{L}_{h(n-2)}+(\tilde{L}_{h(n-2)})^2+(x_i^\top w+t)^2 \\
    \end{bmatrix}
    \right) \notag \\
    = & ~
    \Softmax\left(
    -\frac{\beta}{\Delta L}
    \begin{bmatrix}
        (x_i^\top w+t-\tilde{L}_{(h-1)(n-2)-1})^2 \\
        (x_i^\top w+t-\tilde{L}_{(h-1)(n-2)})^2 \\
        \vdots \\
        (x_i^\top w+t-\tilde{L}_{h(n-2)})^2 \\
    \end{bmatrix}
    \right).
    \label{eqn:mul_softmax_k_q}
\end{align}
Here, the last-second equality arises from the fact that the softmax function is shift-invariant, allowing us to subtract and add a constant across all coordinates. 
To be more precise, we first expand the product for $k$-th coordinate of the column vector
\begin{align*}
    & ~ (-2x_i^\top w - 2t + \tilde L_0 + \tilde L_k)(\tilde L_k - \tilde L_0) \\
    = & ~ (-2x_i^\top w - 2t)L_k + L_0L_k + L_k^2 -(-2x_i^\top w - 2t)L_0 -L_0^2- L_0L_k \\
    = & ~(-2x_i^\top w - 2t)L_k  + L_k^2 -\underbrace{(-2x_i^\top w - 2t)L_0 -L_0^2}_{\text{constant across the column vector}}.
\end{align*}
Then, dropping the constant and adding another constant $(x_i^\top w + t)^2$ across all coordinates, above equation becomes
\begin{align*}
    & ~(-2x_i^\top w - 2t)L_k  + L_k^2 + (x_i^\top w + t)^2 
    = (x_i^\top w + t - L_k)^2.
\end{align*}

Hence we finish the derivation of \eqref{eqn:mul_softmax_k_q}.
Thus we have
\begin{align}
\label{eq:expression_attn_matrix}
    {\rm Attn}_h(A(X))_{:,i} = V
    \Softmax\left(
    -\frac{\beta}{\Delta L}
    \begin{bmatrix}
        (x_i^\top w+t-\tilde{L}_{(h-1)(n-2)-1})^2 \\
        (x_i^\top w+t-\tilde{L}_{(h-1)(n-2)})^2 \\
        \vdots \\
        (x_i^\top w+t-\tilde{L}_{h(n-2)})^2 \\
    \end{bmatrix}
    \right).
\end{align}

For a specific $h$, we calculate the result of \eqref{eq:expression_attn_matrix} column by column. 
Let $X_i$ denote any column (token) of the matrix $X$. 
We partition the situation at each column (token) into three distinct cases:
\begin{itemize}
    \item \textbf{Case 1:}
    $w^\top X_i+t$ is strictly within the interpolation range of ${\rm Attn}_h$ ($X\in[\tilde{L}_{(h-1)(n-2)}, \tilde{L}_{h(n-2)-1}]$). 
    This excludes the following range at the edge of the interpolation range of
    \begin{align*}
       [\tilde{L}_{(h-1)(n-2)-1}, \tilde{L}_{(h-1)(n-2)}]\cup [\tilde{L}_{h(n-2)-1}, \tilde{L}_{h(n-2)}]. 
    \end{align*}

    \item \textbf{Case 2:}
    $w^\top X_i+t$ is not within the interpolation range of ${\rm Attn}_h$:
    \begin{align*}
    w^\top X_i+t\notin [\tilde{L}_{(h-1)(n-2)-1}, \tilde{L}_{h(n-2)}].
    \end{align*}
    
    \item \textbf{Case 3:}
    $w^\top X_i+t$ is on the edge (region) of the interpolation range of ${\rm Attn}_h$:
    \begin{align*}
         w^\top X_i+t \in [\tilde{L}_{(h-1)(n-2)-1}, \tilde{L}_{(h-1)(n-2)}]\cup [\tilde{L}_{h(n-2)-1}, \tilde{L}_{h(n-2)}].
    \end{align*}
\end{itemize}

\begin{claimbox}
\begin{remark}[Description of All Cases of a Single Head Attention]
The $H$ heads equally split the task of approximating the truncated linear function. 
Namely and explicitly,
\begin{align*}
    \|{\rm Attn}_h(X) - {\rm Range}_{[a+\frac{b-a}{p}((h-1)(n-2)-1),a+\frac{b-a}{p}h(n-2)]}(X)\|_\infty \leq 
    \epsilon_1,
\end{align*}
where $\epsilon>0$ is arbitrarily small.

With this understanding, \textbf{Case 1}, \textbf{Case 2} and \textbf{Case 3} correspond to the different scenarios that may arise when approximating the expression \begin{align*}
    {\rm Range}_{[a+\frac{b-a}{p}((h-1)(n-2)-1),a+\frac{b-a}{p}h(n-2)]}(\cdot).
\end{align*}

Here,  we provide an informal yet intuitive explanation of the three cases as follows:
\begin{itemize}
    \item \textbf{Case 1:}
    $w^\top X_i+t$ falls in the interior of the interpolation range of the $h$-th head ${\rm Attn}_h$, denoted as ${\rm Range}_{[a+(b-a)((h-1)(n-2)-1)/p,a+(b-a)h(n-2)/p]}$.

    \item \textbf{Case 2:}
    $w^\top X_i+t$ is outside the the interpolation range of the $h$-th head ${\rm Attn}_h$, which is ${\rm Range}_{[a+(b-a)((h-1)(n-2)-1)/p,a+(b-a)h(n-2)/p]}$.

    \item \textbf{Case 3:}
    $w^\top X_i+t$ falls on the boundary of the interpolation range of the $h$-th head ${\rm Attn}_h$.
\end{itemize}
\end{remark}
\end{claimbox}

\begin{claimbox}
\begin{remark}[Cases of All Attention Heads]
According to \cref{lem:all_heads_cases}, for all heads in ${\rm Attn}^H$, there are two possible cases:
\begin{itemize}
    \item \textbf{Case 1*:} $x$ falls into Case 1 for a head, and Case 2 for all other heads.
    \item \textbf{Case 2*:} $x$ falls into Case 3 for two heads with adjacent interpolation ranges, and Case 2 for other heads.
\end{itemize}
This also means that when Case 1 appears in ${\rm Attn}^H$, the situation of all head in ${\rm Attn}^H$ falls into Case 1*.
And when Case 3 appears in ${\rm Attn}^H$, the situation of all head in ${\rm Attn}^H$ falls into Case 2*.
Thus, We discuss Case 2* in the discussion of Case 3.
\end{remark}    
\end{claimbox}

\paragraph{Case 1: $X_i \in[\tilde{L}_{(h-1)(n-2)}, \tilde{L}_{h(n-2)-1}]$.}

In this case, our goal is to demonstrate this attention head outputs a value close to ${\rm Range}_{[a,b]}(w^\top X_i+t)$.

Let $\tilde{L}_{s}$ and $\tilde{L}_{s+1}$ be the two interpolants such that
\begin{align}
\label{eq:s_near_target}
    w^\top X_i +t \in [\tilde{L}_{s},\tilde{L}_{s+1}].
\end{align}

Then, $s$ and $s+1$ are also the labels of the two largest entries in
\begin{align*}
    -\frac{\beta}{\Delta L}
    \begin{bmatrix}
        (w^\top X_i +t-\tilde{L}_{(h-1)(n-2)-1})^2 \\
        (w^\top X_i +t-\tilde{L}_{(h-1)(n-2)})^2 \\
        \vdots \\
        (w^\top X_i +t-\tilde{L}_{h(n-2)})^2 \\
    \end{bmatrix},
\end{align*}
since
\begin{align*}
    & ~ \argmax_{k\in \{(h-1)(n-2)-1,h(n-2)\}}
    -\frac{\beta}{\Delta L}(w^\top X_i+t-\tilde{L}_{k})^2 \\
    = & ~
    \argmin_{k\in \{(h-1)(n-2)-1,h(n-2)\}}(w^\top X_i+t-\tilde{L}_{k})^2\\
    = & ~
    \argmin_{k\in \{(h-1)(n-2)-1,h(n-2)\}}|w^\top X_i+t-\tilde{L}_{k}|.
\end{align*}

We also note that the distance of $w^\top X_i +t$ to interpolants beside $\tilde{L}_s$ and $\tilde{L}_{s+1}$ differs from $w^\top X_i+t$ for at least $\tilde{L}_s-\tilde{L}_{s-1} = (b-a)/p$ or $\tilde{L}_{s+1}-\tilde{L}_{s} = (b-a)/p$.

This is equivalent to the occasion when $x_1 - x_3$ in \cref{lem:Soft_to_Hard} is larger than
\begin{align*}
    & \max \Big\{\frac{\beta}{\Delta L}
    (
    w^\top X_i +t - \tilde{L}_{s-1})^2
    - 
    (w^\top X_i +t - \tilde{L}_s
    )^2
    ,
    \frac{\beta}{\Delta L}
    (
    w^\top X_i +t - \tilde{L}_{s+2})^2
    - 
    (w^\top X_i +t - \tilde{L}_{s+1}
    )^2
    \Big\} \\
    & \geq
    \frac{\beta}{\Delta L} \cdot (\frac{b-a}{p})^2,
\end{align*}
which is invariant to $X_i$.

Thus according to \cref{lem:Soft_to_Hard} and the fact that the $s$ and $s+1$ are the two largest entries in the $i$-th column of the attention score matrix, we have
\begin{align*}
    \Bigl\| 
    \Softmax\left(
    -\frac{\beta}{\Delta L}
    \begin{bmatrix}
        (w^\top X_i +t-\tilde{L}_{(h-1)(n-2)-1})^2 \\
        (w^\top X_i +t-\tilde{L}_{(h-1)(n-2)})^2 \\
        \vdots \\
        (w^\top X_i +t-\tilde{L}_{h(n-2)})^2 \\
    \end{bmatrix}
    \right) 
    - \frac{1}{1 + e^{-\beta \delta}} \underbrace{e_{s}}_{n\times 1}
        - \frac{e^{-\beta \delta}}{1 + e^{-\beta \delta}} \underbrace{e_{s+1}}_{n\times 1}
    \Bigr\|_\infty 
    &\le \epsilon_2,
\end{align*}
for any $\epsilon_2 > 0$.

This yields that
\begin{align*}
    & ~\Bigl\| 
    V
    \Softmax\left(
    -\frac{\beta}{\Delta L}
    \begin{bmatrix}
        (w^\top X_i +t-\tilde{L}_{(h-1)(n-2)-1})^2 \\
        (w^\top X_i +t-\tilde{L}_{(h-1)(n-2)})^2 \\
        \vdots \\
        (w^\top X_i +t-\tilde{L}_{h(n-2)})^2 \\
    \end{bmatrix}
    \right) 
    - 
    V\frac{1}{1 + e^{-\beta \delta}} e_{s}
        - V\frac{e^{-\beta \delta}}{1 + e^{-\beta \delta}} e_{s+1}
    \Bigr\|_\infty  \\
    \leq & ~ 
    \Bigl\| 
    \Softmax\left(
    -\frac{\beta}{\Delta L}
    \begin{bmatrix}
        (w^\top X_i +t-\tilde{L}_{(h-1)(n-2)-1})^2 \\
        (w^\top X_i +t-\tilde{L}_{(h-1)(n-2)})^2 \\
        \vdots \\
        (w^\top X_i +t-\tilde{L}_{h(n-2)})^2 \\
    \end{bmatrix}
    \right) 
    - \frac{1}{1 + e^{-\beta \delta}} e_{s}
        - \frac{e^{-\beta \delta}}{1 + e^{-\beta \delta}} e_{s+1}
    \Bigr\|_\infty \cdot \|V\|_\infty \\
    \leq & ~ \|V\|_\infty \epsilon_2.
\end{align*}

This is equivalent to
\begin{align}
    & ~ \|V\Softmax(K^\top Q)_{:,i}
    -
    \frac{1}{1 + e^{-\beta \delta}} \tilde{L}_{(h-1)(n-2)+s-1}e_{k_G}
    - 
    \frac{e^{-\beta \delta}}{1 + e^{-\beta \delta}} \tilde{L}_{(h-1)(n-2)+s}e_{k_G}
    \|_\infty \notag \\
    \leq & ~
    \|V\|_\infty \cdot \epsilon_2 
    \annot{By $\|AB\|\le \|A\|\cdot \|B\|$}\\
    \leq & ~ 
    m_v\epsilon_2,
    \label{eq:VsoftmaxKTQ-Lh}
\end{align}
where the last line is by \eqref{eqn:max_v}.

From \eqref{eq:s_near_target}, we derive that
\begin{align}
    & ~ \|
    \frac{1}{1 + e^{-\beta \delta}} \tilde{L}_{(h-1)(n-2)+s-1}
    + 
    \frac{e^{-\beta \delta}}{1 + e^{-\beta \delta}} \tilde{L}_{(h-1)(n-2)+s}
    -
    (w^\top X_i+t)e_{k_G}
    \|_\infty \nonumber\\
    \leq & ~
    \|
    \frac{1}{1 + e^{-\beta \delta}} (\tilde{L}_{(h-1)(n-2)+s-1}
    -
    (w^\top X_i+t)e_{k_G})
    \|_\infty
    +
    \|
    \frac{e^{-\beta \delta}}{1 + e^{-\beta \delta}} (\tilde{L}_{(h-1)(n-2)+s}
    -
    (w^\top X_i+t))
    \|_\infty \annot{By convex combination of $(w^\top X_i+t)$ and triangle inequality} \\
    \leq & ~ 
    \frac{1}{1 + e^{-\beta \delta}} \cdot \frac{b-a}{p}
    +
    \frac{e^{-\beta \delta}}{1 + e^{-\beta \delta}} \cdot \frac{b-a}{p} \annot{By \eqref{eq:s_near_target}} \\
    = & ~
    \frac{b-a}{p} .
    \label{eq:Lh-f}
\end{align}

Combing \eqref{eq:VsoftmaxKTQ-Lh} and \eqref{eq:Lh-f} yields
\begin{align}
    & ~ \|
    V\Softmax(K^\top Q)_{:,i}
    -
    (w^\top X_i+t)
    \|_\infty \nonumber\\
    \leq & ~
    \|
    V\Softmax(K^\top Q)_{:,i}
    -
    \frac{1}{1 + e^{-\beta \delta}} \tilde{L}_{(h-1)(n-2)+s-1}
    - 
    \frac{e^{-\beta \delta}}{1 + e^{-\beta \delta}} \tilde{L}_{(h-1)(n-2)+s}
    \|_\infty \nonumber\\
    & ~ +
    \|
    \frac{1}{1 + e^{-\beta \delta}} \tilde{L}_{(h-1)(n-2)+s-1}
    + 
    \frac{e^{-\beta \delta}}{1 + e^{-\beta \delta}} \tilde{L}_{(h-1)(n-2)+s}
    -
    (w^\top X_i+t)e_{k_G}
    \|_\infty \annot{By triangle inequality} \\
    \leq & ~
    \label{eq:case1_output}
    m_v\epsilon_2+\frac{b-a}{p},
\end{align}
where the first inequality comes from adding and subtracting the interpolation points' convex combination and then applying triangle inequality.

\paragraph{Case 2: $X\notin [\tilde{L}_{(h-1)(n-2)-1}, \tilde{L}_{h(n-2)}]$.}

In this case, $X_i$ falls out of the range of interpolation covered by ${\rm Attn}_h$.

Without loss of generality, suppose $w^\top X_i+t$ to lie left to the range of interpolation of ${\rm Attn}_h$.

This yields that $\tilde{L}_{(h-1)(n-2)-1}$ is the closest interpolant within ${\rm Attn}_h$ to $w^\top X_i+t$. 
Furthermore, the second closest interpolant $\tilde{L}_{(h-1)(n-2)}$ is at least further for at least $(b-a)/p$, which is a constant irrelevant to $X_i$

Then by \cref{lem:Soft_to_Hard}, we have
\begin{align*}
    \Bigl\| 
    \Softmax\left(
    -\frac{\beta}{\Delta L}
    \begin{bmatrix}
        (w^\top X_i +t-\tilde{L}_{(h-1)(n-2)-1})^2 \\
        (w^\top X_i +t-\tilde{L}_{(h-1)(n-2)})^2 \\
        \vdots \\
        (w^\top X_i +t-\tilde{L}_{h(n-2)})^2 \\
    \end{bmatrix}
    \right) 
    - \underbrace{e_1}_{n\times 1}
    \Bigr\|_\infty 
    \leq 
    \epsilon_3,
\end{align*}
for any $\epsilon_3>0$.

This yields that
\begin{align*}
    & ~ \| 
    V\Softmax\left(
    -\frac{\beta}{\Delta L}
    \begin{bmatrix}
        (w^\top X_i +t-\tilde{L}_{(h-1)(n-2)-1})^2 \\
        (w^\top X_i +t-\tilde{L}_{(h-1)(n-2)})^2 \\
        \vdots \\
        (w^\top X_i +t-\tilde{L}_{h(n-2)})^2 \\
    \end{bmatrix}
    \right) 
    - V\underbrace{e_1}_{n\times 1} 
    \|_\infty \\
    \leq & ~ 
    \|V\|_\infty \cdot \epsilon_3 
    \annot{By $\|AB\|\le \|A\|\cdot \|B\|$}\\
    \leq & ~ 
    m_v\epsilon_3,
\end{align*}
where the last line is by \eqref{eqn:max_v}.

This is equivalent to
\begin{align}
    \label{eq:case2_output}
    \Bigl\| 
    V\Softmax\left(
    -\frac{\beta}{\Delta L}
    \begin{bmatrix}
        (w^\top X_i +t-\tilde{L}_{(h-1)(n-2)-1})^2 \\
        (w^\top X_i +t-\tilde{L}_{(h-1)(n-2)})^2 \\
        \vdots \\
        (w^\top X_i +t-\tilde{L}_{h(n-2)})^2 \\
    \end{bmatrix}
    \right) 
    - 0_{d_o}
    \Bigr\|_\infty 
    \leq 
    m_v\epsilon_3.
\end{align}

\paragraph{Case 1*.}
According to \cref{lem:all_heads_cases}, when Case 1 occurs for one head in the $H$ heads of ${\rm Attn}^H$, all other head will be in Case 2.

Combining with the result in Case 2, we have the output of all heads as
\begin{align*}
& ~ \|{\rm Attn}^H(A(X))_{:,i}-(w^\top X_i+t)e_{k_G}\|_\infty \\
= & ~ %
\|\sum_{h_0\in [H]/\{h\}}{\rm Attn}_{h_0}\circ A(X)_{:,i}\|_\infty
+
\|{\rm Attn}_h\circ A(X)_{:,i}-(w^\top X_i+t)e_{k_G}\|_\infty\\
= & ~ %
(H-1)m_v\epsilon_3 + m_v\epsilon_2+\frac{b-a}{p}
\annot{By \eqref{eq:case1_output} and \eqref{eq:case2_output}}\\
= & ~
(H-1)m_v\epsilon_3 + m_v\epsilon_2+\frac{b-a}{H(n-2)}
.
\end{align*}

Setting $\epsilon_2,\epsilon_3$ to be
\begin{align*}
& ~ \epsilon_2 =\frac{\epsilon_0}{2},
\\
& ~ \epsilon_3 = \frac{\epsilon_0}{2(H-1)m},
\end{align*}
yields the final result.

\paragraph{Case 3 (and Case 2*): $X \in [\tilde{L}_{(h-1)(n-2)-1}, \tilde{L}_{(h-1)(n-2)}]\cup [\tilde{L}_{h(n-2)-1}, \tilde{L}_{h(n-2)}]$.}
    
In this case, $w^\top X_i + t$ is the boundary of the interpolation range of ${\rm Attn}_{h_0}$. 
By \cref{lem:all_heads_cases}, it should also fall on the boundary of a head with neighboring interpolation range. 
Without loss of generality,  we set it to be ${\rm Attn}_{h_0-1}$. 
Furthermore, \cref{lem:all_heads_cases} indicates that $w^\top X_i +t$ should fall on no other interpolation range of any heads beside ${\rm Attn}_{h_0}$ and ${\rm Attn}_{h_0-1}$.

Combining this with case 2, we have
\begin{align*}
    {\rm Attn}^H(A(X))_{:,i}
    = & ~
    \sum_{h=1}^H {\rm Attn}_h\circ A(X)_{:,i}\\
    & ~ \in
    [(-(H-2)m_v\epsilon_3 + 
    {\rm Attn}_{h_0}\circ A(X)_{:,i}
    +
    {\rm Attn}_{h_0-1}\circ A(X)_{:,i}),\\ 
    &~~~ \quad
    ((H-2)m_v\epsilon_3 + 
    {\rm Attn}_{h_0}\circ A(X)_{:,i}
    +
    {\rm Attn}_{h_0-1}\circ A(X)_{:,i})]
    \annot{By \eqref{eq:case2_output}}.
\end{align*}

By \cref{lem:Soft_to_Hard}, let $\delta$ denote
\begin{align*}
    \delta = 
    \tilde{L}_{(h-1)(n-2)+s}-(w^\top X_i +t)e_{k_G}-[\tilde{L}_{(h-1)(n-2)+s}-(w^\top X_i +t)e_{k_G}],
\end{align*}
we have
\begin{align*}
    \|
    \Softmax((K^{(h)})^\top Q^{(h)})
    - 
    (\frac{1}{1 + e^{-\beta \delta}} e_{1}
    + 
    \frac{e^{-\beta \delta}}{1 + e^{-\beta \delta}} e_{2})
    \|
    \leq
    \epsilon_4,
\end{align*}
and
\begin{align*}
    \|
    \Softmax((K^{(h-1)})^\top Q^{(h-1)})
    - 
    (\frac{1}{1 + e^{-\beta \delta}} e_{n-1}
    + 
    \frac{e^{-\beta \delta}}{1 + e^{-\beta \delta}} e_{n})
    \|
    \leq
    \epsilon_5,
\end{align*}
for any $\epsilon_4,\epsilon_5 >0$.

Thus we have
\begin{align*}
    & ~ \|V^{(h)}\Softmax((K^{(h)})^\top Q^{(h)})
    +
    V^{(h-1)}\Softmax((K^{(h-1)})^\top Q^{(h-1)})\\
    & ~ \hspace{6em} -
    V(\frac{1}{1 + e^{-\beta \delta}} e_{1}
    + 
    \frac{e^{-\beta \delta}}{1 + e^{-\beta \delta}} e_{2}
    +
    \frac{1}{1 + e^{-\beta \delta}} e_{n-1}
    + 
    \frac{e^{-\beta \delta}}{1 + e^{-\beta \delta}} e_{n})
    \|_\infty \\
    \leq & ~ 
    \|V\|_\infty(\epsilon_4+\epsilon_5).
\end{align*}
This is equivalent to
\begin{align*}
    & ~ \|V^{(h)}\Softmax((K^{(h)})^\top Q^{(h)})
    +
    V^{(h-1)}\Softmax((K^{(h-1)})^\top Q^{(h-1)})\\
    & ~~ -
    (\frac{1}{1 + e^{-\beta \delta}}\cdot 0
    + 
    \frac{e^{-\beta \delta}}{1 + e^{-\beta \delta}} e_{k_G}\tilde{L}_{(h-1)(n-2)+s}
    +
    \frac{1}{1 + e^{-\beta \delta}} e_{k_G}\tilde{L}_{(h-1)(n-2)+s-1}
    + 
    \frac{e^{-\beta \delta}}{1 + e^{-\beta \delta}} e_{k_G})\cdot 0
    \|_\infty \\
    \leq & ~ 
    \|V\|_\infty \cdot (\epsilon_4+\epsilon_5).
\end{align*}

Thus we have
\begin{align*}
    & ~ \|V^{(h)}\Softmax((K^{(h)})^\top Q^{(h)})
    +
    V^{(h-1)}\Softmax((K^{(h-1)})^\top Q^{(h-1)})\\
    & ~ \quad\quad\quad -
    (
    \frac{e^{-\beta \delta}}{1 + e^{-\beta \delta}} e_{k_G}\tilde{L}_{(h-1)(n-2)+s}
    +
    \frac{1}{1 + e^{-\beta \delta}} e_{k_G}\tilde{L}_{(h-1)(n-2)+s-1}
    )
    \|_\infty \\
    \leq & ~ 
    \|V\|_\infty(\epsilon_4+\epsilon_5),
\end{align*}
which implies
\begin{align}\label{eqn:mul_attn_two)close}
    & ~ \|
    \sum_{h=1}^H {\rm Attn}_h(A(X))_{:,i}
    -
    (
    \frac{e^{-\beta \delta}}{1 + e^{-\beta \delta}} e_{k_G}\tilde{L}_{(h-1)(n-2)+s}
    +
    \frac{1}{1 + e^{-\beta \delta}} e_{k_G}\tilde{L}_{(h-1)(n-2)+s-1}
    )
    \|_\infty \notag\\
    \leq & ~ 
    (H-2)m_v\epsilon_3+\|V\|_\infty(\epsilon_4+\epsilon_5).
\end{align}

Finally, since
\begin{align*}
    \|
    \frac{e^{-\beta \delta}}{1 + e^{-\beta \delta}} e_{k_G}\tilde{L}_{(h-1)(n-2)+s}
    +
    \frac{1}{1 + e^{-\beta \delta}} e_{k_G}\tilde{L}_{(h-1)(n-2)+s-1}
    -
    (w^\top X_i +t)e_{k_G}
    \|_\infty
    \leq
    \frac{b-a}{p} \annot{By \eqref{eq:Lh-f}},
\end{align*}
combining with \eqref{eqn:mul_attn_two)close}, we have
\begin{align*}
    & ~ \|
    \sum_{h=1}^H {\rm Attn}_h(A(X))_{:,i}
    -
    (w^\top X_i +t)e_{k_G}
    \|_\infty \\
    \leq & ~
    \|
    \sum_{h=1}^H {\rm Attn}_h(A(X))_{:,i}
    -
    (\frac{e^{-\beta \delta}}{1 + e^{-\beta \delta}} e_{k_G}\tilde{L}_{(h-1)(n-2)+s} 
    +
    \frac{1}{1 + e^{-\beta \delta}} e_{k_G}\tilde{L}_{(h-1)(n-2)+s-1})
    \|_\infty \\
    & ~  +
    \|
    (\frac{e^{-\beta \delta}}{1 + e^{-\beta \delta}} e_{k_G}\tilde{L}_{(h-1)(n-2)+s}
    +
    \frac{1}{1 + e^{-\beta \delta}} e_{k_G}\tilde{L}_{(h-1)(n-2)+s-1})
    -
    (w^\top X_i +t)e_{k_G}
    \|_\infty \annot{By triangle inequality}\\
    \leq & ~
    \frac{b-a}{p}
    +
    (H-2)m_v\epsilon_3+\|V\|_\infty(\epsilon_4+\epsilon_5) \\
    \leq & ~ 
    \frac{b-a}{H(n-2)}+(H-2)\max\{|a|,|b|\}\epsilon_3+\max\{|a|,|b|\}(\epsilon_4+\epsilon_5)
\end{align*}

Setting $\epsilon_3,\epsilon_4,\epsilon_5$ to be
\begin{align*}
& ~ \epsilon_3 = \frac{\epsilon_0}{3(H-2)} \\
& ~ \epsilon_4 = \epsilon_5 = \frac{\epsilon_0}{3}
\end{align*}
yields the final result.

This completes the proof.
\end{proof}

\clearpage

\subsection{Proof of \texorpdfstring{\cref{lem:relu_univ_approx}}{}}
\label{proof:lem:relu_univ_approx}
\begin{theorem}[\cref{lem:relu_univ_approx} Restated: Explicit Construction of ReLU Neural Network as Universal Approximator]
    Let $f : \mathcal{X} \to \R$ be a continuous function defined on a compact domain $\mathcal{X} \subset \R^N$ for some $N \in \mathbb{N}_+$.
    For any $\epsilon > 0$, there exists a two-layer feed-forward neural network ${\rm FFN}$ with ReLU activation functions such that for all $x \in \mathcal{X}$
    \begin{align}
        \|{\rm FFN}(x)-f(x)\|_{L_p} \leq \epsilon.
    \end{align}
\end{theorem}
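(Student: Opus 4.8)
The plan is to run the classical ``ridge-decomposition'' argument tailored to the one-hidden-layer ${\rm ReLU}$ architecture used throughout the paper: first I would approximate $f$ uniformly on the compact set $\mathcal{X}$ by a trigonometric polynomial, then replace each cosine ridge term by a one-dimensional continuous piecewise linear surrogate, use the fact that such a one-dimensional piecewise linear map is exactly a finite sum of shifted $\relu$'s, and finally gather all the resulting hinge units into a single hidden layer. Since the construction will be uniform on $\mathcal{X}$, the $L_p$ bound then comes for free from $\|g\|_{L_p}\le\|g\|_{L_\infty}\,\mathrm{vol}(\mathcal{X})^{1/p}$.

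\textbf{Step 1 (trigonometric polynomial).} The real algebra generated by $\{x\mapsto\cos{a^\top x},\ x\mapsto\sin{a^\top x}:a\in\R^N\}$ contains the constants, is closed under products (product-to-sum formulas), and separates the points of $\mathcal{X}$, so by Stone--Weierstrass it is dense in $C(\mathcal{X})$. Thus for any $\epsilon_1>0$ there exist $K\in\mathbb N$, $c_k\in\R$, $a_k\in\R^N$, $\varphi_k\in\R$ with
\begin{align*}
\sup_{x\in\mathcal{X}}\Big|\,f(x)-\sum_{k=1}^{K}c_k\,\cos{a_k^\top x+\varphi_k}\,\Big|\le\epsilon_1 .
\end{align*}

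\textbf{Step 2 (ridge terms as hinge units).} Fix $k$, set $\psi_k(t):=\cos{t+\varphi_k}$, and note that $t=a_k^\top x$ ranges over the compact interval $I_k:=\{a_k^\top x:x\in\mathcal{X}\}$. Approximate $\psi_k$ uniformly on $I_k$ within $\epsilon_2$ by a continuous piecewise linear $q_k$ with breakpoints $t_{k,1}<\dots<t_{k,L_k}$; writing $t_{k,0}:=\min I_k$ and letting $m_{k,\ell}$ be the slope of $q_k$ on its $\ell$-th piece, the hinge identity
\begin{align*}
q_k(t)=q_k(t_{k,0})+m_{k,1}\,\relu(t-t_{k,0})+\sum_{\ell=2}^{L_k}(m_{k,\ell}-m_{k,\ell-1})\,\relu(t-t_{k,\ell-1})
\end{align*}
holds for $t\in I_k$ (the opening term $m_{k,1}(t-t_{k,0})$ is literally a $\relu$ term there since $t\ge t_{k,0}$), so $q_k(a_k^\top x)$ is a constant plus a finite linear combination of $\relu$'s of affine functions of $x$.

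\textbf{Step 3 (assembly and error).} Set ${\rm FFN}(x):=\sum_{k=1}^K c_k\,q_k(a_k^\top x)$; absorbing the constants into a single output bias, this is exactly $W_2\,\relu(W_1 x+b_1)+b_2$ with one hidden layer of width $\sum_k L_k$. For every $x\in\mathcal{X}$,
\begin{align*}
|{\rm FFN}(x)-f(x)|\le\epsilon_1+\sum_{k=1}^K|c_k|\,\sup_{t\in I_k}|q_k(t)-\psi_k(t)|\le\epsilon_1+\Big(\sum_{k=1}^K|c_k|\Big)\epsilon_2 ,
\end{align*}
so with $C:=\sum_k|c_k|$ one gets $\|{\rm FFN}-f\|_{L_p}\le(\epsilon_1+C\epsilon_2)\,\mathrm{vol}(\mathcal{X})^{1/p}$; picking $\epsilon_1$ and then $\epsilon_2$ small enough makes this at most $\epsilon$, which would give the claim.

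The hard part is really only packaging, not mathematics: once Stone--Weierstrass density of trigonometric polynomials and the exact correspondence between one-dimensional continuous piecewise linear maps and finite $\relu$ sums are taken as known, the rest is bookkeeping. The two points I would be careful about are (i) expressing the affine part of each $q_k$ as a genuine $\relu$ term on the bounded interval $I_k$, so the final object is literally a single-hidden-layer network $W_2\,\relu(W_1\cdot+b_1)+b_2$ with no separate linear skip; and (ii) the fact that the hidden width $\sum_k L_k$ and the gain $C$ blow up as $\epsilon\to0$ --- harmless for an existence statement, but exactly what one would need to control for any quantitative approximation rate.
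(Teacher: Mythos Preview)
Your argument is correct and in fact delivers uniform approximation with a single hidden layer, formally at least as strong as what the lemma asks. The paper, however, takes a quite different route: it discretizes $[-B,B]^N\supset\mathcal{X}$ into a uniform grid $G_D$, builds for each grid point $v$ a ReLU-based bump $R_v(x)$ (a sum of four ReLUs per coordinate) that equals $N$ on a core cube around $v$ and falls to at most $N-1$ outside a slightly larger cube, and sets ${\rm FFN}(x)=\sum_{v\in G_D}f(v)\,\relu(R_v(x)-N+1)$. Since each $R_v$ is already a linear combination of ReLUs of affine functions, this is genuinely a two-hidden-layer network; the $L_p$ error is then controlled by splitting the domain into the core region $\mathcal{P}$ (where ${\rm FFN}(x)=f(v_x)$ exactly for the nearest grid point $v_x$, so uniform continuity of $f$ handles it) and a thin shell whose Lebesgue measure is driven to zero by a slope parameter $\delta$. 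Your Stone--Weierstrass plus hinge-decomposition route is cleaner as a standalone existence proof and uses one fewer activation layer, but the paper's construction is not incidental: the specific bump functions $R_v$ and the grid $G_D$ are reused verbatim in the proof of \cref{thm:seq_to_scalar_appro}, where the first multi-head attention layer is engineered precisely to approximate the values $R_{v^{(j)}}(X)$ at every grid point. So while your proof settles the lemma, it would not plug into the downstream sequence-to-sequence arguments without substantial reworking.
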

\begin{proof}[Proof Sketch]\vspace{-.8em}
First, we discretize the input domain into a grid of points $G_D$. 
Around each grid point $v \in G_D$, we construct a ReLU-based bump function $R_v(x) = \sum {\rm ReLU}$ that equals $1$ within a small region around $v$ and rapidly decays to $0$ outside this region. 
Next, we define the feedforward network (FFN) as $\sum_{v \in G_D} f(v) \cdot {\rm ReLU}(R_v(x) - N + 1)$, allowing us to approximate $f(x)$ as a weighted sum of function values evaluated on grid points $v$ near $x$. 
This process yields a piecewise linear approximation of $f$.
\end{proof}\vspace{-.8em}
\begin{proof}
    We first quantizes the domain into a grid, builds localized bump functions using ReLU, construct the FFN to combines the piecewise approximation in a weighted sum to approximate $f$, and analyze the approximation error.

    \paragraph{Construction of Bump Function $R_v(\cdot)$.}
    Let $x = [x_1,x_2,\cdots, x_N] \in \R^N$.
    The compactness of $\mathcal{X}$ means it lies within an $N$-dimensional cube $[-B,B]^N$.
    Quantize this domain into a grid $G_D$ with granularity $g$
    \begin{align*}
        G_D = \left\{\frac{-B(g-1)}{g},\frac{-B(g-3)}{g},\cdots,\frac{B(g-1)}{g}\right\}^N,
    \end{align*}
    which results in $g^N$ grid points across all dimensions.
    
    For each point on the grid $v \in G_D$, we define a local bump function denoted as $R_v(x)$
    \begin{align*}
        R_v(x) = & ~  \sum_{i=1}^N \phi(x_i, v_i) \\ 
        = & ~  \sum_{i=1}^N \Bigg[{\rm ReLU}(\frac{1}{\delta}(\frac{g(x_i-v_i)}{B}+1)) - {\rm ReLU}(\frac{1}{\delta}(\frac{g(x_i-v_i)}{B}+1-\delta))\\
        & ~ \hspace{3.5em}+
        {\rm ReLU}(\frac{1}{\delta}(-\frac{g(x_i-v_i)}{B}+1)) - {\rm ReLU}(\frac{1}{\delta}(-\frac{g(x_i-v_i)}{B}+1-\delta)) - 1 \Bigg].
    \end{align*}
    The function $\phi(x_i, v_i)$ behaves as
    \begin{align}\label{equ:r_v_i}
        \phi(x_i, v_i) = 
        &= \left\{
        \begin{aligned}
           &0, 
           & |x_i-v_i| \geq \frac{B}{g},  
           \\
           &-\frac{g}{\delta B}|x_i-v_i|+\frac{1}{\delta}, 
           & (1-\delta)\frac{B}{g}<|x_i-v_i|<\frac{B}{g},
           \\
           &1, 
           & |x_i-v_i|\leq (1-\delta)\frac{B}{g}.
        \end{aligned}\right.
    \end{align}
    for every $i\in [N]$.

    Now we discuss the behavior of the bump function $R_v(x)$ for difference distance between the grid point $v$ and $x$.

    As shown in \eqref{equ:r_v_i}, the value of bump function depends on three different distance between $v$ and $x$.
    We formally define them as follow.

    First define $G_v$ as the region centered at $v$ with radius $B/g$ in the $\ell_\infty$ norm
    \begin{align*}
        G_v \coloneqq \{x \in [-B,B]^N: \|x - v\|_\infty \leq \frac{B}{g} \}.
    \end{align*}

    Second define $P_v$ as the core region of $G_v$ where the bump function $R_v(x)$ is fully on (equal to $N$)
    \begin{align}\label{eqn:p_v}
        P_v \coloneqq 
        \{x \in [-B,B]^N: \|x - v\|_\infty \leq (1-\delta)\frac{B}{g} \}.
    \end{align}

    Third we define the shell region of $G_v$ denoted as $G_v \setminus P_v$
    \begin{align*}
        G_v \setminus P_v \coloneqq \{x \in [-B,B]^N: (1-\delta)\frac{B}{g} \leq \|x - v\|_\infty \leq \frac{B}{g} \}.
    \end{align*}

    Now we discuss the behavior of $R_v(x)$ under this three situations.
    
    For $x \notin G_v$, at least one dimension of $x$ satisfies $\abs{x_i - v_i} \geq B/g$ for some $i \in [N]$.
    By examining the definition $\phi(x_i,v_i)$ in \eqref{equ:r_v_i}, since $R_v(x) = \sum_{i=1}^N \phi(x_i, v_i)$ and at least one $\phi(x_i, v_i) = 0$, we have
    \begin{align}\label{eqn:not_G_v}
        R_v(x) = \sum_{i=1}^N \phi(x_i, v_i) \leq N-1 \quad \text{for} \quad x \notin G_v.
    \end{align}
    
    For $x \in P_v$, each coordinates $x_i$ satisfies
    $\abs{x_i - v_i} \leq (1-\delta)B/g$. 
    By \eqref{equ:r_v_i}, this implies $\phi(x_i,v_i) = 1$ for all $i$, we derive
    \begin{align}\label{eqn:P_v}
        R_v(x) = \sum_{i=1}^N \phi(x_i, v_i) = N\times1 = N \quad \text{for} \quad x \in P_v.
    \end{align}

    For $x \in G_v \setminus P_v$, by \eqref{equ:r_v_i}, the corresponding $\phi(x_i, v_i) \leq 1$. 
    Thus
    \begin{align*}
        R_v(x) \in [N-1, N) \quad \text{for} \quad x \in G_v \setminus P_v.
    \end{align*}

    Until now we finish the construction of $R_v(\cdot)$ and analysis its behavious.
    Next we move to the construction of FFN to approximate the target function $f$.
    
    \paragraph{Construction of FFN.}
    Following the above discussion, we construct the ${\rm FFN}$ to be:
    \begin{align}\label{eqn:ffn_def}
        {\rm FFN}(x) = \sum_{v\in G_D}f(v) \cdot {\rm ReLU}(R_v(x)-N+1).
    \end{align}
    The behavior of ${\rm ReLU}(R_v(x)-N+1)$ is
    \begin{align*}
    {\rm ReLU}(R_v(x) - N + 1) =
        \begin{cases}
            0, & \quad x \notin G_v, \\
            1, & \quad x \in P_v, \\
            {\rm ReLU}(R_v(x) - N + 1), & \quad
            x \in G_v \setminus P_v.
        \end{cases}
    \end{align*}
    By this construction, the FFN approximate $f(x)$ by weighted sum over the grid points $v$ such that $x \in G_v$.

    Now we move to analysis the approximation error of the constructed FFN.

    \paragraph{Approximation Error Analysis.}
    To approximate the continuous function $f$, we introduce the region $\mathcal{P}:= \bigcup_{v\in G_D} P_v$.
    
    We also denote $\mu$ as the Lebesgue measure in $N$-dimensional space for later use.
    
    Using the uniform continuity of $f$ and the properties of the constructed FFN, we analyze the $L_p$-norm error by partitioning the input domain into $\mathcal{P}$ and its complement
    \begin{align*}
        \|{\rm FFN}(x)-f(x)\|_{L_p} 
        = & ~
        \left(
        \int_{[-B,B]^N}
        \left(
        {\rm FFN}(x)-f(x)
        \right)^p dx
        \right)^\frac{1}{p} \\
        = & ~
        \left(
        \int_{[-B,B]^N/\mathcal{P}}
        \left(
        {\rm FFN}(x)-f(x) dx
        \right)^p
        +
        \int_{\mathcal{P}}
        \left(
        {\rm FFN}(x)-f(x)
        \right)^p dx
        \right)^\frac{1}{p}. \\
    \end{align*}

    Now we discuss the two situations in the following paragraph, and conclude our proof.
    \begin{itemize}
        \item \textbf{Case 1: $x \in \mathcal{P}.$} 

        For an $x\in \mathcal{P}$, let $v_x$ denote the unique grid point such that $x \in \P_v$.
        By \eqref{eqn:not_G_v} and \eqref{eqn:P_v} we have
        \begin{align*}
            R_v(x) = N, \quad \text{if } v = v_x, \quad \text{and} \quad R_v(x) \leq N - 1, \quad \text{if } v \neq v_x.
        \end{align*}
        Hence
        \begin{align*}
            {\rm FFN}(x) = & ~ 
            \sum_{v\in G_D}f(v) \cdot {\rm ReLU}(R_v(x)-N+1) \annot{By \eqref{eqn:ffn_def}} \\
            = & ~  f(v_x){\rm ReLU}(R_{v_x}(x)-N+1) \annot{By $R_v(x) \leq N - 1, \quad \text{if } v \neq v_x$} \\
            = & ~ f(v_x)\times 1 = f(v_x).
        \end{align*}

        Since the ${\rm FFN}(x)$ collapse to $f(v_x)$ when $x \in \mathcal{P}$, the error $\abs{{\rm FFN}(x)-f(x)}$ becomes to approximate $f(x)$ by the function value evaluate on the closest grid point $f(v_x)$.
        
        Because $f$ is continuous on a closed region, it is bounded and uniformly continuous. Thus there exists a $\Delta>0$ such that for any $x_1,x_2\in \R^N$ satisfying $\|x_1-x_2\|_\infty \leq \Delta$, the following hold
        \begin{align*}
            |f(x_1)-f(x_2)|
            \leq
            \frac{\epsilon}{2(2B)^\frac{N}{p}\mu(\mathcal{P})^\frac{1}{p}},
        \end{align*} 
        where the term $2(2B)^\frac{N}{p}\mu(\mathcal{P})^\frac{1}{p}$ is a constant to help us normalize the final error bound.
        
        Set $g$ to be large enough such that $2B/g < \Delta$, and since $\|v_x-v\|_\infty$ is smaller than the grid length $2B/G$, it also smaller than $\Delta$.
        This yields that for $x \in \mathcal{P}$
        \begin{align} \label{eqn:ffn_x_in_p}
            |{\rm FFN}(x)-f(x)| = |f(v_x) - f(x)| 
            \leq 
            \frac{\epsilon}{2(2B)^\frac{N}{p}\mu(\mathcal{P})^\frac{1}{p}}.
        \end{align}
        \item \textbf{Case 2: $x \notin\mathcal{P}.$} 
        
        Now we turn to analyse the approximation error outside $\mathcal{P}$.
    
        First we know that
        \begin{align*}
            \abs{{\rm FFN}(x) - f(x)} 
            \leq & ~ \abs{{\rm FFN}(x)} + \abs{f(x)} \\
            \leq & ~ \|f\|_{L_\infty} + \|f\|_{L_\infty} \\ 
            = & ~ 2\|f\|_{L_\infty},
        \end{align*}
        where the second inequality coming from
        \begin{align*}
            \|f\|_{L_\infty} = \sup_{x \in [-B,B]^N}{\|f(x)\|_{\infty}} \geq f(x),
        \end{align*}
        and by \eqref{eqn:ffn_def}, $f(v) \leq \|f(x)\|_{L_\infty}$, and also the design of bump function make sure given $x$, only the one grid point closet to $x$ contribute, hence $|{\rm FFN}(x)| \leq \|f(x)\|_{L_\infty}$.

        Hence the approximation error outside $\mathcal{P}$ become
        \begin{align}\label{eqn:appro_error_outside_P}
            \int_{[-B,B]^N/\mathcal{P}}
            \left(
            {\rm FFN}(x)-f(x)
            \right)^p dx
            & \leq
            \int_{[-B,B]^N/\mathcal{P}} (2\|f\|_{L_\infty})^p dx \\
            & = (2\|f\|_{L_\infty})^p \cdot \mu([-B,B]^N/\mathcal{P}) \annot{($2\|f\|_{L_\infty})^p$ is a constant.}, 
        \end{align}
        where $\mu([-B,B]^N/\mathcal{P})$ is the volume of how much of the entire domain isn't covered by $\mathcal{P}$.
        We calculate it as
        \begin{align*}
            & ~ \mu([-B,B]^N/\mathcal{P}) \\
            = & ~ \mu([-B,B]^N)-\mu(\mathcal{P}) \\
            = & ~ B^N-(1-\delta)^NB^N \annot{By \eqref{eqn:p_v} and $\mathcal{P}:= \bigcup_{v\in G_D} P_v$} \\
            = & ~ (1 - (1 - \delta)^N) B^N \annot{By associative property}\\ 
            = & ~  (\delta N + \mathcal{O}(\delta^2))B^N \annot{By binomial expansion on $(1 - \delta)^N = 1-N\delta + \mathcal{O}(\delta^2)$}\\
            = & ~ \delta N B^N + \mathcal{O}(\delta^2).
        \end{align*}
        
        For any $\epsilon_1>0$, we select a small enough $\delta$ such that $\mu([-B,B]^N/\mathcal{P})\leq \epsilon_1$, thus we can make the volumn outside $\mathcal{P}$ as small as desired by choosing $\delta$ sufficiently small.
    
        Thus the approximation outside $\mathcal{P}$ in \eqref{eqn:appro_error_outside_P} become
        \begin{align}\label{eqn:outside_P_final}
            \int_{[-B,B]^N/\mathcal{P}}
            \left(
            {\rm FFN}(x)-f(x)
            \right)^p dx
            \leq
            (2\|f\|_{L_\infty})^p \cdot \mu([-B,B]^N/\mathcal{P}) \leq (2\|f\|_{L_\infty})^p \cdot \epsilon_1.
        \end{align}
    
        We set $\epsilon_1$ to be
        \begin{align*}
            \epsilon_1 = \frac{\epsilon^p}{2^{p+1}\|f\|_{L_\infty}^p}.
        \end{align*}
        for the normalization of the final error bound.

    \end{itemize}

    Finally we combine \eqref{eqn:ffn_x_in_p} and \eqref{eqn:outside_P_final},
    for any $p\in N_+$, the total approximation is
    \begin{align}
        \|{\rm FFN}(x)-f(x)\|_{L_p} 
        = & ~
        \left(
        \int_{[-B,B]^N}
        \left(
        {\rm FFN}(x)-f(x) 
        \right)^pdx
        \right)^\frac{1}{p} \notag \\
        = & ~
        \left(
        \int_{[-B,B]^N/\mathcal{P}}
        \left(
        {\rm FFN}(x)-f(x) 
        \right)^pdx
        +
        \int_{\mathcal{P}}
        \left(
        {\rm FFN}(x)-f(x)
        \right)^p dx
        \right)^\frac{1}{p}\notag \\
        \leq & ~
        \left(
        \epsilon_1 (2\|f\|_{L_\infty})^p + (\frac{\epsilon}{2(2B)^\frac{N}{p}\mu(\mathcal{P})^\frac{1}{p}})^p \times (2B)^N
        \right)^\frac{1}{p} \annot{By \eqref{eqn:ffn_x_in_p} and \eqref{eqn:outside_P_final}}\\
        \leq & ~
        \left(
        \frac{\epsilon^p}{2}+\frac{\epsilon^p}{2}
        \right)^\frac{1}{p} \notag \\
        =& ~ \epsilon.\notag
    \end{align}
This completes the proof.
\end{proof}

\clearpage
\subsection{Proof of \texorpdfstring{\cref{thm:seq_to_scalar_appro}}{}}
\label{proof:thm:seq_to_scalar_appro}

We first present a auxiliary lemma deduced from  \cref{thm:seq_approx_truncated_small_area_disabled}.

\begin{lemma}
\label{col:univ_lemma}
Fix real numbers $a < b$, and let the truncation operator ${\rm Range}_{[a,b]}(\cdot)$ follow \cref{def:range}.
Let $w \in \mathbb{R}^d$ and $t \in \mathbb{R}$ be such that 
$\|w\|_\infty \le R_w$ and $\lvert t\rvert \le R_t$.  
For a precision parameter $p \in \mathbb{N}_+$ satisfying $p > n$, let 
$\epsilon = O\bigl(\frac{1}{p}\bigr)$.  
Then, for \emph{any} $\epsilon > 0$, there exists a single-layer, single-head self-attention  
$\mathrm{Attn}: \mathbb{R}^{d \times p} \to \mathbb{R}^{d \times p}$, and an layer of linear connections $A:\mathbb{R}^{d \times n} \to \mathbb{R}^{d \times p} $ free of activation function such that
\begin{align*}
    \|{\rm Attn}\circ A(X)
    -
   [\underbrace{0}_{d_o\times n_0},\sum_{i=1}^n {\rm Range}_{[a,b]}(w^\top x_i + t_i), \underbrace{0}_{d_o\times (p-1-n_0)}]\|_\infty
    <\epsilon.
\end{align*}

Here $N,n_0 \in N_+$ are any integer satisfying $n_0 \leq p-n$,

\end{lemma}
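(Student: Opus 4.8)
The plan is to derive \cref{col:univ_lemma} directly from \cref{thm:seq_approx_truncated_small_area_disabled}: that theorem already produces, token by token, an approximation of $\mathrm{Range}_{[a,b]}(w^\top x_i + t_i)$ inside a single output coordinate, so the only genuinely new ingredient is a \emph{sequence-wise} aggregation that adds the $n$ token outputs and deposits their sum in column $n_0+1$ while zeroing out every other column. This aggregation is a right-multiplication by a fixed matrix, which is exactly the role played by the output projection $W_O$ in \cref{def:attn}; hence it requires no additional layer.

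Concretely, first I would invoke \cref{thm:seq_approx_truncated_small_area_disabled} with the common weight $w_i := w$ for all $i\in[n]$, the prescribed biases $t_i$, and the constant coordinate map $G(k)\equiv\tilde{k}$, so that we operate in case (ii) of \cref{lem:Soft_to_Hard} and no failure region is incurred. This supplies a linear encoder $A_0$ and attention matrices $W_Q,W_K,W_V$; writing $V := W_V A_0(X)$ and $S := \Softmax\bigl((W_K A_0(X))^\top W_Q A_0(X)\bigr)\in\mathbb{R}^{p\times p}$, the argument behind \cref{thm:seq_approx_truncated_small_area_disabled} (cf. \eqref{eq:VKQ-tilde_k} together with $|\tilde L_{k_i}-\mathrm{Range}_{[a,b]}(w^\top x_i+t_i)|\le (b-a)/p$) gives, for each $i\in[n]$,
\begin{align*}
\bigl\|(VS)_{:,i} - \mathrm{Range}_{[a,b]}(w^\top x_i + t_i)\,e_{\tilde{k}}\bigr\|_\infty \;\le\; \max\{|a|,|b|\}\,\epsilon_0 + \frac{b-a}{p}.
\end{align*}
I would then discard the column-selecting output matrix of \cref{thm:seq_approx_truncated_small_area_disabled} and use instead $W_O^\star\in\mathbb{R}^{p\times p}$ whose $(n_0+1)$-th column is $[\underbrace{1,\dots,1}_{n},0,\dots,0]^\top$ and whose remaining columns vanish, so that $(VSW_O^\star)_{:,n_0+1} = \sum_{i=1}^n (VS)_{:,i}$ and $(VSW_O^\star)_{:,j}=0$ for $j\ne n_0+1$ (the padding columns $n+1,\dots,p$ of $S$, whose $K^\top Q$ entries are identically zero and hence uniform after softmax, never reach the output because $W_O^\star$ annihilates them, which is what makes the off-target columns exactly $0$). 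Setting $A := A_0$ and $\mathrm{Attn} := W_V(\cdot)\,\Softmax\bigl((W_K\cdot)^\top W_Q\cdot\bigr)\,W_O^\star$, a triangle inequality over the $n$ token bounds yields
\begin{align*}
\Bigl\|\mathrm{Attn}\circ A(X) - \bigl[\,0_{d_o\times n_0},\ \textstyle\sum_{i=1}^n \mathrm{Range}_{[a,b]}(w^\top x_i + t_i)\,e_{\tilde{k}},\ 0_{d_o\times(p-1-n_0)}\,\bigr]\Bigr\|_\infty \;\le\; n\Bigl(\max\{|a|,|b|\}\,\epsilon_0 + \frac{b-a}{p}\Bigr),
\end{align*}
and, given $\epsilon>0$, I would take $p\ge\max\{n+1,\,2n(b-a)/\epsilon\}$ so that the interpolation part is at most $\epsilon/2$ and then choose $\beta$ via \cref{lem:Soft_to_Hard} large enough that $n\max\{|a|,|b|\}\epsilon_0\le\epsilon/2$, giving total error $\le\epsilon$.

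The construction is routine once \cref{thm:seq_approx_truncated_small_area_disabled} is in hand, so the only points requiring care are bookkeeping. First, one must use a \emph{constant} output coordinate $\tilde{k}$: this is precisely what lets case (ii) of \cref{lem:Soft_to_Hard} apply, avoiding the arbitrarily small excluded volume that case (i) would leave and allowing a bound valid on all of $\mathbb{R}^{d\times n}$. Second, summing $n$ token-wise errors introduces a factor of $n$, which forces $p$ to scale like $n(b-a)/\epsilon$; the resulting aggregate rate is $O(n/p)$, consistent with the per-token $O(1/p)$ rate. Third, the intermediate embedding dimension is $2d+d_o+2$ rather than $d$, exactly as in \cref{thm:seq_approx_truncated_small_area_disabled}, so one simply takes the codomain of $A$ to be whatever the encoder needs and records the output as having $d_o$ rows. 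I expect the mildest source of friction to be verifying that replacing $W_O$ does not disturb the error estimate — but since the token bound holds for $VS$ itself and $W_O^\star$ has unit $\ell_1$ column norms on the $n$ meaningful rows, the estimate transfers verbatim.
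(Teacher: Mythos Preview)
Your proposal is correct and follows essentially the same approach as the paper: invoke \cref{thm:seq_approx_truncated_small_area_disabled} with a constant output coordinate (so that case~(ii) of \cref{lem:Soft_to_Hard} applies), then replace the column-selecting $W_O$ by an aggregation matrix that sums the first $n$ columns into position $n_0+1$ and zeros out the rest, absorbing the resulting factor of $n$ into the choice of $p$ and $\beta$. The only cosmetic difference is that the paper first calls the theorem with $t=0$ and then injects the biases $t_i$ via a separate pre-shift $A_0(Z)=Z+[\tfrac{t_1}{\|w\|_2^2}w,\dots,\tfrac{t_n}{\|w\|_2^2}w]$ before composing with the theorem's encoder, whereas you invoke the theorem directly with the per-token biases; both routes are valid and yield the same construction.
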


\begin{proof}
    By \cref{cor:arbitrary_precision}, 
    let $\epsilon = \epsilon'/n$ with $\epsilon' > 0$,
    there exists a ${\rm Attn}^*$ and a $A^*$ that satisfy
    \begin{align}
        \| {\rm Attn}^*\circ A(X)_{:,i}^* - {\rm Range}_{[a,b]}(w_i^\top x_i + t_i) e_{\tilde{k}_i}\|_\infty
        < \frac{\epsilon'}{n}.
    \end{align}
    By setting $t=0$, $d_o = 1$ and $e_{\tilde{k}_i}=1$, we have
    \begin{align}\label{eq:A*_no_t}
        \|{\rm Attn}^* \circ A^*(X) - \underbrace{{\rm Range}_{[a,b]}(w^\top X )}_{1\times n}\|_\infty < \frac{\epsilon'}{n}.
    \end{align}
    Define $A_0(Z)$
    \begin{align}\label{eqn:A_0}
        A_0(Z): = Z+
        \begin{bmatrix}
           \frac{t_1}{\|w\|_2^2}w & \frac{t_2}{\|w\|_2^2}w & \cdots & \frac{t_n}{\|w\|_2^2}w
        \end{bmatrix},
    \end{align}
    to insert the bias terms $\{t_i\}_{i\in[n]}$ by combining with $A^*$ and define $A := A^*\circ A_0$. 
    The denominator $\|w\|^2_2$ is because in \eqref{eq:A*_no_t} every token is multiplied  by $w$, and $\frac{\Braket{w,w}}{\|w\|^2_2}=1$ make sure we get $t_i$.
    Since a linear transformation followed by another linear transformation is still a linear transformation, $A$ is a linear transformation.

    Multiply the $W_O$ in ${\rm Attn}^*$ with $W_0$ defined as
    \begin{align}\label{eqn:W_0_lem_3_2}
        W_0 :=
        \underbrace{
        \begin{bmatrix}
            0_{n\times n_0}, 1_n, 0_{n\times (p-1-n_0)}
        \end{bmatrix}
        }_{n\times p}.
    \end{align}

    And define ${\rm Attn}$ as
    \begin{align*}
        {\rm Attn}(Z) = {\rm Attn}^*(Z) \cdot W_0.
    \end{align*}
    Since $W_O$ in ${\rm Attn}^*$ multiplied with $W_0$ still outputs a matrix, ${\rm Attn}$ is still an attention module.

    Now we calculate the difference between ${\rm Attn}\circ A(X)$ with target output
    \begin{align*}
        & ~ \|{\rm Attn}\circ A(X)
        -
        [\underbrace{0}_{1\times n_0},\sum_{i=1}^n {\rm Range}_{[a,b]}(w^\top x_i + t_i), \underbrace{0}_{1\times (p-1-n_0)}]\|_\infty \\
        = & ~ 
        \|{\rm Attn}^* \circ A^*(A_0(X))W_0 - [\underbrace{0}_{1\times n_0},\sum_{i=1}^n {\rm Range}_{[a,b]}(w^\top x_i + t_i), \underbrace{0}_{1\times (p-1-n_0)}]\|_\infty \annot{By definition of ${\rm Attn^*}$ and $A^*$} \\
        = & ~
        \|{\rm Attn}^* \circ A^*(A_0(X))W_0 - [\underbrace{0}_{1\times n_0},\sum_{i=1}^n {\rm Range}_{[a,b]}(w^\top (x_i + \frac{t_i}{\|w\|_2^2}w)), \underbrace{0}_{1\times (p-1-n_0)}]\|_\infty \\
        = & ~
        \|{\rm Attn}^* \circ A^*(A_0(X))W_0 - [\underbrace{0}_{1\times n_0},\sum_{i=1}^n{\rm Range}_{[a,b]}(w^\top A_0(X)_{:,i}), \underbrace{0}_{1\times (p-1-n_0)}]\|_\infty \annot{By \eqref{eqn:A_0}} \\
        = & ~
        \|{\rm Attn}^* \circ A^*(A_0(X))W_0 - [\underbrace{0}_{1\times n_0},\underbrace{{\rm Range}_{[a,b]}(w^\top A_0(X))}_{1 \times n}\cdot 1_n, \underbrace{0}_{1\times (p-1-n_0)}]\|_\infty \\
        = & ~
        \|{\rm Attn}^* \circ A^*(A_0(X))W_0 -
        \underbrace{{\rm Range}_{[a,b]}(w^\top X )}_{1 \times n} W_0\|_\infty \annot{By \eqref{eqn:W_0_lem_3_2}} \\
        \leq & ~
        \|{\rm Attn}^* \circ A^*(A_0(X)) -
        \underbrace{{\rm Range}_{[a,b]}(w^\top X )}_{1 \times n} \|_1 \cdot \|W_0\|_\infty \annot{Since $\|EW_0\|_\infty \leq \|E\|_1 \|W_0\|_\infty$} \\
        \leq & ~
        \|{\rm Attn}^* \circ A^*(A_0(X)) -
        \underbrace{{\rm Range}_{[a,b]}(w^\top X )}_{1 \times n} \|_\infty \cdot n\|W_0\|_\infty \annot{Since $\|E\|_1 \|W_0\|_\infty \leq n \|E\|_\infty \|W_0\|_\infty$}\\
        = & ~ 
        \frac{\epsilon'}{n} \cdot n \\
        = & ~
        \epsilon'.
    \end{align*}
    Because $\epsilon'$ is arbitrary, we reset the notation and denote it as $\epsilon$ for simplicity of presentation.
    This completes the proof.
\end{proof}

Then we state our proof of \cref{thm:seq_to_scalar_appro}.

\begin{lemma}[\cref{thm:seq_to_scalar_appro} Restated: Sequence-to-Scalar Universal Approximation of Attention]
    For any continuous function $f:\R^{d\times n} \to \R$ of compact support $\calX$, and any $\epsilon > 0$, we prove that when composed with linear transformations, there exists a one layer multi-head attention ${\rm Attn}_m$ stacked with one layer single-head attention ${\rm Attn}_s$ composed with linear connections $A_1$ and $A_2$, such that 
    \begin{align*}
        \|f- {\rm Attn}_s\circ A_2 \circ{\rm Attn}_m\circ A_1\|_{L_p} \leq \epsilon
    \end{align*}
\end{lemma}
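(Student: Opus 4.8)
The plan is to reduce the claim to the explicit ReLU‐network universal approximator of \cref{lem:relu_univ_approx} and then realize that network with the prescribed two attention layers: the first multi‐head layer reproduces the localized ``bump'' functions $R_{v}(\cdot)$, and the second single‐head layer implements the weighted selection $\sum_{v} f(v)\,\mathrm{ReLU}(R_v(\cdot)-N+1)$ by using softmax as a near‐$\arg\max$ over grid points. Throughout, write $N:=dn$, flatten $\calX=[-B,B]^{d\times n}$ to $[-B,B]^{N}$, and fix the grid $G_D$ (so $|G_D|=g^{N}$) together with the bump functions $R_{v^{(j)}}$ exactly as constructed in the proof of \cref{lem:relu_univ_approx}, so that each $R_{v^{(j)}}$ is a sum of $4N$ truncated‐linear pieces of the flattened input.

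First I would build $A_1$ and $\mathrm{Attn}_m$. Each $R_{v^{(j)}}$ has the form $\sum_{r}\mathrm{Range}_{[a,b]}(w_{j,r}^\top \mathrm{vec}(X) + t_{j,r})$ (taking $[a,b]$ large enough that, on $\calX$, the truncations coincide with the ReLUs appearing in \cref{lem:relu_univ_approx}). By \cref{col:univ_lemma}, one multi‐head attention layer composed with a linear map approximates a sum of truncated linear models and deposits the result in a designated output coordinate with $O(1/p)$ precision; applying this with $O(d)$ heads per grid point (the factor $d$ and the additive constants absorb the per‐token/multi‐head bookkeeping of \cref{thm:multi-head-truncated}) and hence $O(d\,|G_D|)$ heads in total, the map $A_1$ encodes all grid centers $v^{(j)}$, the weights and biases $\{w_{j,r},t_{j,r}\}$, and the interpolation anchors in a form compatible with \cref{col:univ_lemma}, while $\mathrm{Attn}_m$ outputs a matrix whose $j$‐th designated coordinate equals $R_{v^{(j)}}(X)$ up to an additive $\infty$‐norm error $\le |G_D|\,d\,\epsilon_0$.

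Next I would construct $A_2$ and $\mathrm{Attn}_s$. The linear map $A_2$ reshapes $\mathrm{Attn}_m(A_1(X))$ into a two‐row matrix with columns indexed by $j\in[|G_D|]$: the top row carries $\beta_0\bigl(R_{v^{(j)}}(X)-N+1\bigr)$ for a large scale $\beta_0$ (an affine function of $\mathrm{Attn}_m$'s output, hence still linear), and the bottom row carries the constant $f(v^{(j)})$. Then $\mathrm{Attn}_s$ is a single‐head attention whose key–query product equals the top row and whose value matrix equals the bottom row; by \cref{lem:Soft_to_Hard} (unique‐maximum case, with $\beta$ large), $\Softmax$ of the top row concentrates on the index $j^\star$ with the largest $R_{v^{(j)}}(X)$, i.e.\ on the grid point closest to $X$, so $\mathrm{Attn}_s$ outputs $f(v^{(j^\star)})$ up to $\max_j|f(v^{(j)})|\cdot\epsilon_0'$. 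Composing, $\mathrm{Attn}_s\circ A_2\circ\mathrm{Attn}_m\circ A_1(X)$ reproduces $\mathrm{FFN}(X)$ of \cref{lem:relu_univ_approx} up to a total $\infty$‐norm error controlled by $|G_D|$, $d$, $\epsilon_0$, and $\epsilon_0'$, all of which can be driven to zero by enlarging $p$ and $\beta$.

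Finally I would assemble the $L_p$ bound exactly as in \cref{lem:relu_univ_approx}: split $[-B,B]^{N}$ into the good region $\mathcal P=\bigcup_j P_{v^{(j)}}$, where $R_{v^{(j)}}$ equals $N$ on the unique enclosing cell and $\le N-1$ elsewhere, so the selection is unambiguous and uniform continuity of $f$ gives a pointwise error $\le \epsilon/\bigl(2\,(2B)^{N/p}\mu(\mathcal P)^{1/p}\bigr)$; and the complement $[-B,B]^{N}\setminus\mathcal P$, whose Lebesgue measure is $O(\delta N)B^{N}$ and on which $|f|$ and the network output are both bounded by $\|f\|_{L_\infty}$. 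Choosing $g$ large, $\delta$ small, and then $p$ and $\beta$ large yields $\|f-\mathrm{Attn}_s\circ A_2\circ\mathrm{Attn}_m\circ A_1\|_{L_p}\le\epsilon$. The main obstacle I anticipate is the second layer's selection step in the ``shell'' region $G_v\setminus P_v$, where several bump functions simultaneously take values in $(N-1,N)$ and the $\arg\max$ is not cleanly separated: one must absorb this ambiguity into the already small bad volume $[-B,B]^{N}\setminus\mathcal P$, or else check that any grid point selected there still lies within the uniform‐continuity radius of $X$. A secondary technical point is bookkeeping the $O(d\,|G_D|)$ accumulated head errors together with the softmax spread $\max_j|f(v^{(j)})|\,\epsilon_0'$ so that the final constants genuinely vanish as $p,\beta\to\infty$.
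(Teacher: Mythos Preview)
Your proposal is correct and follows essentially the same route as the paper: build the bump functions $R_{v^{(j)}}$ via $O(d\,|G_D|)$ heads using \cref{col:univ_lemma}, append the values $f(v^{(j)})$ through $A_2$, and then use a single-head softmax in $\mathrm{Attn}_s$ to select the dominant grid point, with the $L_p$ error controlled by the good/bad region split from \cref{lem:relu_univ_approx}. Two minor discrepancies worth noting: (i) your accumulated first-layer error $|G_D|\,d\,\epsilon_0$ is an overcount---since each head writes into a \emph{distinct} output coordinate, the $\infty$-norm error is only $d\,\epsilon_0$ (this does not hurt your argument); (ii) the paper's $\mathrm{Attn}_s$ does not shift by $-N+1$ in $A_2$ but instead takes $K=Q=[R_{v^{(1)}},\dots,R_{v^{(|G_D|)}}]$ so that the score matrix is the outer product $M_{ij}=R_{v^{(i)}}R_{v^{(j)}}$, then post-multiplies by $\one_{|G_D|}$---functionally equivalent to your scaled row, since each column of $M$ has the same $\arg\max$.
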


\begin{proof}
Let $X:= [X_1,X_2,\cdots,X_n] \in \R^{d\times n}$ denotes our input.
Without loss of generality, assume our inputs to come from $[-B,B]^{d\times n}$, $B \in \R_+$ is their bound in every dimension.

We discretize the input domain into a set of grid points $G_D$ defined as follow.

\begin{definition}[Grid Centers]
We define $G_D$ as the set of all grid centers in $\R^{d \times n}$. The corresponding grids consists a quantization on $[-B,B]^{dn}$ of granularity $g$(meaning each dimension is equally partitioned to $g$ intervals)
\begin{align*}
    G_D = \{\frac{-B(g-1)}{g},\frac{-B(g-3)}{g},\cdots,\frac{B(g-1)}{g}\}^{d\times n},
\end{align*}
and $\abs{G_D} = g^{dn}$, since each of the $dn$ entries can chosen from $g$ values.
\end{definition}

\vspace{5mm}

\begin{claimbox}
\begin{remark}
    For a grid point $v \in G_D$, we denote its columns as $v_i(i\in [n])$. 
The entry on the $j$-th row is denoted as $v_{i,j}$. We write it out explicitly as
\begin{align*}
    & ~ v := 
    \begin{bmatrix}
        v_1, & v_2, & \cdots, v_n,
    \end{bmatrix} \in \R^{d \times n},
    \\
    & ~ 
    v_i = 
    \begin{bmatrix}
        v_{i,1} & v_{i,2} & \cdots & v_{i,d} 
    \end{bmatrix}^\top
    ,\quad i\in [n], j\in [d],
\end{align*}
where each $v_{i,j} \in \R$.
\end{remark}
\end{claimbox}

For a $v\in G_D$, define the corresponding $R_v$ similar to that in \cref{lem:relu_univ_approx}
 \begin{align*}
        R_v(X) 
        := & ~ \sum_{i=1}^d\sum_{j=1}^n [{\rm ReLU}(\frac{1}{\delta}(\frac{g(X_{i,j}-v_{i,j})}{B}+1)) - {\rm ReLU}(\frac{1}{\delta}(\frac{g(X_{i,j}-v_{i,j})}{B}+1-\delta)) \\
         & ~ \hspace{3.5em} +
        {\rm ReLU}(\frac{1}{\delta}(-\frac{g(X_{i,j}-v_{i,j})}{B}+1)) - {\rm ReLU}(\frac{1}{\delta}(-\frac{g(X_{i,j}-v_{i,j})}{B}+1-\delta)) ],
\end{align*}
for any $v\in G_D$. 
We eliminate the "-1" term in the definition of $R_v(x)$ in \cref{lem:relu_univ_approx}.
Here $\delta$ is a coefficient we use to control the precision of our approximation in later process. 
We use $v_i$ to denote its $i$-th column of $v\in G_D$ for $i \in [n]$, where $v_i\in [-B,B]^d$. 
We also label every $v$ in $G_D$ as $v^{(j)}$, $j\in [g^{dn}]$ and denote this label as $l(v)$ for every $v$.

Next we show single-layer attention approximate $R_v(X)$.

From \cref{col:univ_lemma}, by setting $w = g/\delta Be_i$ and $t_k = -g/\delta B v^{(j)}_{i,k}$, $k\in [n]$ there exists a single-head attention ${\rm Attn}^{(i)}_{v^{(j)},+}$ attached with a linear transformation $A_1$ that satisfies
\begin{align*}
    & ~ \|{\rm Attn}^{(i)}_{v^{(j)},+} \circ A_1(X)
    -\begin{bmatrix}
        0_{j-1}^\top
        &
        {\rm Range}_{[0,1]}(\frac{g(e_i^\top X_1-v_{i,1})}{\delta B})
        +\cdots
        +{\rm Range}_{[0,1]}(\frac{g(e_i^\top X_n-v_{i,n})}{\delta B}) 
        & 0_{|G_D|-j}^\top
    \end{bmatrix}\|_\infty \\
    & ~ \leq \epsilon_0,
\end{align*}
for any $\epsilon_0 > 0$.

Also from \cref{thm:seq_approx_truncated_small_area_disabled}, there should also exist a single-head attention ${\rm Attn}^{(i)}_{v^{(j)},-}$ attached with the same linear transformation $A_1$ that satisfies

\begin{align*}
    & ~\|{\rm Attn}^{(i)}_{v^{(j)},-} \circ A_1(X)
    -  
    \begin{bmatrix}
        0_{j-1}^\top
        &
        {\rm Range}_{[0,1]}(\frac{g(v_{i,1}-e_i^\top X_1)}{\delta B})+\cdots+{\rm Range}_{[0,1]}(\frac{g(v_{i,n}
        -e_i^\top X_1)}{\delta B})  
        & 0_{|G_D|-j}^\top
    \end{bmatrix}\|_\infty \\
    & ~\leq 
    \epsilon_0.
\end{align*}

Summing these two kinds of single head attention across $d$ we approximate $R_{v^{(j)}}$ in the following remark.

\vspace{5mm}

\begin{claimbox}
\begin{remark}
    For every $v^{(j)}\in [G_D]$, the aggregation of all ${\rm Attn}^{(i)}_{v^{(j)},+}$ and all ${\rm Attn}^{(i)}_{v^{(j)},-}$ for $i\in [d]$ outputs
    \begin{align}
    \label{eq:aggregation_attn+-_output}
    \|
    \sum_{i=1}^{d} ({\rm Attn}^{(i)}_{v^{(j)},-} (X)+{\rm Attn}^{(i)}_{v^{(j)},+} (X))
    -
    \begin{bmatrix}        
        0_{1\times (j-1)} & R_{v^{(j)}}(X) & 0_{1\times (|G_D|-j)}
    \end{bmatrix}
    \|_\infty
    \leq d\epsilon_0.
    \end{align}
\end{remark}
\end{claimbox}

Since we must do this for all $v \in G_D$, we use multiple heads in parallel.
We construct the multi-head attention to be
\begin{align}\label{eq:seq_to_scaler_head_complexity}
    &{\rm Attn}_m(X) := \sum_{j=1}^{|G_D|}\sum_{i=1}^{d} ({\rm Attn}^{(i)}_{v^{(j)},-} (X)+{\rm Attn}^{(i)}_{v^{(j)},+} (X)).
\end{align}

Then by \eqref{eq:aggregation_attn+-_output}, the output of ${\rm Attn}_m\circ A(X)$ satisfies
\begin{align*}
    \|{\rm Attn}_m\circ A(X)
    - 
    \sum_{j=1}^{|G_D|}
    \underbrace{
    \begin{bmatrix}
        0_{1\times (j-1)} & R_{v^{(j)}}(X) & 0_{1\times (|G_D|-j)}
    \end{bmatrix}
    }_{1\times |G_D|}
    \|_\infty
    \leq
    d\epsilon_0.
\end{align*}

Thus
\begin{align}
\label{eq:output_attnm_circ_A}
    \|{\rm Attn}_m \circ A(X)
    -
    \begin{bmatrix}
        R_{v^{(1)}}(X) & R_{v^{(2)}}(X) & \cdots & R_{v^{(|G_D|)}}(X)
    \end{bmatrix}
    \|_\infty
    \leq 
    d\epsilon_0
\end{align}
where as previously denoted, $|G_D|$ is the total number of all grid centers.

Now we construct the second layer of attention ${\rm Attn}_s$ to pick the largest $R_{v^{(j)}}(X)$, and use a linear layer $A_2$ to encode the function value.

First, we construct $A_2$ to be
\begin{align}\label{eqn:a_2}
    A_2(Z) := 
    \begin{bmatrix}
        0 & 0 & \cdots & 0  \\
        f(v^{(1)}) & f(v^{(2)}) & \cdots & f(v^{(|G_D|)})
    \end{bmatrix} + Z.
\end{align}

By \eqref{eq:output_attnm_circ_A}, $A_2$ connected after ${\rm Attn}\circ A_1$ has an output satisfying
\begin{align}\label{eqn:A_2_attn_lem_3_2}
    \|
    A_2 \circ {\rm Attn}\circ A_1(X) - 
    \begin{bmatrix}
        R_{v^{(1)}}(X) & R_{v^{(2)}}(X) & \cdots & R_{v^{(|G_D|)}}(X)\\
        f(v^{(1)}) & f(v^{(2)}) & \cdots & f(v^{(|G_D|)})
    \end{bmatrix}
    \|_\infty
    \leq d\epsilon_0.
\end{align}

For ${\rm Attn}_s$, we construct a single-head attention ${\rm Attn}_s$ each weight matrix to pick the desired row in $Z$
\begin{align}\label{eqn:attn_s}
    &{\rm Attn}_s(Z) := 
    \underbrace{
    \begin{bmatrix}
    0 & 1    
    \end{bmatrix}
    }_{1\times 2}Z
    \Softmax_\beta(
    (
    \underbrace{
    \begin{bmatrix}
    1 & 0    
    \end{bmatrix}
    }_{1\times 2}
    Z)^\top 
    \underbrace{
    \begin{bmatrix}
    1 & 0    
    \end{bmatrix}
    }_{1\times 2}
    Z
    )
    \one_{|G_D|},
\end{align}

where $\beta$ is a parameter we use to control the precision.

Now we claim that the construct attention layer satisfies the following
\begin{align}
\label{eq:attn-fff}
    \|
    {\rm Attn}_s \circ A_2 \circ {\rm Attn}_m \circ A_1 (X)
    -
    \begin{bmatrix}
    f(v^{(1)}) & f(v^{(2)}) & \cdots & f(v^{(|G_D|)})
    \end{bmatrix}
    \Softmax_\beta
    (M)
    \|_\infty
    \leq d\epsilon_0,
\end{align}
which we derive by plugging \eqref{eqn:a_2} , \eqref{eqn:A_2_attn_lem_3_2} and \eqref{eqn:attn_s} into ${\rm Attn}_s \circ A_2 \circ {\rm Attn}_m \circ A_1 $
\begin{align}\label{eqn:no change}
& ~
\underbrace{
    \begin{bmatrix}
    0 & 1    
    \end{bmatrix}
    }_{1\times 2}
    \begin{bmatrix}
        R_{v^{(1)}}(X) & R_{v^{(2)}}(X) & \cdots & R_{v^{(|G_D|)}}(X)\\
        f(v^{(1)}) & f(v^{(2)}) & \cdots & f(v^{(|G_D|)})
    \end{bmatrix}
    \notag \\
& ~ \cdot  \Softmax_\beta(
    (
    \underbrace{
    \begin{bmatrix}
    1 & 0    
    \end{bmatrix}
    }_{1\times 2}
    \begin{bmatrix}
        R_{v^{(1)}}(X) &  \cdots & R_{v^{(|G_D|)}}(X)\\
        f(v^{(1)}) &  \cdots & f(v^{(|G_D|)})
    \end{bmatrix})^\top
    \underbrace{
    \begin{bmatrix}
    1 & 0    
    \end{bmatrix}
    }_{1\times 2}
    \begin{bmatrix}
        R_{v^{(1)}}(X) &  \cdots & R_{v^{(|G_D|)}}(X)\\
        f(v^{(1)}) &  \cdots & f(v^{(|G_D|)})
    \end{bmatrix}
    )\notag \\
= & ~
\begin{bmatrix}
        f(v^{(1)}) &  \cdots & f(v^{(|G_D|)})
\end{bmatrix}
\Softmax_\beta
(
\begin{bmatrix}
        R_{v^{(1)}}(X) & \cdots & R_{v^{(|G_D|)}}(X)
\end{bmatrix}^\top
\begin{bmatrix}
        R_{v^{(1)}}(X) & \cdots & R_{v^{(|G_D|)}}(X)
\end{bmatrix}\\
= & ~
\begin{bmatrix}
        f(v^{(1)}) & f(v^{(2)}) & \cdots & f(v^{(|G_D|)})
\end{bmatrix}
\Softmax_\beta
(M), \notag
\end{align}
where $M$ is given as
\begin{align*}
    M_{i,j} := R_{v^{(i)}}(X)R_{v^{(j)}}(X),\quad i,j\in [|G_d|].
\end{align*}

Until now we complete the proof of showing two attention layers with linear transform approximate $\begin{bmatrix}
        f(v^{(1)}) & f(v^{(2)}) & \cdots & f(v^{(|G_D|)})
\end{bmatrix}
\Softmax_\beta
(M)$.

To further calculate the approximation error $\Softmax_\beta(M)$, we need to review some key attributes of $R_v(X)$.
Hence we recall some results from the proof of \cref{lem:relu_univ_approx}.

Here we use the following attribute of $R_v(X)$
\begin{itemize}
    \item $R_v(X)\in\{dn,dn+1,\cdots, 2dn\}$ on $[-B,B]^{d\times n}$ except for a region no larger than $1 - (1-\delta)^{dn}$. Here $\delta>0$ is an self-selected coefficient we defined in the construction of $A_1$ and ${\rm Attn}_m$. Thus by setting $\delta$ to be sufficiently large, the region of exception can be arbitrarily small.
    \item Except for an arbitrarily small region, the maximal $R_{v^{(i)}}$ equals to $2dn$ and the second largest equals to $2dn-1$.
\end{itemize}

Since $\Softmax$ is a column-wise operation, we calculate $\Softmax_\beta(M)$ by column
\begin{align*}
    \Softmax_\beta(M)_{:,i}
    = & ~
    \Softmax_\beta
    \left(
    \begin{bmatrix}
        R_{v^{(i)}}XR_{v^{(1)}}X \\
        R_{v^{(i)}}XR_{v^{(2)}}X \\
        \vdots \\
        R_{v^{(i)}}XR_{v^{(|G_D|)}}X
    \end{bmatrix}
    \right).
\end{align*}

Then by \cref{lem:Soft_to_Hard}, when $\beta$ is sufficiently large
\begin{align*}
    \|\Softmax_\beta(M)_{:,}-e_{k} \|_\infty \leq \epsilon_1,
\end{align*}
for any $\epsilon_1 > 0$. 
Here $k$ is defined as
\begin{align*}
    k := 
    \argmax_{k\in [|G_D|]}(R_{v^{(k)}}X).
\end{align*}
Thus
\begin{align*}
\|
\begin{bmatrix}
    f(v^{(1)}) & f(v^{(2)}) & \cdots & f(v^{(|G_D|)})
\end{bmatrix}
\Softmax_\beta
(M)
-
f(v^{(k)})
\|_\infty
\leq \epsilon_1 \cdot \|f\|_{L_\infty}.
\end{align*}
This excludes an arbitrarily small region where at least two entries in $\Softmax_\beta(M)_{:,i}$ are identical.
We denote this region as $\Delta_0$.

Combine this with \eqref{eq:attn-fff} yields
\begin{align}\label{eqn:attn_appro_grid_point}
\|
{\rm Attn}_s \circ A_2 \circ {\rm Attn}_m \circ A_1 (X)
-
f(v^{(k)})
\|_\infty
\leq 
\epsilon_1 \cdot \|f\|_{L_\infty}
+
d\epsilon_0.
\end{align}

Finally we calculate the approximation error including the grid point approximation.
For simplicity we denote the ${\rm Attn}_s \circ A_2 \circ {\rm Attn}_m \circ A_1 (X) \coloneqq \mathcal{N}(X)$.
Then, we have
\begin{align*}
\| f(X) - \mathcal{N}(X) \|_{L_p} =
\left(
\int_{X \in [-B,B]^{d \times n}}
\| f(X) - \mathcal{N}(X) \|_p^p \, dX
\right)^{1/p}
.
\end{align*}

We split the domain into two parts as in \cref{proof:lem:relu_univ_approx}
\begin{align*}
& ~\| f(X) - \mathcal{N}(X) \|_{L_p} \\
= & ~ (\int_{X \in [-B,B]^{d \times n}}
\| f(X) - f(v^{(k)}) \|_p^p \, dX
+
\int_{X \in [-B,B]^{d \times n}}
\| f(v^{(k)}) - \mathcal{N}(X) \|_p^p \, dX)^{\frac{1}{p}} \annot{By triangle inequality}\\
\leq & ~ 
(\int_{X \in [-B,B]^{d \times n}}
\| f(X) - f(v^{(k)}) \|_p^p \, dX
+
\int_{X \in [-B,B]^{d \times n} \backslash \Delta_0}
\| f(v^{(k)}) - \mathcal{N}(X) \|_p^p \, dX \\
& ~
+ 
\int_{X \in \Delta_0}
\|f(v^{(k)}) - \mathcal{N}(X)\|_p^p \, dX)^\frac{1}{p} \annot{Seperate $\Delta_0$ out}\\
\leq & ~
\left( \varepsilon+ 
(2B)^{dn}(\epsilon_1 \cdot \|f\|_{L_\infty}
+
d\epsilon_0)
+
\mu(\Delta_0)\cdot (2dnM_{fN})^p
\right)^\frac{1}{p} \\
\leq & ~
\varepsilon^{\frac{1}{p}} + 2B^\frac{dn}{p}(\epsilon_1 \cdot \|f\|_{L_\infty}
+
d\epsilon_0)^\frac{1}{p}
+
2(\mu(\Delta_0))^\frac{1}{p}dnM_{fN},
\end{align*}
where $\mu$ denotes the Lebesgue measure on $\R^{d\times n}$.
Here, the $\varepsilon$ in the third row inequality can be arbitrarily small when $g$ is large enough, according to the discussion to derive \eqref{eqn:ffn_x_in_p}.
The error $\epsilon_1$ is the softmax approximation error, and $\epsilon_0$ is coming from \eqref{eq:aggregation_attn+-_output}.
The term $M_{fN}$ comes from
\begin{align*}
\| f(X) - \mathcal{N}(X) \|_{i,j} \leq
\|f(X)\|_{L_\infty} + \|\mathcal{N}(X)\|_{L_\infty}
\leq
2M_{fN},
\end{align*}
where $M_{fN}$ is a mutual upper-bound of the value of $f$ and $\mathcal{N}$. Because both $f$ and $\mathcal{N}$ are continuous on a compact support, they are bounded in $\infty$ norm and hence have a mutual upper-bound.

Configure $\Delta_0$, $\epsilon_0$ and set $g$ large enough
\begin{align*}
    \varepsilon^{\frac{1}{p}} + 2B^\frac{dn}{p}(\epsilon_1 \cdot \|f\|_{L_\infty}
+
d\epsilon_0)^\frac{1}{p}
+
2(\mu(\Delta_0))^\frac{1}{p}dn\|f\|_\infty
\leq 
\epsilon.
\end{align*}
This completes the proof.
\end{proof}

\subsection{Proof of \texorpdfstring{\cref{lem:seq2scaler_one}}{}}
\label{proof:lem:seq2scaler_one}

\begin{lemma}[\cref{lem:seq2scaler_one} Restated: Single-Layer Attention Version of \cref{thm:seq_to_scalar_appro}]
    For any continuous function $f:\R^{d\times n} \to \R$ of compact support $\calX$, and any $\epsilon > 0$, we prove that when attached with linear transformations, there exists a one layer multi-head attention ${\rm Attn}_m$ followed by a $\Softmax$ function and attached with linear connections $A_1$ and $A_2$, such that
    \begin{align*}
        \|f- A_2 \circ \Softmax \circ{\rm Attn}_m\circ A_1\|_{L_p} \leq \epsilon.
    \end{align*}
\end{lemma}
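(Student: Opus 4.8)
The plan is to reuse the first (multi-head) layer from the proof of \cref{thm:seq_to_scalar_appro} essentially verbatim, and then observe that the \emph{only} job of the second single-head attention there was to run a softmax-based near-$\arg\max$ over the grid and read off the corresponding function value; a bare $\Softmax$ already performs that selection, and the value-weighting by $f(v^{(j)})$ is a linear operation, so it can be folded into the outer map $A_2$. Concretely, discretize $\mathcal{X}\subset[-B,B]^{d\times n}$ into the grid $G_D$ with $|G_D|=g^{dn}$ centers $\{v^{(j)}\}$ and build the bump functions $R_{v^{(j)}}(\cdot)$ exactly as in \cref{proof:thm:seq_to_scalar_appro}. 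The single bookkeeping change is that, when invoking \cref{col:univ_lemma} (with $w=\tfrac{g}{\delta B}e_i$, $t_k=-\tfrac{g}{\delta B}v^{(j)}_{i,k}$) to realize each piece of $R_{v^{(j)}}$ by a sub-attention, I would use the \emph{constant} row selector $G\equiv j$ (permitted since \cref{thm:seq_approx_truncated_small_area_disabled}/\cref{thm:multi-head-truncated} allow any set-to-constant $G$, taking $d_o\ge|G_D|$) together with a $W_O^{(h)}$ that sums over the $n$ input tokens and deposits the result in one fixed column. Summing the $2d$ sub-attentions over $i\in[d]$ and the $\pm$ sign, and then over $j\in[|G_D|]$, yields a single multi-head layer ${\rm Attn}_m$ and a linear $A_1$ with
\[
\Bigl\|{\rm Attn}_m\circ A_1(X)-\bigl[R_{v^{(1)}}(X),\ldots,R_{v^{(|G_D|)}}(X)\bigr]^\top\Bigr\|_\infty\le d\,\epsilon_0 ,
\]
for any $\epsilon_0>0$; the $\beta$-scaling needed below is absorbed either into the value matrices $W_V^{(h)}$ or into the temperature of the outer softmax.

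Next, apply $\Softmax_\beta$ column-wise to this $|G_D|\times 1$ output. On the set $\mathcal{P}=\bigcup_v P_v$ of bump cores the logits satisfy $R_{v^{(k)}}(X)=2dn$ at the unique nearest center $v^{(k)}$ and $R_{v^{(j)}}(X)\le 2dn-1$ otherwise (the $R_v$ estimates recalled in \cref{proof:lem:relu_univ_approx}), so whenever $d\epsilon_0<\tfrac12$ the perturbed logits still have a unique maximum with gap $\ge\tfrac12$, and the unique-largest-entry case of \cref{lem:Soft_to_Hard} gives $\bigl\|\Softmax_\beta(\cdot)-e_k\bigr\|_\infty\le\epsilon_1$ once $\beta\ge 2(\ln(|G_D|-1)-\ln\epsilon_1)$. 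I would then take $A_2$ to be left-multiplication by the row vector $F:=[f(v^{(1)}),\ldots,f(v^{(|G_D|)})]$ (composed, if one wants a $d\times n$ output, with a trivial embedding into a chosen coordinate), so that $A_2\circ\Softmax_\beta\circ{\rm Attn}_m\circ A_1(X)$ is within $\|F\|_\infty\epsilon_1$ of $f(v^{(k)})$ on $\mathcal{P}$ minus the arbitrarily small tie region $\Delta_0$ where two bump values are simultaneously maximal. This is where the quadratic device $M=R^\top R$ of \cref{proof:thm:seq_to_scalar_appro} disappears: it was needed there only because an attention head can produce a bilinear form, not a bare vector, whereas here we are free to softmax the logit vector directly.

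Finally I would assemble the $L_p$ bound exactly as in the proofs of \cref{lem:relu_univ_approx} and \cref{thm:seq_to_scalar_appro}: split $\int_{[-B,B]^{d\times n}}\bigl\|f-A_2\circ\Softmax\circ{\rm Attn}_m\circ A_1\bigr\|_p^p$ over $\mathcal{P}\setminus\Delta_0$ and its complement. On $\mathcal{P}\setminus\Delta_0$ bound the error by (modulus of continuity of $f$ at resolution $2B/g$) $+\ \|F\|_\infty\epsilon_1$, which is at most any prescribed tolerance once $g$ is large and $\beta$ is large; on the complement use the crude bound $2\|f\|_{L_\infty}$ together with $\mu\bigl(([-B,B]^{d\times n}\setminus\mathcal{P})\cup\Delta_0\bigr)=\bigl(1-(1-\delta)^{dn}\bigr)B^{dn}+\mu(\Delta_0)\to 0$ as $\delta\to0$. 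Choosing, in order, $g$ large, $\delta$ small, $\epsilon_0<\tfrac1{4d}$ (this fixes the interpolation parameter $p=|G_D|$ of the inner sub-attentions), then $\beta$ large, drives both pieces below $\epsilon^p/2$, giving $\|f-A_2\circ\Softmax\circ{\rm Attn}_m\circ A_1\|_{L_p}\le\epsilon$. The one genuine obstacle is the re-layout of the first layer so that a \emph{plain} column-wise $\Softmax$ — not one buried inside an attention head — performs the grid selection; everything after that is a direct transcription of \cref{proof:thm:seq_to_scalar_appro}. See \cref{proof:lem:seq2scaler_one} for the full argument.
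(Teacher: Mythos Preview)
Your proposal is correct and follows essentially the same approach as the paper: reuse the multi-head layer that outputs the bump-value vector $[R_{v^{(j)}}(X)]_j$, apply a bare $\Softmax_\beta$ to perform near-$\arg\max$ selection, and fold the function-value weighting into the linear map $A_2$. The only differences are cosmetic: you lay out the bump vector as a $|G_D|\times 1$ column (via per-head constants $G\equiv j$ with $d_o=|G_D|$) whereas the paper keeps it as a $1\times|G_D|$ row (via per-head column placement in $W_0$), and you handle the softmax perturbation more carefully by first enforcing $d\epsilon_0<\tfrac12$ to preserve a gap and then invoking \cref{lem:Soft_to_Hard} on the perturbed logits directly, which is cleaner than the paper's step of passing the $\infty$-norm bound through $\Softmax_\beta$.
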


\begin{proof}

Starting from $\eqref{eq:output_attnm_circ_A}$ we have
\begin{align}
\label{eq:output_attnm_circ_A_1}
    \|{\rm Attn}_m \circ A(X)
    -
    \begin{bmatrix}
        R_{v^{(1)}} & R_{v^{(2)}} & \cdots & R_{v^{(|G_D|)}}
    \end{bmatrix}
    \|_\infty
    \leq 
    d\epsilon_0.
\end{align}

Applying $\Softmax_\beta$ to \eqref{eq:output_attnm_circ_A_1} yields
\begin{align}\label{eqn:softmax_attn_m}
    \|\Softmax_\beta ({\rm Attn}_m \circ A(X))
    -
    \Softmax_\beta (
    \begin{bmatrix}
        R_{v^{(1)}} & R_{v^{(2)}} & \cdots & R_{v^{(|G_D|)}}
    \end{bmatrix} )
    \|_\infty
    \leq 
    d\epsilon_0.
\end{align}

Define a linear map $A_2$ as
\begin{align*}
A_2(Z) \coloneqq Z [ f(v_1) , f(v_2) , \cdots , f(v_{|G_D|}) ]^\top,
\end{align*}
and apply $A_2$ on \eqref{eqn:softmax_attn_m} we have
\begin{align*}
    & ~ \|
    A_2(\Softmax_\beta ({\rm Attn}_m \circ A(X)))
    -
    A_2(\Softmax_\beta (
    \begin{bmatrix}
        R_{v^{(1)}} & R_{v^{(2)}} & \cdots & R_{v^{(|G_D|)}}
    \end{bmatrix} ))
    \|_\infty \\
    = & ~
    \|
    \Softmax_\beta ({\rm Attn}_m \circ A(X))
    \begin{bmatrix}
        f(v_1) & f(v_2) & \cdots & f(v_{|G_D|}
    \end{bmatrix}^\top\\
    & ~ ~
    -
    \Softmax_\beta (
    \begin{bmatrix}
        R_{v^{(1)}} & R_{v^{(2)}} & \cdots & R_{v^{(|G_D|)}}
    \end{bmatrix} ) \begin{bmatrix}
        f(v_1) & f(v_2) & \cdots & f(v_{|G_D|}
    \end{bmatrix}^\top
    \|_\infty \\
    \leq & ~
    B_0 d \epsilon_0,
\end{align*}
where $B_0$ denotes $\|f\|_{L_\infty}$. 
It is bounded in $\infty$ norm since $f$ is continuous on a compact domain.

Thus
\begin{align*}
    & ~ \|
    A_2(\Softmax_\beta ({\rm Attn}_m \circ A(X)))
    -
    \Softmax_\beta (
    \begin{bmatrix}
        R_{v^{(1)}} & \cdots & R_{v^{(|G_D|)}}
    \end{bmatrix} )
    \begin{bmatrix}
        f(v^{(1)}) &  \cdots & f(v^{|G_D|}
    \end{bmatrix}^\top
\|_\infty \\
\leq & ~
B_0d\epsilon_0.
\end{align*}

When $\beta$ is sufficiently large, except for an arbitrarily small region $\Delta$ of measure $\mu(\Delta)$ (the region in which $2$ $R_v^{(i)}$ are nearly identical), by \cref{lem:Soft_to_Hard}, the following equation
\begin{align*}
    \Softmax_\beta(
\begin{bmatrix}
        R_{v^{(1)}} & R_{v^{(2)}} & \cdots & R_{v^{(|G_D|)}}
\end{bmatrix})
    \begin{bmatrix}
        f(v^{(1)}) &  \cdots & f(v^{|G_D|}
    \end{bmatrix}^\top,
\end{align*}
approximates
\begin{align*}
    e_{\argmax_{i\in [|G_D|]} R_{v^{(i)}}}^\top\cdot 
    \begin{bmatrix}
        f(v^{(1)}) & f(v^{(2)}) & \cdots & f(v^{|G_D|}
    \end{bmatrix}^\top
    =
    f(v^{\argmax_i{R_{v^{(i)}}}}),
\end{align*}
by an arbitrarily small error, we set this to be $\epsilon_1$.

Since the maximal $R_{v^{(i)}}(X)$ corresponds to the $v^{(i)}$ whose corresponding grid encapsulates $X$.

Thus $X$ differs from $v^{(i)}$ on each dimension by a difference no larger than the grid length $B/g$.

When $g$ is sufficiently large, $\|X- v^{(i)}\|_\infty$ is sufficiently small such that by the uniform continuity of $f$, we have
\begin{align*}
    \|f(X)- f(v^{(i)})\|_\infty \leq \epsilon_1.
\end{align*}

Thus 
\begin{align*}
    & ~ \|f(X)- A_2(\Softmax_\beta ({\rm Attn}_m \circ A(X)))\|_\infty \\
    \leq & ~
    \|f(X) - f(v^{(i)})\|_\infty +
    \|f(v^{(i)}) - A_2(\Softmax_\beta ({\rm Attn}_m \circ A(X)))\|_\infty \annot{By traingle ineqality}\\
    \leq & ~ 
    2\epsilon_1,
\end{align*}
where the second line is by the triangle inequality.

This yields that
\begin{align*}
    & ~ \|
    f - A_2\circ\Softmax_\beta \circ {\rm Attn}_m \circ A
    \|_{L_p} \\
    \leq & ~ 
    \Big(\int_{X\in [-B,B]^{d\times n}\backslash \Delta}  \|f(X) - A_2(\Softmax_\beta ({\rm Attn}_m \circ A(X)))\|_p^p\; \dd X\\
    & ~ \hspace{0.4em} + 
    \int_{X\in \Delta} \|f(X) - A_2(\Softmax_\beta ({\rm Attn}_m \circ A(X)))\|_p^p \; \dd X \Big)^\frac{1}{p} \\
    \leq & ~ 
    ((2B)^{dn}\cdot 2\epsilon_1 + dn\cdot (2\|f\|_\infty)^p \mu(\Delta))^\frac{1}{p}.
\end{align*}
Set $\epsilon_1$ and $\mu(\Delta)$ to satisfy that
\begin{align*}
    ((2B)^{dn}\cdot 2\epsilon_1 + dn\cdot (2\|f\|_\infty)^p \mu(\Delta))^\frac{1}{p}
    \leq \epsilon.
\end{align*}

We have
\begin{align*}
    \|
    f - A_2\circ\Softmax_\beta \circ {\rm Attn}_m \circ A
    \|_{L_p}
    \leq
    \epsilon.
\end{align*}
This completes the proof.
\end{proof}

\clearpage
\subsection{Proof of \texorpdfstring{\cref{thm:seq2seq_appro}}{}}
\label{proof:thm:seq2seq_appro}
\begin{theorem}[\cref{thm:seq2seq_appro} Restated: Sequence-to-Sequence Approximation of Universal Approximation of Attention]
    For any continuous function $f:\R^{d\times n} \to \R^{d\times n}$ of compact support $\cal{X}$, and any $\epsilon > 0$, we prove that when attached with linear transformations, there exists a two layer multi-head attention ${\rm Attn}_m$ stacked with one layer multi-head attention ${\rm Attn}_m$, attatched with linear connection $A_1$ and $A_2$, such that
    \begin{align*}
        \|f-{\rm Attn}^{(2)}_m \circ A_2 \circ{\rm Attn}^{(1)}_m\circ A_1\|_{L_p} \leq \epsilon.
    \end{align*}
\end{theorem}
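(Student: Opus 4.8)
The plan is to reduce the sequence-to-sequence statement to the sequence-to-scalar result of \cref{thm:seq_to_scalar_appro} by decomposing the target $f:\R^{d\times n}\to\R^{d\times n}$ into its $dn$ scalar coordinate functions. Concretely, write $f(X)=\sum_{i\in[d],j\in[n]} f_{ij}(X)\,E^{ij}$, where $E^{ij}\in\R^{d\times n}$ is the matrix with a single $1$ in position $(i,j)$ and $f_{ij}:\R^{d\times n}\to\R$ is continuous on the compact support $\calX$. The key structural observation (already flagged in the proof sketch in the main text) is that the \emph{first} layer in the construction of \cref{thm:seq_to_scalar_appro} only builds the bump functions $R_{v^{(k)}}(X)$ that quantize the input — this layer does not depend on which $f_{ij}$ we are approximating. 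Only the second linear map $A_2$ (which encodes the values $[f_{ij}(v^{(1)}),\dots,f_{ij}(v^{(|G_D|)})]$) and the final single-head selection attention $\mathrm{Attn}_s^{ij}$ depend on $(i,j)$.

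First I would invoke \cref{thm:seq_to_scalar_appro} for each $f_{ij}$: there exist a shared first-layer multi-head attention $\mathrm{Attn}_m^{(1)}$ with linear map $A_1$ producing (approximately) the row vector $[R_{v^{(1)}}(X),\dots,R_{v^{(|G_D|)}}(X)]$, together with per-coordinate linear maps $A_2^{ij}$ and single-head attentions $\mathrm{Attn}_s^{ij}$, such that
\begin{align*}
  \bigl\|f_{ij}(X)-\mathrm{Attn}_s^{ij}\circ A_2^{ij}\circ\mathrm{Attn}_m^{(1)}\circ A_1(X)\bigr\|_{L_p}\le \epsilon_{\mathrm{scalar}},
\end{align*}
for any prescribed $\epsilon_{\mathrm{scalar}}>0$. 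Next I would assemble the output layer: stack the $dn$ scalar-valued blocks into the rows of a single intermediate matrix via one multi-head attention layer, and then define the second (output) multi-head attention as $\mathrm{Attn}_m^{(2)}=\sum_{i\in[d],j\in[n]} E^{ij}\,\mathrm{Attn}_s^{ij}$, which places the approximation of $f_{ij}$ exactly into entry $(i,j)$ of the output. Since $\mathrm{Attn}_m^{(2)}$ is a sum of (left-multiplied) single-head attention maps, it is itself a valid multi-head attention layer in the sense of \cref{def:attn}, possibly after absorbing the $E^{ij}$ factors into the output projections $W_O^{(h)}$. The linear map $A_2$ in the theorem statement is the common pre-processing that feeds the quantized representation into all $dn$ heads.

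Then I would sum the errors: by the triangle inequality in $L_p$ over the $dn$ coordinates (using $\|M\|_{L_p}^p=\sum_{i,j}\|M_{ij}\|_{L_p}^p$, or simply $\|M\|_{L_p}\le \sum_{i,j}\|M_{ij}e^{ij}\|_{L_p}$), the total error is at most $dn\cdot\epsilon_{\mathrm{scalar}}$ (or $(dn)^{1/p}\epsilon_{\mathrm{scalar}}$, depending on which norm bookkeeping one uses), so choosing $\epsilon_{\mathrm{scalar}}=\epsilon/(dn)$ finishes the proof. The corollary \cref{cor:seq2seq_appro_single} follows identically by substituting \cref{lem:seq2scaler_one} for \cref{thm:seq_to_scalar_appro}, which lets the selection be performed by an explicit $\Softmax$ rather than a second attention layer, and then writing the output as $\sum_{i,j}A_2^{ij}\circ\Softmax\circ\mathrm{Attn}_m^{(1)}\circ A_1$ with $A_2^{ij}$ placing the scalar output in the $(i,j)$ slot.

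The main obstacle I anticipate is purely bookkeeping rather than conceptual: one must verify that the $dn$ per-coordinate constructions genuinely \emph{share} the same first layer $\mathrm{Attn}_m^{(1)}$ and $A_1$ — i.e., that the grid $G_D$, the granularity $g$, the bump parameter $\delta$, and the softmax inverse-temperature $\beta$ can all be chosen uniformly across $(i,j)$, which works because these depend only on $\calX$ and the moduli of continuity of the (finitely many) $f_{ij}$ — and then that the aggregation step $\sum_{i,j}E^{ij}\mathrm{Attn}_s^{ij}$ does not introduce cross-talk between coordinates, which it does not since each $E^{ij}$ has disjoint support. A minor subtlety is that the second-layer selection attentions $\mathrm{Attn}_s^{ij}$ all read from the \emph{same} quantized representation, so their individual softmax selections agree on the chosen grid point $v^{(k)}$; this is actually helpful, since it means a single shared $\beta$ suffices and the failure region $\Delta_0$ (where two bump values tie) is common to all coordinates, so its contribution to the $L_p$ error is still controlled by a single application of the measure-of-$\Delta_0$ argument from the proof of \cref{thm:seq_to_scalar_appro}.
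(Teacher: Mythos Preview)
Your proposal is correct and follows essentially the same approach as the paper: decompose $f$ into its $dn$ scalar coordinates $f_{ij}$, share the first-layer bump-function approximator $\mathrm{Attn}_m^{(1)}\circ A_1$ across all coordinates, and aggregate via $\mathrm{Attn}_m^{(2)}=\sum_{i,j}E^{ij}\mathrm{Attn}_s^{ij}$ with the $(dn)^{1/p}\epsilon_{\mathrm{scalar}}$ error bookkeeping. The only cosmetic difference is that the paper makes the single shared $A_2$ explicit by stacking all $dn$ rows $[f_{ij}(v^{(1)}),\dots,f_{ij}(v^{(|G_D|)})]$ into one $(1+dn)\times|G_D|$ matrix and then having each $\mathrm{Attn}_s^{ij}$ select its row via a one-hot $e_{(i-1)n+j}$ in the value projection---this is exactly the ``common pre-processing'' you describe, so your per-coordinate $A_2^{ij}$ should be read as different row-selectors applied to this same shared $A_2$.
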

\begin{proof}
    Given $f: \R^{d\times n} \to \R^{d\times n}$, we decompose $f$ into $f_{ij}: \R^{d\times n} \to \R$, where $i\in [d], j\in [n]$ denote the entry on the $i$-th row and the $j$-th column $f$.
    Thus
    \begin{align*}
    f(X) = \begin{bmatrix}
    f_{11}(X) & \cdots & f_{1n}(X) \\
    \vdots & \ddots & \vdots \\
    f_{d1}(X) & \cdots & f_{dn}(X)
    \end{bmatrix}.
    \end{align*}
    By \cref{thm:seq_to_scalar_appro}, we approximate each $f_{ij}$ by a multi-head attention stacked with a single-head attention in the following way
    \begin{align}
    \label{eqn:dn_to_1}
        \|f_{ij}(X) - {\rm Attn}_s^{ij}\circ A_2 \circ{\rm Attn}_m\circ A_1(X)\|_p \leq \epsilon_{\rm scaler}. 
    \end{align}
    Recall that the goal of multi-head attention ${\rm Attn}_m$ in \cref{thm:seq_to_scalar_appro} is to approximate the bump function $R_{v^{(j)}}$ on all the grid point $v^{(j)} \in G_D$
    , hence it's irrelevant to the function $f_{ij}$ we aim to approximate.
    The follow-up single-head attention ${\rm Attn}^{ij}_s$ is responsible to map out the function output, hence depends on the $i,j$.

    One thing need to modify is the definition of $A_2$ in \eqref{eqn:a_2}, we need to append $dn$ rows of different function value for $f_{ij}$
    \begin{align*}
        A_2(Z) := 
        \begin{bmatrix}
            0 & 0 & \cdots & 0 \\[6pt]
            f_{11}(v^{(1)}) & f_{11}(v^{(2)}) & \cdots & f_{11}(v^{(|G_D|)}) \\[3pt]
            f_{12}(v^{(1)}) & f_{12}(v^{(2)}) & \cdots & f_{12}(v^{(|G_D|)}) \\[3pt]
            \vdots & \vdots & \ddots & \vdots \\[4pt]
            f_{dn}(v^{(1)}) & f_{dn}(v^{(2)}) & \cdots & f_{dn}(v^{(|G_D|)})
        \end{bmatrix}
        + Z.
    \end{align*}

    Also the second single layer attention need slight modification to pick out function $f_{ij}$ among $dn$ rows
\begin{align}\label{eq:single_head_seq_to_seq}
    &{\rm Attn}_s^{ij}(Z) = 
    \underbrace{
    \begin{bmatrix}
    0 & e_{k}    
    \end{bmatrix}
    }_{1\times {(1+dn)}}Z
    \Softmax(
    (R
    \underbrace{
    \begin{bmatrix}
    1 & 0_{1\times dn}    
    \end{bmatrix}
    }_{1\times (1+dn)}
    Z)^\top 
    \underbrace{
    \begin{bmatrix}
    1 & 0_{1\times dn}    
    \end{bmatrix}
    }_{1\times (1+dn)}
    Z
    )
    \one_{|G_D|},
\end{align}
where $k = (i-1)n+j$, and 
one-hot vector $e_k\in \R^{dn}$ is used to pick out the corresponding row.

This modification doesn't change the output after \eqref{eqn:no change}, hence the approximation error remain the same. 

    What remain is to combine this $d\times n$ approximations into one output matrix. 
    To combine the scalar approximations back into a $\R^{d \times n}$ map, we use the matrices $E^{ij} \in \R^{d \times n}$ whose entries is zero everywhere except for value $1$ on the $i$-th row and the $j$-th column.

    Combining $E^{ij}$ with ${\rm Attn}^{ij}_s$ we construct a new one-layer multi-head attention ${\rm Attn}^{(2)}_m$ defined as
    \begin{align}
    \label{eqn:e_ij_attn}
        {\rm Attn}^{(2)}_m
        =
        \sum_{i\in [d], j\in[n]}E^{ij}{\rm Attn}^{ij}_s,
    \end{align}
    Then by stacking with the same ${\rm Attn}_m$ denoted as ${\rm Attn}^{(1)}_m$, we construct the sequence-to-sequence approximation to be ${\rm Attn}^{(2)}_m\circ{\rm Attn}^{(1)}_m$.

    The error of approximation $\|f(X)-{\rm Attn}^{(2)}_m \circ A_2 \circ{\rm Attn}^{(1)}_m\circ A_1(X)\|_p$, when requiring $\epsilon_{\rm scaler} = \epsilon / ((dn)^{1/p})$ is
    \begin{align*}
    & ~ \|f(X)-{\rm Attn}^{(2)}_m \circ A_2 \circ{\rm Attn}^{(1)}_m\circ A_1(X)\|_p \\
    = & ~ 
    (\sum_{i\in [d], j\in[n]} |f_{ij}(X) - ({\rm Attn}^{(2)}_m \circ A_2 \circ{\rm Attn}^{(1)}_m\circ A_1(X))_{ij}|^p)^\frac{1}{p} \\
    = & ~
    (\sum_{i\in [d], j\in[n]} |f_{ij}(X) - {\rm Attn}^{ij}_s \circ A_2 \circ{\rm Attn}^{(1)}_m\circ A_1(X)|^p)^\frac{1}{p} \annot{By \eqref{eqn:e_ij_attn}}\\
    \le & ~  (dn \epsilon_{\rm scaler}^p)^\frac{1}{p} \annot{By \eqref{eqn:dn_to_1}}\\
    = & ~
    (dn)^\frac{1}{p}\epsilon_{\rm scaler} \\
    = & ~ \epsilon.
    \annot{By $\epsilon_{\rm scaler} = \frac{\epsilon}{(dn)^{1/p}}$}
    \end{align*}

For the case of using one attention layer following by a softmax function, by \cref{lem:seq2scaler_one} we know 
\begin{align*}
    \|f_{ij}(X)- A_2 \circ \Softmax \circ{\rm Attn}_m\circ A_1(X)\|_p \leq \epsilon_{\rm scaler}.
\end{align*}

Again by modifying $A_2$ to $A_2^{ij}$
\begin{align*}
A_2^{ij}(Z) \coloneqq E^{ij}Z
\underbrace{\begin{bmatrix}
f_{11}(v^{(1)}) & f_{12}(v^{(1)}) & \cdots & f_{dn}(v^{(1)}) \\
f_{11}(v^{(2)}) & f_{12}(v^{(2)}) & \cdots & f_{dn}(v^{(2)}) \\
\vdots & \vdots & \ddots & \vdots \\
f_{11}(v^{(|G_D|)}) & f_{12}(v^{(|G_D|)}) & \cdots & f_{dn}(v^{(|G_D|)})
\end{bmatrix}}_{v^{\abs{G_D}} \times dn}
e_{(i-1)n+j},
\end{align*}
and since the modification doesn't change the error analysis in \cref{proof:lem:seq2scaler_one}
, we have
\begin{align*}
    \|f_{ij}(X)- A_2^{ij} \circ \Softmax \circ{\rm Attn}_m\circ A_1(X)\|_p \leq \epsilon_{\rm scaler}.
\end{align*}

We have 
\begin{align*}
    & ~ \|f(X)- \sum_{i\in [d], j\in[n]}A_2^{ij}\circ\Softmax \circ{\rm Attn}^{(1)}_m\circ A_1(X)\|_p \\
    \leq & ~ 
    (\sum_{i\in [d], j\in[n]} |f_{ij}(X) - A_2^{ij}\circ\Softmax\circ{\rm Attn}_m\circ A_1(X)|^{p})^\frac{1}{p} \\
    \le & ~  (dn\epsilon_{\rm scaler}^p)^\frac{1}{p} \annot{By \eqref{eqn:dn_to_1}}\\
    = & ~ 
    (dn)^\frac{1}{p}\epsilon_{\rm scaler} \\
    = & ~ \epsilon.
    \annot{By $\epsilon_{\rm scaler} = \frac{\epsilon}{(dn)^{1/p}}$}
\end{align*}
    This completes the proof.
\end{proof}

\clearpage
\subsection{Proof of \texorpdfstring{\cref{thm:seq2seq_infty_appro}}{}}
\label{proof:thm:seq2seq_infty_appro}

In this section, we prove the sequence-to-sequence universal approximation of a two-layer attention mechanism in the $\ell_\infty$ norm.

We first introduce a lemma modified from \cite[Theorem 3.1]{pinkus1999approximation} and show the universal approximation theory of one layer feed-forward neural network

\begin{lemma}[Theorem 3.1 from \citet{pinkus1999approximation}, Universal Approximation Of One Layer Feed-Forward Neural Network]\label{lem:ffn_uap}
    Let $\sigma : \mathbb{R} \to \mathbb{R}$ be a continuous function. The space of functions defined by single-layer neural networks
\begin{align*}
    M(\sigma) = \left\{ g(x) = \sum_{i=1}^N \eta_i \sigma(w_i \cdot x + t_i) \mid N \in \mathbb{N}, w_i \in \mathbb{R}^d, \red{\eta_i,} t_i \in \mathbb{R} \right\},
\end{align*}
is dense in $C(K)$.
Here, $C(K)$ represents
the space of continuous functions on any compact domain $K \subset \mathbb{R}^d$, if and only if $\sigma$ is not a polynomial.
In other words, for any continuous function $f \in C(K)$ and any small error tolerance $\varepsilon > 0$, there exists a function $g \in M(\sigma)$ such that the maximum difference between $f$ and $g$ over $K$ is less than $\varepsilon$ (i.e., $\|f - g\|_\infty < \varepsilon$).
\end{lemma}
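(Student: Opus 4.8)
The statement to prove, \cref{lem:ffn_uap}, is a verbatim restatement of the classical universal approximation theorem for single-hidden-layer networks (Theorem~3.1 of \citet{pinkus1999approximation}). So the plan is not to reprove it from scratch but to record how it follows from the source and, if a self-contained argument is wanted, to sketch the standard route. First I would state that the ``only if'' direction is elementary: if $\sigma$ is a polynomial of degree $m$, then every $g \in M(\sigma)$ is a polynomial of degree at most $m$ in $x$, and polynomials of bounded degree form a finite-dimensional (hence closed, proper) subspace of $C(K)$ whenever $K$ has nonempty interior, so density fails. The substance is the ``if'' direction.

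For the ``if'' direction I would proceed in the standard three steps. Step~1: reduce to the one-dimensional case. By a Hahn--Banach / Riesz-representation argument, $M(\sigma)$ fails to be dense in $C(K)$ only if there is a nonzero signed measure $\mu$ on $K$ annihilating every function $x \mapsto \sigma(w\cdot x + t)$; restricting attention to a fixed direction $w$ and sweeping $t$, non-density would force a corresponding one-dimensional obstruction, so it suffices to show $\mathrm{span}\{\sigma(\lambda s + t) : \lambda, t \in \mathbb{R}\}$ is dense in $C[\alpha,\beta]$ for every interval. Step~2: assume first $\sigma \in C^\infty$. Then difference quotients of $t \mapsto \sigma(\lambda s + t)$ in the parameter $\lambda$ lie in the closure of $M(\sigma)$, so all derivatives $s \mapsto s^k \sigma^{(k)}(t)$ lie in it; since $\sigma$ is not a polynomial there is, for each $k$, a point $t_k$ with $\sigma^{(k)}(t_k)\neq 0$, whence every monomial $s^k$ is in the closure, and Weierstrass finishes the job. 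Step~3: remove the smoothness assumption by mollification --- convolving $\sigma$ with a smooth compactly supported bump gives a $C^\infty$ non-polynomial function whose translates/dilates are uniform limits (on compacta) of those of $\sigma$, reducing to Step~2. I would then remark that the $\mathbb{R}^d$ case with $\eta_i$ weights follows immediately since $\sigma(w_i\cdot x + t_i)$ composed with the density in each direction, together with the ``ridge function'' density argument of Vostrecov--Kreines, recovers all of $C(K)$.

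The main obstacle, honestly, is that there is nothing new to prove here: the lemma is cited, so the ``proof'' should be a one-line pointer to \citet[Theorem~3.1]{pinkus1999approximation} (with the trivial observation that the added scalars $\eta_i$ only enlarge the function class, so density is preserved). If a genuine argument is desired, the only delicate point is Step~1 --- correctly passing from the multivariate annihilating-measure condition to the univariate density statement --- since the mollification and derivative tricks in Steps~2--3 are routine. I would therefore write the proof as: ``\emph{This is Theorem~3.1 of \citet{pinkus1999approximation}; the scalars $\eta_i$ do not affect density. We sketch the argument for completeness,}'' followed by the three-step outline above, and leave the measure-theoretic reduction as the one place to be careful.

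\begin{proof}[Proof of \cref{lem:ffn_uap}]
This is Theorem~3.1 of \citet{pinkus1999approximation}; the presence of the scalar multipliers $\eta_i$ only enlarges the class $M(\sigma)$ and hence does not affect density. We recall the argument for completeness. The ``only if'' direction is immediate: if $\sigma$ is a polynomial of degree $m$, every element of $M(\sigma)$ is a polynomial of degree at most $m$, which cannot be dense in $C(K)$ when $K$ has nonempty interior. For the ``if'' direction, suppose $\sigma$ is not a polynomial. One first reduces, via a Hahn--Banach/Riesz-representation argument on annihilating measures, to showing that $\mathrm{span}\{\,s \mapsto \sigma(\lambda s + t) : \lambda, t \in \mathbb{R}\,\}$ is dense in $C[\alpha,\beta]$ for every interval $[\alpha,\beta]$. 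When $\sigma \in C^\infty$, taking difference quotients in $\lambda$ shows that each function $s \mapsto s^{k}\sigma^{(k)}(t)$ lies in the closure of this span; since $\sigma$ is not a polynomial, for every $k$ there is $t_k$ with $\sigma^{(k)}(t_k) \neq 0$, so all monomials $s^k$ lie in the closure, and the Weierstrass approximation theorem gives density. For general continuous $\sigma$, convolving with a smooth compactly supported mollifier produces a $C^\infty$ non-polynomial function whose dilates and translates are uniform limits on compacta of those of $\sigma$, reducing to the smooth case. Finally, the passage from the univariate statement to $C(K)$ for $K \subset \mathbb{R}^d$ uses the density of ridge functions (Vostrecov--Kreines). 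This proves the claim.
\end{proof}
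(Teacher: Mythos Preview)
Your proposal is correct, and in fact your one-line pointer is exactly what the paper does: it states \cref{lem:ffn_uap} as a citation of \citet[Theorem~3.1]{pinkus1999approximation} with no proof, then immediately notes that ReLU satisfies the hypotheses and moves on. Your three-step sketch (polynomial obstruction for ``only if''; Hahn--Banach reduction to one dimension, derivative trick in the smooth case, mollification for general $\sigma$) is the standard Pinkus argument and is more than the paper provides; it is correct but strictly optional here.
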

The ReLU activation function, $\sigma(x) = \max(0, x)$, satisfies the conditions of above lemma because it is continuous and not a polynomial. 
Therefore, single-layer neural networks with ReLU activations form a dense subset of $C(K)$, meaning that they approximate any continuous function on a compact set $K$ to arbitrary precision in the infinity norm.

Next we introduce a simplified version of \cref{thm:seq_approx_truncated_small_area_disabled}, where the only difference is we force the mapping function $G$ maps to a constant $r$ instead of $\tilde{k}_i \in [d_o]$ no matter what input $k_i$ is.

\begin{theorem}[Single-Head Attention Approximates Many Truncated Linear Models]
\label{thm:seq_approx_truncated}
Fix real $a<b$, and let $\mathrm{Range}_{[a,b]}(\cdot)$ be the truncation operator from \cref{def:range}.  
For a precision parameter $p>n$ with $\epsilon = O(1/p)$, there exists a single-layer, single-head self-attention  ${\rm Attn}$ with a linear transformation $A:\R^{d\times n}\to \R^{(2d+d_o+2)\times p}$, such that $\mathrm{Attn}\circ A: \mathbb{R}^{d \times n} \to \mathbb{R}^{d_o \times n}$ satisfies, for any $i\in [n]$,
\begin{align*}
    & ~ \| {\rm Attn}\circ A(X)_{:,i} - {\rm Range}_{[a,b]}(w_i^\top x_i + t_i) e_{r}\|_\infty \notag \\ \leq & ~ \underbrace{\max\{\abs{a}, \abs{b}\} \cdot \epsilon_0}_{\text{finite-$\beta$ softmax error}} + \underbrace{(b-a)/p}_{\text{interpolation error}}, \quad \text{for}\quad i\in [n],
\end{align*}
where  $e_{r}$ is a one-hot vector with a value of $1$ at the $r$-th index and $0$ elsewhere, and $r \in [d_o]$ is defined as
\begin{align*}
    &k_i := \argmin_{k \in \{0,1,\ldots,p-1\}}(-2x_i^\top w_i-2t_i+\tilde{L}_0+\tilde{L}_k)\cdot k \notag \\
    &r := G(k_i), 
\end{align*}
where $r$ is any positive integer.
\end{theorem}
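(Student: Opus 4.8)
The plan is to derive \cref{thm:seq_approx_truncated} as an immediate specialization of \cref{thm:seq_approx_truncated_small_area_disabled}. Recall that \cref{thm:seq_approx_truncated_small_area_disabled} already produces a single-layer, single-head self-attention ${\rm Attn}$ together with a linear map $A:\R^{d\times n}\to\R^{(2d+d_o+2)\times p}$ such that, for every token $i\in[n]$,
\begin{align*}
\|{\rm Attn}\circ A(X)_{:,i}-{\rm Range}_{[a,b]}(w_i^\top x_i+t_i)\,e_{\tilde k_i}\|_\infty
\le \max\{|a|,|b|\}\cdot\epsilon_0+\frac{b-a}{p},
\end{align*}
where $\tilde k_i=G(k_i)$ for an \emph{arbitrary} set-to-set map $G:\{0,\dots,p-1\}\to[d_o]$. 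The only difference in the statement to be proved is that we fix $G$ to be the constant map $G\equiv r$ for a chosen $r\in[d_o]$, so that $\tilde k_i=r$ uniformly in $i$. This is a legitimate choice of $G$, hence the conclusion follows verbatim.

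First I would invoke the ``Case (ii)'' branch of the proof of \cref{thm:seq_approx_truncated_small_area_disabled}, which is precisely the branch that requires $\tilde k_i$ to be constant across $i$ (this constancy is exactly what makes the value matrix $V=[\tilde L_0 e_{\tilde 0},\dots,\tilde L_{p-1}e_{\widetilde{p-1}}]$ collapse to $V=[\tilde L_0 e_r,\dots,\tilde L_{p-1}e_r]$, so that when softmax returns a near-one-hot or near-two-hot column the product $V\,{\rm Softmax}(K^\top Q)_{:,i}$ lands in the single coordinate $r$ and correctly averages the two relevant interpolation anchors). Under this choice, the construction of $W_Q,W_K,W_V,W_O$ and of the linear map $A$ in \eqref{eqn:A_mapping} goes through unchanged, with $\tilde L$ simply replaced by the constant-row version; the key-query product \eqref{eqn:kq} is unaffected since it does not depend on $G$ at all. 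Then I would quote the two error bounds established there: the interpolation error $|{\rm Range}_{[a,b]}(w_i^\top x_i+t_i)-\tilde L_{k_i}|\le (b-a)/p$ from the uniform partition, and the finite-$\beta$ softmax error $\le\max\{|a|,|b|\}\cdot\epsilon_0$ from \cref{lem:Soft_to_Hard}, the latter valid once $\beta\ge(\ln(n-2)-\ln\epsilon_0)/((\Delta L)^2/2)$. A triangle inequality combining the two, exactly as in the parent proof, yields the claimed bound with $e_r$ in place of $e_{\tilde k_i}$.

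There is essentially no genuine obstacle here — the statement is a corollary whose entire content is "restrict $G$ to constant maps." The only point requiring a sentence of care is to confirm that fixing $\tilde k_i\equiv r$ is \emph{compatible} with Case (ii) rather than Case (i): Case (i) allows arbitrary $\tilde k_i$ but pays with an input-dependent $\beta=O(1/\delta)$ and a small failure region, whereas Case (ii) gives the clean, input-independent $\beta$ bound precisely \emph{because} $\tilde k_i$ is constant, which is the regime we are in. Hence I would explicitly route the argument through Case (ii), note that the generalization to per-token $(w_i,t_i)$ via the elementwise-multiplication trick at the end of the parent proof also carries over unchanged, and conclude. The write-up should therefore be only a few lines: state the specialization of $G$, point to the Case (ii) construction, and transcribe the final error estimate.
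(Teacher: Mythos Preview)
Your proposal is correct and matches the paper's own treatment: the paper introduces \cref{thm:seq_approx_truncated} explicitly as ``a simplified version of \cref{thm:seq_approx_truncated_small_area_disabled}, where the only difference is we force the mapping function $G$ maps to a constant $r$,'' and provides no separate proof. Your routing through Case~(ii) (the constant-$\tilde{k}_i$ branch) is exactly the right specialization and is, if anything, more explicit than what the paper writes.
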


Now we are ready to prove attention approximate sequence-to-sequence function with a bounded error in the infinity norm.

\begin{theorem}[\cref{thm:seq2seq_infty_appro} Restated: Sequence-to-Sequence Approximation in Infinity Norm]
For any continuous function $f:\R^{d\times n} \to \R^{d\times n}$ of compact support $\cal{X}$, and any $\epsilon > 0$, we prove that when attached with linear transformations, there exists a one layer multi-head attention ${\rm Attn}_m$ stacked with one layer multi-head attention ${\rm Attn}_m$, such that when the precision parameter in \cref{thm:seq_approx_truncated} is $p = \Omega(n^{5/2})$, for any $X \in \cal{X}$
\begin{align*}
\|f(X)-{\rm Attn}_m^{(2)}\circ A\circ {\rm Attn}_m^{(1)}\circ A(X)\|_{\infty} \leq \epsilon.
\end{align*}
\end{theorem}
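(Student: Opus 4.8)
The plan is to combine the $\infty$-norm universal approximation result for one-layer ReLU feed-forward networks (\cref{lem:ffn_uap}) with the tokenwise truncated-linear approximation of attention (\cref{thm:seq_approx_truncated}), replicating the three-step architecture of \cref{thm:seq2seq_appro} but now tracking the error in $\|\cdot\|_\infty$ rather than $\|\cdot\|_{L_p}$. First I would decompose the target $f:\R^{d\times n}\to\R^{d\times n}$ into its $dn$ scalar components $f_{ij}:\R^{d\times n}\to\R$, and for each $f_{ij}$ apply \cref{lem:ffn_uap} (with $\sigma = {\rm ReLU}$, $K = \mathcal{X}$, $d$ replaced by $dn$ via a flattening of the input) to obtain a one-hidden-layer ReLU network $g_{ij}(X) = \sum_{\ell=1}^{N}\eta_\ell^{(ij)}\,{\rm ReLU}(w_\ell^{(ij)}\cdot{\rm vec}(X) + t_\ell^{(ij)})$ with $\|f_{ij} - g_{ij}\|_\infty \le \epsilon/2$. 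The key point here is that ${\rm ReLU}(z) = {\rm Range}_{[0,M]}(z)$ on any bounded domain where $|z|\le M$, so each hidden unit is a truncated linear model in the sense of \cref{def:range}, and ${\rm Range}_{[0,M]}(z)$ differs from ${\rm ReLU}(z)$ by zero on the relevant compact set once $M$ is large enough to dominate all pre-activations.

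Next I would realize the first ReLU layer with a single multi-head attention layer using \cref{thm:seq_approx_truncated}: across all $dn$ subfunctions there are at most $dnN$ distinct hidden units, and \cref{thm:seq_approx_truncated} lets one head (composed with a linear map $A$) approximate each truncated linear model $\,{\rm Range}_{[0,M]}(w_\ell^{(ij)\top}(\cdot) + t_\ell^{(ij)})$ to $\infty$-error $\max\{|a|,|b|\}\epsilon_0 + (b-a)/p$, placing the output in a designated coordinate $e_r$. Stacking these heads as in \eqref{eq:seq_to_scaler_head_complexity} gives a multi-head layer ${\rm Attn}_m^{(1)}$ whose output, after the intermediate linear map $A$, holds all the (approximated) ReLU activations; the second layer ${\rm Attn}_m^{(2)}$ then forms the weighted sums $\sum_\ell \eta_\ell^{(ij)}(\cdot)$ and routes the scalar $\widetilde{g}_{ij}$ into position $(i,j)$ using matrices $E^{ij}$ exactly as in \eqref{eqn:e_ij_attn}. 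Because the linear read-out has bounded coefficients $|\eta_\ell^{(ij)}|$ and there are finitely many of them, the per-token $\infty$-error of the attention realization of $g_{ij}$ is at most $C\,(\max\{|a|,|b|\}\epsilon_0 + (b-a)/p)$ for a constant $C$ depending on $N$, $dn$ and $\max_\ell|\eta_\ell|$; since $(b-a)$, the number of units $N$, and the bound $M$ are all $O({\rm poly}(n))$ (indeed $N = O((1/\delta_{\rm grid})^{dn})$ and $M = O(\sqrt{dn})$ times the domain radius), choosing $p = \Omega(n^{5/2})$ and $\beta$ large makes this at most $\epsilon/2$.

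Finally I would combine the two halves by the triangle inequality: $\|f(X) - {\rm Attn}_m^{(2)}\circ A\circ{\rm Attn}_m^{(1)}\circ A(X)\|_\infty \le \max_{i,j}\big(\|f_{ij} - g_{ij}\|_\infty + \|g_{ij} - \widetilde{g}_{ij}\|_\infty\big) \le \epsilon/2 + \epsilon/2 = \epsilon$, where the max over $(i,j)$ of the entrywise errors is exactly the matrix $\infty$-norm. The main obstacle I expect is the bookkeeping needed to control the constant $C$ in the second error term and, relatedly, justifying the precise scaling $p = \Omega(n^{5/2})$: one must verify that the pre-activation bound $M$, the number of heads, and the $\ell_1$-norm of the output weights all stay polynomially bounded (after possibly normalizing the ReLU network), so that the aggregated softmax/interpolation error, which enters multiplied by these quantities, still vanishes at the claimed rate. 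A secondary subtlety is confirming that \cref{thm:seq_approx_truncated} (the constant-$G$, no-failure-region variant) applies uniformly across all $dnN$ heads with a single shared $\beta$ chosen from the smallest interpolation gap, which is fine since the gap $\gamma = (\Delta L)^2/2$ is input-independent once $p$ is fixed.
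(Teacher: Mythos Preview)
Your overall strategy (decompose $f$ into $f_{ij}$, use \cref{lem:ffn_uap} for an $\infty$-norm ReLU approximation, then realize the ReLU network via two attention layers) matches the paper's, but the way you split the work between the two attention layers contains a genuine gap. You propose that the \emph{first} layer approximate each hidden unit ${\rm Range}_{[0,M]}(w_\ell^{(ij)}\cdot{\rm vec}(X)+t_\ell^{(ij)})$ directly and the second layer merely form the linear combination $\sum_\ell\eta_\ell(\cdot)$. But \cref{thm:seq_approx_truncated} is strictly \emph{tokenwise}: one head can only produce ${\rm Range}_{[a,b]}(w^\top x_i + t_i)$ for a single token $x_i\in\R^d$, not a truncated linear model of the flattened input ${\rm vec}(X)\in\R^{dn}$. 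Summing such tokenwise outputs across $i$ (as in \cref{col:univ_lemma}) gives $\sum_i{\rm Range}_{[a,b]}(w^\top x_i+t_i)$, which is \emph{not} ${\rm Range}_{[a,b]}(\sum_i w^\top x_i + t)$ because the nonlinearity and the sum over tokens do not commute. The paper resolves this by \emph{inverting} your layer roles: the first layer takes $[a,b]$ wide enough that ${\rm Range}_{[a,b]}$ is the identity on the relevant domain, so after summing over tokens (via a modified $W_O$, \eqref{eqn:modify_W_O}) one obtains the \emph{linear} pre-activations $\tilde w^\top\tilde x+\tilde t$ at each matrix entry; only the \emph{second} attention layer applies the truncation/ReLU and the $\eta$-weighted sum (see \eqref{eqn:inf_attn_range}--\eqref{eqn:attn_r_c_whole_seq}).

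A second, related gap is your justification of $p=\Omega(n^{5/2})$. You attribute it to polynomial bounds on $N$, $M$, and $\|\eta\|_1$, but in the paper this exponent arises from an entirely different mechanism: propagating the first-layer error $\|M-Y\|_\infty\le d_o n^2(|b|\epsilon_0+b/p)$ through the second attention layer via its Lipschitz constant in $\|\cdot\|_{2,\infty}$ (borrowed from \cite{edelman2022inductive}), picking up an extra $\sqrt n$ from $\|\cdot\|_{2,\infty}\le\sqrt n\|\cdot\|_\infty$. Because your second layer is (intended to be) purely linear, you never invoke a Lipschitz bound, and your accounting cannot recover $n^{5/2}$ in particular. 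If you want to follow the paper's route, you need the first layer to output pre-activations (not post-activations), apply the nonlinearity in the second layer, and then control the error via the attention Lipschitz estimate.
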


\begin{proof}
Given $f: \R^{d\times n} \to \R^{d\times n}$, we decompose $f$ into $f_{ij}: \R^{d\times n} \to \R$, where $i\in [d], j\in [n]$ denote the entry on the $i$-th row and the $j$-th column $f$.
Thus
\begin{align*}
f(X) = \begin{bmatrix}
f_{11}(X) & \cdots & f_{1n}(X) \\
\vdots & \ddots & \vdots \\
f_{d1}(X) & \cdots & f_{dn}(X)
\end{bmatrix}.
\end{align*}

We aim to construct attention layer to approximate function $f_{ij}$ in the form of 
\begin{align*}
    {\rm FFN}({\rm vec}(X)) = \sum_{i=1}^N \eta_i {\rm ReLU} (w^\top_i {\rm vec}(X) + t_i),
\end{align*}
where ${\rm vec}(X) \in \R^{dn}$ is the flatten operation.

We achieve this by modifying the proof of \cref{thm:seq_approx_truncated} and sum over the multi-head attention output to make each entry of the multi-head attention output is in the form of 
\begin{align*}
    \sum_{i=1}^N \eta_i {\rm ReLU} (w^\top_i X + t_i).
\end{align*}

First, we set the mapping function $G$ to map each $k_i$ to the same row $r$, that is $G(k_i) = r$.
Thus the value matrix $V$ become 
\begin{align}\label{eqn:modify_V}
    V =  
\begin{bmatrix}
    0_{(r-1) \times p} \\
    \tilde{L}^\top \\
    0_{(d_0-r) \times p}
\end{bmatrix}
\in \R^{d_0 \times p},
\end{align}

Then we modify the $W_O$ matrix from having an identity matrix on the upper $n \times n$ block, to having a $n \times n$ matrix with the $c$-th column is $\eta \in \R^n$ entry and other entry is $0$
\begin{align}\label{eqn:modify_W_O}
    W_O = 
    \begin{bmatrix}
         \eta e^\top_c \\
         0_{(p-n)\times n} 
    \end{bmatrix} \in \R^{p \times n}, \quad
\end{align}
where $e_c \in \R^n$,
then 
\begin{align*}
    V \Softmax(K^\top Q)W_O &= 
    \underbrace{\begin{bmatrix}
    0_{(r-1) \times p} \\
    \tilde{L}^\top \\
    0_{(d_0-r) \times p}
    \end{bmatrix}}_{d_0 \times p}
    \underbrace{\begin{bmatrix}
    \alpha_1 & \alpha_2 & \cdots & \alpha_n & 0_{p\times (p-n)}\\
    \end{bmatrix}}_{p \times p}
    \underbrace{\begin{bmatrix}
         \eta e^\top_c \\
         0_{(p-n)\times n} 
    \end{bmatrix}}_{p \times n},
\end{align*}
then the following approximation error
\begin{align*}
    \|V \Softmax(K^\top Q)W_O - V [e_{k_1},e_{k_2},\cdots,e_{k_n}]\|_\infty
    <
    \abs{b} \cdot \epsilon_0,
\end{align*}
should become
\begin{align*}
    \|V \Softmax(K^\top Q)W_O - \sum_{i=1}^n \eta_i \tilde{L}_{k_{i}} e_r e_c^\top \|_\infty
    <
    \|\eta\|_\infty \cdot n\cdot \abs{b} \cdot\epsilon_0,
\end{align*}
where the outer product $e_r e_c^\top$ create a matrix with $1$ at $(r,c)$ and $0$ elsewhere.
We denote the attention with modifications in \eqref{eqn:modify_V} and \eqref{eqn:modify_W_O} as ${\rm Attn}_{r,c}$.

Lastly, the error of the interpolation point is
\begin{align*}
    \abs{ \sum_{i=1}^n \eta_i\tilde{L}_{k_i}
    -
    \sum_{i=1}^n \eta_i{\rm Range}_{[a,b]}(w_i^\top x_i+t_i)} & \leq 
    \sum_{i=1}^n \eta_i\abs{\tilde{L}_{k_i}-{\rm Range}_{[a,b]}(w_i^\top x_i+t_i)} \annot{By triangle inequality} \\ 
    & = \|\eta\|_\infty \cdot n\cdot \frac{b-a}{p}.
\end{align*}

Thus we have
\begin{align}\label{eqn:inf_attn_range}
\|{\rm Attn}_{r,c}(X) -  \cdot(\sum_{i=1}^n \eta_i {\rm Range}_{[a,b]}(w_i^\top x_i+t_i)) e_r e_c^\top\|_\infty \leq \|\eta\|_\infty \cdot n \cdot(\abs{b} \cdot\epsilon_0 + \frac{b-a}{p}).
\end{align}

In fact, if we assume for every $i$ we have $a \leq (w_i^\top x_i+t_i) \leq b$ and $\eta_i = 1$ for $i\in[n]$, the term $\sum_{i=1}^n {\rm Range}_{[a,b]}(w_i^\top x_i+t_i)$ become
\begin{align}\label{eqn:inf_range_who_seq}
\sum_{i=1}^n {\rm Range}_{[a,b]}(w_i^\top x_i+t_i) = \sum_{i=1}^n (w_i^\top x_i+t_i) =  \tilde{w}^\top \tilde{x} + \tilde{t},  
\end{align}
where $\tilde{x} \in \R^{dn}$ is the flatten vector of input sequence $X$, with $\tilde{w} = [w_1^\top, \cdots, w_n^\top] \in \R^{dn}$ and $\tilde{t} = [t_1^\top, \cdots, t_n^\top] \in \R^{dn}$.

Hence \eqref{eqn:inf_attn_range} become
\begin{align}\label{eqn:attn_r_c_whole_seq}
\|{\rm Attn}_{r,c}(X) - \cdot (\tilde{w}^\top \tilde{x} + \tilde{t}) e_r e_c^\top\|_\infty \leq n \cdot (\abs{b} \cdot\epsilon_0 + \frac{b-a}{p}).
\end{align}

Recall that we aim to approximate $f_{rc}(\cdot):\R^{d \times n} \to \R$ by showing attention mechanism approximate FFN with $N$ neurons $\sum_{i=1}^{N} \eta_i {\rm ReLU}(\tilde{w}_{n(r'-1)+c', i}^\top \tilde{x} + \tilde{t}_i)$ in the $(r,c)$-th entry of the attention output. 
Until now we success to construct one-layer single-head attention layer ${\rm Attn}_{r,c}(\cdot)$ whose
output $(r,c)$ entry is a linear model on the whole sequence $\eta (\tilde{w}^\top \tilde{x} + \tilde{t})$ by \eqref{eqn:attn_r_c_whole_seq}.

What left is to
use a second attention layer in \cref{thm:seq_approx_truncated} to create the ${\rm ReLU}$ function and sum them up.
We know by \cref{thm:seq_approx_truncated}, the attention layer take one input token into truncated linear model. 

We construct the first layer multi-head attention with $d_o n$ head, each head is ${\rm Attn}_{r,c}^{(1)}(\cdot)$ for $r\in[d_o], c \in [n]$.

Pass it to the second layer of another ${\rm Attn}_{r',c'}^{(2)}(\cdot)$ where $r'\in[d_o], c'\in[n]$, and $k\in[d_o]$ for later use, we have
\begin{align*}
    {\rm Attn}_{r',c'}^{(2)}(\sum_{r\in[d_o],c \in [n]} {\rm Attn}_{r,c}^{(1)}(X)) 
    & \approx {\rm Attn}_{r',c'}^{(2)}
    (\underbrace{
    \begin{bmatrix}
        \tilde{w}^\top_{1,1} \tilde{x} + \tilde{t}_{1,1}& \cdots & \tilde{w}^\top_{1,n} \tilde{x} + \tilde{t}_{1,n} \\
        \vdots &\vdots &\vdots \\
        \tilde{w}^\top_{d_o,1} \tilde{x} + \tilde{t}_{d_o,1}& \cdots & \tilde{w}^\top_{d_o,n} \tilde{x} + \tilde{t}_{d_o,n}
    \end{bmatrix}}_{d_o \times n}) \annot{By \eqref{eqn:attn_r_c_whole_seq}} \\
    & \approx \sum_{i=1}^n \eta_i {\rm Range}_{[a,b]}(w_{k}^\top 
    \begin{bmatrix}
        \tilde{w}^\top_{1,i} \tilde{x} + \tilde{t}_{1,i}\\
        \tilde{w}^\top_{2,i} \tilde{x} + \tilde{t}_{2,i}\\ \\
        \vdots \\
        \tilde{w}^\top_{d_o,i} \tilde{x} + \tilde{t}_{d_o,i}\\
    \end{bmatrix}
    +t_{k})) e_{r'} e_{c'}^\top \annot{By \eqref{eqn:inf_attn_range}} \\
    & = \sum_{i=1}^n \eta_i{\rm ReLU}(\tilde{w}^\top_{n(r'-1)+c',i} \tilde{x} + \tilde{t}_{k,i}) e_{r'} e_{c'}^\top \annot{By letting $w_k = e_k$, $t_k = 0$, $a=0$, and $\tilde{w}^\top_{k,i} \tilde{x} + \tilde{t}_{k,i} \leq b$}.
\end{align*}
Denote $M \coloneqq \sum_{r\in[d_o],c \in [n]} {\rm Attn}_{r,c}^{(1)}(X)$ and $ Y \coloneqq     \begin{bmatrix}
        \tilde{w}^\top_{1,1} \tilde{x} + \tilde{t}_{1,1}& \cdots & \tilde{w}^\top_{1,n} \tilde{x} + \tilde{t}_{1,n} \\
        \vdots &\vdots &\vdots \\
        \tilde{w}^\top_{d_o,1} \tilde{x} + \tilde{t}_{d_o,1}& \cdots & \tilde{w}^\top_{d_o,n} \tilde{x} + \tilde{t}_{d_o,n}
    \end{bmatrix}$.
    
The approximation error is 
\begin{align}
& ~ \| {\rm Attn}_{r',c'}^{(2)}(M) - \sum_{i=1}^n \eta_i {\rm ReLU}(\tilde{w}_{n(r'-1)+c', i}^\top \tilde{x} + \tilde{t}_i)e_r e_c^\top \|_\infty \notag \\
\leq & ~ \| {\rm Attn}_{r',c'}^{(2)}(M) - {\rm Attn}_{r',c'}^{(2)}(Y) \|_\infty 
+ \| {\rm Attn}_{r',c'}^{(2)}(Y) - \sum_{i=1}^n \eta_i {\rm ReLU}(\tilde{w}_{n(r'-1)+c', i}^\top \tilde{x} + \tilde{t}_i)e_r e_c^\top \|_\infty \notag \\
\leq & ~ \| {\rm Attn}_{r',c'}^{(2)}(M) - {\rm Attn}_{r',c'}^{(2)}(Y) \|_{2,\infty} 
+ \| {\rm Attn}_{r',c'}^{(2)}(Y) - \sum_{i=1}^n \eta_i {\rm ReLU}(\tilde{w}_{n(r'-1)+c', i}^\top \tilde{x} + \tilde{t}_i)e_r e_c^\top \|_\infty \annot{By $\|A\|_{\infty,\infty} \leq \|A\|_{2,\infty}$} \notag \\
\leq & ~ \| {\rm Attn}_{r',c'}^{(2)}(M) - {\rm Attn}_{r',c'}^{(2)}(Y) \|_{2,\infty} + 
n (\abs{b} \cdot\epsilon_0 + \frac{b}{p}) \annot{By \eqref{eqn:inf_attn_range}} \\
\leq & ~ \|W_O\|_\infty \|W_V^\top\|_2 \left( 1 + 4 \|W_K^\top W_Q\|_2 \right) \|A(M) - A(Y) )\|_{2,\infty} + n (\abs{b} \cdot\epsilon_0 + \frac{b}{p}) \annot{By lipschitzness of attention modifying from \cite[Lemma A.14]{edelman2022inductive}} \\
= & ~ \|W_O\|_\infty \|W_V^\top\|_2 \left( 1 + 4 \|W_K^\top W_Q\|_2 \right) \|M-Y\|_{2,\infty} + n (\abs{b} \cdot\epsilon_0 + \frac{b}{p}) \annot{$A$ preserves column norm} \\
\leq & ~ \|W_O\|_\infty \|W_V^\top\|_2 \left( 1 + 4 \|W_K^\top W_Q\|_2 \right) \sqrt{n}\|M-Y\|_{\infty}
+ n (\abs{b} \cdot\epsilon_0 + \frac{b}{p}) \annot{By $\| A \|_{2,\infty} \leq  \| A \|_{\infty}$}\\
\leq & ~ \|W_O\|_\infty \|W_V^\top\|_2 \left( 1 + 4 \|W_K^\top W_Q\|_2 \right) \cdot d_on^{\frac{5}{2}} (\abs{b} \cdot\epsilon_0 + \frac{b}{p}) + n (\abs{b} \cdot\epsilon_0 + \frac{b}{p}) \notag\\
= & ~ (\abs{b} \cdot\epsilon_0 + \frac{b}{p}) \cdot (\|W_O\|_\infty \|W_V^\top\|_2 \left( 1 + 4 \|W_K^\top W_Q\|_2 \right) \cdot d_on^{\frac{5}{2}} +n) \label{eqn:inf_bound},
\end{align}
where we show $A(\cdot)$ does not change the maximum column norm and the term $\|M-Y\|_\infty$ is bounded as follow.
To see why $A(\cdot)$ preserves the maximum column norm, from \eqref{eqn:A_mapping} we know
\begin{align*}
A(X) & ~= \underbrace{\begin{bmatrix}
         I_d  \\
         0_{(d+d_o+2})\times d 
    \end{bmatrix}}_{L}
    X
    \underbrace{\begin{bmatrix}
    I_n,0_{n\times (p-n)}
    \end{bmatrix}}_{R} + {\rm Constant} \\
    & ~ = LXR +{\rm Constant},
\end{align*}
and we have
\begin{align*}
    A(X_1)-A(X_2) = L(X_1 - X_2)R.
\end{align*}

Let $X = X_1-X_2$, we aim to show
\begin{align*}
    \|LXR\|_{2,\infty} = \|X\|_{2,\infty}.
\end{align*}

By definition we have
\begin{align*}
    \|LXR\|_{2,\infty} = \max_{j \in [p]}\|(LXR)_{:,j}\|_2,
\end{align*}
that is, the maximum Euclidean norm of column vector of $LXR$.
However, we know the effect of $XR$ is to create $0$ column vector on the right of $X$, and $L(XR)$ just create zero row vector to $X$.
Hence, the maximum Euclidean norm of column vector of $X$ is the same as that of $LXR$, that is $\|LXR\|_{2,\infty} = \|X\|_{2,\infty}$.

To bound the term $\|M-Y\|_\infty$, first denote $E_{r,c} = \mathrm{Attn}_{r,c}^{(1)}(X) - \left( \tilde{w}_{r,c}^{\top} \tilde{x} + \tilde{t}_{r,c} \right) e_r e_c^{\top}
$, then $M-Y = \sum_{r,c}E_{r,c}$. 
We have
\begin{align*}
    \| M - Y \|_\infty 
    & =\| \sum_{r=1}^{d_o} \sum_{c=1}^{n} E_{r,c} \|_{\infty} \notag \\
    &\leq \sum_{r=1}^{d_o} \sum_{c=1}^{n} \| E_{r,c} \|_{\infty} \annot{By triangle inequality} \notag \\
    & \leq \sum_{r=1}^{d_o} \sum_{c=1}^{n} n (\abs{b} \cdot\epsilon_0 + \frac{b}{p}) \annot{By \eqref{eqn:attn_r_c_whole_seq}} \\
    & =d_o n^2(\abs{b} \cdot\epsilon_0 + \frac{b}{p}).
\end{align*}

For the bound on \eqref{eqn:inf_bound}, $\epsilon_0$ is arbitrarily small when $\beta$ in softmax function is sufficiently large.
If we further set $p=\Omega(n^{5/2})$, \eqref{eqn:inf_bound} is bounded or be arbitrary small when $n$ increase.

Hence for now we construct a multihead attention whose output $(r',c')$ entry is an approximation of an FFN that is a universal approximator of every continuous function defined on compact domain $f_{r'c'}:\R^{dn} \to \R$.

Thus combine the error of attention approximate ReLU neural network and the error of ReLU network approximate target function we have
\begin{align}\label{eqn:inf_one_entry_error}
& ~ \abs{({\rm Attn}_{r',c'}^{(2)}(\sum_{r\in[d_o],c \in [n]} {\rm Attn}_{r,c}^{(1)}(X)))_{r',c'}  - f_{r'c'}(X)} \notag \\
\leq
& ~ \abs{({\rm Attn}_{r',c'}^{(2)}(\sum_{r,c} {\rm Attn}_{r,c}^{(1)}(X)))_{r',c'} - \sum_{i=1}^{n} \tilde{w}_{n(r'-1)+c', i}^\top \tilde{x} + \tilde{t}_i}
+ \abs{\sum_{i=1}^{n} \tilde{w}_{n(r'-1)+c', i}^\top \tilde{x} + \tilde{t}_i - f_{r'c'}(X)} \notag \\
\leq & ~ (\abs{b} \cdot\epsilon_0 + \frac{b}{p}) \cdot (\|W_O\|_\infty \|W_V^\top\|_2 \left( 1 + 4 \|W_K^\top W_Q\|_2 \right) \cdot d_on^{\frac{5}{2}} +n) + \varepsilon \leq \epsilon, \quad \text{for} \quad p = \Omega(n^{5/2}).
\end{align}
where $\varepsilon$ is the approximation error of the ReLU network, and the second inequality comes from triangle inequality.
The universal approximation theory of ReLU neural network in infinity norm is shown in \cite[Theorem 3.1]{pinkus1999approximation}.

Note that we remove the restriction that the neurons of FFN we aim to approximate is restricted by $n$ by increasing the output sequence length of the first layer ${\rm Attn_{r,c}}$.
We achieve simply by increasing $n$ in the matrix $W_O$ of our attention to arbitrary positive integer $N$.

Finally, by constructing $dn$ head of this second layer attention ${\rm Attn}_{r',c'}^{(2)}$, 
and set $d_o = dn$ in the first layer attention $\sum_{r\in[d_o],c \in [N]} {\rm Attn}_{r,c}^{(1)}$  
we get
\begin{align*}
    & ~ \|f(X)-\sum_{r'\in[d],c'\in[n]} {\rm Attn}_{r',c'}^{(2)}(\sum_{r\in[d_o],c \in [N]} {\rm Attn}_{r,c}^{(1)}(X))\|_{\infty} \\
    = & ~ 
    \max_{r' \in [d], c'\in[n]}  \abs{f_{r'c'}(X) - ({\rm Attn}_{r',c'}^{(2)}(\sum_{r\in[d_o],c \in [N]} {\rm Attn}_{r,c}^{(1)}(X)))_{r',c'}}  \\
    \leq & ~  \epsilon. 
    \annot{By \eqref{eqn:inf_one_entry_error} each $(r,c)$-th difference is at most $\epsilon$, the $\max_{r,c}$ is also most $\epsilon$}
\end{align*}

    This completes the proof.
\end{proof}

\subsection{Proofs of \texorpdfstring{\cref{thm:in_context_truncated}}{}}
\label{proof:thm:in_context_truncated}

\begin{remark}[Key Technique]
\label{rem:key_technique}
    \begin{align*}
    & ~ \argmin_{k\in \{0,1,2,\cdots,p-1\}} (w_i^\top x_i+t_i-\tilde{L}_k)^2\\
    = & ~
    \argmin_{k\in \{0,1,2,\cdots,p-1\}} (-2w_i^\top x_i-2t_i)\cdot \tilde{L}_k+\tilde{L}_k^2+(w_i^\top x_i+t_i)^2 \\
    = & ~
    \argmin_{k\in \{0,1,2,\cdots,p-1\}} (-2w_i^\top x_i-2t_i)\cdot (\tilde{L}_k-\tilde{L}_0)-\tilde{L}_0^2+\tilde{L}_k^2 \annot{$(w_i^\top x_i+t_i)$ and $\tilde{L}_0$ are constant w.r.t. $k$}\\
    = & ~
    \argmin_{k\in \{0,1,2,\cdots,p-1\}} (-2w_i^\top x_i-2t_i+\tilde{L}_0+\tilde{L}_k)\cdot (\tilde{L}_k-\tilde{L}_0) \\
    = & ~
    \argmin_{k\in \{0,1,2,\cdots,p-1\}} (-2w_i^\top x_i-2t_i+\tilde{L}_0+\tilde{L}_k)\cdot k\Delta L \\
    = & ~ \argmin_{k\in \{0,1,2,\cdots,p-1\}}
    (-2w_i^\top x_i-2t_i+\tilde{L}_0+\tilde{L}_k)\cdot k .
    \annot{Multiply a positive constant doesn't change $\argmin(\cdot)$}
    \end{align*}
\end{remark}

We first prove the in-context version of our main theorem.

\begin{theorem}[\cref{thm:in_context_truncated} Restated]
    Fix real numbers $a < b$, and let the truncation operator ${\rm Range}_{[a,b]}(\cdot)$ follow \cref{def:range}.
Let
\begin{align*}
X = 
\underbrace
{
\begin{bmatrix}
    x_1 & x_2 & \cdots & x_n\\
    w & w & \cdots & w \\
    t & t & \cdots & t
\end{bmatrix}
}_{2d+1\times n},
\end{align*}
where $w,x_i(i\in [n])$ are bounded.
For a precision parameter $p \in \mathbb{N}_+$ satisfying $p > n$, let 
$\epsilon = O\bigl(\frac{1}{p}\bigr)$.  
Then, for any $\epsilon > 0$ and $d,d_0\in \mathbb{N}_+$, there exists a single-layer, single-head self-attention with linear transformation $A$: 
$\mathrm{Attn}\circ A: \mathbb{R}^{d \times n} \to \mathbb{R}^{d_o \times n}$ both irrelevant to w and t such that

\begin{align*}
    & ~ \| {\rm Attn}\circ A(X)_{:,i} - {\rm Range}_{[a,b]}(w_i^\top x_i + t_i) e_{\tilde{k}_i}\|_\infty \leq \underbrace{\abs{b} \cdot \epsilon_0}_{\text{finite-$\beta$ softmax error}} + \underbrace{(b-a)/p}_{\text{interpolation error}}, \quad \text{for}\quad i\in [n],
\end{align*}
where $\tilde{k}_i \in [d_o]$ is defined as
\begin{align*}
    k_i = & ~ \argmin_{k \in \{0,1,\cdots,p-1\}}((-2x_i^\top w-2t+\tilde{L}_0+\tilde{L}_k)\cdot k), \\
    \tilde{k}_i = & ~ G(k_i).
\end{align*}
Here $G: [p]\to[d_o]$ denotes any set-to-set function sending each integer $k_i$ into an appropriate interpolation index   $\tilde{k}_i\in [d_o]$ for $i\in [n]$, and $e_{\tilde{k}_j} \in \mathbb{R}^{d_0}$  denotes a one-hot vector with a value of $1$ at the $\tilde{k}_i$-th index and $0$ elsewhere.
\end{theorem}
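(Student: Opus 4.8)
The plan is to reduce \cref{thm:in_context_truncated} to the construction already carried out in \cref{thm:seq_approx_truncated_small_area_disabled}, with the key new twist being that the weight $w$ and bias $t$ now live \emph{inside} the input sequence $X$ rather than being baked into the linear map $A$. So the first step is to build a linear transformation $A:\R^{(2d+1)\times n}\to\R^{(2d+d_o+2)\times p}$ that reads off $x_i$, $w$, and $t_i$ from the columns of $X$ and re-assembles them into the padded matrix of the form \eqref{eqn:A_mapping}: the input tokens $x_i$ in the first $d$ rows, the scaled weights $kw$ in the next $d$ rows (this is the only place we need the sequence-wise part of $A$, since producing $0w,1w,\dots,(p-1)w$ from $n$ identical copies of $w$ requires mixing across columns), the interpolation-dependent scalars $\ell_k$ encoded using $t$, the value-matrix block $\tilde L$ with the anchors placed in rows $\tilde k = G(k)$, and the constant row of ones followed by zeros. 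Because $A$ is linear and $w,t$ appear linearly in the target encoding, this is routine; the sequence-wise right-multiplication $[I_n,\,0]$ selects the original $n$ columns and the additive term handles the padding columns $n,\dots,p-1$.

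Once $A(X)$ has the same form as in the proof of \cref{thm:seq_approx_truncated_small_area_disabled}, the second step is to copy that proof essentially verbatim: choose $W_Q,W_K$ so that $[K^\top Q]_{k,i}=-\beta^2(-2x_i^\top w-2t+\tilde L_0+\tilde L_k)\cdot k$, invoke the algebraic equivalence \eqref{eq:equiv_objectives} (restated here as \cref{rem:key_technique}) to see that the $\argmax$ over $k$ of this quantity equals $\argmin_k|x_i^\top w+t-\tilde L_k|=k_i$, apply \cref{lem:Soft_to_Hard} to get $\Softmax_\beta(K^\top Q)_{:,i}\approx e_{k_i}$, then set $W_V$ to extract $\tilde L=[\tilde L_0 e_{\tilde 0},\dots,\tilde L_{p-1}e_{\widetilde{p-1}}]$ and $W_O$ to drop the extra $p-n$ columns. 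The error decomposition is identical: $\max\{|a|,|b|\}\cdot\epsilon_0$ from the finite-$\beta$ softmax (Case (i) of \cref{lem:Soft_to_Hard} when $\tilde k_i$ is allowed to vary, giving $\beta=O(1/\delta)$ and an excluded small-volume region; Case (ii) when $\tilde k_i\equiv$ const, giving $\beta=O(1/\gamma)$ with $\gamma=(\Delta L)^2/2$ and no failure region) plus $(b-a)/p$ from interpolation. The final generalization to per-token $w_i$ via elementwise multiplication $w_i = w\odot v_i$ is the same trick used at the end of the proof of \cref{thm:seq_approx_truncated_small_area_disabled}.

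The main obstacle — and the reason the proof sketch in the excerpt flags it via \cref{rem:key_technique} — is that in \cref{thm:seq_approx_truncated_small_area_disabled} the scalars $\ell_k=k(\tilde L_k+\tilde L_0)-2kt$ could be hard-coded into $A$ because $t$ was a fixed model parameter, whereas here $t$ is data. To place $\ell_k$ into row $2d+1$ of $A(X)$ we need an entry proportional to $kt$ for each padding index $k$, i.e.\ a sequence-wise linear combination of the constant-$t$ row of $X$ with coefficients $0,1,\dots,p-1$ — which the linear map $A$ can do, but only because we permit the sequence-wise operation (exactly as in \cref{remark:A_padding}). The more delicate point is that the attention score is forced to be a \emph{quadratic} form $A(X)^\top W_K^\top W_Q A(X)$, yet the quantity we want, $(x_i^\top w+t-\tilde L_k)^2$, expanded out contains the genuinely \emph{quartic}-in-input term $(x_i^\top w)^2$ once $w$ is itself part of the input $X$; the resolution is to never form that square — instead, following \cref{rem:key_technique}, we keep the objective in the shift-equivalent form $(-2x_i^\top w-2t+\tilde L_0+\tilde L_k)\cdot k$, which is only \emph{bilinear} in $(x_i^\top w,\,\tilde L_k)$ and hence realizable as one factor living in the query (carrying $x_i$ and the constant $1$) and the other in the key (carrying $kw$, $\ell_k$), so that $K^\top Q$ reproduces it exactly with no higher-order terms. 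Verifying that this bilinear encoding still selects the correct $k_i$ — i.e.\ that dropping the $k$-independent constants $\tilde L_0^2$ and $(x_i^\top w+t)^2$ does not change the $\argmax$, and that \cref{lem:Soft_to_Hard}'s gap hypotheses ($\delta$ or $\gamma$) survive the $\beta^2$ rescaling and the bilinear form — is the one place where care is genuinely needed; everything downstream is bookkeeping.
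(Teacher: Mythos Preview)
Your proposal is correct and matches the paper's approach essentially line-for-line: construct the linear map $A$ so that $A(X)$ reproduces the padded matrix \eqref{eqn:A_mapping} (now extracting $w$ and $t$ from the input columns via the sequence-wise right factor $R_2$ with coefficients $0,1,\dots,p-1$), then invoke the remainder of the proof of \cref{thm:seq_approx_truncated_small_area_disabled} verbatim. Your third paragraph on the quartic-vs-bilinear point is more explicit than the paper's own proof (which simply writes out $A$ and says ``the remaining proof is the same''), but it correctly captures the content of \cref{rem:key_technique} and the proof sketch, so nothing is missing or off-track.
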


\begin{proof}

Before we plug the input token to the self-attention, we preprocess it with linear transformations $A: \R^{d \times n} \to \R^{(2d+d_0+2)\times p}$. Without loss of generality, we set the precision parameter $p\in \mathbb{N}$ defined in \cref{def:interpolation} to be larger than input sequence length $n$.

First we denote $\ell_k := k\tilde{L}_k + k\tilde{L}_0 - 2kt$, $\tilde{L}_k$ following \cref{def:interpolation}.

Define a linear transform $A$ such that
\begin{align*}
    A(X) = & ~
    \underbrace{\begin{bmatrix}
         I_d  & 0_{d \times d+1}\\
         0_{(d+d_o+2)\times d} & 0_{(d+d_o+2)\times (d+1)}  
    \end{bmatrix}}_{{(2d+d_0+2) \times (2d+1)}}
    X
    \underbrace{\begin{bmatrix}
    I_n,0_{n\times (p-n)}
    \end{bmatrix}}_{{n \times p}} \notag \\
    & ~ +
    \underbrace{\begin{bmatrix}
         0_{d \times d} & 0_{d \times d} & 0_d\\
         0_{d \times d} & I_d & 0_d\\
         0_{1 \times d} & 0_{1\times d} & -1 \\
         0_{(d_o+1)\times d} & 0_{(d_o+1)\times d} & 0_{d_o+1}   
    \end{bmatrix}}_{{(2d+d_0+2) \times (2d+1)}}
    X
    \underbrace{\begin{bmatrix}
    0 & 1 & \cdots & (p-1)\\
    0_{n-1} & 0_{n-1} & \cdots & 0_{n-1}
    \end{bmatrix}}_{{n \times p}}
    \\ & ~
    +
    \underbrace
    {\begin{bmatrix}
    0_d & 0_d & \cdots & 0_d & 0_d & \cdots & 0_d\\
    0_d & 0_d & \cdots & 0_d & 0_d & \cdots & 0_d\\
    0 & \tilde{L}_1 + \tilde{L}_0 & \cdots & (n-1)(\tilde{L}_{n-1} + \tilde{L}_0) & n(\tilde{L}_n + \tilde{L}_0) & \cdots & (p-1)(\tilde{L}_{p-1} + \tilde{L}_0) \\
    & & & \tilde{L}_{d_o \times p} & & & \\
    1 & 1 & \cdots & 1 & 0 & \cdots & 0
    \end{bmatrix}}_{(2d+d_o+2)\times p} \notag\\ =
    & ~ 
    \underbrace
    {\begin{bmatrix}
        x_1 & x_2 & \cdots & x_n & 0 & \cdots & 0\\
        0_d & w & \cdots & (n-1)w & nw & \cdots & (p-1)w\\
        0 & \ell_1 & \cdots & \ell_{n-1} & \ell_{n} & \cdots & \ell_{p-1}\\
        & & & \tilde{L}_{d_o \times p} & & & \\\
        1 & 1 & \cdots & 1 & 0 & \cdots & 0
    \end{bmatrix}}_{(2d+d_o+2)\times p},
\end{align*}
where $\tilde{L} = [\tilde{L}_0 e_{G(1)}, \cdots,\tilde{L}_j e_{G(j)}, \tilde{L}_{p-1}e_{G(p)}] \in \R^{d_0 \times p}$.

The output of this linear mapping is the same as the linear mapping output \eqref{eqn:A_mapping} in \cref{proof:thm:seq_approx_truncated_small_area_disabled}, hence the remaining proof is the same.

This completes the proof.
\end{proof}

\clearpage
\subsection{Proofs of \texorpdfstring{\cref{thm:in_context_GD}}{}}
\label{proof:thm:in_context_GD}
\begin{theorem}[Restate of \texorpdfstring{\cref{thm:in_context_GD}}{}]
    Let $l:\R\times \R\to \R$ be any $C^1$ convex loss function defined on $(w^\top x_i, y_i)$. 
    With input $X$ in the form of \cref{def:input_in_context}, when $X$ is bounded, there exists a multi-head self-attention ${\rm Attn}_m$ whose parameters are irrelevant $X$, with skip connections and each attached with a linear layer, such that for any $\epsilon >0$, we have
    \begin{align*}
        & \left\|
        {\rm Attn}_m\circ A(X) 
        -
        \begin{bmatrix}
            x_1 &  \cdots & x_n\\
            y_1 & \cdots & y_n\\
            w-\eta\nabla L(w)  & \cdots & w-\eta\nabla L(w)\\
            1  & \cdots & 1
        \end{bmatrix}
        \right\|_\infty
         \leq 
        \epsilon,
    \end{align*}
    where $\eta$ denotes the learning rate and $L(w):= (1/n) \sum_{i=1}^n l(w^\top x_i,y_i)$ is an empirical loss upon the given input-output pairs.
\end{theorem}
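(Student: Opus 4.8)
The plan is to reduce the in-context gradient-descent claim to the in-context truncated-linear-model result (\cref{thm:in_context_truncated}) together with the classical universal approximation theorem for two-layer ReLU networks (\cref{lem:ffn_uap}), following the strategy already sketched after the theorem statement. The key observation is that one step of gradient descent on $L(w)=\frac{1}{n}\sum_{i=1}^n l(w^\top x_i,y_i)$ updates $w\mapsto w-\eta\nabla L(w)$, where $\nabla L(w)=\frac{1}{n}\sum_{i=1}^n \partial_1 l(w^\top x_i,y_i)\, x_i$. So the $r$-th coordinate of the update increment is $-\frac{\eta}{n}\sum_{i=1}^n \partial_1 l(w^\top x_i,y_i)\, x_{i,r}$, a sum over the in-context examples of a fixed scalar function $g(u,v):=\partial_1 l(u,v)$ evaluated at $(w^\top x_i, y_i)$, rescaled by the known coordinate $x_{i,r}$. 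Since $l$ is $C^1$ and $X$ is bounded, $g$ is continuous on a compact set, so by \cref{lem:ffn_uap} there is a two-layer ReLU network $\sum_{h=1}^{H}\eta_h\,\mathrm{ReLU}(a_h u + b_h v + c_h)$ approximating $g(u,v)$ uniformly to any tolerance on the relevant compact domain.

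First I would set up the input embedding $A(\cdot)$: from the input $X$ in the form of \cref{def:input_in_context} (rows carrying $x_i$, $y_i$, $w$, and the constant $1$), a linear map produces, for each head indexed by $(h,r)$, a token-wise arrangement in which the ``weight'' of the truncated-linear-model is chosen so that $w_i^\top x_i + t_i$ realizes $a_h (w^\top x_i) + b_h y_i + c_h$ — this uses precisely the elementwise-multiplication device from \cref{thm:in_context_truncated} (each $w_i$ is an elementwise product of the broadcast $w$ with a fixed vector $v_i$) to form the bilinear term $w^\top x_i$ from the $w$-rows and the $x_i$-rows, plus the bias rows for $b_h y_i + c_h$. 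Then \cref{thm:in_context_truncated} gives a single-head attention $\mathrm{Attn}_{h,r}\circ A_{h,r}$ whose $i$-th column approximates $\mathrm{Range}_{[0,b]}(a_h (w^\top x_i)+b_h y_i + c_h)\, e_{\tilde k}$, i.e. $\mathrm{ReLU}$ of the argument provided we arrange $a=0$ and that the argument stays below $b$ (both handled as in the $\infty$-norm construction, Example~\ref{example:ReLu} style). Summing $\sum_{h=1}^{H}\sum_{r=1}^{d}$ these heads and designing the output projection $W_O$ to (i) multiply the $h$-th ReLU output by $\eta_h$, (ii) place the result in the $r$-th coordinate row, and (iii) sum over $i$ via an all-ones column pattern (exactly the summation trick in \cref{col:univ_lemma} and in the $\infty$-norm proof), yields a multi-head attention layer whose output approximates $\nabla L(w)$ broadcast across all $n$ columns. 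A final skip connection adds $w$ and the scalar $-\eta$ is absorbed into $\eta_h$, producing $w-\eta\nabla L(w)$ in the appropriate rows while the skip connection leaves the $x_i$, $y_i$, and $1$ rows unchanged.

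The error analysis chains three bounds: the ReLU-network approximation error of $g$ (made $\le \epsilon/3$ by \cref{lem:ffn_uap}), the per-head truncated-linear softmax-plus-interpolation error from \cref{thm:in_context_truncated} (made $\le \epsilon/3$ after multiplication by $\|W_O\|_\infty$, the coefficients $|\eta_h|$, the bounded factors $|x_{i,r}|$, and the $nH d$ summands, by taking $\beta$ large and $p$ large), and the negligible contributions of the unchanged rows. Using the constant-$\tilde k_i$ regime of \cref{thm:in_context_truncated} (the last sentence of its statement) avoids any failure region, so the bound is uniform over all bounded $X$. Summing gives total error $\le \epsilon$, independent of $X$.

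The main obstacle I expect is \textbf{controlling the composition/Lipschitz blow-up and the range-truncation hypothesis simultaneously}: \cref{thm:in_context_truncated} only approximates a \emph{truncated} linear model, so one must verify that for all bounded $X$ the argument $a_h(w^\top x_i)+b_h y_i+c_h$ lies inside $[0,b]$ (or redesign with a large enough $b$ and lower clip at $0$), and one must track how the $O(nHd)$-fold sum, the $\|W_O\|_\infty$ and $\|W_V\|$ factors, and the coefficients $\eta_h$ (which can be large when the ReLU approximation of $\partial_1 l$ is sharp) multiply the per-token error $\max\{|a|,|b|\}\epsilon_0 + (b-a)/p$ — ensuring these constants, though possibly large, are \emph{finite and $X$-independent}, so that choosing $\beta$ and $p$ afterwards still drives the total below $\epsilon$. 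This is exactly the bookkeeping carried out in the $\infty$-norm sequence-to-sequence proof (\cref{proof:thm:seq2seq_infty_appro}), so the technique is available; the care is in adapting it to the in-context embedding.
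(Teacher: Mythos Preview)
Your proposal takes essentially the same route as the paper: approximate each gradient coordinate by a ReLU network (Pinkus/\cref{lem:ffn_uap}), realize each ReLU in-context via \cref{thm:in_context_truncated} using a per-head linear preprocessing that forms $a_h^{(r)}w^\top x_i+b_h^{(r)}y_i+c_h^{(r)}$, sum the $(h,r)$-indexed heads, aggregate across the $n$ examples and rescale by $-\eta/n$ through the $W_O$ all-ones-column trick, and finish with the skip connection. The paper's explicit per-head linear map $L_{h,r}$, its casting of $[a_h^{(r)}x_i;\,b_h^{(r)}y_i;\,1]$ as the ``token'' and $[w;\,1;\,c_h^{(r)}]$ as the ``weight'' in \cref{thm:in_context_truncated}, its $W_O^*=[-\tfrac{\eta}{n}1_n,\dots,-\tfrac{\eta}{n}1_n]$, and its final bound $dH\epsilon_1+d\epsilon_0$ all mirror your steps and your anticipated bookkeeping obstacle.
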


\begin{proof}

The main goal of the proof is to show multihead attention approximate $\nabla L(w) = (1/n) \sum_{i=1}^n \frac{\partial}{\partial_w}l(w^\top x_i, y_i)$.

Our proof consists of the following steps:

\begin{enumerate}
    \item Approximate the derivative of loss function $l_w(w^\top x_i,y_i) \coloneqq \frac{\partial}{\partial_w}l (w^\top x_i,y_i)$ by classical ReLU neural network.
    \item Use single-head attention to approximate a ${\rm ReLU}$ nested linear function by \cref{thm:in_context_truncated}.
    \item Use multihead attention to aggregate the ${\rm ReLU}$ function to approximate ReLU neural network (NN).
    \item Combine the error of multihead attention approximate ReLU NN and the error of ReLU NN approximate $\frac{\partial}{\partial_w}l (w^\top x_i,y_i)$.
    \item Design $W^*_O$ matrix to sum over $\frac{\partial}{\partial_w}l (w^\top x_i,y_i)$ on different in-context example $(x_i, y_i)$ to get $\nabla L(w)$.
\end{enumerate}

We now begin our proof.

Since $l$ is $C^1$, the derivative of $l$ to $w$ is continuous.
By standard universal approximation results \cite{pinkus1999approximation}, there exists a set of ReLU neural network with parameter $a^{(r)}_h,b^{(r)}_h,c^{(r)}_h \in \R$ bounded by $B_R$, for all $h\in [H]$.
The subscript
$r\in [d]$ indicates the $r$-th coordinate of partial derivative $l_w(w^\top x_i,y_i)$, such that for any $\epsilon_0 > 0$
\begin{align}
\label{eq:l_w-relu_ar}
    \|
    (l_w(w^\top x_i,y_i))_{r,:} - \sum_{h=1}^H {\rm ReLU}(a^{(r)}_h w^\top x_i + b^{(r)}_h y_i+c^{(r)}_h) 
    \|_\infty
    \leq 
    \epsilon_0.
\end{align}

We begin to construct multihead attention with linear mapping to approximate $\sum_{h=1}^H {\rm ReLU}(a^{(r)}_h w^\top x_i + b^{(r)}_h y_i+c^{(r)}_h)$. 

Construct a linear transform $L_{h,r} \in \R^{(2d+4) \times (2d+2)}$ to be
\begin{align*}
L_{h,r}:= 
\begin{bmatrix}
\overbrace{{\rm diag}(a^{(r)}_h 1_{1\times d},b^{(r)}_h)}^{(d+1) \times (d+1)} & 0_{d \times d} & 0_{(d+1) \times 1} \\
0_{1\times (d+1)} & 0_{1\times d} & 1 \\
0_{(d+1)\times (d+1)} & I_{d \times d} & 0_{d \times 1}\\
0_{1\times (d+1)} & 0_{1\times d} & c^{(r)}_h 
\end{bmatrix}.
\end{align*}

$L_{h,r}(X)$ outputs
\begin{align*}
    L_{h,r}(X)
    = & ~
    \begin{bmatrix}
    \overbrace{{\rm diag}(a^{(r)}_h 1_{1\times d},b^{(r)}_h)}^{(d+1) \times (d+1)} & 0_{d \times d} & 0_{(d+1) \times 1} \\
    0_{1\times (d+1)} & 0_{1\times d} & 1 \\
    0_{(d+1)\times (d+1)} & I_{d \times d} & 0_{d \times 1}\\
    0_{1\times (d+1)} & 0_{1\times d} & c^{(r)}_h 
    \end{bmatrix}
    \underbrace{\begin{bmatrix}
        x_1 & x_2 & \cdots & x_n\\
        y_1 & y_2 &\cdots & y_n\\
        w & w & \cdots & w\\
        1 & 1 & \cdots & 1
    \end{bmatrix}}_{(2d+2) \times n}\\
    = & ~
    \underbrace{\begin{bmatrix}
        a^{(r)}_h x_1 & a^{(r)}_h x_2 & \cdots & a^{(r)}_h x_n\\
        b^{(r)}_h y_1 & b^{(r)}_h y_2 &\cdots & b^{(r)}_h y_n\\
        1 & 1 & \cdots & 1 \\
        w & w & \cdots & w\\
        1 & 1 & \cdots & 1 \\
        c^{(r)}_h & c^{(r)}_h & \cdots & c^{(r)}_h \\
    \end{bmatrix}}_{(2d+4) \times n}.
\end{align*}

View $[x_i^\top,y_i,1]$ as a whole input vector corresponding to the $x_i$ in \cref{thm:in_context_truncated}, view $[w^\top,1,c^{(r)}_h]$ as the $w$ in \cref{thm:in_context_truncated}, and set $t=0$. 
Let $B_1$ denote the bound of $\|X\|_1 = \sum_{i,j} \abs{X_{ij}}$, 
then according to \cref{thm:in_context_truncated}, there exists a ${\rm Attn}^*_h$ and $A^*_h$ such that the $i$-th column of output satisfy
\begin{align}
    & ~ \| {\rm Attn}^*_{h,r}\circ A^*_{h,r}(L_{h,r}(X))_{:,i} - {\rm Range}_{[0,B_R B_1^2]}((w,1,c^{(r)}_h)^\top 
    \begin{bmatrix}
        a^{(r)}_h x_i \\
        b^{(r)}_h y_i \\
        1
    \end{bmatrix}
    )
    \underbrace{e_{d+1+r}}_{(2d+2)\times 1}
    \|_\infty \annot{By selecting the output dimension in \cref{thm:in_context_truncated} to be $1$} \\ & ~ \leq  \epsilon_1, 
    \quad \text{for}\quad i\in [n] \label{eqn:icl_thm_3.3_result},
\end{align}
for any $\epsilon_1 > 0$.

Notice that
\begin{align*}
    \abs{(a^{(r)}_h w^\top x_i + b^{(r)}_h y_i+c^{(r)}_h)} 
    \leq & ~ 
    | a_h^{(r)} | B_1^2 + | b_h^{(r)} | B_1 + | c_h^{(r)}| \annot{By $\|x_i\|_1 \leq B_1$, $ |y_i| \leq B_1$ and $\|w\|_1 \leq B_1$.} \\
    \leq & ~
    B_R B_1^2,
\end{align*}
the truncated linear model ${\rm Range}_{[0,B_R B_1^2]}{\cdot}$ reduce to ${\rm ReLU}(\cdot)$
\begin{align*}
    {\rm Range}_{[0,B_R B_1^2]}((w,1,c^{(r)}_h)^\top 
    \begin{bmatrix}
        a^{(r)}_h x_i \\
        b^{(r)}_h y_i \\
        1
    \end{bmatrix})
    =
    {\rm ReLU}
    (a^{(r)}_h w^\top x_i + b^{(r)}_h y_i+c^{(r)}_h).
\end{align*}

Hence \eqref{eqn:icl_thm_3.3_result} become
\begin{align*}
    \| {\rm Attn}^*_{h,r}\circ A^*_{h,r}(L_{h,r}(X))_{:,i} - {\rm ReLU}(a^{(r)}_h w^\top x_i + b^{(r)}_h y_i+c^{(r)}_{h,r})\underbrace{e_{d+1+r}}_{(2d+2)\times 1}\|_\infty
    \leq 
    \epsilon_1, \quad i\in [d]
\end{align*}

Thus, summing the $H$ head output we get
\begin{align*}
    \|
    \sum_{h=1}^{H} {\rm Attn}^*_{h,r}\circ A^*_{h,r}(L_{h,r}(X))_{:,i}
    -
    \sum_{h=1}^{H}
    {\rm ReLU}(a^{(r)}_h w^\top x_i + b^{(r)}_h y_i+c^{(r)}_h)\underbrace{e_{d+1+r}}_{(2d+2)\times 1}
    \|_\infty
    \leq 
    H\epsilon_1,\quad r\in [d].
\end{align*}

Until now we success to construct multihead attention $\sum_{h=1}^{H} {\rm Attn}^*_{h,r}\circ A^*_{h,r}(L_{h,r}(X))$ to approximate $\sum_{h=1}^{H} {\rm ReLU}(a^{(r)}_h w^\top x_i + b^{(r)}_h y_i+c^{(r)}_h)$ on the $(d+1+r,i)$ entry.

Combine the above expression for all $r\in [d]$ we have
\begin{align}
\label{eq:attn-relu_ar}
\|
    \sum_{h=1}^{H}\sum_{r=1}^d 
    {\rm Attn}_{h,r}^*\circ A^*_{h,r}(L_{h,r}(X))_{:,i}
    -
    \sum_{h=1}^{H}\sum_{r=1}^d
    {\rm ReLU}(a^{(r)}_h w^\top x_i + b^{(r)}_h y_i+c^{(r)}_h)\underbrace{e_{d+1+r}}_{(2d+2)\times 1}
\|_\infty
\leq
d\cdot H\epsilon_1
\end{align}

We first bound the error for ReLU neural network to approximate $l_w(w^\top x_i,y_i)$.

By \eqref{eq:l_w-relu_ar} we derive
\begin{align*}
    & ~ \|
    \sum_{h=1}^{H}\sum_{r=1}^d
    {\rm ReLU}(a^{(r)}_h w^\top x_i + b^{(r)}_h y_i+c^{(r)}_h)\underbrace{e_{d+1+r}}_{(2d+2)\times 1}
    -
    \begin{bmatrix}
    \underbrace{0}_{(d+1) \times 1} \\
    \underbrace{l_w(w^\top x_i,y_i)}_{d \times 1} \\
    0
    \end{bmatrix}
    \|_\infty \\
    \leq & ~
    \sum_{r=1}^d\|
    \sum_{h=1}^{H}
    {\rm ReLU}(a^{(r)}_h w^\top x_i + b^{(r)}_h y_i+c^{(r)}_h)\underbrace{e_{d+1+r}}_{(2d+2)\times 1}
    -
    \begin{bmatrix}
    \underbrace{0}_{(d+1) \times 1} \\
    \underbrace{l_w(w^\top x_i,y_i)}_{d \times 1}  \annot{By triangle inequality}\\
    0
    \end{bmatrix}
    \|_\infty \\
    \leq & ~
    d \epsilon_0,
\end{align*}
where the last line is from \eqref{eq:l_w-relu_ar}.

Combine this with \eqref{eq:attn-relu_ar}, we have
\begin{align*}
& ~ \|
    \sum_{h=1}^{H}\sum_{r=1}^d 
    {\rm Attn}^*_{h,r}\circ A^*_{h,r}(L_{h,r}(X))_{:,i}
    -
    \begin{bmatrix}
    \underbrace{0}_{(d+1) \times 1} \\
    \underbrace{l_w(w^\top x_i,y_i)}_{d \times 1} \\
    0
    \end{bmatrix}
\|_\infty \\
\leq & ~
\|
    \sum_{h=1}^{H}\sum_{r=1}^d 
    {\rm Attn}^*_{h,r}\circ A^*_{h,r}(L_{h,r}(X))_{:,i}
    -
    \sum_{h=1}^{H}\sum_{r=1}^d
    {\rm ReLU}(a^{(r)}_h w^\top x_i + b^{(r)}_h y_i+c^{(r)}_h)\underbrace{e_{d+1+r}}_{(2d+2)\times 1}
\|_\infty \\
& ~ +
\|
    \sum_{h=1}^{H}\sum_{r=1}^d
    {\rm ReLU}(a^{(r)}_h w^\top x_i + b^{(r)}_h y_i+c^{(r)}_h)\underbrace{e_{d+1+r}}_{(2d+2)\times 1}
    -
    \begin{bmatrix}
    \underbrace{0}_{(d+1) \times 1} \\
    \underbrace{l_w(w^\top x_i,y_i)}_{d \times 1} \\
    0
    \end{bmatrix}
\|_\infty\\
\leq & ~
dH\epsilon_1 + d\epsilon_0.
\end{align*}

What left is to sum up the derivative of loss function $l_w(w^\top x_i,y_i)$ on $n$ in-context example $x_i, y_i$ for $i \in [n]$ to form $\nabla L(w) = (1/n)\sum_{i=1}^n l_w(w^\top x_i, y_i)$.

We construct $W_O^*$ and integrate it into the original $W_O^*$ of ${\rm Attn}$ to turn the loss gradient into a step of gradient descent
\begin{align*}
    W_O^* :=
    \begin{bmatrix}
        -\frac{\eta}{n}1_n & -\frac{\eta}{n}1_n & \cdots & -\frac{\eta}{n}1_n
    \end{bmatrix}.
\end{align*}

Now we define the final form of our network
\begin{align*}
    & ~ {\rm Attn}_{h,r}^* = {\rm Attn}_{h,r}^*(Z) W_O^*,\\
    & ~ A_h(Z) = A_{h,r}^*\circ L_{h,r}(Z)
\end{align*}

Thus we have
\begin{align*}
    & ~ \|
    \sum_{h=1}^H\sum_{r=1}^d 
    {\rm Attn}_{h,r}\circ A_{h,r}(X)
    -(-
    \eta \begin{bmatrix}
    \underbrace{0}_{(d+1) \times 1} \\
    \underbrace{\nabla L(w)}_{d \times 1} \\
    0
    \end{bmatrix})
    \|_\infty \\
    = & ~
    \|
    \sum_{h=1}^H\sum_{r=1}^d 
    {\rm Attn}_{h,r}\circ A_{h,r}(X)
    -(-
    \frac{\eta}{n}\sum_{i=1}^n \begin{bmatrix}
    \underbrace{0}_{(d+1) \times 1} \\
    \underbrace{l_w(w^\top x_i,y_i)}_{d \times 1} \\
    0
    \end{bmatrix}
    \|_\infty \annot{By $W_O^*$} \\
    \leq & ~
    dH\epsilon_1 + d\epsilon_0.
\end{align*}

With skip connections, we have
\begin{align*}
\|
    \sum_{h=1}^H\sum_{r=1}^d 
    {\rm Attn}_{h,r}\circ A_{h,r}(X)+X
    -
    (X-\frac{\eta}{n}\sum_{i=1}^n \begin{bmatrix}
    \underbrace{0}_{(d+1) \times 1} \\
    \underbrace{l_w(w^\top x_i,y_i)}_{d \times 1} \\
    0
    \end{bmatrix}
\|_\infty
    \leq
    dH\epsilon_1 + d\epsilon_0,
\end{align*}
where
\begin{align*}
    X-\frac{\eta}{n}\sum_{i=1}^n \begin{bmatrix}
    \underbrace{0}_{(d+1) \times 1} \\
    \underbrace{l_w(w^\top x_i,y_i)}_{d \times 1} \\
    0
    \end{bmatrix}
    =
    \begin{bmatrix}
            x_1 & x_2 & \cdots & x_n\\
            y_1 & y_2 &\cdots & y_n\\
            w-\eta\nabla L(w) & w-\eta\nabla L(w) & \cdots & w-\eta\nabla L(w)\\
            1 & 1 & \cdots & 1
        \end{bmatrix}.
\end{align*}

Setting $dH\epsilon_1 + d\epsilon_0 \leq \epsilon$ yields the final result.

This completes the proof.
\end{proof}

\section{ReLU, Hard Tanh and Clipped ReLU Activation Functions}
\label{sec:examples}

\begin{example}[Truncated Linear Model Subsumes ReLU]
\label{example:ReLu}
    When $a = 0$ and $b \to \infty$, ${\rm Range}_{[a,b]}(w^\top x + t)$ reduces to the standard ReLU (ramp function). 
    Conversely, choosing finite values for $a$ and $b$ saturates the function on both ends, effectively making ${\rm Range}_{[a,b]}(w^\top x + t)$ a double-sided ReLU. It retains the piecewise linearity essential for universal approximation while bounding the output values.
\end{example}

\begin{example}[Truncated Linear Model Subsumes Hard Tanh]
\label{example:hard_tanh}
Consider $a=-1$ and $b=+1$. Then $\mathrm{Range}_{[a,b]}(x)$ becomes the hard tanh activation:
\begin{align*}
  \text{HardTanh}(x)
  &=
  \begin{cases}
    -1, & x \le -1,\\
    x, & -1 < x < +1,\\
    +1, & x \ge +1.
  \end{cases}
\end{align*}
Thus, truncated linear functions recover this bounded, piecewise-linear activation.
\end{example}

\begin{example}[Truncated Linear Model Subsumes Clipped ReLU]
\label{example:clipped_ReLU}
When $a=0$ and $b>0$ is finite, $\mathrm{Range}_{[0,b]}(x)$ matches a clipped ReLU. That is,
\begin{align*}
  \text{ClippedReLU}_{[0,b]}(x)
  &=
  \max\bigl\{0, \min\{x, b\}\bigr\},
\end{align*}
which maintains linearity in the interval $[0,b]$ and saturates at both ends.
\end{example}

\clearpage

\section{Sequence-to-Sequence Universal Approximation based on \cref{thm:multi-head-truncated}}
\label{sec:multi-head-truncated}

This section extends the softmax attention sequence-to-sequence approximation result of \cref{thm:seq2seq_appro} to a more Transformer-native setting.

\begin{lemma}[Attention simulates column-wise linear transformations]
\label{lem:attn-column-wise}
Let $X \in \mathbb{R}^{d\times n}$ and let 
\begin{align*}
\ell(X) := A X B \in \R^{d_{\mathrm{out}} \times n}, \quad
A \in \mathbb{R}^{d_{\mathrm{out}}\times d},\;
B \in \mathbb{R}^{n\times n}
\end{align*}
be a linear map that is token-wise in $A$ and sequence-wise in $B$.
Assume that all entries of $B$ are strictly positive.\footnote{Any matrix $B$ admits an decomposition $B^{+} - B^{-}$ with $B^{+}, B^{-} \geq 0$. The attention construction for positive matrices applies separately to $B^{+}$ and $B^{-}$, and combine through multi-head architecture yields the general case.}  
Consider the augmented input
\begin{align*}
Z :=
\begin{bmatrix}
X & 0_d \\
I_n & 0_n \\
0_{1\times n} & 1
\end{bmatrix}
\in \mathbb{R}^{(d+n+1)\times (n+1)},
\end{align*}
where $0_d \in \mathbb{R}^{d\times 1}$ be the all-zeros vector.
Then for any $\epsilon > 0$, there exists a single-head attention
\begin{align*}
\mathrm{Attn}(Z)
= W_V Z \cdot \Softmax\bigl( (W_K Z)^\top (W_Q Z) \bigr)
\end{align*}
such that
\begin{align*}
\biggl\|
\mathrm{Attn}(Z) -
\begin{bmatrix}
\ell(X) & 0_{d_{\mathrm{out}}}
\end{bmatrix}
\biggr\|_\infty \le \epsilon.
\end{align*}
\end{lemma}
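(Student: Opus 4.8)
\textbf{Proof proposal for Lemma~\ref{lem:attn-column-wise}.}

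The plan is to realize the linear map $\ell(X) = AXB$ as a single softmax-attention head by exploiting the degree of freedom in the sequence-wise factor $B$. The key observation is that each column of $\ell(X)$ is a linear combination $\sum_{j=1}^n B_{ji} (AX)_{:,j}$ of the transformed tokens, and when $B$ has strictly positive entries, the weight vector $(B_{1i},\dots,B_{ni})$ can, up to a positive scaling and an additive constant, be produced exactly by a softmax: for any positive vector $v \in \mathbb R^n$ there is a logit vector $u$ with $\mathrm{Softmax}(u) = v/\|v\|_1$, namely $u_j = \ln v_j$. So the scheme is to design the attention logits so that, for the $i$-th query column, $\mathrm{Softmax}((W_K Z)^\top (W_Q Z))_{:,i}$ equals (or approximates) the normalized $i$-th column of $B$ restricted to the first $n$ key positions, with the last ``sink'' key position carrying negligible weight.

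Concretely, first I would fix the value matrix $W_V$ so that $W_V Z = \begin{bmatrix} AX & 0_{d_{\mathrm{out}}} \end{bmatrix}$; this is immediate since $X$ sits in the top block of $Z$ and $A$ is token-wise, so $W_V = \begin{bmatrix} A & 0_{d_{\mathrm{out}}\times n} & 0_{d_{\mathrm{out}}\times 1}\end{bmatrix}$. Second, I would use the $I_n$ block of $Z$ to read off ``which key position'' and the $X$ block / constant row to not interfere: choose $W_K$ so that $(W_K Z)_{:,j}$ is essentially $e_j \in \mathbb R^n$ for $j\le n$ (and something controlled for the sink column), and choose $W_Q$ so that $(W_Q Z)_{:,i}$ encodes the logit vector $\beta\,(\ln B_{1i},\dots,\ln B_{ni})^\top$ for $i\le n$. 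Then $(W_K Z)^\top (W_Q Z)$ has $(j,i)$-entry $\approx \beta \ln B_{ji}$, so its $i$-th column softmax is $B_{:,i}/\|B_{:,i}\|_1$ after the temperature cancels the $\beta$ — actually to keep the normalization honest, I would instead absorb the row sums: set the logits to $\ln B_{ji}$ (temperature $\beta = 1$), giving softmax column $B_{:,i}/\sum_j B_{ji}$, and then fold the per-column scalar $\sum_j B_{ji}$ into the value side is impossible since $W_V$ is fixed across columns; so instead I would pre-scale $B$ so each column sums to the same constant, or more cleanly, append the needed column-dependent rescaling into $W_Q$ via an extra coordinate and note that the softmax is shift-invariant, so only the ratios $B_{ji}/B_{j'i}$ matter. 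To hit $AXB$ exactly (not just up to column normalization), I would let $\tilde B$ be $B$ with columns rescaled to unit $\ell_1$-norm, realize $AX\tilde B$ exactly by the softmax head, and then post-compose with the diagonal sequence-wise rescaling $\mathrm{diag}(\|B_{:,1}\|_1,\dots,\|B_{:,n}\|_1)$ — but this reintroduces a linear map, so the honest move is: the last key/query ``sink'' column lets me pad, and I would simply accept an $\epsilon$-error, taking $\beta$ large enough (finite-temperature softmax, Lemma~\ref{lem:Soft_to_Hard} style, or just direct estimation) that the sink column's softmax mass is below $\epsilon/(\|A\|_\infty \|X\|_\infty n)$ and the remaining mass reproduces $\tilde B_{:,i}$ up to $\epsilon$; then choose the overall scale inside $W_Q$'s auxiliary coordinate to recover $B$.

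The main obstacle I expect is precisely this normalization bookkeeping: softmax columns are automatically $\ell_1$-normalized, whereas the columns of $B$ are arbitrary positive vectors, and $W_V$ cannot supply a column-dependent scalar. The cleanest resolution is to carry the per-column scale $s_i := \sum_j B_{ji}$ as an extra logit coordinate is still not enough because softmax kills multiplicative column scales. So I would instead route the scale through the multi-head/linear-composition freedom already used elsewhere in the paper — or, simplest, observe that the statement only asks for $\epsilon$-approximation and that $\begin{bmatrix}\ell(X) & 0\end{bmatrix}$ can be written as $\begin{bmatrix} A X \tilde B \,\mathrm{diag}(s) & 0 \end{bmatrix}$ where $\mathrm{diag}(s)$ acts on the right; since the lemma permits composing attention with linear transformations in all downstream uses (and $A_2$-type maps appear throughout Section~\ref{sec:seq2seq_universal_approx}), one can either fold $\mathrm{diag}(s)$ into the constant-width linear layer that precedes the next attention block, or enlarge $A$ to $A\,\mathrm{diag}(s)$ — wait, that changes the token-wise factor, not the sequence-wise one. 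Given these tensions, the robust plan is: prove the lemma first for $B$ with all columns summing to $1$ (the ``attention is a convex combination'' case, which the softmax construction handles exactly in the $\beta\to\infty$-free regime via $u_j = \ln B_{ji}$), then reduce the general positive-$B$ case to this one by writing $B = \tilde B D$ with $D = \mathrm{diag}(s)$ diagonal positive and absorbing $D$ into a subsequent attention layer's key/query logits (a diagonal sequence-wise map is itself of the form handled). I would state the column-stochastic case as the core computation and handle the diagonal reduction as a short remark, flagging that the strict positivity of $B$ is exactly what makes $\ln B_{ji}$ well-defined and hence the softmax-logit construction possible.
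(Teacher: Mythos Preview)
You correctly identify the central obstacle --- softmax columns are $\ell_1$-normalized while the columns of $B$ have arbitrary positive sums $s_i = \sum_j B_{ji}$, and $W_V$ cannot supply a column-dependent scalar --- but you do not find the mechanism the paper uses to resolve it, and your proposed workarounds do not prove the lemma as stated. Splitting $B = \tilde B D$ and pushing $D$ into a ``subsequent attention layer'' changes the statement (the lemma asks for a \emph{single} head), and your ``accept an $\epsilon$-error with a sink'' route also fails: driving the sink mass to zero makes the softmax column converge to $B_{:,i}/s_i$, and the missing factor $s_i$ is a genuine $O(1)$ column-dependent scale, not something that vanishes with large $\beta$.

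The paper's resolution is to use the padding token not as a negligible sink but as a \emph{normalizer}. Set the key of the $(n{+}1)$-st position so that its logit in column $i$ is $\ln(3M - s_i)$ with $M = \max_i s_i$; then the exponentiated column is $(B_{1i},\dots,B_{ni},\,3M - s_i)$, whose sum is the constant $3M$ independent of $i$. The softmax weights on the first $n$ keys are therefore exactly $B_{ji}/(3M)$, and the single global factor $3M$ is absorbed once into $W_V$ via $V = 3M\,[AX\;\,0]$. Because the padding token's value is $0_{d_{\mathrm{out}}}$, its nonzero softmax weight $(3M-s_i)/(3M)$ contributes nothing, and the first $n$ output columns equal $AXB$ \emph{exactly}. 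The $\epsilon$ in the statement is needed only for the last (padding) output column, where a large scalar $T$ in $W_Q$ makes that column's attention concentrate on the zero-valued padding key. You were one step away when you wrote ``pre-scale $B$ so each column sums to the same constant'' --- the missing idea is that this pre-scaling is achieved \emph{inside} the attention by the extra key row, not by an external linear map.
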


\begin{proof}[Proof Sketch]
The goal is to realize the linear map $X \mapsto A X B$ in the first $n$ output columns and to keep the last (padding) column close to $0$.
The construction proceeds in four steps:
\begin{enumerate}
\item Choose $W_V$ so that the values store $3M A X$ for real tokens and $0$ for the padding token.
\item Choose $W_K$ and $W_Q$ so that the first $n$ attention columns implement mixing by $B/(3M)$.
\item Use a large parameter $T$ in $W_Q$ so that the last attention column concentrates on the padding token, which yields an output close to $0$ there.
\end{enumerate}
\end{proof}

\begin{proof}
For each column $i\in[n]$ of $B$, set
\begin{align*}
s_i := \sum_{r=1}^n B_{r i}, \qquad
S := (s_1,\dots,s_n) \in \mathbb{R}^{1\times n},
\end{align*}
and let
\begin{align*}
M := \max_{i\in[n]} s_i.
\end{align*}
Strict positivity of the entries implies $0 < B_{r i} \le M$ and
\begin{align*}
3M - s_i \ge 2M > 0
\end{align*}
for all $r,i$. The constant $M$ will serve as a common denominator for the softmax normalization.

\textbf{Step 1: Values $V$ store $A X$ and ignore the padding token.}

Define
\begin{align*}
W_V := 3M
\begin{bmatrix}
A & 0_{d_{\mathrm{out}}\times (n+1)}
\end{bmatrix}
\in \mathbb{R}^{d_{\mathrm{out}}\times (d+n+1)}.
\end{align*}
Then
\begin{align*}
V := W_V Z
= 3M A [X \;\; 0_d] \in \mathbb{R}^{d_{\mathrm{out}}\times (n+1)}.
\end{align*}

\textbf{Step 2: Keys and queries implement mixing by $B$.}

Let $1_n \in \mathbb{R}^{n\times 1}$ be the all-ones vector and let
$T>0$ be a scalar parameter (chosen later). Set
\begin{align*}
W_Q :=
\begin{bmatrix}
0_{n\times d} & I_n & T 1_n
\end{bmatrix}
\in \mathbb{R}^{n\times (d+n+1)}.
\end{align*}
Writing $Z = [z_1,\dots,z_{n+1}]$, one obtains
\begin{align*}
Q_{:,j} := W_Q z_j &=
\begin{cases}
e_j, & j\le n, \\
T 1_n, & j = n+1,
\end{cases}
\end{align*}
so
\begin{align*}
Q = W_Q Z = [e_1,\dots,e_n,T 1_n] \in \mathbb{R}^{n\times (n+1)}.
\end{align*}

For the keys, define
\begin{align*}
W_K :=
\begin{bmatrix}
0_{n\times d} & \ln(B^\top) & \ln(3M 1_n - S^\top)
\end{bmatrix}
\in \mathbb{R}^{n\times (d+n+1)},
\end{align*}
where the logarithm appears entrywise and
$\ln(3M 1_n - S^\top) \in \mathbb{R}^{n\times 1}$ has $k$-th entry
$\ln(3M - s_k)$. 
Then $K := W_K Z \in \mathbb{R}^{n\times (n+1)}$
satisfies, for $i\le n$,
\begin{align*}
K_{:,i} = \ln(B^\top) e_i,
\end{align*}
whose $r$-th entry equals $(K_{:,i})_r = \ln B_{i r}$, and
\begin{align*}
K_{:,n+1} = \ln(3M 1_n - S^\top),
\end{align*}
whose $r$-th entry equals $(K_{:,n+1})_r = \ln(3M - s_r)$.

Now consider the score matrix
\begin{align*}
Y := K^\top Q \in \mathbb{R}^{(n+1)\times (n+1)},
\end{align*}
with entries $Y_{ij} = \langle K_{:,i}, Q_{:,j} \rangle$.

For $i\le n$ and $j\le n$,
\begin{align*}
Y_{i j}
&= \langle \ln(B^\top) e_i, e_j \rangle
= \ln B_{i j}.
\end{align*}
For $i = n+1$ and $j\le n$,
\begin{align*}
Y_{n+1,j}
&= \langle \ln(3M 1_n - S^\top), e_j \rangle
= \ln(3M - s_j).
\end{align*}
Thus, for each $j\le n$,
\begin{align*}
Y_{:,j} =
\begin{bmatrix}
\ln B_{1j} \\
\vdots \\
\ln B_{n j} \\
\ln(3M - s_j)
\end{bmatrix}.
\end{align*}

Apply column-wise softmax and write
\begin{align*}
W_{i j} := \bigl(\Softmax(Y)\bigr)_{i j}
= \frac{\exp(Y_{i j})}{\sum_{r=1}^{n+1} \exp(Y_{r j})}.
\end{align*}
Then
\begin{align*}
\exp(Y_{i j}) &=
\begin{cases}
B_{i j}, & i\le n, \\
3M - s_j, & i = n+1,
\end{cases}
\end{align*}
and
\begin{align*}
\sum_{r=1}^{n+1} \exp(Y_{r j})
= s_j + (3M - s_j) = 3M.
\end{align*}
Hence, for $j\le n$,
\begin{align*}
W_{i j} &=
\begin{cases}
B_{i j}/(3M), & i\le n, \\
(3M - s_j)/(3M), & i = n+1,
\end{cases}
\end{align*}
or in block form,
\begin{align*}
W_{:,1:n}
=
\begin{bmatrix}
B/(3M) \\
1_{1\times n} - S/(3M)
\end{bmatrix}.
\end{align*}

Combining this with $V = 3M A [X\;0_d]$ yields for the first $n$ columns
\begin{align}\label{eq:axb}
\mathrm{Attn}(Z)_{:,1:n}
= V W_{:,1:n}
= 3M A X \cdot \frac{B}{3M}
= A X B
= \ell(X).
\end{align}
So the first $n$ tokens already match the desired output exactly.

\textbf{Step 3: The padding column stays close to zero.}
The remaining task is to control the last column $j=n+1$.
Here $q_{n+1} = T 1_n$ enters, and one obtains
\begin{align*}
Y_{i,n+1} &=
\begin{cases}
T H_1(i), & i\le n, \\
T H_2, & i = n+1,
\end{cases}
\end{align*}
where
\begin{align*}
H_1(i) := \sum_{r=1}^n \ln B_{i r},\qquad
H_2 := \sum_{r=1}^n \ln(3M - s_r).
\end{align*}
Since $0 < B_{i r} \le M$ and $3M - s_r \ge 2M$,
\begin{align*}
H_1(i) &\le n \ln M, \\
H_2 &\ge n \ln(2M),
\end{align*}
so $H_2 - H_1(i) \ge n\ln 2 > 0$ for all $i\le n$.
Thus the last column of $Y$ has a strictly larger entry at index $n+1$
than at any index $i\le n$.
For any $\delta>0$, a sufficiently large choice of $T$ yields
\begin{align*}
\max_{i\le n} W_{i,n+1} \le \delta,
\qquad
\bigl| W_{n+1,n+1} - 1 \bigr| \le \delta.
\end{align*}
In words, the last attention column concentrates on the padding token.

Recall that $V_{:,n+1} = 0_{d_{\mathrm{out}}}$. Writing the last column of
$W$ as
\begin{align*}
W_{:,n+1}
=
\begin{bmatrix}
a_0(T) \\
a_1(T)
\end{bmatrix},
\end{align*}
with $a_0(T) \in \mathbb{R}^{n\times 1}$ and $a_1(T) \in \mathbb{R}$,
gives
\begin{align*}
\mathrm{Attn}(Z)_{:,n+1}
= V W_{:,n+1}
= 3M A X a_0(T),
\end{align*}
since the contribution from the padding token vanishes.
The bounds on $W_{i,n+1}$ imply $\|a_0(T)\|_\infty \le \delta$ and
$\|a_0(T)\|_1 \le n\delta$, so
\begin{align*}
\bigl\| \mathrm{Attn}(Z)_{:,n+1} \bigr\|_\infty
\le 3M \|A X\|_\infty \,\|a_0(T)\|_1
\le 3M n \|A X\|_\infty \,\delta.
\end{align*}
For a given $\epsilon>0$, choose $\delta$ and then $T$ so that
$3M n \|A X\|_\infty \,\delta \le \epsilon$.

\textbf{Step 4: Final error bound.}
The first $n$ output columns equal $\ell(X)$ exactly according to \eqref{eq:axb}, and the last column
has infinity norm at most $\epsilon$ by the choice of $T$.
Therefore
\begin{align*}
\biggl\|
\mathrm{Attn}(Z) -
\begin{bmatrix}
\ell(X) & 0_{d_{\mathrm{out}}}
\end{bmatrix}
\biggr\|_\infty \le \epsilon,
\end{align*}
which completes the proof.
\end{proof}

\begin{theorem}[Sequence-to-Sequence Universal Approximation of Multi-Head Softmax Attention]\label{thm:seq_to_seq_multihead}
Let $1 \leq p < \infty$. Let $\mathcal{X} \subset \mathbb{R}^{d \times n}$ be a compact domain of input sequences. 
Let $f: \mathcal{X} \to \mathbb{R}^{d \times n}$ be a continuous sequence-to-sequence function 
For any $\epsilon > 0$, there exists a network $\Phi$ composed of three multi-head attention layers such that
\begin{align*}
    \|\Phi(X) - f(X)\|_{L_p} < \epsilon.
\end{align*}
\end{theorem}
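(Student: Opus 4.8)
The plan is to reduce the sequence-to-sequence task to the sequence-to-scalar result already proved (\cref{thm:seq_to_scalar_appro}) and then reassemble the $dn$ scalar outputs into a matrix using one extra linear-transformation-simulating attention layer provided by \cref{lem:attn-column-wise}. Concretely, decompose $f:\mathcal{X}\to\R^{d\times n}$ into its $dn$ scalar coordinate functions $f_{ij}:\mathcal{X}\to\R$ for $i\in[d],j\in[n]$, each continuous on the compact domain $\mathcal{X}$. The first two attention layers will be exactly the construction of \cref{thm:seq_to_scalar_appro} (or \cref{lem:seq2scaler_one} for the one-layer-plus-softmax variant), used in parallel: the first multi-head layer builds the bump functions $R_{v^{(k)}}$ over a grid $G_D$ — this part is independent of which $f_{ij}$ we approximate — and the selection step (softmax acting as near-$\argmax$) picks out the grid value. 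The only modification, exactly as in the proof of \cref{thm:seq2seq_appro}, is to append $dn$ rows to the value-encoding linear map $A_2$ so that it carries $[f_{ij}(v^{(1)}),\dots,f_{ij}(v^{(|G_D|)})]$ for every pair $(i,j)$, and to use $dn$ selection heads $\mathrm{Attn}_s^{ij}$ so that the $k$-th such head reads off the row $k=(i-1)n+j$. This produces, up to $L_p$-error $\epsilon_{\rm scalar}$ per coordinate, a collection of scalars $\{\widehat{f}_{ij}(X)\}$.

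The second conceptual ingredient is the assembly step. After the first block we have (approximately) a matrix whose entries index the $dn$ scalar approximations; we must route entry indexed by $(i,j)$ into position $(i,j)$ of a $d\times n$ output. In \cref{thm:seq2seq_appro} this was done with a multi-head attention $\mathrm{Attn}^{(2)}_m=\sum_{i,j}E^{ij}\mathrm{Attn}^{ij}_s$ where $E^{ij}$ has a single $1$ in position $(i,j)$. Here, to stay within the "native" three-attention-layer format and to use the machinery of \cref{lem:attn-column-wise}, I would instead: (1) after the bump-construction layer and the selection layer, have the selection layer's $W_O$ and the following linear map produce a row vector of length $dn$ holding $(\widehat f_{11},\dots,\widehat f_{dn})$ in a padded column format; (2) apply \cref{lem:attn-column-wise} with an appropriate fixed reshaping matrix $A$ (token-wise) and $B$ (sequence-wise, all-positive, or decomposed into $B^+-B^-$ across heads per the lemma's footnote) so that the single-head — or multi-head — attention realizes the linear "unflattening" map $\R^{dn}\to\R^{d\times n}$ up to arbitrarily small $\infty$-error. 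Since that reshaping is a fixed linear map, \cref{lem:attn-column-wise} applies verbatim. The three layers are then: (I) multi-head bump layer, (II) the selection / value-encoding layer, (III) the reshaping layer.

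The error analysis is routine: the $L_p$-error of the composition is controlled by summing the $dn$ per-coordinate errors — $\|\Phi(X)-f(X)\|_{L_p}^p \le \sum_{i,j}\|\widehat f_{ij}-f_{ij}\|_{L_p}^p + (\text{assembly error})$ — then choosing $\epsilon_{\rm scalar}=\epsilon/(2(dn)^{1/p})$ and the parameter $T$ in \cref{lem:attn-column-wise} (and the softmax inverse temperatures $\beta$ throughout) large enough that the assembly error and softmax spreads contribute at most $\epsilon/2$. Continuity of $f$ on the compact $\mathcal{X}$ gives boundedness, which is all that is needed for the $L_p$ bounds and for the grid-width $\to 0$ limit in \cref{thm:seq_to_scalar_appro}'s proof.

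The main obstacle I anticipate is \emph{bookkeeping the layer count and the interfaces between layers}, not any single hard estimate. Specifically: \cref{thm:seq_to_scalar_appro} already uses two attention layers (one multi-head for bumps, one single-head for selection) to get a scalar, and naively appending the assembly attention gives three layers — but one must check that the $dn$ selection heads can be packed into the \emph{same} second multi-head layer (they share the bump inputs, so this is fine) and that the output format of layer II is exactly the augmented/padded matrix shape that \cref{lem:attn-column-wise} expects as its input $Z$ (this may require absorbing an extra identity/positional block into the linear map feeding layer III, or a harmless fourth linear map with no activation, which does not increase the attention-layer count). A secondary subtlety is that \cref{lem:attn-column-wise} is stated for strictly positive $B$; the reshaping permutation-like matrix has zero entries, so one invokes the $B^+-B^-$ decomposition from the lemma's footnote and spreads it across two heads — I would flag this explicitly. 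Once these interface issues are pinned down, everything else follows from results already established in the excerpt.
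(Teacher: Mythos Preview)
Your route differs substantially from the paper's. You recycle the grid-and-bump machinery of \cref{thm:seq_to_scalar_appro} (layer I: bump functions $R_{v^{(k)}}$; layer II: softmax selection of grid values) and bolt on layer III to reshape via \cref{lem:attn-column-wise}. The paper instead simulates a ReLU FFN directly on the flattened input: layer 1 uses \cref{thm:multi-head-truncated} to compute the token-wise pre-activation pieces $w_{i,k,j}^\top x_j$ and stacks them (while carrying along an identity/positional block); layer 2 uses \cref{lem:attn-column-wise} to \emph{sum across tokens}, yielding the full pre-activations $s_{i,k}=\sum_j w_{i,k,j}^\top x_j$ laid out in block-diagonal form; layer 3 deploys a pair of heads per hidden unit whose softmax concentrates either on the diagonal entry (when $s_{i,k}>0$) or on the zero-valued padding token (otherwise), thereby realizing $\mathrm{ReLU}(s_{i,k})$ and the signed aggregation $\sum_k a_{i,k}\,\mathrm{ReLU}(s_{i,k})$ in one shot. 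The error analysis is a backward induction through the three ideal operators using uniform continuity on the compact feature ranges. The paper's route actually exercises \cref{thm:multi-head-truncated}---the express purpose of this section is a Transformer-native construction with only token-wise linear maps---and puts \cref{lem:attn-column-wise} to substantive use as the cross-token mixer. Your route is more modular but never touches \cref{thm:multi-head-truncated} and inherits the sequence-wise linear transform $A$ from the earlier lemmas, which is exactly what this theorem is meant to avoid.

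There is also a concrete gap in your layer III. If layer II emits the scalars as a row vector $v\in\R^{1\times dn}$, no single map of the form $AvB$ can unflatten it to a generic $d\times n$ matrix: $A\in\R^{d\times 1}$ forces $AvB$ to have rank at most one. You would need $d$ separate heads, head $i$ computing $e_i\,v\,B_i$ with $B_i$ selecting columns $(i-1)n+1,\dots,in$; you write ``single-head --- or multi-head'' without recognizing that single-head is impossible here. Alternatively, have layer II already deposit $\hat f_{ij}$ in row $i$ (the $E^{ij}$ trick from \cref{thm:seq2seq_appro}), at which point layer III is unnecessary and you are left with two working layers plus a vestigial third.
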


\begin{proof}[Proof Sketch]
We devide the proof into three stage. 
\begin{enumerate}
\item We first show that there exist a multi-head attention approximating the pre-activation of ReLU neural network.
\item We then construct the second attention layer to reorganize and share information across tokens through \cref{lem:attn-column-wise}.
\item Finally, we construct the third attention layer to approximate ReLU activation and the final linear combination to approximate ReLU neural network.
\end{enumerate}
\end{proof}

\begin{proof}
Let $X = [x_1, x_2, \cdots, x_n] \in \R^{d \times n}$ denote the input sequence. We first establish the result for a function $f: \R^{d\times n} \to \R^{1\times n}$ acting on a sequence with a single output dimension. Generalization to multiple output dimensions follows by stacking such constructions. By standard universal approximation theorems for Feed-Forward Networks (FFNs) \citep{pinkus1999approximation}, for any $\epsilon_{FFN} > 0$, there exists an approximation of $f$ taking the form of a sum of ReLUs (flatten input $R^{dn}$). Let $\sum_{k=1}^N a_{i,k} {\rm ReLU}(\sum_{j=1}^n w_{i,k,j} ^\top x_j), a_{i,k}\in \{-1,1\}\footnote{Because ReLU is positively homogeneous, we absorb 
$|a_{i,k}|$ into the weights inside the ReLU and keep only 
$\mathrm{sign}(a_{i,k}) \in \{-1,1\}$ outside.}, w_{i,k,j} \in \R^d$
denote the FFN approximation of the $i$-th token of the target function $f_i$. 

\textbf{Preprocessing.}

Before feeding the input to the network, we pad the input with a zero token and append a positional encoding at the bottom. We denote this augmented input as
\begin{align*}
    X_p :=
\begin{bmatrix}
X & 0_n \\
I_n & 0_n \\
0_{1\times n} &1
\end{bmatrix}.
\end{align*}

\textbf{First Layer: Token-Wise Linear Pre-Activations.}

In this layer, we first show that there exist multi-head attention $\attn_1$ approximating the pre-activation linear model of ReLU neural network.

According to \cref{thm:multi-head-truncated}, for every $k\in [N]$, there exists a multi-head attention ${\rm Attn}_{1,k}^*$ that for any $\epsilon_0 > 0$ approximates 
\begin{align*}
\|
{\rm Attn}_{1,k}^* (X_p) 
- 
\onehot{nN+n+1}{(i-1)N+k}
\begin{bmatrix}
w_{i,k,1}^\top x_1 & w_{i,k,2}^\top x_2 & \cdots & w_{i,k,n}^\top x_n & 0
\end{bmatrix}
\|_\infty \leq \epsilon_0.
\end{align*}

Next we construct an attention head $\attn_I$ to preserve the identity matrix at the bottom

\begin{align*}
\attn_I(X_p) :=
\begin{bmatrix}
0_{nN\times (2n+1)}\\
S_I
\end{bmatrix} X_p
\Softmax
(\beta
(S_I X_p)^\top
S_I X_p
),
\end{align*}
where
\begin{align*}
S_I := \begin{bmatrix}
    0_{(n+1) \times n} & I_{n+1}
\end{bmatrix}.
\end{align*}

Multiplication by $S_I$ simply discards the top $n$ rows of $X_p$ and keeps
the bottom $(n+1)$ rows, hence
\begin{align*}
S_I X_p = I_{n+1}.
\end{align*}

The value, key, and query projections used in $\attn_I$ are therefore
\begin{align*}
V &= 
\begin{bmatrix}
0_{nN\times (2n+1)}\\
S_I
\end{bmatrix} X_p
=
\begin{bmatrix}
0_{nN\times (n+1)}\\
I_{n+1}
\end{bmatrix},
\\
K &= S_I X_p = I_{n+1}, \qquad
Q = S_I X_p = I_{n+1}.
\end{align*}
The score matrix is
\begin{align*}
\beta K^\top Q = \beta I_{n+1}.
\end{align*}
Hence $\Softmax(\beta K^\top Q)$ becomes arbitrarily close to the identity when $\beta$ is large.
As a result, the attention output $V\Softmax(\beta K^\top Q)$ nearly copies the bottom identity block unchanged while keeping the upper rows zero.

now we know $\attn_I$ satisfies
\begin{align*}
\|
\attn_I(X_p) - 
\begin{bmatrix}
0_{nN \times (n+1)}\\
I_{n+1}
\end{bmatrix}
\|_\infty
\leq \epsilon_0.
\end{align*}

Summing these heads (${\rm Attn}_{1,k}^*$ for $k \in [N]$ and $\attn_I$) defines ${\rm Attn}_1$ 
\begin{align*}
\attn_1 := \attn_1^* + \attn_I. 
\end{align*}
It satisfies

\begin{align*}
\|
\attn_1 - \begin{bmatrix}
    W_0 \\
    I_{n+1}
\end{bmatrix}
\|
=\|
{\rm Attn}_1^* (X_p) + \attn_I(X_p)
- 
\begin{bmatrix}
    W_0 \\
    I_{n+1}
\end{bmatrix}
\|_\infty \leq \epsilon_0, 
\end{align*}

where $W_0$ stacks the pre-activation blocks
\begin{align*}
W_0 :=
\begin{bmatrix}
W_1 \\
W_2 \\
\vdots\\
W_n
\end{bmatrix},
\end{align*}
with each $W_i \in \mathbb{R}^{N\times (n+1)}$ defined as
\begin{align*}
W_i :=
\begin{bmatrix}
w_{i,1,1}^\top x_1 & \dots & w_{i,1,n}^\top x_n & 0\\
w_{i,2,1}^\top x_1 & \dots & w_{i,2,n}^\top x_n & 0\\
\vdots & \ddots & \vdots & \vdots \\
w_{i,N,1}^\top x_1 & \dots & w_{i,N,n}^\top x_n & 0 
\end{bmatrix}.
\end{align*}

\textbf{Second Layer: Reorganization and Mix Information across Tokens.}

Now we construct the second layer.

Define $A_i, i\in [N]$ as
\begin{align*}
    A_i := 
    \begin{bmatrix}
    0_{N \times (i-1)N} & I_N & 0_{N \times (n-i)N} & 0_{1\times (n+1)} \\
    0_{N \times (i-1)N} & 0_{N\times N} & 0_{N \times (n-i)N} & I_{n+1}
    \end{bmatrix}.
\end{align*}

This matrix will select the rows in the output of the first layer which are needed for the $i$-th head in the second layer.

According to \cref{lem:attn-column-wise}, for any $\epsilon_2 > 0$ there exists an attention $\attn_{2,i}$ that satisfies
\begin{align*}
\|
\attn_{2,i}(
\begin{bmatrix}
W_i \\
I_{n+1}
\end{bmatrix}
)
-
(
\sum_{s = 0}^{N-1} 
\underbrace{\onehot{nN+n+1}{sn+i}(\onehot{N}{s+1})^\top}
_{E_{sn+i,s+1}}
)
W_i
\begin{bmatrix}
0_{(n+1) \times (i-1)} & 1_{n+1} & 0_{(n+1) \times (n-i)}
\end{bmatrix}\|_\infty
\leq \epsilon_2
.
\end{align*}
Here $W_i:= (W_0)_{k,:}$ is the $i$-th row of $W_0$.

Since
\begin{align*}
A_i \cdot 
\begin{bmatrix}
    W_0 \\
    I_{n+1}
\end{bmatrix}
=
\begin{bmatrix}
    W_i \\
    I_{n+1}
\end{bmatrix},
\end{align*}

we have 
\begin{align*}
\|
\attn_{2,i}\circ A_i(
\begin{bmatrix}
W_0 \\
I_{n+1}
\end{bmatrix}
)
-
(
\sum_{s = 0}^{N-1} 
\underbrace{\onehot{nN+n+1}{sn+i}(\onehot{N}{s+1})^\top}
_{E_{sn+i,s+1}}
)W_i
\begin{bmatrix}
0_{(n+1) \times (i-1)} & 1_{n+1} & 0_{(n+1) \times (n-i)}
\end{bmatrix}\|_\infty
\leq \epsilon_2
.    
\end{align*}

We note that $\attn_{2,i} \circ A_i$ is also an attention.

Summing the above constructed heads yields
\begin{align*}
\|
\sum_{i=1}^n 
\attn_{2,i}\circ A_i(
\begin{bmatrix}
W_0 \\
I_{n+1}
\end{bmatrix})
-
\begin{bmatrix}
\diag(
w_{1,1}^\top \bar{X} , w_{2,1}^\top \bar{X} , \cdots , w_{n,1}^\top \bar{X}
) & 0\\
\diag(w_{1,2}^\top \bar{X} , w_{2,2}^\top \bar{X} , \cdots , w_{n,2}^\top \bar{X})
& 0\\
\vdots &\vdots\\
\diag( w_{1,N}^\top \bar{X} , w_{2,N}^\top \bar{X} , \cdots , w_{n,N}^\top \bar{X})
& 0\\
0_{(n+1)\times n} & 0_{n+1}
\end{bmatrix}
\| \leq \epsilon_2
,
\end{align*}
in which
\begin{align*}
    \bar{X}
    :=
    \begin{bmatrix}
    x_1\\
    x_2\\
    \vdots\\
    x_n
    \end{bmatrix}
\end{align*}
and 
\begin{align*}
    w_{i,k} := 
    \begin{bmatrix}
    w_{i,k,1} \\
    w_{i,k,2} \\
    \vdots\\
    w_{i,k,n} 
    \end{bmatrix},
\end{align*}
where 
\begin{align*}
    w_{i,k}^\top \bar{X} = \sum_{j=1}^n w^\top_{i,k,j}x_j.
\end{align*}

We also use an attention head to preserve the identity matrix, constructed like the previous head we used to preserve the identity matrix
\begin{align*}
\attn_I'(X_p) :=
\begin{bmatrix}
0_{N\times (nN+n+1)}\\
S_I'
\end{bmatrix} X_p
\Softmax
(\beta
(S_I' X_p)^\top
S_I' X_p
),
\end{align*}
in which $S_I'$ is
\begin{align*}
\begin{bmatrix}
    0_{(n+1) \times (nN)} & I_{n+1}
\end{bmatrix}.    
\end{align*}
adding this head yields the final output of the second layer to satisfy

\begin{align*}
\|
\sum_{i=1}^n 
\attn_{2,i}\circ A_i(
\begin{bmatrix}
W_0 \\
I_{n+1}
\end{bmatrix}) 
+ 
\attn_I'(
\begin{bmatrix}
W_0 \\
I_{n+1}
\end{bmatrix}
)
-
\begin{bmatrix}
\diag(
w_{1,1}^\top \bar{X} , w_{2,1}^\top \bar{X} , \cdots , w_{n,1}^\top \bar{X}
) & 0_n\\
\diag(w_{1,2}^\top \bar{X} , w_{2,2}^\top \bar{X} , \cdots , w_{n,2}^\top \bar{X})
& 0_n\\
\vdots &\vdots\\
\diag( w_{1,N}^\top \bar{X} , w_{2,N}^\top \bar{X} , \cdots , w_{n,N}^\top \bar{X})
& 0_n\\
I_n & 0_{n} \\
0_{1\times n} & 1
\end{bmatrix}
\|_\infty
\leq \epsilon_2.
\end{align*}

This defines the second layer $\attn_2$
\begin{align*}
\attn_2 := \sum_{i=1}^n 
\attn_{2,i}\circ A_i 
+ 
\attn_I'.
\end{align*}

\textbf{Last Layer: ReLU and Aggregation.}

We now show that the third attention layer implements the nonlinearity
and the final signed aggregation of the FFN. For notation simplicity later, we redefine the output after $\attn_2$ we as
\begin{align*}
Z^{(2)} =
\begin{bmatrix}
\diag(s_{1,1},\dots,s_{n,1}) & 0_n\\
\diag(s_{1,2},\dots,s_{n,2}) & 0_n\\
\vdots & \vdots\\
\diag(s_{1,N},\dots,s_{n,N}) & 0_n\\[2pt]
I_n & 0_n \\
0_{1\times n} & 1
\end{bmatrix}
\in \R^{(Nn+n+1)\times (n+1)},
\end{align*}
where
\begin{align*}
s_{i,k} := w_{i,k}^\top \bar{X},\qquad
\bar{X} =
\begin{bmatrix}
x_1\\ \vdots \\ x_n
\end{bmatrix}.
\end{align*}

Let's get some intuition of our final goal.
Stacking these pre-activations as rows yields an $N\times n$ matrix
\begin{align*}
S :=
\begin{bmatrix}
s_{1,1} & \dots & s_{n,1}\\
\vdots & \ddots & \vdots\\
s_{1,N} & \dots & s_{n,N}
\end{bmatrix},
\end{align*}
so that the $k$-th block on top of $Z^{(2)}$ is exactly
$\diag(S_{k,:})$.

The FFN approximation of the function with a single output dimension form
\begin{align*}
f(X) =
\begin{bmatrix}
f_1(X) & \dots & f_n(X)
\end{bmatrix}
\approx
\sum_{k=1}^N a^{(k)} \odot \mathrm{ReLU}(S_{k,:}),
\end{align*}
where $a^{(k)} := (a_{1,k},\dots,a_{n,k}) \in \{\pm 1\}^n$ and
$\odot$ denotes elementwise multiplication.
Thus the third layer $\attn_3$ must map the block-diagonal structure in $Z^{(2)}$
to this signed ReLU combination.

Now let's start to construct the attention weight matrices to achieve this goal.

\medskip\noindent
\emph{Values, keys, and queries.}
For a fixed $k\in[N]$, define the value projection
\begin{align*}
W_V^{(k)} &:= 
\begin{bmatrix}
0_{1\times (k-1)n} & 1_{1\times n} & 0_{1\times ((N-k)n+n+1)}
\end{bmatrix}
\in \R^{1\times (Nn+n+1)}.
\end{align*}
By construction, $W_V^{(k)}$ picks out the $k$-th $n\times(n+1)$ block at the
top of $Z^{(2)}$ and sums its rows. Since that block is diagonal, we
obtain
\begin{align*}
V_k := S_k Z^{(2)} 
=
\begin{bmatrix}
S_{k,1} & \dots & S_{k,n} & 0
\end{bmatrix}
\in \R^{1\times (n+1)},
\end{align*}
that is, the $k$-th row of $S$ appears as the first $n$ entries of
$V_k$, and the value at the padding token is $0$.
In the notation of the attention operator above, this means

The common key projection is chosen as
\begin{align*}
W_K := 
\begin{bmatrix}
0_{n\times nN} & I_n & 0_{n\times 1}
\end{bmatrix}
\in \R^{n\times (Nn+n+1)}.
\end{align*}
Multiplying by $Z^{(2)}$ selects the identity block in the bottom
$(n+1)\times(n+1)$ portion:
\begin{align*}
K := W_K Z^{(2)} =
\begin{bmatrix}
I_n & 0_{n\times 1}
\end{bmatrix}
\in \R^{n\times (n+1)}.
\end{align*}

\medskip\noindent
\emph{Query projections for the positive and negative parts.}
For a fixed $k\in[N]$, define
\begin{align*}
C_1 &:= \diag(1_{a_{1,k}=1},\dots,1_{a_{n,k}=1}),\\
C_2 &:= \diag(1_{a_{1,k}=-1},\dots,1_{a_{n,k}=-1}), \\
D &:= -1_{n\times (n+1)} + 
\begin{bmatrix}
    I_n & 1_n
\end{bmatrix}
\in \R^{n\times (n+1)}.
\end{align*}
The matrices $C_1$ and $C_2$ select positions with positive and
negative coefficients $a_{i,k}$, respectively, while $D$ compares with a
reference (the padding token) as we describe below.

The query projections for the two heads corresponding to hidden unit $k$
are
\begin{align*}
W_Q^{(1,k)} &:= 
\begin{bmatrix}
0_{n\times (k-1)n} & C_1 & 0_{n\times (N-k)n} & D 
\end{bmatrix},\\
W_Q^{(2,k)} &:= 
\begin{bmatrix}
0_{n\times (k-1)n} & C_2 & 0_{n\times (N-k)n} & D 
\end{bmatrix}.
\end{align*}
Writing $Z^{(2)}$ in block form as
\begin{align*}
Z^{(2)} =
\begin{bmatrix}
\diag(S_{1,:}) & 0\\
\vdots & \vdots\\
\diag(S_{N,:}) & 0\\[2pt]
I_n & 0\\
0_{1\times n} & 1
\end{bmatrix}
=
\begin{bmatrix}
Z^{\mathrm{top}} \\ Z^{\mathrm{bottom}}
\end{bmatrix},
\end{align*}
with $Z^{\mathrm{top}}\in\R^{Nn\times(n+1)}$ and
$Z^{\mathrm{bottom}}\in\R^{(n+1)\times(n+1)}$, we obtain
\begin{align*}
W_Q^{(1,k)} Z^{(2)}
&= C_1\,\diag(S_{k,:}) + D\,Z^{\mathrm{bottom}},\\
W_Q^{(2,k)} Z^{(2)}
&= C_2\,\diag(S_{k,:}) + D\,Z^{\mathrm{bottom}}.
\end{align*}
Since $Z^{\mathrm{bottom}}$ is the $(n+1)\times(n+1)$ identity,
\begin{align*}
D\,Z^{\mathrm{bottom}} = D,
\end{align*}
and hence
\begin{align*}
Q^{(1)}_k \coloneqq W_Q^{(1,k)} Z^{(2)} &= C_1\,\diag(S_{k,:}) + D,\\
Q^{(2)}_k \coloneqq W_Q^{(2,k)} Z^{(2)} &= C_2\,\diag(S_{k,:}) + D.
\end{align*}

\emph{Score matrices and their structure.}
The score matrices for the first heads associated with hidden unit $k$ are
\begin{align*}
Y^{(1)}_k &:= K^\top Q^{(1)}_k = K^\top C_1\,\diag(S_{k,:}) + K^\top D.
\end{align*}
Since
\begin{align*}
K^\top =
\begin{bmatrix}
I_n\\[2pt] 0_{1\times n}
\end{bmatrix},
\end{align*}
we can compute the product with $D$ explicitly:
\begin{align*}
K^\top D
=
\begin{bmatrix}
I_n\\ 0_{1\times n}
\end{bmatrix}
\bigl(-1_{n\times(n+1)} + [I_n\; 1_n]\bigr)
= E,
\end{align*}
where
\begin{align*}
E =
\begin{bmatrix}
0   & -1  & \dots & -1  & 0 \\
-1  & 0   & \dots & -1  & 0 \\
\vdots & \vdots & \ddots & \vdots & \vdots \\
-1  & -1  & \dots & 0   & 0 \\
0   & 0   & \dots & 0   & 0
\end{bmatrix}
\in \mathbb{R}^{(n+1)\times(n+1)}.
\end{align*}
Thus
\begin{align*}
Y^{(1)}_k &= \diag(1_{a_{1,k}=1} S_{k,1},\dots,1_{a_{n,k}=1} S_{k,n}) + E,
\end{align*}
Equivalently, in full matrix form,
\begin{align*}
Y^{(1)}_k =
\begin{bmatrix}
1_{a_{1,k}=1} S_{k,1} & -1                     & \dots & -1                     & 0 \\
-1                    & 1_{a_{2,k}=1} S_{k,2} & \dots & -1                     & 0 \\
\vdots                & \vdots                & \ddots& \vdots                 & \vdots \\
-1                    & -1                    & \dots & 1_{a_{n,k}=1} S_{k,n} & 0 \\
0                     & 0                     & \dots & 0                     & 0
\end{bmatrix},
\end{align*}
and $Y^{(2)}_k$ is obtained by replacing $1_{a_{i,k}=1}$ with $1_{a_{i,k}=-1}$ on the diagonal.

\medskip\noindent
\emph{Softmax and signed ReLU.}
For the first head associated with hidden unit $k$, define the
output row vector
\begin{align*}
H^{(1)}_k &:= V_k\, \Softmax\bigl(\beta\,Y^{(1)}_k\bigr) \in \R^{1\times(n+1)},
\end{align*}
where $V_k = [\,S_{k,1}\ \dots\ S_{k,n}\ 0\,]$.
Thus the first $n$ coordinates of $H^{(1)}_k$ can be written as
\begin{align*}
H^{(1)}_{k,1:n}
=
\bigl[\,S_{k,1}\ \dots\ S_{k,n}\,0\,\bigr]\,
\Softmax\bigl(\beta\,Y^{(1)}_k\bigr)_{:,1:n}.
\end{align*}

From the explicit form of $Y^{(1)}_k$ above, each column $j\le n$ has
the structure
\begin{align*}
(Y^{(1)}_k)_{:,j}
=
\begin{bmatrix}
-1 \\ \vdots \\ -1 \\[2pt]
1_{a_{j,k}=1} S_{k,j} \\[2pt]
-1 \\ \vdots \\ -1 \\[2pt]
0
\end{bmatrix},
\end{align*}
that is, a diagonal entry $1_{a_{j,k}=1} S_{k,j}$ at position $i=j$, a
baseline value $-1$ in all other rows $i\le n$, and $0$ at the padding
index $i=n+1$.
For large $\beta$, the column-wise Softmax therefore concentrates on
\emph{either} the diagonal entry $i=j$ (when $a_{j,k}=1$ and $S_{k,j}>0$)
\emph{or} on the padding index $i=n+1$ (when $a_{j,k}=1$ and $S_{k,j}\le 0$
or when $a_{j,k}=-1$)\footnote{We can utilize case (ii) of \cref{lem:Soft_to_Hard} for the edge case $S_{k,j}=0$.}.
Formally, let $\widetilde{W}^{(1)}_k \in \R^{(n+1)\times(n+1)}$ denote the
\emph{ideal hardmax weight matrix} whose $j$-th column is
\begin{align*}
{(\widetilde{W}^{(1)}_k})_{:,j}
&:=
\begin{cases}
e_j, & a_{j,k}=1 \text{ and } S_{k,j}>0,\\
e_{n+1}, & \text{otherwise},
\end{cases}
\end{align*}
so that $\Softmax\bigl(\beta\,Y^{(1)}_k\bigr)_{:,j}\to {(\widetilde{W}^{(1)}_k})_{:,j}$
as $\beta\to\infty$.
Define the corresponding ``hard'' output
\begin{align*}
\widetilde{H}^{(1)}_k
:= V_k\, \widetilde{W}^{(1)}_k
\in \R^{1\times(n+1)}.
\end{align*}
By the definition of $V_k = [\,S_{k,1}\ \dots\ S_{k,n}\ 0\,]$ and the
columns of $\widetilde{W}^{(1)}_k$, the $j$-th coordinate of
$\widetilde{H}^{(1)}_k$ satisfies
\begin{align*}
\widetilde{H}^{(1)}_{k,j}
&=
\begin{cases}
S_{k,j}, & a_{j,k}=1 \text{ and } S_{k,j}>0,\\
0, & \text{otherwise},
\end{cases}
=
1_{a_{j,k}=1}\,\mathrm{ReLU}(S_{k,j}).
\end{align*}
Hence, in vector form,
\begin{align*}
\widetilde{H}^{(1)}_{k,1:n}
=
\bigl(1_{a_{1,k}=1}\,\mathrm{ReLU}(S_{k,1}),\dots,
      1_{a_{n,k}=1}\,\mathrm{ReLU}(S_{k,n})\bigr)
\in \R^{1\times n}.
\end{align*}
Since $\Softmax\bigl(\beta\,Y^{(1)}_k\bigr)$ converges column-wise to
$\widetilde{W}^{(1)}_k$ as $\beta\to\infty$, we have
\begin{align*}
H^{(1)}_k
= V_k\,\Softmax\bigl(\beta\,Y^{(1)}_k\bigr)
\approx \widetilde{H}^{(1)}_k,
\end{align*}
and in particular
\begin{align*}
H^{(1)}_{k,1:n}
\approx
\bigl(1_{a_{1,k}=1}\,\mathrm{ReLU}(S_{k,1}),\dots,
      1_{a_{n,k}=1}\,\mathrm{ReLU}(S_{k,n})\bigr).
\end{align*}

The second head is treated analogously. Writing
\begin{align*}
H^{(2)}_k &:= V_k\, \Softmax\bigl(\beta\,Y^{(2)}_k\bigr),
\end{align*}
the same reasoning applied to $Y^{(2)}_k$ yields
\begin{align*}
H^{(2)}_{k,1:n}
\approx
\bigl(1_{a_{1,k}=-1}\,\mathrm{ReLU}(S_{k,1}),\dots,
      1_{a_{n,k}=-1}\,\mathrm{ReLU}(S_{k,n})\bigr).
\end{align*}

The module $\attn_{3,k}$ is defined as the difference of these two
single-head attentions,
\begin{align*}
\attn_{3,k}(Z^{(2)})
:=
V_k
\Softmax\bigl(
\beta\,Y^{(1)}_k
\bigr)
-
V_k
\Softmax\bigl(
\beta\,Y^{(2)}_k
\bigr)
= H^{(1)}_k - H^{(2)}_k,
\end{align*}
so that its first $n$ coordinates satisfy
\begin{align*}
\bigl(\attn_{3,k}(Z^{(2)})\bigr)_{1:n}
\approx
\bigl(a_{1,k}\,\mathrm{ReLU}(S_{k,1}),\dots,
      a_{n,k}\,\mathrm{ReLU}(S_{k,n})\bigr)
= a^{(k)} \odot \mathrm{ReLU}(S_{k,:}).
\end{align*}
Summing over $k$ yields
\begin{align*}
\bigl({\rm Attn}_3(Z^{(2)})\bigr)_{1:n}
=
\sum_{k=1}^N \bigl(\attn_{3,k}(Z^{(2)})\bigr)_{1:n}
\approx
\sum_{k=1}^N a^{(k)} \odot \mathrm{ReLU}(S_{k,:}),
\end{align*}
which exactly matches the length-$n$ output of the FFN approximation.
Therefore there exists $\beta>0$ such that
\begin{align*}
& \biggl\|
{\rm Attn}_3\biggl(
\begin{bmatrix}
\diag(
s_{1,1},\dots,s_{n,1}) & 0_n\\
\vdots &\vdots\\
\diag( s_{1,N},\dots,s_{n,N}) & 0_n\\
I_n & 0_{n} \\
0_{1\times n} & 1
\end{bmatrix}
\biggr)_{:,1:n}
-
\begin{bmatrix}
\sum_{k=1}^N a_{1,k} {\rm ReLU}(S_{k,1})
&
\cdots&
\sum_{k=1}^N a_{n,k} {\rm ReLU}(S_{k,n})
\end{bmatrix}
\biggr\|_\infty
\leq \epsilon_2.
\end{align*}

Finally we truncate the padded token and define $\Phi$ as
\begin{align*}
T \circ \attn_3 \circ \attn_2 \circ \attn_1 \circ P
\end{align*}
where $P$ is the preprocessing step that pads $1$ zero token and $T$ is the truncation step that delete the last token.

\textbf{Error Analysis and Convergence in $L_p$.}

We now demonstrate that the accumulated error through the three layers can be bounded arbitrarily in the $L_p$ norm. Unlike analyses relying on Lipschitz constants, which may explode with large attention weights, we utilize the uniform continuity of the target operators on compact domains.

Let $L_1, L_2, L_3$ denote the ideal mathematical operators approximated by the three layers (component extraction, column summation, and ReLU aggregation, respectively). Let $H_0 = X_p$ be the input. We define the sequence of ideal feature maps as $H_1 = L_1(H_0)$, $H_2 = L_2(H_1)$, and $H_3 = L_3(H_2)$, where $H_3$ corresponds to the target FFN output. Conversely, let $\tilde{H}_1 = {\rm Attn}_1(H_0)$, $\tilde{H}_2 = {\rm Attn}_2(\tilde{H}_1)$, and $\tilde{H}_3 = {\rm Attn}_3(\tilde{H}_2)$ denote the actual outputs of the constructed layers.

Let $C_d$ be a constant such that $\|M\|_p \leq C_d \|M\|_\infty$ for matrices of the relevant dimensions. We seek to show that for any $\epsilon > 0$, the parameters of the attention layers can be chosen such that $\|\tilde{H}_3 - H_3\|_p < \epsilon$.

We proceed via a backward induction argument. Consider the final layer. By the triangle inequality,
\begin{align*}
    \|\tilde{H}_3 - H_3\|_p &= \|{\rm Attn}_3(\tilde{H}_2) - L_3(H_2)\|_p \\
    &\leq \|{\rm Attn}_3(\tilde{H}_2) - L_3(\tilde{H}_2)\|_p + \|L_3(\tilde{H}_2) - L_3(H_2)\|_p.
\end{align*}
The operator $L_3$ involves ReLU functions and linear sums, which are continuous. Since the domain of valid feature maps is compact, $L_3$ is uniformly continuous. Therefore, there exists a $\delta_2 > 0$ such that for any inputs $Y, Y'$ satisfying $\|Y - Y'\|_p < \delta_2$, we have $\|L_3(Y) - L_3(Y')\|_p < \epsilon/2$. Furthermore, by the construction of the third layer, specifically by increasing the Softmax scaling parameter $\beta$, we can limit the approximation error such that $\|{\rm Attn}_3(Z) - L_3(Z)\|_p < \epsilon/2$ for all $Z$ in the compact range. Thus, the total error is bounded by $\epsilon$ provided that $\|\tilde{H}_2 - H_2\|_p < \delta_2$.

We apply the same logic to the second layer. We require $\|\tilde{H}_2 - H_2\|_p < \delta_2$. Decomposing the error yields
\begin{align*}
    \|\tilde{H}_2 - H_2\|_p \leq \|{\rm Attn}_2(\tilde{H}_1) - L_2(\tilde{H}_1)\|_p + \|L_2(\tilde{H}_1) - L_2(H_1)\|_p.
\end{align*}
The operator $L_2$ is linear and therefore uniformly continuous. Thus, there exists a $\delta_1 > 0$ such that $\|\tilde{H}_1 - H_1\|_p < \delta_1$ implies $\|L_2(\tilde{H}_1) - L_2(H_1)\|_p < \delta_2/2$. By \cref{lem:attn-column-wise}, we can choose the parameters of ${\rm Attn}_2$ such that the approximation error $\|{\rm Attn}_2(\tilde{H}_1) - L_2(\tilde{H}_1)\|_p$ is less than $\delta_2/2$. This condition holds if $\|\tilde{H}_1 - H_1\|_p < \delta_1$.

Finally, for the first layer, we ensure $\|\tilde{H}_1 - H_1\|_p < \delta_1$. The input $H_0$ is exact, so there is no propagated error. By \cref{thm:multi-head-truncated}, we can construct ${\rm Attn}_1$ with precision $\epsilon_0 = \delta_1 / C_d$ such that
\begin{align*}
    \|\tilde{H}_1 - H_1\|_p \leq C_d \|{\rm Attn}_1(H_0) - L_1(H_0)\|_\infty \leq C_d \epsilon_0 = \delta_1.
\end{align*}

By choosing the construction parameters corresponding to $\epsilon_0, \epsilon_2$ and $\beta$ derived from the moduli of continuity $\delta_1$ and $\delta_2$, we guarantee that the final output satisfies
\begin{align*}
    \|\tilde{H}_3 - H_3\|_p < \epsilon
\end{align*}
for all $X \in \mathcal{X}$. Equivalently,
\begin{align*}
    \sup_{X \in \mathcal{X}} \|\tilde{H}_3 - H_3\|_p \le \epsilon.
\end{align*}
Since the target FFN $H_3$ can approximate the function $f$ to arbitrary precision, the pure attention network in the one-row case is a pointwise universal approximator on $\mathcal{X}$.

\textbf{Extension to $d$ output rows.}

The discussion above treats $f : \mathcal{X} \to \mathbb{R}^{1\times n}$.
For a general sequence-to-sequence map
$f : \mathcal{X} \to \mathbb{R}^{d\times n}$, write
\begin{align*}
f^{(r)}(X) := f(X)_{r,:} \in \mathbb{R}^{1\times n}, \qquad r \in [d],
\end{align*}
and construct, for each $r$, a three-layer attention network
$\Phi^{(r)}$ that approximates $f^{(r)}$ with
\begin{align*}
\|\Phi^{(r)}(X) - f^{(r)}(X)\|_p \le \varepsilon_{\mathrm{row}}
\quad\text{for all }X \in \mathcal{X}.
\end{align*}
Define the combined network
\begin{align*}
\Phi(X) :=
\begin{bmatrix}
\Phi^{(1)}(X) \\
\vdots \\
\Phi^{(d)}(X)
\end{bmatrix}
\in \mathbb{R}^{d\times n},
\end{align*}
which corresponds to placing the heads for all $\Phi^{(r)}$ inside the
same three multi-head attention layers and concatenate their outputs in
the feature dimension as usual. For the full error
$E(X) := \Phi(X) - f(X)$, we have
\begin{align*}
\|E(X)\|_p^p
= \sum_{r=1}^d \|\Phi^{(r)}(X) - f^{(r)}(X)\|_p^p
\le d\,\varepsilon_{\mathrm{row}}^p,
\end{align*}
so
\begin{align*}
\|E(X)\|_p \le d^{1/p}\,\varepsilon_{\mathrm{row}}.
\end{align*}
Choosing $\varepsilon_{\mathrm{row}} := \varepsilon / d^{1/p}$ yields
\begin{align*}
    \|\Phi(X) - f(X)\|_p \le \varepsilon
    \quad \text{for all } X \in \mathcal{X}.
\end{align*}

By the definition of the $L_p$ norm in \eqref{eqn:l_p_norm} (applied with $\Omega = \mathcal{X}$ and
$f(X) = \|\Phi(X) - f(X)\|_p$), we have
\begin{align*}
    \|\Phi - f\|_{L_p}^p
    &= \int_{\mathcal{X}} \|\Phi(X) - f(X)\|_p^p \, dX \\
    &\le \int_{\mathcal{X}} \varepsilon^p \, dX.
\end{align*}
Since $\mathcal{X}$ is compact, the integral $\int_{\mathcal{X}} 1\,dX$ is finite. 
Given any target $\epsilon > 0$, we can run the above construction with a pointwise tolerance
$\varepsilon$ small enough so that the right-hand side is at most $\epsilon^p$, which yields
\begin{align*}
    \|\Phi - f\|_{L_p} < \epsilon.
\end{align*}
This complete the proof.
\end{proof}

\clearpage

\clearpage
\def\arxivfont{\rm}
\bibliographystyle{plainnat}

\bibliography{refs}

\end{document}